
\documentclass{article}

\usepackage[T1]{fontenc}
\usepackage[htt]{hyphenat}

\usepackage{microtype}
\usepackage{graphicx}
\usepackage{subfigure}
\usepackage{booktabs} 
\usepackage{natbib}
\bibliographystyle{abbrvnat}
\usepackage{algorithm}
\usepackage{algorithmic}
\usepackage{amsmath}
\usepackage{amssymb}
\usepackage{graphicx}
\usepackage{mathtools}
\usepackage{amsfonts}
\usepackage{amsthm,bm}
\usepackage{color}
\usepackage[table]{xcolor}
\usepackage{enumerate}
\usepackage{dsfont}
\usepackage{pgfplots}
\pgfplotsset{width=10cm,compat=1.9}
 \usepackage{pifont}
\usepackage{thmtools} 
\usepackage{thm-restate}
\usepackage{sidecap}

\newcommand{\R}{\mathbb{R}}

\renewcommand{\phi}{\varphi}

\setlength{\parskip}{1em}
\setlength\parindent{0pt}

\graphicspath {{figures/}}

\usepackage{hyperref}
\usepackage{caption}
\usepackage[super]{nth}
\delimitershortfall-1sp

\newtheorem{lemma}{Lemma}

\newtheorem{assumption}{Assumption}

\newtheorem{observation}{Observation}

\mathtoolsset{showonlyrefs}

\DeclareMathOperator*{\argmin}{argmin}
\DeclareMathOperator*{\argmax}{argmax}



\usepackage{breakcites}
\usepackage{authblk}
\usepackage[margin=1in]{geometry}
\usepackage{enumitem}

\usepackage{hyperref}

\hypersetup{
     colorlinks = true,
     linkcolor = brown,
     anchorcolor = blue,
     citecolor = teal,
     filecolor = blue,
     urlcolor = black
     }

\title{On Energy-Based Models with Overparametrized Shallow Neural Networks}

\author[a]{Carles Domingo-Enrich}
\author[b]{Alberto Bietti}
\author[a]{Eric Vanden-Eijnden}
\author[a,b]{Joan Bruna}
\affil[a]{Courant Institute of Mathematical Sciences, New York University}
\affil[b]{Center for Data Science, New York University}


\begin{document}





\maketitle



\begin{abstract}
Energy-based models (EBMs) are a simple yet powerful framework for generative modeling. They are based on a trainable energy function which defines an associated Gibbs measure, and they can be trained and sampled from via well-established statistical tools, such as MCMC. Neural networks may be used as energy function approximators, providing both a rich class of expressive models as well as a flexible device to incorporate data structure.
In this work we focus on shallow neural networks. Building from the incipient theory of overparametrized neural networks, we show that models trained in the so-called ``active'' regime provide a statistical advantage over their associated ``lazy'' or kernel regime, leading to improved adaptivity to hidden low-dimensional structure in the data distribution, as already observed in supervised learning. Our study covers both maximum likelihood and Stein Discrepancy estimators, and we validate our theoretical results with numerical experiments on synthetic~data.
\end{abstract}

\section{Introduction}
\label{sec:introduction}
A central problem in machine learning is to learn generative models of a distribution through its samples.
Such models may be needed simply as a modeling tool in order to discover properties of the data, or as a way to generate new samples that are similar to the training samples.
Generative models come in various flavors. In some cases very few assumptions are made on the distribution and one simply tries to learn generator models in a black-box fashion~\citep{goodfellow2014generative,kingma2013auto}, while other approaches make more precise assumptions on the form of the data distribution.
In this paper, we focus on the latter approach, by considering Gibbs measures defined through an \emph{energy function}~$f$, with a density proportional to~$\exp\{-f(x)\}$.
Such \emph{energy-based models} (EBMs) originate in statistical physics~\citep{ruelle1969statistical}, and have become a fundamental modeling tool in statistics and machine learning~\citep{wainwright2008graphical,ranzato2007efficient,lecun2006tutorial,du2019implicit,song2021train}.
If data is assumed to come from such a model, the learning algorithms then attempt to estimate the energy function~$f$. The resulting learned model can then be used to obtain new samples, typically through Markov Chain Monte Carlo (MCMC) techniques.


In this paper, we study the statistical problem of learning such EBMs from data, in a non-parametric setting defined by a function class $\mathcal{F}$, and with possibly arbitrary target energy functions.
If we only assume a simple Lipschitz property on the energy, learning such models will generally suffer from the curse of dimensionality \citep{von2004distance}, in the sense that an exponential number of samples in the dimension is needed to find a good model.
However, one may hope to achieve better guarantees when additional structure is present in the energy function.

An important source of structure comes from energy functions which capture local rather than global interactions between input features, such as those in Local Markov Random Fields or Ising models.  Such energies can be expressed as linear combinations of potential functions depending only on low-dimensional projections, and are therefore amenable to efficient approximation by considering classes $\mathcal{F}$ given by shallow neural networks endowed with a sparsity-promoting norm \citep{bach2017breaking}. Analogously to the supervised regime \citep{bach2017breaking,chizat2020implicit}, learning in such \emph{variation-norm} spaces $\mathcal{F}=\mathcal{F}_1$ admits a convex formulation in the overparametrized limit, whose 
corresponding class of Gibbs measures $\{\nu(dx) \propto \exp\{-f(dx)\},\,f\in \mathcal{F}_1 \}$ is the natural infinite-dimensional extension of exponential families \citep{wainwright2008graphical}. 
Our main contribution is to show that such EBMs lead to a well-posed learning setup with strong statistical guarantees, breaking the curse of dimensionality.

These statistical guarantees can be combined with qualitative optimization guarantees in this overparamerised limit under an appropriate `active' or `mean-field' scaling \citep{mei2018mean,rotskoff2018neural,chizat2018global,sirignano2019mean}. 
As it is also the case for supervised learning, the benefits of variation-norm spaces $\mathcal{F}_1$ contrast with their RKHS counterparts $\mathcal{F}_2$, which cannot efficiently adapt to  the low-dimensional structure present in such structured Gibbs models. 

The standard method to train EBMs is maximum likelihood estimation.
One generic approach for this is to use gradient descent, where gradients may be approximated using MCMC samples from the current trained model.
Such sampling procedures may be difficult in general, particularly for complex energy landscapes, thus we also consider different estimators based on un-normalized measures which avoid the need of sampling.
We focus here on approaches based on minimizing Stein discrepancies~\citep{gorham2015measuring,liu2016stein}, which have recently been found to be useful in deep generative models~\citep{grathwohl2020learning},
though we note that alternative approaches may be used, such as score matching~\citep{hyvarinen05estimation,song2021train,song2019generative,block2020generative}.

Our main focus is to study the resulting estimators when using gradient-based optimization over infinitely-wide neural networks in different regimes, showing the statistical benefits of the `feature learning' regime when the target models have low-dimensional structure, thus extending the analogous results for supervised least-squares \citep{bach2017breaking} and logistic \citep{chizat2020implicit} regression. 
More precisely, we make the following contributions:
\begin{itemize}[itemsep=0pt,topsep=1pt,leftmargin=0.3cm]
    \item We derive generalization bounds for the learned measures in terms of the same metrics used for training (KL divergence or Stein discrepancies). Using and extending results from the theory of overparametrized neural networks, we show that when using energies in the class~$\mathcal{F}_1$ we can learn target measures with certain low-dimensional structure at a rate controlled by the intrinsic dimension rather than the ambient dimension (\autoref{cor:statf1} and \autoref{cor:corsdsphere}).
    \item We show in experiments that while $\mathcal{F}_1$ energies succeed in learning simple synthetic distributions with low-dimensional structure, $\mathcal{F}_2$ energies fail (\autoref{sec:experiments}).
\end{itemize}

\section{Related work}
\label{sec:related_work}

A recent line of research has studied the question of how neural networks compare to kernel methods, with a focus on supervised learning problems.
\citet{bach2017breaking} studies two function classes that arise from infinite-width neural networks with different norms penalties on its weights, leading to the two different spaces~$\mathcal F_1$ and~$\mathcal F_2$, and shows the approximation benefits of the~$\mathcal F_1$ space for adapting to low-dimensional structures compared to the (kernel) space~$\mathcal F_2$, an analysis that we leverage in our work.
The function space~$\mathcal F_1$ was also studied by~\citet{ongie2019function,savarese2019infinite,williams2019gradient} by focusing on the ReLU activation function. 
More recently, this question has gained interest after several works have shown that wide neural networks trained with gradient methods may behave like kernel methods in certain regimes~\citep[see, e.g.,][]{jacot2018neural}.
Examples of works that compare `active/feature learning' and `kernel/lazy' regimes include~\citep{chizat2020implicit,ghorbani2019limitations,wei2020regularization,woodworth2020kernel}.
We are not aware of any works that study questions related to this in the context of generative models in general and EBMs in particular. 


Other related work includes the Stein discrepancy literature. Although Stein's method \citep{stein1972abound} dates to the 1970s, it has been popular in machine learning in recent years. \citet{gorham2015measuring} introduced a computational approach to compute the Stein discrepancy in order to assess sample quality. Later, \citet{chwialkowski2016gretton} and \citet{liu2016akernelized} introduced the more practical kernelized Stein discrepancy (KSD) for goodness-of-fit tests, which was also studied by \citet{gorham2017measuring}. 
\citet{liu2016stein} introduced SVGD, which was the first method to use the KSD to obtain samples from a distribution, and \citet{barp2019minimum} where the first to employ KSD to train parametric generative models.
More recently, \citet{grathwohl2020learning} used neural networks as test functions for Stein discrepancies, which arguably yields a stronger metric, and have shown how to leverage such metrics for training EBMs. The empirical success of their method provides an additional motivation for our theoretical study of the $\mathcal{F}_1$ Stein Discrepancy (\autoref{subsec:guarantees_sd}).

Finally, another notable paper close in spirit to our goal is \cite{block2020generative}, which provides a detailed theoretical analysis of a score-matching generative model using Denoising Autoencoders followed by Langevin diffusion. While their work makes generally weaker assumptions and also includes a non-asymptotic analysis of the sampling algorithm, the resulting rates are unsuprisingly cursed by dimension. Our focus is on the statistical aspects which allow faster rates, leaving the quantitative computational aspects aside. 







\section{Setting}
\label{sec:setting}
In this section, we present the setup of our work, recalling basic properties of EBMs, maximum likelihood estimators, Stein discrepancies, and functional spaces arising from infinite-width shallow neural networks.

\paragraph{Notation.}
If $V$ is a normed vector space, we use $\mathcal{B}_V(\beta)$ to denote the closed ball of $V$ of radius $\beta$, and $\mathcal{B}_V := \mathcal{B}_V(1)$ for the unit ball. If $K$ denotes a subset of the Euclidean space, $\mathcal{P}(K)$ is the set of Borel probability measures, $\mathcal{M}(K)$ is the space of signed Radon measures and $\mathcal{M}^{+}(K)$ is the space of (non-negative) Radon measures. For $\nu_1, \nu_2 \in \mathcal{P}(K)$, we define the Kullback-Leibler (KL) divergence $D_{\text{KL}}(\nu_1|| \nu_2) := \int_{K} \log(\frac{d\nu_1}{d\nu_2}(x)) d\nu_1(x)$ when $\nu_1$ is absolutely continuous with respect to $\nu_2$, and $+\infty$ otherwise, and the cross-entropy $H(\nu_1,\nu_2) := - \int_{K} \log(\frac{d\nu_2}{d\tau}(x)) d\nu_1(x)$, where $\frac{d\nu_2}{d\tau}(x)$ is the Radon-Nikodym derivative w.r.t. the uniform probability measure $\tau$ of $K$, and the differential entropy $H(\nu_1) := - \int_{K} \log(\frac{d\nu_1}{d\tau}(x)) d\nu_1(x)$. If $\gamma$ is a signed measure over $K$, then ${|\gamma|}_{\text{TV}}$ is the total variation (TV) norm of $\gamma$. $\mathbb{S}^d$ is the $d$-dimensional hypersphere, and for functions $f : \mathbb{S}^d \rightarrow \R$, $\nabla f$ denotes the Riemannian gradient of $f$. We use $\sigma(\langle \theta, x \rangle) = \max\{0, \langle \theta, x \rangle\}$ to denote a ReLU with parameter $\theta$.

\subsection{Generative energy-based models} \label{subsec:generative_EBMs}
If $\mathcal{F}$ is a class of functions (or energies) mapping a measurable set $K \subseteq \R^{d+1}$ to $\R$, for any $f \in \mathcal{F}$ we can define the probability measure $\nu_{f}$ as a Gibbs measure with density:
\begin{align} \label{eq:nu_f_def}
    \frac{d\nu_{f}}{d\tau}(x) := \frac{e^{-f(x)}}{Z_f}, \text{ with } Z_f := \int_{K} e^{-f(y)} d\tau(y)~,
\end{align}
where $\frac{d\nu_{f}}{d\tau}(x)$ is the Radon-Nikodym derivative w.r.t to the uniform probability measure over $K$, denoted~$\tau$, and $Z_f$ is the partition function. 

Given samples $\{x_i\}_{i=1}^n$ from a target measure $\nu$, training an EBM consists in selecting the best $\nu_f$ with energy $f\in \mathcal{F}$ according to a given criterion. A natural estimator $\hat{f}$ for the energy is the \textbf{maximum likelihood} estimator (MLE), i.e., $\hat{f} = \argmax_{f \in \mathcal{F}} \prod_{i=1}^n  \frac{d\nu_{f}}{d\tau}(x_i)$,
or equivalently, the one that minimizes the cross-entropy with the samples:
\begin{align} 
\begin{split} \label{eq:hat_mu}
    \hat{f} &= \argmin_{f \in \mathcal{F}} H(\nu_n, \nu_{f}) = \argmin_{f \in \mathcal{F}} -\frac{1}{n} \sum_{i=1}^n \log \left(\frac{d\nu_{f}}{d\tau}(x_i) \right)\\ &= \argmin_{f \in \mathcal{F}} \frac{1}{n} \sum_{i=1}^n f(x_i) + \log Z_f .
\end{split}
\end{align}
The estimated distribution is simply $\nu_{\hat{f}}$, and samples can be obtained by the MCMC algorithm of choice.

An alternative estimator is the one that arises from minimizing the \textbf{Stein discrepancy} (SD) corresponding to a function class $\mathcal{H}$. If $\mathcal{H}$ is a class of functions from $K$ to $\R^{d+1}$, the Stein discrepancy \citep{gorham2015measuring, liu2016akernelized} for $\mathcal{H}$ is a non-symmetric functional defined on pairs of probability measures over $K$ as
\begin{align} \label{eq:sd_def}
    \text{SD}_{\mathcal{H}}(\nu_1, \nu_2) = \sup_{h \in \mathcal{H}} \mathbb{E}_{\nu_1} [\text{Tr}(\mathcal{A}_{\nu_2} h(x))],
\end{align}
where $\mathcal{A}_{\nu} : K \rightarrow \R^{(d+1)\times(d+1)}$ is the Stein operator.
In order to leverage approximation properties on the sphere, we will consider functions~$h$ defined on~$K = \mathbb S^d$.
In this case,
the Stein operator is defined by $\mathcal{A}_{\nu} h(x) := (s_{\nu}(x) - d \cdot x) h(x)^{\top} + \nabla h(x)$ (see \autoref{lem:stein_operator}), where $s_{\nu}(x) = \nabla \log(\frac{d\nu}{d\tau}(x))$ is named the score function.
The term~$d \cdot x$ is important for the spherical case in order to have~$\text{SD}_{\mathcal{H}}(\nu, \nu) = 0$, while it does not appear when considering~$K=\R^d$.
The Stein discrepancy estimator is
\begin{align} \label{eq:sd_estimator}
    \hat{f} = \argmin_{f \in \mathcal{F}} \text{SD}_{\mathcal{H}}(\nu_n, \nu_{f}).
\end{align}

If $\mathcal{H} = \mathcal{B}_{\mathcal{H}_0^{d+1}} = \{ (h_i)_{i=1}^{d+1} \in \mathcal{H}_0^{d+1} \ | \ \sum_{i=1}^{d+1} \|h_i\|_{\mathcal{H}_0}^2 \leq 1 \}$ for some reproducing kernel Hilbert space (RKHS) $\mathcal{H}_0$ with kernel $k$ with continuous second order partial derivatives, there exists a closed form for the problem \eqref{eq:sd_def} and the corresponding object is known as \textbf{kernelized Stein discrepancy} (KSD) \citep{liu2016akernelized, gorham2017measuring}.
For $K = \mathbb{S}^d$, the KSD takes the following form (\autoref{lem:ksd_expression}):
\begin{align} \label{eq:KSD_def}
    \text{KSD}(\nu_1, \nu_2) = \text{SD}^2_{\mathcal{B}_{\mathcal{H}_0^{d+1}}}(\nu_1, \nu_2) = \mathbb{E}_{x,x' \sim \nu_1} [u_{\nu_2}(x,x')], 
\end{align}
where $u_{\nu}(x,x') = (s_{\nu}(x) - d \cdot x)^\top(s_{\nu}(x') - d \cdot x') k(x,x') + (s_{\nu}(x) - d \cdot x)^\top \nabla_{x'} k(x,x') + (s_{\nu}(x') - d \cdot x')^\top \nabla_{x} k(x,x') + \text{Tr}(\nabla_{x,x'} k(x,x'))$, and we use $\tilde{u}_{\nu}(x,x')$ to denote the sum of the first three terms (remark that the fourth term does not depend on $\nu$). One KSD estimator that can be used is
\begin{align} \label{eq:ksd_estimator}
    \hat{f} = \argmin_{f \in \mathcal{F}} \frac{1}{n^2} \sum_{i,j=1}^n \tilde{u}_{\nu_{f}}(x_i,x_j).
\end{align}
The optimization problem for this estimator is convex (\autoref{sec:algorithms}), but it is biased. On the other hand, the estimator
\begin{align} \label{eq:ksd_estimator2}
    \hat{f} = \argmin_{f \in \mathcal{F}} \frac{1}{n(n-1)} \sum_{i \neq j} \tilde{u}_{\nu_{f}}(x_i,x_j),
\end{align}
is unbiased, but the optimization problem is not convex. 

\subsection{Neural network energy classes}
We are interested in the cases in which $\mathcal{F}$ is one of two classes of functions related to shallow neural networks, as studied by \citet{bach2017breaking}.

\paragraph{Feature learning regime.}
$\mathcal{F}$ is the ball $\mathcal{B}_{\mathcal{F}_1}(\beta)$ of radius $\beta > 0$ of $\mathcal{F}_1$, which is the Banach space of functions $f : K \rightarrow \R$ such that for all $x \in K$ we have $f(x) = \int_{\mathbb{S}^d} \sigma(\langle \theta, x \rangle) \ d\gamma(\theta)$, for some Radon measure $\gamma \in \mathcal{M}(\mathbb{S}^{d})$. The norm of $\mathcal{F}_1$ is defined as
    $\|f\|_{\mathcal{F}_1} = \inf \left\{ {|\gamma|}_{\text{TV}} \ | \ f(\cdot) = \int_{\mathbb{S}^d} \sigma(\langle \theta, \cdot \rangle) \ d\gamma(\theta) \right\}.$
   
\paragraph{Kernel regime.}
$\mathcal{F}$ is the ball $\mathcal{B}_{\mathcal{F}_2}(\beta)$ of radius $\beta > 0$ of $\mathcal{F}_2$, which is the (reproducing kernel) Hilbert space of functions $f : K \rightarrow \R$ such that for some absolutely continuous $\rho \in \mathcal{M}(\mathbb{S}^d)$ with $\frac{d\rho}{d\tilde{\tau}} \in \mathcal{L}^2(\mathbb{S}^d)$ (where $\tilde{\tau}$  the uniform probability measure over $\mathbb{S}^d$), we have that for all $x \in K$,  $f(x) = \int_{\mathbb{S}^d} \sigma(\langle \theta, x \rangle) \ d\rho(\theta)$. The norm of $\mathcal{F}_2$ is defined as $\|f\|_{\mathcal{F}_2}^2 = \inf \left\{ \int_{\mathbb{S}^d} |h(\theta)|^2 \ d\tilde{\tau}(\theta) \ | \ f(\cdot) = \int_{\mathbb{S}^d} \sigma(\langle \theta, \cdot \rangle) h(\theta) \ d\tilde{\tau}(\theta) \right\}$. As an RKHS, the kernel of $\mathcal{F}_2$ is $k(x,y) = \int_{\mathbb{S}^d} \sigma(\langle x, \theta \rangle) \sigma(\langle y, \theta \rangle) \ d\tilde{\tau}(\theta)$. 

Remark that since $\int |h(\theta)| d\tilde{\tau}(\theta) \leq (\int |h(\theta)|^2 \ d\tilde{\tau}(\theta))^{1/2}$ by the Cauchy-Schwarz inequality, we have $\mathcal{F}_2 \subset \mathcal{F}_1$ and $\mathcal{B}_{\mathcal{F}_2} \subset \mathcal{B}_{\mathcal{F}_1}$.
The TV norm in~$\mathcal F_1$ acts as a sparsity-promoting penalty, which encourages the selection of few well-chosen neurons and may lead to favorable adaptivity properties when the target has a low-dimensional structure.
In particular, \cite{bach2017breaking} shows that single ReLU units belong to $\mathcal{F}_1$ but not to $\mathcal{F}_2$, and their $L^2$ approximations in $\mathcal{F}_2$ have exponentially high norm in the dimension. Ever since, several works have further studied the gaps arising between such nonlinear and linear regimes \citep{wei2019regularization,ghorbani2020neural,malach2021quantifying}.
In \autoref{sec:duality}, we present dual characterizations of the maximum likelihood $\mathcal{F}_1$ and $\mathcal{F}_2$ EBMs as entropy maximizers under $L^{\infty}$ and $L^2$ moment constraints (an infinite-dimensional analogue of \citet{pietra1997inducing}; see also \citet{mohri2012foundations}, Theorem 12.2).

The ball radius $\beta$ acts as an inverse temperature. The low temperature regime $\beta \gg 1$ corresponds to expressive models with lower approximation error but higher statistical error: the theorems in \autoref{sec:statistical_guarantees} provide bounds on the two errors and the results of optimizing such bounds w.r.t. $\beta$. 
In the following, we will assume that the set $K \subset \mathbb{R}^{d+1}$ is compact. We note that there are two interesting choices for $K$: (i) for $K = \mathbb{S}^d$, we obtain neural networks without bias term; and (ii) for $K = K_0 \times \{R\}$, where $K_0 \subset \R^d$ with norm bounded by $R$, we obtain neural networks on $K_0$ with a bias~term.

\section{Statistical guarantees for shallow neural network EBMs}
\label{sec:statistical_guarantees}
In this section, we present our statistical generalization bounds for various EBM estimators based on maximum likelihood and Stein discrepancies, highligting the adaptivity to low-dimensional structures that can be achieved when learning with energies in $\mathcal F_1$. All the proofs are in \autoref{sec:proofs_statistical}.

\subsection{Guarantees for maximum likelihood EBMs}
\label{sub:mle_bounds}
The following theorem provides a bound of the KL divergence between the target probability measure and the maximum likelihood estimator in terms of a statistical error and an approximation error. 
\begin{restatable}{thm}{generalbound} \label{thm:generalbound}
Assume that the class $\mathcal{F}$ has a (distribution-free) Rademacher complexity bound $\mathcal{R}_n(\mathcal{F}) \leq \frac{\beta C}{\sqrt{n}}$ and $L^{\infty}$ norm uniformly bounded by $\beta$.
Given $n$ samples $\{x_i\}_{i=1}^n$ from the target measure $\nu$, consider the maximum likelihood estimator (MLE) $\hat{\nu} := \nu_{\hat{f}}$, where $\hat{f}$ is the estimator defined in \eqref{eq:hat_mu}. With probability at least $1-\delta$, we have
\begin{align} \label{eq:thm1_1}
    D_{\text{KL}}(\nu || \hat{\nu}) \leq 
    \frac{4\beta C}{\sqrt{n}} + \beta \sqrt{\frac{8 \log(1/\delta)}{n}} + \inf_{f \in \mathcal{F}} D_{KL}(\nu || \nu_{f}).
\end{align}
If $\frac{d\nu}{d\tau}(x) = e^{- g(x)}/\int_{K} e^{- g(y)} d\tau(y)$ for some $g : K \rightarrow \R$, i.e. $-g$ is the log-density of $\nu$ up to a constant term, then with probability at least $1-\delta$,
\begin{align} \label{eq:thm1_2}
    D_{\text{KL}}(\nu || \hat{\nu}) \leq \frac{4\beta C}{\sqrt{n}} + \beta \sqrt{\frac{8 \log(1/\delta)}{n}} + 2\inf_{f \in \mathcal{F}} \|g-f\|_{\infty}.
\end{align}
\end{restatable}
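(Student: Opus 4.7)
The plan follows the classical excess-risk decomposition for empirical risk minimization. The key observation is that by \eqref{eq:hat_mu}, $\hat f$ minimizes the empirical cross-entropy $L_n(f) := H(\nu_n,\nu_f) = \tfrac{1}{n}\sum_i f(x_i) + \log Z_f$, while $D_{\text{KL}}(\nu\|\nu_f) = L(f) - H(\nu)$ for $L(f) := H(\nu,\nu_f) = \int f\,d\nu + \log Z_f$. Picking an $\epsilon$-minimizer $f^\star \in \mathcal{F}$ of $D_{\text{KL}}(\nu\|\nu_f)$ and using $L_n(\hat f)\le L_n(f^\star)$ produces the decomposition
\begin{equation*}
D_{\text{KL}}(\nu\|\hat\nu) - D_{\text{KL}}(\nu\|\nu_{f^\star}) \;\le\; [L(\hat f)-L_n(\hat f)] + [L_n(f^\star)-L(f^\star)] \;\le\; 2\sup_{f\in\mathcal F}|L(f)-L_n(f)|.
\end{equation*}
The crucial cancellation is that $\log Z_f$ is data-free and appears identically in both $L(f)$ and $L_n(f)$; hence $L(f)-L_n(f) = \int f\,d\nu - \tfrac{1}{n}\sum_i f(x_i)$, and the whole analysis reduces to a standard uniform empirical-process deviation over the class~$\mathcal F$.

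To control this supremum with high probability, I would apply the symmetrization lemma to obtain $\mathbb{E}[\sup_f|L(f)-L_n(f)|] \le 2\mathcal R_n(\mathcal F) \le 2\beta C/\sqrt n$, followed by McDiarmid's bounded-difference inequality with per-coordinate range $2\beta/n$ (since $|f|\le\beta$ on $K$). This yields, with probability at least $1-\delta$,
\begin{equation*}
\sup_{f\in\mathcal F}|L(f)-L_n(f)| \;\le\; \frac{2\beta C}{\sqrt n} + \beta\sqrt{\frac{2\log(1/\delta)}{n}},
\end{equation*}
and doubling, followed by letting $\epsilon\to 0$, produces \eqref{eq:thm1_1}.

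For \eqref{eq:thm1_2} I would bound the approximation term analytically. A direct expansion using $d\nu/d\tau = e^{-g}/Z_g$ and $d\nu_f/d\tau = e^{-f}/Z_f$ gives
\begin{equation*}
D_{\text{KL}}(\nu\|\nu_f) = \int (f-g)\,d\nu + \log(Z_f/Z_g).
\end{equation*}
The first term is bounded by $\|g-f\|_\infty$ in absolute value, while the elementary pointwise comparison $e^{-\|g-f\|_\infty}Z_g \le Z_f \le e^{\|g-f\|_\infty}Z_g$ gives $|\log(Z_f/Z_g)| \le \|g-f\|_\infty$. Summing these two contributions and taking an infimum over $f\in\mathcal F$ produces the bound. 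The main technical delicacy is tracking the McDiarmid constants cleanly; otherwise this is a routine uniform-convergence argument, whose pleasant feature is precisely the log-partition cancellation noted above, which reduces the generalization analysis of EBMs to the Rademacher complexity of the energy class~$\mathcal F$ itself.
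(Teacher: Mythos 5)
Your argument is correct and follows essentially the same route as the paper's proof: the log-partition function $\log Z_f$ cancels in $L(f)-L_n(f)$, reducing the generalization gap to a uniform empirical-process deviation controlled by $\mathcal R_n(\mathcal F)$ plus a McDiarmid tail, and the ERM excess-risk decomposition you write is the same chain of inequalities the paper executes with its cross-entropy identity $D_{\text{KL}}(\nu\|\nu_2)-D_{\text{KL}}(\nu\|\nu_3)=H(\nu,\nu_2)-H(\nu,\nu_3)$. One minor point of divergence: for part (2), the paper bounds $\log(Z_f/Z_g)\le\|g-f\|_\infty$ via an exact mean-value-theorem identity for the free energy (its \autoref{lem:free_energy_equality}), whereas you obtain the same bound directly from the elementary pointwise comparison $e^{-\|g-f\|_\infty}Z_g\le Z_f\le e^{\|g-f\|_\infty}Z_g$; your route is simpler and reaches the identical conclusion.
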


Equation \eqref{eq:thm1_1} follows from using a classical argument in statistical learning theory. To obtain equation \eqref{eq:thm1_2} we bound the last term of \eqref{eq:thm1_1} by $2\inf_{f \in \mathcal{F}} \|g-f\|_{\infty}$ using \autoref{lem:kl_l_infty} in \autoref{sec:proofs_statistical}. 
We note that other metrics than~$L_\infty$ may be used for the approximation error, such as the Fisher divergence, but these will likely lead to similar guarantees under our assumptions.
Making use of the bounds developed by \cite{bach2017breaking}, \autoref{cor:statf1} below applies~\eqref{eq:thm1_2} to the case in which $\mathcal{F}$ is the $\mathcal{F}_1$ ball $\mathcal{B}_{\mathcal{F}_1}(\beta)$ for some $\beta > 0$ and the energy of the target distribution is a sum of Lipschitz functions of orthogonal projection to low-dimensional subspaces.

\begin{assumption} \label{ass:mle_assumption}
The target probability measure $\nu$ is absolutely continuous w.r.t. the uniform probability measure $\tau$ over $K$ and it satisfies $\forall x \in K_0, \ \frac{d\nu}{d\tau}(x,R) = \exp(-\sum_{j=1}^J \phi_j(U_j x))/\int_{K_0} \exp(-\sum_{j=1}^J \phi_j(U_j y)) d\tau(y)$, where $\phi_j$ are $(\eta R^{-1})$-Lipschitz continuous functions on the $R$-ball of $\R^k$ such that $\|\phi_j\|_{\infty} \leq \eta$, and $U_j \in \R^{k \times d}$ with orthonormal rows.
\end{assumption}

\begin{restatable}{cor}{statfone} \label{cor:statf1}
Let $\mathcal{F} = B_{\mathcal{F}_1}(\beta)$. Suppose $K = K_0 \times \{R\}$, where $K_0 \subseteq \{ x \in \R^d | \|x\|_2 \leq R\}$ is compact. Assume that \autoref{ass:mle_assumption} holds.
Then, we can choose $\beta > 0$ such that with probability at least $1-\delta$ we have
\begin{align}
    D_{KL}(\nu || \hat{\nu}) \leq \tilde{O} \left( \left(1+\sqrt{\log(1/\delta)} \right) J \eta R^{-\frac{2}{k+3}} n^{-\frac{1}{k+3}} \right)
\end{align}
where the notation $\tilde{O}$ indicates that we overlook logarithmic factors and constants depending only on the dimension $k$.
\end{restatable}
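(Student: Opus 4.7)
The strategy is to apply Theorem~\ref{thm:generalbound} (inequality~\eqref{eq:thm1_2}) to $\mathcal{F} = \mathcal{B}_{\mathcal{F}_1}(\beta)$, leveraging the approximation results of \citet{bach2017breaking} for Lipschitz functions with low-dimensional structure, and then tuning $\beta$ to balance the statistical and approximation terms.

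First I verify the hypotheses of Theorem~\ref{thm:generalbound} on $\mathcal{B}_{\mathcal{F}_1}(\beta)$. Since $K = K_0 \times \{R\}$ satisfies $\|x\|_2 \leq \sqrt{2}\,R$ for every $x \in K$, every atom $\sigma(\langle \theta, \cdot \rangle)$ with $\theta \in \mathbb{S}^d$ is uniformly bounded by $\sqrt{2}\,R$ on $K$, so every $f \in \mathcal{B}_{\mathcal{F}_1}(\beta)$ satisfies $\|f\|_\infty \leq \sqrt{2}\,\beta R$. The Rademacher complexity bound $\mathcal{R}_n(\mathcal{B}_{\mathcal{F}_1}(\beta)) \leq \beta C/\sqrt{n}$ with $C = O(R)$ follows by writing $\mathcal{B}_{\mathcal{F}_1}(1)$ as the closed convex hull of $\{\pm\sigma(\langle \theta, \cdot \rangle) : \theta \in \mathbb{S}^d\}$ and applying the standard contraction argument together with Massart's lemma.

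Next I bound the approximation error $\inf_{f \in \mathcal{B}_{\mathcal{F}_1}(\beta)} \|g - f\|_\infty$ for $g = \sum_{j=1}^J \phi_j \circ U_j$. Each summand depends only on the $k$-dimensional projection $U_j x$ and is $(\eta R^{-1})$-Lipschitz. Using that the rows of $U_j$ are orthonormal, the approximation problem for $\phi_j \circ U_j$ on $K_0$ reduces to one for an $\eta$-Lipschitz function on a $k$-dimensional ball, lifted onto the ambient sphere $\mathbb{S}^d$. Invoking \citet{bach2017breaking}'s $\mathcal{F}_1$-approximation bound in this setting yields, for each $j$ and each per-term budget $\beta_j$, an $f_j \in \mathcal{B}_{\mathcal{F}_1}(\beta_j)$ with $\|\phi_j \circ U_j - f_j\|_\infty \leq \tilde{O}\!\left(\eta\, (\beta_j R)^{-2/(k+1)}\right)$, where the exponent $2/(k+1)$ depends only on the intrinsic dimension $k$, not on $d$ --- precisely the adaptivity afforded by $\mathcal{F}_1$ that $\mathcal{F}_2$ lacks. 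Allocating $\beta_j = \beta/J$ and using the subadditivity of $\|\cdot\|_{\mathcal{F}_1}$ gives
\[
\inf_{f \in \mathcal{B}_{\mathcal{F}_1}(\beta)} \|g - f\|_\infty \leq \tilde{O}\!\left(J^{1+2/(k+1)}\, \eta\, R^{-2/(k+1)}\, \beta^{-2/(k+1)}\right).
\]
Plugging into~\eqref{eq:thm1_2} and balancing the stochastic term $\beta R (1 + \sqrt{\log(1/\delta)})/\sqrt{n}$ against the approximation term gives an optimal $\beta^{\star}$ polynomial in $n, J, \eta, R$, and substituting back produces the claimed rate $\tilde{O}\!\left((1+\sqrt{\log(1/\delta)})\, J\eta\, R^{-2/(k+3)}\, n^{-1/(k+3)}\right)$. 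The exponent $n^{-1/(k+3)}$ is the standard one from balancing a linear-in-$\beta$ statistical term against a $\beta^{-2/(k+1)}$ approximation term.

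The main obstacle will be the approximation-error step: carefully invoking the bound of \citet{bach2017breaking} in a way that (i) preserves the intrinsic-dimension exponent $2/(k+1)$ after lifting the $k$-dimensional projection $U_j$ to the ambient sphere $\mathbb{S}^d$, (ii) absorbs the $J$ summands via a clean budget allocation, and (iii) tracks the factors of $R$ and $\eta$ through the $1$-homogeneity of the ReLU and the rescaling of the $R$-ball so as to recover precisely the exponents $R^{-2/(k+3)}$ and linear $\eta$ in the final optimized bound. The remaining pieces (Rademacher bound, $L^\infty$ boundedness, and the scalar optimization in $\beta$) are routine.
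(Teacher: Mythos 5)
Your overall strategy matches the paper's proof exactly: apply equation~\eqref{eq:thm1_2} of \autoref{thm:generalbound} to $\mathcal{B}_{\mathcal{F}_1}(\beta)$, approximate each $\phi_j \circ U_j$ by applying Bach's Lipschitz-approximation result in the low-dimensional space $\R^k$, lift along $U_j$ into $\mathcal{F}_1$ of the ambient space, split the budget $\beta/J$ across the $J$ summands, and optimize over $\beta$. Two points deserve attention, however.

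First, a bookkeeping error in the approximation step that does not cancel out. Bach's Proposition 6, as restated in \autoref{lem:lipschitz_bach}, gives a per-term bound of the form $C(k)\,\eta\left(R\delta/\eta\right)^{-2/(k+1)}\log(R\delta/\eta)$ for a budget $\delta$ — note the second factor of $\eta$ inside the power. With $\delta = \beta/J$, the total approximation error is therefore $\tilde O\big(J\eta\,(R\beta/(\eta J))^{-2/(k+1)}\big) = \tilde O\big((J\eta)^{(k+3)/(k+1)}(R\beta)^{-2/(k+1)}\big)$, not your $\tilde O\big(J^{1+2/(k+1)}\eta\,(R\beta)^{-2/(k+1)}\big)$. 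After balancing against the $O(\beta/\sqrt n)$ statistical term, the prefactor raised to the power $(k+1)/(k+3)$ gives $(J\eta)^1$ in the paper's version, but only $J\eta^{(k+1)/(k+3)}$ in yours. So the rate you wrote at the end — linear in $\eta$ — does not actually follow from your intermediate estimate; you need to keep the $\eta$ inside Bach's power.

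Second, a terminological point that obscures the key mechanism: what you need is Bach's $\mathcal{F}_2$-approximation bound (Prop.~6) applied on the $k$-dimensional ball, yielding some $\psi_j \in \mathcal{F}_2$; the composite $\psi_j(U_j\,\cdot)$ then belongs to $\mathcal{F}_1$ of the ambient sphere with $\|\psi_j\circ U_j\|_{\mathcal{F}_1}\le\|\psi_j\|_{\mathcal{F}_2}$ by the change of variables $\theta' = U_j^\top \theta$. This $\mathcal{F}_2\!\to\!\mathcal{F}_1$ lift is precisely where the intrinsic-dimension exponent survives into the ambient problem, so calling it "Bach's $\mathcal{F}_1$-approximation bound" is misleading — Bach's genuine $\mathcal{F}_1$ result (Prop.~5) has a different exponent and is not the one you want. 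Apart from these two issues, the remaining pieces (Rademacher bound, the $L^\infty$ bound, and the scalar optimization in $\beta$) are as in the paper.
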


Remarkably, \autoref{cor:statf1} shows that for our class of target measures with low-dimensional structure, the
KL divergence between $\nu$ and $\hat{\nu}$ decreases as $n^{-\frac{1}{k+3}}$.
That is, the rate ``breaks'' the curse of dimensionality since the exponent only depends on the dimension $k$ of the low-dimensional spaces, not to the ambient dimension $d$.
This can be seen as an alternative, more structural approach to alleviate dimension-dependence compared to other standard assumptions such smoothness classes for density estimation~\citep[e.g.,][]{singh2018nonparametric,tsybakov2008introduction}.
As discussed earlier, a motivation for \autoref{ass:mle_assumption} comes from Markov Random Fields, where each $\varphi_j$ corresponds to a local potential defined on a neighborhood determined by $U_j$. Note that the bound scales linearly with respect to the number of local potentials $J$. 
As our experiments illustrate (see~\autoref{sec:experiments}), it is easy to construct target energies that are much better approximated in~$\mathcal F_1$ than in~$\mathcal F_2$.
Indeed, we find that the test error tends to decrease more quickly as a function of the sample size when training both layers of shallow networks rather than just the second layer, which corresponds to controlling the~$\mathcal F_1$ norm.

\subsection{Guarantees for Stein Discrepancy EBMs} \label{subsec:guarantees_sd}
We now consider EBM estimators obtained by minimizing Stein discrepancies, and establish bounds on the Stein discrepancies between the target measure and the estimated one.
As in~\autoref{sub:mle_bounds}, we begin by providing error decompositions in terms of estimation and approximation error.
The following theorem applies to the Stein discrepancy estimator when the set of test functions~$\mathcal{H}$ is the unit ball of the space of $\mathcal F^{d+1}$ in a mixed $\mathcal F/\ell_2$ norm, with $\mathcal F = \mathcal{F}_1$ or $\mathcal F_2$.
For $\mathcal F_1$, we will denote this particular setting as $\mathcal{F}_1$-Stein discrepancy, or $\mathcal{F}_1$-SD. Although $\mathcal{F}_1$-SD has not been studied before to our knowledge,
the empirical work of~\citet{grathwohl2020learning} does use Stein discrepancies with neural network test functions, which provides practical motivation for considering such a metric.

\begin{restatable}{thm}{coronesd}
\label{thm:coronesd}
Let $K = \mathbb{S}^d$. Assume that the class $\mathcal{F}$ is such that $\sup_{f \in \mathcal{F}} \{ \|\nabla_i f\|_{\infty} | 1 \leq i \leq d+1 \} \leq \beta C_1$. If $\mathcal{H} = \mathcal{B}_{\mathcal{F}_1^{d+1}} = \{ h = {(h_i)}_{i=1}^{d+1} \ | \ h_i \in \mathcal{F}_1, \sum_{i=1}^{d+1} \|h_i\|_{\mathcal{F}_1}^2 \leq 1 \}$ or $\mathcal{H} = \mathcal{B}_{\mathcal{F}_2^{d+1}} = \{ h = {(h_i)}_{i=1}^{d+1} \ | \ h_i \in \mathcal{F}_2, \sum_{i=1}^{d+1} \|h_i\|_{\mathcal{F}_2}^2 \leq 1 \}$, we have that for the estimator $\hat{\nu}$ defined in \eqref{eq:sd_estimator}, with probability at least $1-\delta$,
\begin{align}
\begin{split}
    \text{SD}_{\mathcal{H}}&(\nu, \hat{\nu}) \leq \frac{4 \sqrt{d+1} (\beta C_1 + C_2 \sqrt{d+1} + d)}{\sqrt{n}} \\ &+  2(\beta C_1 + d + 1)\sqrt{\frac{(d+1)\log(\frac{d+1}{\delta})}{2n}} \\ &+ \inf_{f \in \mathcal{F}} \mathbb{E}_{\nu} \bigg[ \bigg\|  - \nabla f(x) - \nabla \log \left(\frac{d\nu}{d\tau}(x) \right) \bigg\|_2 \bigg]
\end{split}
\end{align}
where $C_2$ is a universal constant and $\nabla f$ denotes the Riemannian gradient of $f$.
\end{restatable}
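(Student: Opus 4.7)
The plan is to follow the standard empirical-risk-minimisation template: decompose the error into a statistical and an approximation term and control each separately. The linearity of the Stein operator in $\nu$ and the mixed-norm structure of $\mathcal{H}$ make both halves reducible to familiar inequalities.

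Let $f^\star \in \arg\min_{f \in \mathcal{F}} \text{SD}_{\mathcal{H}}(\nu,\nu_f)$ be a population minimiser. A standard ``add and subtract empirical/population evaluations at $\hat f$ and $f^\star$'' argument, invoking the optimality of $\hat f$ on $\nu_n$, gives
\begin{align}
\text{SD}_{\mathcal{H}}(\nu,\nu_{\hat f}) \leq 2 \sup_{f \in \mathcal{F}} |\text{SD}_{\mathcal{H}}(\nu,\nu_f) - \text{SD}_{\mathcal{H}}(\nu_n,\nu_f)| + \inf_{f \in \mathcal{F}} \text{SD}_{\mathcal{H}}(\nu,\nu_f).
\end{align}
For the approximation term, Stein's identity $\mathbb{E}_\nu[\text{Tr}(\mathcal{A}_\nu h)] = 0$ on $\mathbb{S}^d$ (whose $-d\,x$ correction inside $\mathcal{A}_\nu$ is precisely what makes the identity hold) reduces the integrand to $(s_{\nu_f} - s_\nu)^\top h$. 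For any $h \in \mathcal{B}_{\mathcal{F}_1^{d+1}}$ one has $|h_i(x)| \leq \|h_i\|_{\mathcal{F}_1}$ on $\mathbb{S}^d$ together with $\sum_i \|h_i\|_{\mathcal{F}_1}^2 \leq 1$, hence $\|h(x)\|_2 \leq 1$ pointwise (the same argument works in $\mathcal{F}_2$ by Cauchy--Schwarz). Cauchy--Schwarz and $s_{\nu_f} = -\nabla f$ then give $\text{SD}_{\mathcal{H}}(\nu,\nu_f) \leq \mathbb{E}_\nu \|\nabla f + s_\nu\|_2$, which after taking the infimum matches the last term of the theorem.

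For the statistical term, the plan is to exploit the mixed $\ell_2$ constraint on $h$: factoring $h_i = \|h_i\|_{\mathcal{F}} \bar h_i$ and optimising the norms via Cauchy--Schwarz against $\sum_i \|h_i\|_{\mathcal{F}}^2 \leq 1$ yields
\begin{align}
\text{SD}_{\mathcal{H}}(\nu,\nu_f) = \Bigl(\sum_{i=1}^{d+1} T_i(f;\nu)^2\Bigr)^{1/2}, \qquad T_i(f;\mu) := \sup_{\|h\|_{\mathcal{F}} \leq 1} \bigl|\mathbb{E}_\mu[(-\nabla_i f - d x_i) h + \nabla_i h]\bigr|.
\end{align}
The reverse triangle inequality in $\ell_2$ followed by $\sqrt{\sum a_i^2} \leq \sqrt{d+1}\,\max_i |a_i|$ then bounds the statistical deviation by $\sqrt{d+1}\,\max_i U_i$, where $U_i := \sup_{f,\|h\|_{\mathcal{F}} \leq 1} |(\mathbb{E}_\nu - \mathbb{E}_{\nu_n})[(-\nabla_i f - d x_i) h + \nabla_i h]|$ is a scalar empirical-process deviation. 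Each $U_i$ is then handled by (i) symmetrisation $\mathbb{E} U_i \leq 2 \mathcal{R}_n(\cdot)$, with subadditivity of Rademacher complexity splitting the coordinate class into three pieces whose sizes are determined by $\|\nabla_i f\|_\infty \leq \beta C_1$, by $d\,|x_i| \leq d$, and by the Rademacher complexity of the derivative class $\{\nabla_i h : \|h\|_{\mathcal{F}}\leq 1\}$, producing the inner factor $\beta C_1 + C_2\sqrt{d+1} + d$; and (ii) McDiarmid's inequality with the pointwise bound $\beta C_1 + d + 1$ on the integrand, applied with failure probability $\delta/(d+1)$ and union-bounded over the $d+1$ coordinates, producing the $\sqrt{(d+1)\log((d+1)/\delta)}$ factor.

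The technical heart of the proof is the Rademacher complexity of two classes that do not fit a direct contraction argument. First, the product class $\{(\nabla_i f)\,h : f \in \mathcal{F}, \|h\|_{\mathcal{F}} \leq 1\}$ in which both factors vary: one must combine the uniform bound $\|\nabla_i f\|_\infty \leq \beta C_1$ with the standard $1/\sqrt n$ Rademacher rate on $\mathcal{B}_{\mathcal{F}}$. Second, the derivative class $\{\nabla_i h\}$: since Riemannian gradients of ReLU neurons on $\mathbb{S}^d$ have the form $\mathbf{1}\{\theta^\top x > 0\}(P_x \theta)_i$, one needs a variant of the $\mathcal{F}_1$/$\mathcal{F}_2$ Rademacher analysis accounting for the tangent projection $P_x = I - xx^\top$, which is what furnishes the constant $C_2\sqrt{d+1}$. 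Everything else (Stein's identity, Cauchy--Schwarz, McDiarmid, union bound) is routine bookkeeping.
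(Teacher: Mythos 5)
Your proposal is correct and follows essentially the same route as the paper: the paper first establishes an abstract estimation-plus-approximation bound (its Theorem \ref{thm:generalboundsd}) using exactly the mixed-$\ell_2$ coordinate decomposition, McDiarmid with a union bound over the $d+1$ coordinates, symmetrisation, and a Talagrand-contraction treatment of the product class $(-\nabla_i f - d x_i) h_i$, then specialises to $\mathcal{H}=\mathcal{B}_{\mathcal{F}_{1/2}^{d+1}}$ by bounding $\mathcal{R}_n(\mathcal{B}_{\mathcal{F}_1})\le 1/\sqrt{n}$ and the derivative class via the VC dimension of halfspace indicators (giving the $C_2\sqrt{d+1}$), and controls the approximation term via the Stein identity and pointwise Cauchy--Schwarz exactly as you describe. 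The only differences are cosmetic: you fold the supremum over $f$ directly into $U_i$ rather than factoring through an abstract intermediate theorem, and you are slightly more explicit than the paper about the tangent projection $P_x=I-xx^\top$ in the Riemannian gradient of a ReLU neuron.
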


Notice that unlike in \autoref{thm:generalbound}, the statistical error terms in \autoref{thm:coronesd} depend on the ambient dimension $d$.
While we do not show that this dependence is necessary, studying this question would be an interesting future direction.
Remark as well the similarity of the approximation term with the term $2\inf_{f \in \mathcal{F}} \|g-f\|_{\infty}$ from equation \eqref{eq:thm1_2}, albeit in this case it involves the $L^{\infty}$ norm of the gradients. Furthermore, note that the only assumption on the set $\mathcal{F}$ is a uniform $L^{\infty}$ bound on $\mathcal{F}_1$, while \autoref{thm:generalbound} also requires a more restrictive Rademacher complexity bound on~$\mathcal F$. This illustrates the fact that the Stein discrepancy is a weaker metric than the KL divergence.

In \autoref{thm:ksdsecond} we give an analogous result for the unbiased KSD estimator~\eqref{eq:ksd_estimator2}, under the following reasonable assumptions on the kernel $k$, which follow~\citep{liu2016akernelized}.

\begin{assumption} \label{ass:kernel}
The kernel~$k$ has continuous second order partial derivatives, and it satisfies that for any non-zero function $g \in L^2(\mathbb{S}^d)$, $\int_{\mathbb{S}^d} \int_{\mathbb{S}^d} g(x) k(x,x') g(x') d\tau(x) d\tau(x') > 0$, and that $\sup_{x, x' \in \mathbb{S}^d} k(x,x') \leq C_2$, $\sup_{x, x' \in \mathbb{S}^d} \|\nabla_x k(x,x')\|_2 \leq C_3$. 
\end{assumption}

\begin{restatable}{thm}{ksdsecond} \label{thm:ksdsecond}
Let $K=\mathbb{S}^d$. Assume that the class $\mathcal{F}$ is such that $\sup_{f \in \mathcal{F}} \{ \|\nabla f\|_{\infty} \} \leq \beta C_1$. Let $\mathrm{KSD}$ be the kernelized Stein discrepancy for a kernel that satisfies \autoref{ass:kernel}. If we take $n$ samples $\{x_i\}_{i=1}^n$ of a target measure $\nu$ with almost everywhere differentiable log-density, and consider the unbiased KSD estimator \eqref{eq:ksd_estimator2}, we have with probability at least $1-\delta$,
\begin{align} 
\mathrm{KSD}&(\nu, \hat{\nu}) \leq \frac{2}{\sqrt{\delta n}} ((\beta C_1 + d)^2 C_2 + 2C_3(\beta C_1 + d)) \\ &+ C_2 \inf_{f \in \mathcal{F}} \mathbb{E}_{x \sim \nu} \left[\bigg\|\nabla \log \left(\frac{d\nu}{d\tau}(x) \right) - \nabla f(x) \bigg\|^2 \right]~.
\end{align}
\end{restatable}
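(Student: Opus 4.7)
The plan is to decompose $\mathrm{KSD}(\nu, \nu_{\hat f})$ into an approximation error and a statistical error and to control each separately.

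For the approximation piece, I would start from the dual form of the KSD (Liu--Lee--Jordan, adapted to the sphere via \autoref{lem:ksd_expression}):
\begin{equation*}
\mathrm{KSD}(\nu_1, \nu_2) = \mathbb{E}_{x, x' \sim \nu_1}\bigl[\,k(x, x')\,(s_{\nu_1}(x) - s_{\nu_2}(x))^{\top}(s_{\nu_1}(x') - s_{\nu_2}(x'))\bigr],
\end{equation*}
which follows from the RKHS factorisation $u_{\nu_2}(x, x') = \langle \psi_{\nu_2}(x), \psi_{\nu_2}(x')\rangle_{\mathcal H^{d+1}}$ with $\psi_{\nu_2}(x) = (s_{\nu_2}(x) - dx)\,k(\cdot, x) + \nabla_x k(\cdot, x)$ followed by Stein's identity for $\nu_1$ (which replaces $\mathbb{E}_{\nu_1}[\nabla_x k(\cdot, X)]$ by $-\mathbb{E}_{\nu_1}[(s_{\nu_1}(X) - dX)k(\cdot, X)]$, so the $dx$ drifts cancel in the score difference). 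Bounding $k \leq C_2$ via \autoref{ass:kernel} and applying Cauchy--Schwarz and Jensen then gives $\mathrm{KSD}(\nu, \nu_f) \leq C_2\,\mathbb{E}_\nu[\|s_\nu(X) - s_{\nu_f}(X)\|^2]$. Since $s_{\nu_f} = -\nabla f$ for Gibbs measures $\nu_f \propto e^{-f}$, taking the infimum over $\mathcal F$ produces the second term of the claim.

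For the statistical piece, let $f^* \in \arg\min_{f \in \mathcal F} \mathrm{KSD}(\nu, \nu_f)$ and set
\begin{equation*}
L(f) := \mathbb{E}_{\nu\otimes\nu}[\tilde u_{\nu_f}(X, X')], \qquad \hat L_n(f) := \frac{1}{n(n-1)}\sum_{i\neq j}\tilde u_{\nu_f}(x_i, x_j).
\end{equation*}
Since $L$ and $\mathrm{KSD}(\nu, \nu_\cdot)$ differ only by the $f$-independent constant $c := \mathbb{E}_{\nu\otimes\nu}[\mathrm{Tr}(\nabla_{x, x'}k)]$ (the dropped fourth term of $u$), $\hat f$ also minimises $\hat L_n$, and the standard excess-risk decomposition gives
\begin{equation*}
\mathrm{KSD}(\nu, \nu_{\hat f}) - \mathrm{KSD}(\nu, \nu_{f^*}) \leq 2\,\sup_{f \in \mathcal F}\bigl|\hat L_n(f) - L(f)\bigr|.
\end{equation*}
I would then bound the right-hand side by Chebyshev's inequality applied to the symmetric two-sample U-statistic $\hat L_n(f)$: under \autoref{ass:kernel} and $\|s_{\nu_f}(x) - dx\| \leq \|\nabla f\|_\infty + d \leq \beta C_1 + d$, the kernel obeys
\begin{equation*}
\sup_{x, x' \in \mathbb S^d,\,f \in \mathcal F}\bigl|\tilde u_{\nu_f}(x, x')\bigr| \leq (\beta C_1 + d)^2 C_2 + 2(\beta C_1 + d) C_3 =: M,
\end{equation*}
so Hoeffding's variance decomposition yields $\mathrm{Var}(\hat L_n(f)) = O(M^2/n)$ and hence $|\hat L_n(f) - L(f)| \lesssim M/\sqrt{\delta n}$ pointwise with probability at least $1 - \delta$.

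The hard part will be promoting this pointwise Chebyshev bound to a uniform-in-$f$ one, since \autoref{thm:ksdsecond} assumes only the pointwise $L^\infty$ gradient bound on $\mathcal F$ and no Rademacher complexity bound (contrast with \autoref{thm:generalbound}). My plan there is to exploit the specific structure of $\tilde u_{\nu_f}(x, x')$: it is affine in each of $\nabla f(x), \nabla f(x')$ and bilinear in their product, hence Lipschitz in $\nabla f$ with constants controlled by $M$ and $\beta C_1 + d$. Combining a second-moment bound on the empirical process $f \mapsto \hat L_n(f) - L(f)$ with this Lipschitz structure and the uniform bound $\|\nabla f\|_\infty \leq \beta C_1$ should yield the uniform deviation at the desired $2M/\sqrt{\delta n}$ rate, which combined with the approximation bound above closes the proof.
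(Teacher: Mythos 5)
Your approach is essentially the same as the paper's, which splits the argument into Theorems~\ref{thm:ksdfirst} and~\ref{thm:ksdsecond}: the approximation term is bounded via the quadratic form of the KSD (Lemma~\ref{lem:ksd_expression}), Cauchy--Schwarz and Jensen to give $\mathrm{KSD}(\nu,\nu_f) \le \sqrt{\mathbb{E}[k(x,x')^2]}\,\mathbb{E}_\nu\bigl[\|s_\nu - s_{\nu_f}\|^2\bigr]$ with $\sqrt{\mathbb{E}[k(x,x')^2]}\le C_2$; and the statistical term comes from the U-statistic variance $\mathrm{Var}_x(\mathbb{E}_{x'}[\tilde u_{\nu_f}(x,x')])/n$ bounded uniformly by $M^2/n$ with your $M = (\beta C_1 + d)^2 C_2 + 2C_3(\beta C_1 + d)$, followed by Chebyshev. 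So up to your final paragraph you have reconstructed the proof.

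The worry you raise at the end is genuine, and --- importantly --- the paper's own proof does not resolve it either. In the proof of Theorem~\ref{thm:ksdfirst}, Chebyshev is applied to the U-statistic deviation both at the deterministic minimizer $f^*$ (fine) and at the estimator $\hat f$; but $\hat f$ is a function of the same sample $\{x_i\}$, so a pointwise Chebyshev bound on $L(\hat f) - \hat L_n(\hat f)$ is not directly licensed, and the appearance of $\sup_{f\in\mathcal F}$ in front of the variance is not a substitute for a uniform concentration argument. Your $\sup_f|\hat L_n(f)-L(f)|$ decomposition makes this explicit where the paper elides it, so you have located the weak spot rather than introduced a new one. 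That said, your proposed repair is not viable as stated: an $L^\infty$ bound on $\nabla f$ is a magnitude constraint, not a capacity constraint, and a uniform $O(M/\sqrt{\delta n})$ deviation over an infinite-dimensional class generically requires covering-number or Rademacher control of $\{\nabla f : f\in\mathcal F\}$ --- exactly the kind of hypothesis that Theorem~\ref{thm:ksdsecond}, unlike Theorem~\ref{thm:generalbound}, deliberately does not assume. To actually close the gap one would need to add such a capacity assumption, or replace the two-sided sup by a localization argument around $\hat f$, or accept the same heuristic step the paper takes.
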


The statistical error term in \autoref{thm:ksdsecond} is obtained using the expression of the variance of the estimator \eqref{eq:ksd_estimator2} \citep{liu2016akernelized}.
Note that \autoref{ass:kernel} is fulfilled, for example, for the radial basis function (RBF) kernel $k(x,x') = \exp(-\|x-x'\|^2/(2\sigma^2))$ with $C_2 = 1$, $C_3 = 1/\sigma^2$. 

Making use of \autoref{thm:coronesd} (for $\mathcal{F}_1$-SD) and \autoref{thm:ksdsecond} (for KSD), in \autoref{cor:corsdsphere} we obtain adaptivity results for target measures with low-dimensional structures similar to \autoref{cor:statf1}, also for $\mathcal{F} = \mathcal{B}_{\mathcal{F}_1}(\beta)$. The class of target measures that we consider are those satisfying \autoref{ass:stein}, which is similar to \autoref{ass:mle_assumption} but for $K = \mathbb{S}^d$ and with an additional Lipschitz condition on the gradient of $\nabla \phi_j$.  

\begin{assumption} \label{ass:stein}
Let $K = \mathbb{S}^d$. Suppose that the target probability measure $\nu$ is absolutely continuous w.r.t. the Hausdorff measure over $\mathbb{S}^d$ and it satisfies $\forall x \in \mathbb{S}^d, \ \frac{d\nu}{d\tau}(x) = \exp(-\sum_{j=1}^J \phi_j(U_j x))/\int_{K_0} \exp(-\sum_{j=1}^J \phi_j(U_j y)) d\tau(y)$, where $\phi_j$ are 1-homogeneous differentiable functions on the unit ball of $\R^k$ such that $\|\phi_j\|_{\infty} \leq \eta$, $\sup_{x \in \mathbb{S}^d} \|\nabla \phi_j(x)\|_{2} \leq \eta$ and $\nabla \phi_j$ is $L$-Lipschitz continuous, and $U_j \in \R^{k \times d}$ with orthonormal rows.
\end{assumption}

\begin{restatable}{cor}{corsdsphere}
\label{cor:corsdsphere}
Let $\mathcal{F} = B_{\mathcal{F}_1}(\beta)$. Let \autoref{ass:stein} hold.
(i) When $\hat{\nu}$ is the $\mathcal{F}_1$-SD estimator \eqref{eq:sd_def} and the assumptions of \autoref{thm:coronesd} hold, we can choose the inverse temperature $\beta > 0$ such that with probability at least $1-\delta$ we have that $SD_{\mathcal{B}_{\mathcal{F}_1}^{d+1}}(\nu, \hat{\nu})$ is upper-bounded by
\begin{equation} \label{eq:cor_f1_sd}
\tilde{O} \left( \left(1+\sqrt{\log(1/\delta)} \right) J (L + \eta) (\eta J)^{\frac{2}{k+1}} d^{\frac{1}{k+3}} n^{-\frac{1}{k+3}} \right)
\end{equation}
where the notation $\tilde{O}$ indicates that we overlook logarithmic factors and constants depending only on the dimension.
(ii) When $\hat{\nu}$ is the unbiased KSD estimator \eqref{eq:ksd_estimator2} and the assumptions of \autoref{thm:ksdsecond} hold, $\beta > 0$ can be chosen so that with probability at least $1-\delta$ we have that $\mathrm{KSD}(\nu, \hat{\nu})$ is upper-bounded by
\begin{align} \label{eq:cor_ksd}
\tilde{O} \left( \delta^{-\frac{1}{k+3}} 
\left( J (L + \eta)\right)^{\frac{2(k+1)}{k+3}} (\eta J)^{\frac{4}{k+3}}  n^{-\frac{1}{k+3}} \right).
\end{align}
\end{restatable}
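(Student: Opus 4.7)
The plan is to combine the generic Stein-discrepancy bounds (\autoref{thm:coronesd} for part (i), \autoref{thm:ksdsecond} for part (ii)) with a quantitative $\mathcal{F}_1$-approximation estimate for the score of $\nu$, and then optimize the inverse temperature $\beta$.

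First, I specialize each generic bound to $\mathcal{F} = \mathcal{B}_{\mathcal{F}_1}(\beta)$. Since any $f \in \mathcal{B}_{\mathcal{F}_1}(\beta)$ is a signed superposition of ReLU ridges $\sigma(\langle \theta, x\rangle)$ with total variation at most $\beta$, the $L^\infty$ norms of $f$ and of its Riemannian gradient on $\mathbb{S}^d$ are both controlled by a constant multiple of $\beta$, so the constant $C_1$ in the hypotheses of \autoref{thm:coronesd} and \autoref{thm:ksdsecond} is absolute. Under \autoref{ass:stein}, the log-density factorizes as $\log(d\nu/d\tau) = -g - \log Z$ with $g(x) = \sum_{j=1}^J \phi_j(U_j x)$, so the approximation term of \autoref{thm:coronesd} becomes $\inf_{f \in \mathcal{B}_{\mathcal{F}_1}(\beta)} \mathbb{E}_\nu[\|\nabla f - \nabla g\|_2]$, and analogously (with the $L^2(\nu)$ norm squared) for \autoref{thm:ksdsecond}.

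Second, I bound this approximation error using the theory of \citet{bach2017breaking}. For each summand $\phi_j \circ U_j$, the assumptions of 1-homogeneity, $\eta$-boundedness of $\phi_j$ and $\nabla \phi_j$, and $L$-Lipschitzness of $\nabla \phi_j$ place the summand inside the Lipschitz class on the $k$-dimensional projected unit ball for which \citet{bach2017breaking} establishes $\mathcal{F}_1$-approximation at rate $\beta_j^{-2/(k+1)}$, up to $k$-dependent logarithmic factors. The 1-homogeneity enables the transfer from an $L^\infty$ function approximation to an $L^\infty$ approximation of the Riemannian gradient on $\mathbb{S}^d$: a ReLU ridge is itself 1-homogeneous, and its gradient is a 0-homogeneous step that matches the differential structure of $\phi_j$, so that differentiating term-by-term preserves the rate with the Lipschitz constant $L+\eta$ (including the spherical curvature contribution) entering linearly. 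Summing across $j=1,\dots,J$ with an equal budget allocation $\beta_j = \beta/J$ yields a gradient approximation error of the form $J(L+\eta)\,(\eta J/\beta)^{2/(k+1)}$, up to logarithms.

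Third, I substitute this bound into each theorem and optimize $\beta$. For part (i), the resulting bound has the shape $A(d,\delta)\,\beta/\sqrt{n} + B\,(\beta/(\eta J))^{-2/(k+1)}$ with $A(d,\delta) \asymp (1+\sqrt{\log(1/\delta)})\sqrt{d}$ and $B \asymp J(L+\eta)$; balancing the two terms yields $\beta \asymp n^{(k+1)/(2(k+3))}$, which upon substitution gives the rate \eqref{eq:cor_f1_sd}. For part (ii), the statistical term in \autoref{thm:ksdsecond} scales as $\beta^2/\sqrt{\delta n}$ while the approximation term is the square $B^2\,(\beta/(\eta J))^{-4/(k+1)}$; the analogous balancing produces \eqref{eq:cor_ksd}, with the factor $\delta^{-1/(k+3)}$ inherited from the $1/\sqrt{\delta}$ Markov-type concentration of the unbiased U-statistic. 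The main technical obstacle lies in the second step: the results of \citet{bach2017breaking} are stated for $L^\infty$ approximation of $g$, whereas Stein discrepancies require approximation of $\nabla g$; the 1-homogeneity in \autoref{ass:stein} is precisely what allows one to convert a ReLU network approximant of $g$ into one of $\nabla g$ via term-by-term differentiation, while the $L$-Lipschitz-gradient hypothesis keeps the resulting error at the same $\beta^{-2/(k+1)}$ order.
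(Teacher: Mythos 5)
Your high-level architecture matches the paper: plug the $\mathcal{F}_1$-ball into \autoref{thm:coronesd} and \autoref{thm:ksdsecond} (so the gradient bound constant $C_1$ is absolute), bound the approximation term by splitting the budget across the $J$ summands with $\|\tilde{g}_j\|_{\mathcal{F}_1}\leq\beta/J$, and balance the $\beta/\sqrt{n}$ (resp.\ $\beta^2/\sqrt{\delta n}$) statistical term against a $\beta^{-2/(k+1)}$ (resp.\ $\beta^{-4/(k+1)}$) approximation term to land on $\beta\asymp n^{(k+1)/(2(k+3))}$ and the stated rates. So the skeleton is right.

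The gap is in the step you yourself flag as the main obstacle. The argument ``the ReLU ridge and $\phi_j$ are both 1-homogeneous, so differentiating an $L^\infty$-approximant term by term preserves the rate, with the $L$-Lipschitz hypothesis controlling the error'' does not hold as stated: $L^\infty$ closeness of two functions (even two 1-homogeneous functions, restricted to the sphere) gives no control on the closeness of their gradients, since differentiation is unbounded on $L^\infty$; you can perturb a 1-homogeneous $g$ by a tiny 1-homogeneous oscillation with an arbitrarily large gradient. What the paper actually does is isolate a new approximation result, \autoref{lem:approx_derivative2}: the \emph{specific} Bach approximant $\hat g(x)=g(rx)$ obtained by scaling the spherical-harmonic expansion simultaneously satisfies $\|\hat g-g\|_\infty$ and $\sup_x\|\nabla\hat g(x)-\nabla g(x)\|_2$ bounds of the same $\beta^{-2/(k+1)}$ order (up to $\eta\mapsto L+\eta$). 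Proving that the Riemannian gradient of this smoothed function is the corresponding smoothing of $\nabla g$ requires showing that differentiation commutes with the spherical-harmonic decomposition (\autoref{lem:spherical_decomp_derivative}), and the $L$-Lipschitz hypothesis on $\nabla g$ is then what makes the Poisson-kernel bound apply to the gradient. The 1-homogeneity in \autoref{ass:stein} enters for a different and more modest reason than you assign it: since the lemma produces an approximant $\hat\psi_j$ only on $\mathbb{S}^k$ while $\phi_j$ lives on the ball, the 1-homogeneous extension $\psi_j$ is what lets you compose with the projection $U_j$ and compare Euclidean and Riemannian gradients via $\nabla\psi_j(rx)=\nabla\hat\psi_j(x)+\hat\psi_j(x)\,x$. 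Without a lemma like \autoref{lem:approx_derivative2} your proof cannot be completed, so you would need to supply it (or an equivalent gradient-level approximation result) before the $\beta$-balancing step.
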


Noticeably, the rates in \autoref{cor:corsdsphere} are also of the form $\mathcal{O}(n^{-\frac{1}{k+3}})$, which means that just as in \autoref{cor:statf1}, the low-dimensional structure in the target measure helps in breaking the curse of dimensionality. 

\paragraph{Proof sketch.} The main challenge in the proof of \autoref{cor:corsdsphere} is to bound the approximation terms in \autoref{thm:coronesd} and \autoref{thm:ksdsecond}. To do so, we rely on \autoref{lem:approx_derivative2} in \autoref{sec:proofs_statistical}, which shows the existence of $\hat{g}$ in a ball of $\mathcal{F}_2$ such that $\sup_{x \in \mathbb{S}^d} \|\nabla \hat{g}(x) - \nabla g(x)\|_2$ has a certain bound when $g$ is bounded and has bounded and Lipschitz gradient. \autoref{lem:approx_derivative2} might be of independent interest: in particular, it can be used to obtain a similar adaptivity result for score-matching EBMs, which optimize the Fisher divergence $\mathbb{E}_{x \sim \nu} [\|\nabla \log (\frac{d\nu}{dp}(x)) - \nabla f(x) \|^2]$.

\section{Algorithms}
\label{sec:algorithms}



This section provides a description of the optimization algorithms used for learning $\mathcal F_{1/2}$-EBMs using the estimators studied in~\autoref{sec:statistical_guarantees}, namely maximum likelihood, KSD, and~$\mathcal F_1$-SD.

\subsection{Algorithms for $\mathcal{F}_1$ EBMs}
\label{subsec:algorithms_f1}
We provide the algorithms for the three models using a common framework. We define the function $\Phi : \R \times \R^{d+1} \rightarrow \mathcal{F}_1$ as $\Phi(w,\theta)(x) = w \sigma(\langle \theta, x \rangle)$. Given a convex loss $R : \mathcal{F}_1 \rightarrow \R$, we consider the problem 
\begin{align} 
\begin{split} \label{eq:penalized_f1_R}
    &\inf_{\mu \in \mathcal{P}(\R^{d+2})} F(\mu), \\
    &F(\mu) := R\left(\int \Phi(w,\theta) d\mu \right) + \lambda \int (|w|^2 + \|\theta\|_2^2) d\mu.
\end{split}
\end{align}
for some $\lambda > 0$. It is known \citep[e.g.,][]{neyshabur2015insearch} that, since $|w|^2 + \|\theta\|_2^2 \geq 2 |w|\|\theta\|_2$ with equality when moduli are equal, this problem is equivalent to
\begin{align} \label{eq:penalized_f1}
    \inf_{\mu \in \mathcal{P}(\R \times \mathbb{S}^{d})} R\left(\int \Phi(w,\theta) d\mu \right) + \lambda \int_{\R \times \mathbb{S}^{d}} |w| d\mu.
\end{align}
And by the definition of the $\mathcal{F}_1$ norm, this is equivalent to $\inf_{f \in \mathcal{F}_1} R\left(f\right) + \lambda \|f\|_{\mathcal F_1}$, which is the penalized form of $\inf_{f \in \mathcal{B}_{\mathcal{F}_1}(\beta)} R\left(f\right)$ for some $\beta > 0$. Our $\mathcal{F}_1$ EBM algorithms solve problems of the form \eqref{eq:penalized_f1_R} for different choices of $R$, or equivalently, minimize the functional $R$ over an $\mathcal{F}_1$ ball. The functional $R$ takes the following forms for the three models considered: 

\begin{enumerate}[label=(\roman*),itemsep=0pt,topsep=1pt,leftmargin=0.4cm]
\item Cross-entropy: We have that $R(f) = \frac{1}{n} \sum_{i=1}^n f(x_i) + \log\left(\int_{K} e^{-f(x)} d\tau(x) \right)$, which is convex (and differentiable) because the free energy obeys such properties~\citep[e.g., by adapting][Prop 3.1 to the infinite-dimensional case]{wainwright2008graphical}. 
\item Stein discrepancy: the estimator \eqref{eq:ksd_estimator} corresponds to $R(f) = \sup_{h \in \mathcal{H}} \mathbb{E}_{\nu_n} [\sum_{j=1}^{d+1} -(\nabla_j f(x) + d x_j) h_j(x) + \nabla_j h_j(x)]$, which is convex as the supremum of convex (linear) functions. 
\item Kernelized Stein discrepancy: we have $R(f) = \frac{1}{n^2} \sum_{i,j=1}^n \tilde{u}_{\nu_{f}}(x_i,x_j)$, which is convex (in fact, it is quadratic in $\nabla f$).
\end{enumerate}

In order to optimize \eqref{eq:penalized_f1_R}, we discretize measures in $\mathcal{P}(\R^{d+2})$ as averages of point masses $\frac{1}{m} \sum_{i=1}^{m} \delta_{(w^{(i)}, \theta^{(i)})}$, each point mass corresponding to one neuron. Furthermore, we define the function $G : (\R^{d+2})^m \rightarrow \R$ as
\begin{align} \label{eq:G_definition}
    &G((w^{(i)}, \theta^{(i)})_{i=1}^{m}) := F\left(\frac{1}{m} \sum_{i=1}^{m} \delta_{(w^{(i)}, \theta^{(i)})} \right) 
    \\ &= R\left(\frac{1}{m} \sum_{i=1}^{m} \Phi(w^{(i)}, \theta^{(i)}) \right) + \frac{\lambda}{m} \sum_{i=1}^{m} (|w^{(i)}|^2 + \|\theta^{(i)}\|_2^2).
\end{align}
Then, as outlined in Algorithm \ref{alg:F1_ebm}, we use gradient descent on $G$ to optimize the parameters of the neurons, albeit possibly with noisy estimates of the gradients.
\begin{algorithm}
\caption{Generic algorithm to train $\mathcal{F}_1$ EBMs}
\label{alg:F1_ebm}
\begin{algorithmic}
\INPUT $m$, stepsize $s$
\STATE Get $m$ i.i.d. samples $(w_t^{(i)}, \theta_t^{(i)})$ from $\mu_0 \in \mathcal{P}(\R^{d+2})$.
\FOR {$t=0,\dots,T-1$} 
        \FOR{$i=1,\dots,m$}
            \STATE Compute estimates $\hat{\nabla}_{w^{(i)}}G((w_t^{(i)}, \theta_t^{(i)})_{i=1}^{m})$ and $\hat{\nabla}_{\theta^{(i)}}G((w_t^{(i)}, \theta_t^{(i)})_{i=1}^{m})$.
            \STATE $w_{t+1}^{(i)} \leftarrow w_{t}^{(i)} - s \hat{\nabla}_{w^{(i)}}G((w_t^{(i)}, \theta_t^{(i)})_{i=1}^{m})$
            \STATE $\theta_{t+1}^{(i)} \leftarrow \theta_{t}^{(i)} - s \hat{\nabla}_{\theta^{(i)}}G((w_t^{(i)}, \theta_t^{(i)})_{i=1}^{m})$
        \ENDFOR
\ENDFOR
\OUTPUT Energy $\frac{1}{m} \sum_{i=1}^{m} \Phi(w_T^{(i)}, \theta_T^{(i)}) \in \mathcal{F}_1$.
\end{algorithmic}
\end{algorithm}

Computing an estimate the gradient of $G$ involves computing the gradient of $R\left(\frac{1}{m} \sum_{i=1}^{m} \Phi(w^{(i)}, \theta^{(i)}) \right)$. Denoting by $z_{i} = (w^{(i)}, \theta^{(i)}), \mathbf{z} = (z_i)_{i=1}^m$ and by $\nu_{\mathbf{z}}$ the Gibbs measure corresponding to the energy $f_{\mathbf{z}} := \frac{1}{m} \sum_{i=1}^{m} \Phi(w^{(i)}, \theta^{(i)})$, we have
\begin{enumerate}[label=(\roman*),itemsep=0pt,topsep=1pt,leftmargin=0.5cm]
    \item Cross-entropy: The gradient of $R(f_{\mathbf{z}})$ with respect to $z_i$ takes the expression
    $\mathbb{E}_{\nu_n} \nabla_{z_i} \Phi(z_i)(x) - \mathbb{E}_{\nu_{\mathbf{z}}} \nabla_{z_i} \Phi(z_i)(x)$. The expectation under $\nu_{\mathbf{z}}$ is estimated using MCMC samples of the EBM. Thus, the quality of gradient estimation depends on the performance of the MCMC method of choice, which can suffer for non-convex energies and low temperatures.
    \item $\mathcal{F}_1$ Stein discrepancy: The (sub)gradient of $R(f_{\mathbf{z}})$ w.r.t. $z_i$ equals 
    $\mathbb{E}_{\nu_n}[ -\beta \sum_{j=1}^{d+1} \nabla_{z_i} \nabla_x(\Phi(z_i)(x)) h^{\star}_j(x)]$, in which $h^{\star}_j$ are respectively maximizers of $-(\beta \nabla_j f(x) + d x_j) h_j(x) + \nabla_j h_j(x)$ over $\mathcal{B}_{\mathcal{F}_1}$. The gradient estimation involves $d+1$ optimization procedures over balls of $\mathcal{F}_1$ to compute $h^{\star}_j$, which we solve using Algorithm~\ref{alg:F1_ebm}. Thus, the algorithm operates on two timescales.  
    \item Kernelized Stein discrepancy: Using \eqref{eq:KSD_def}, the gradient of
    $R(f_{\mathbf{z}})$ with respect to $z_i$ takes the expression $\mathbb{E}_{x,x' \sim \nu_n} [\nabla_{z_i} u_{\nu_{\mathbf{z}}}(x,x')]$, which can be developed into closed form. The only issue is the quadratic dependence on the number of samples.
\end{enumerate}

\subsection{Algorithms for $\mathcal{F}_2$ EBMs}
Considering convex losses $R : \mathcal{F}_1 \rightarrow \R$ as in \autoref{subsec:algorithms_f1}, the penalized form of the problem $\inf_{f \in \mathcal{B}_{\mathcal{F}_2}(\beta)} R\left(f\right)$ is 
\begin{align}
    \inf_{\|h\|_{2} \leq 1} R\left(\int_{\mathbb{S}^d} \sigma(\langle \theta, \cdot \rangle) h(\theta) d\tau(\theta) \right) + \lambda \int_{\mathbb{S}^d} h^2(\theta) d\tau(\theta).
\end{align}
To optimize this, we discretize the problem: we take $m$ samples $(\theta^{(i)})_{i=1}^{m}$ of the uniform measure $\tau$ that we keep fixed, and then solve the random features problem
\begin{align} \label{eq:F2_problem}
    \inf_{\substack{w \in \R^{m} \\ \|w\|_{2} \leq 1}} R\left(\frac{1}{m} \sum_{i=1}^{m} w^{(i)} \sigma(\langle \theta^{(i)}, \cdot \rangle) \right) + \frac{\lambda}{m} \sum_{i=1}^{m} |w^{(i)}|^2.
\end{align}
Remark that this objective function is equivalent to the objective function $G((w^{(i)},\theta^{(i)})_{i=1}^{m})$ in equation \eqref{eq:G_definition} when $(\theta^{(i)})_{i=1}^{m}$ are kept fixed. Thus, we can solve \eqref{eq:F2_problem} by running Algorithm~\ref{alg:F1_ebm} without performing gradient descent updates on $(\theta^{(i)})_{i=1}^{m}$. That is, while for the $\mathcal{F}_1$ EBM training both the features and the weights are learned via gradient descent, for $\mathcal{F}_2$ only the weights are learned.

\subsection{Qualitative convergence results}
\label{sec:qualitativeconv}
The overparametrized regime corresponds to taking a large number of neurons $m$. In the limit $m \rightarrow \infty$, under appropriate assumptions the empirical measure dynamics corresponding to the gradient flow of $G((w^{(i)}, \theta^{(i)})_{i=1}^{m})$ converge weakly to the mean-field dynamics \cite{mei2018mean,chizat2018global,rotskoff2018neural}.
Leveraging a result from \citet{chizat2018global} we argue informally that in the limit $m \rightarrow \infty, t \rightarrow \infty$, with continuous time and exact gradients, the gradient flow of $G$ converges to the global optimum of $F$ over $\mathcal{P}(\R^{d+2})$ (see more details in \autoref{sec:convergence}). 

In contrast with this positive qualitative result, we should mention a computational aspect that distinguishes these algorithms from their supervised learning counterparts: the Gibbs sampling required to estimate the gradient at each timestep. A notorious challenge is that for generic energies (even generic energies in $\mathcal{F}_1$), either the mixing time of MCMC algorithms is cursed by dimension \cite{bakry2014analysis} or the acceptance rate is exponentially small. The analysis of the extra assumptions on the target energy and initial conditions that 
would avoid such curse are beyond the scope of this work, but a framework based on thermodynamic integration and replica exchange \citep{PhysRevLett.57.2607} would be a possible route forward. 

\section{Experiments}
\label{sec:experiments}
\begin{figure*}
    \centering
    \includegraphics[width=.3\textwidth]{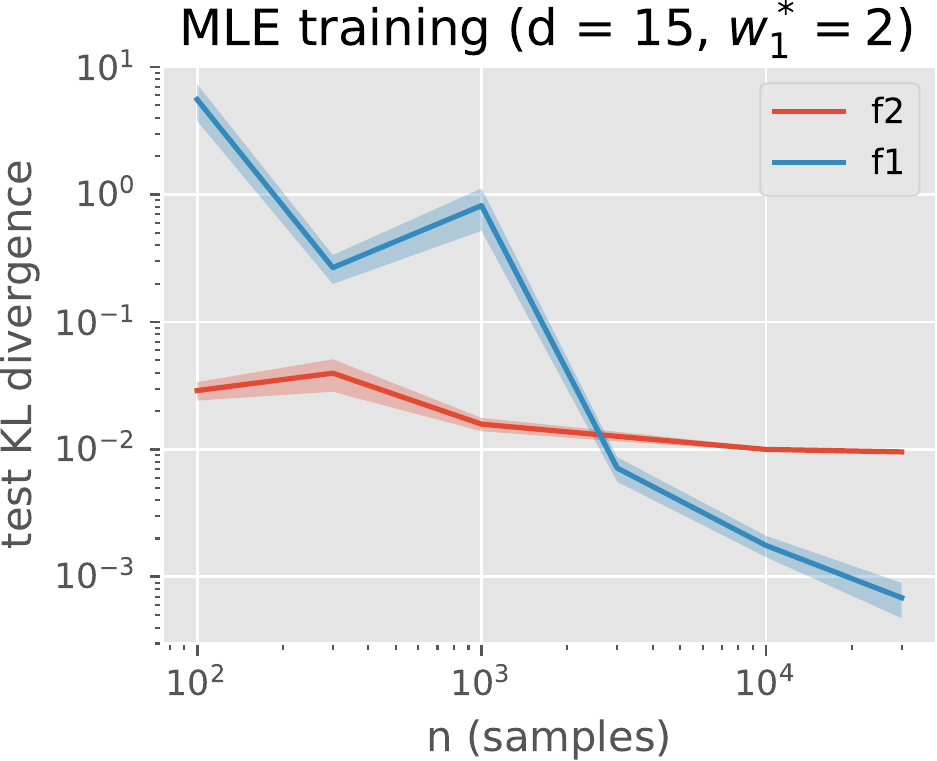}
    \includegraphics[width=.3\textwidth]{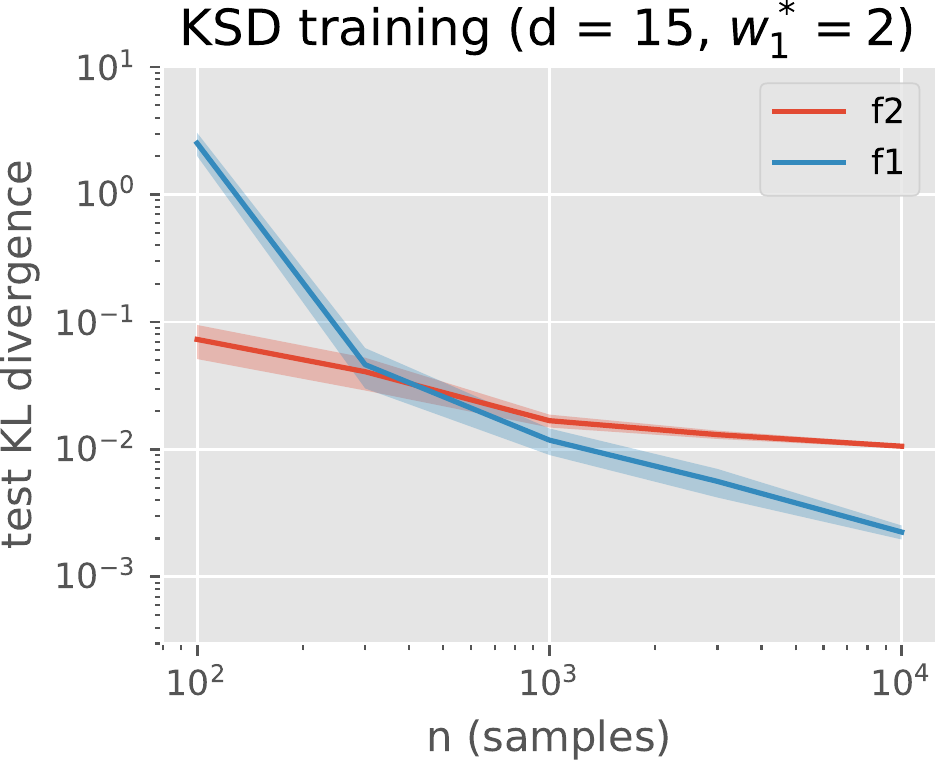}
    \includegraphics[width=.3\textwidth]{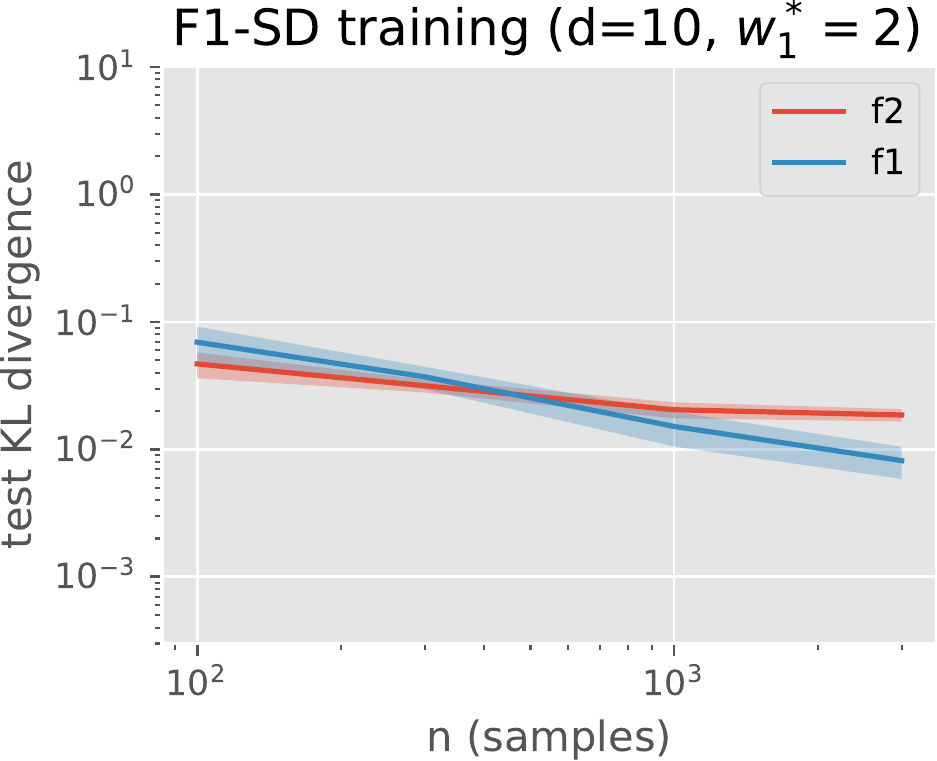} \\
    \includegraphics[width=.3\textwidth]{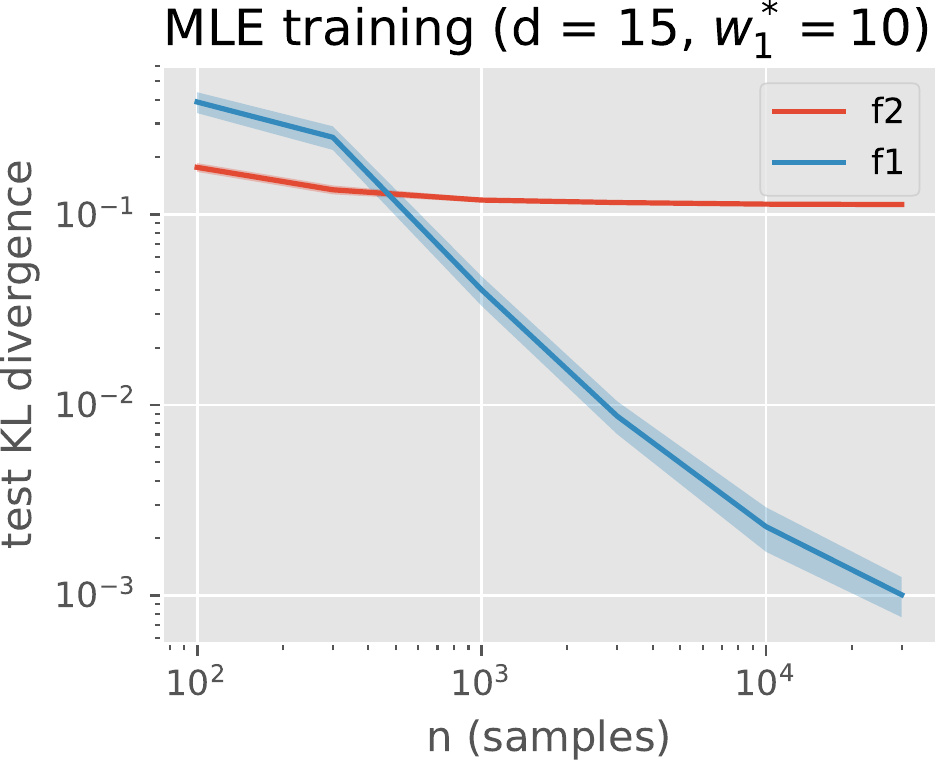}
    \includegraphics[width=.3\textwidth]{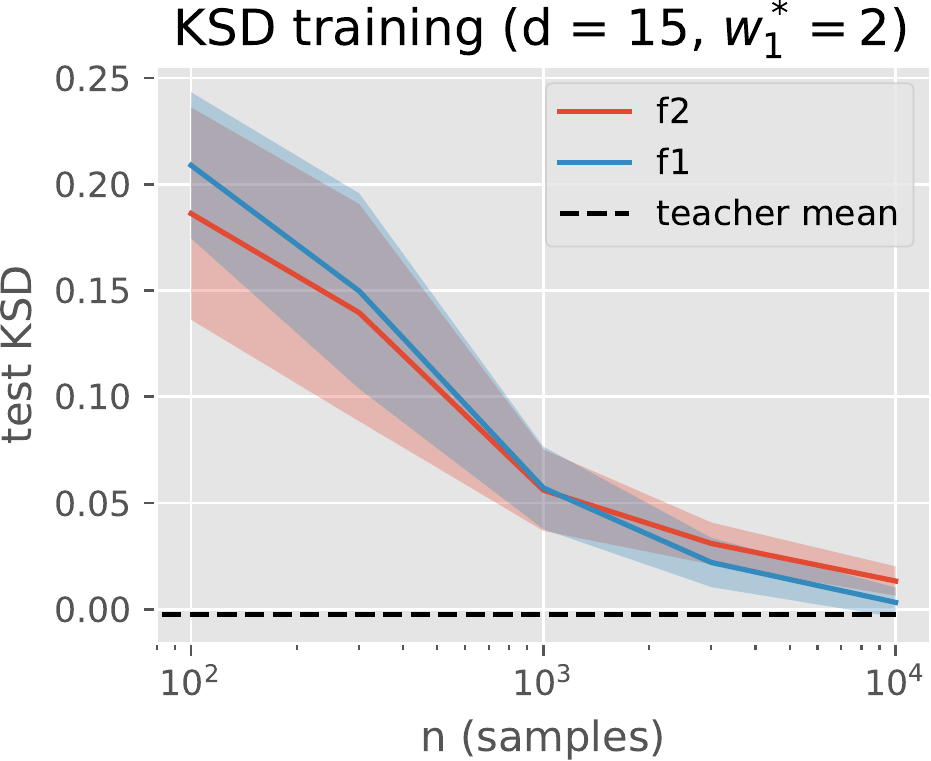}
    \includegraphics[width=.3\textwidth]{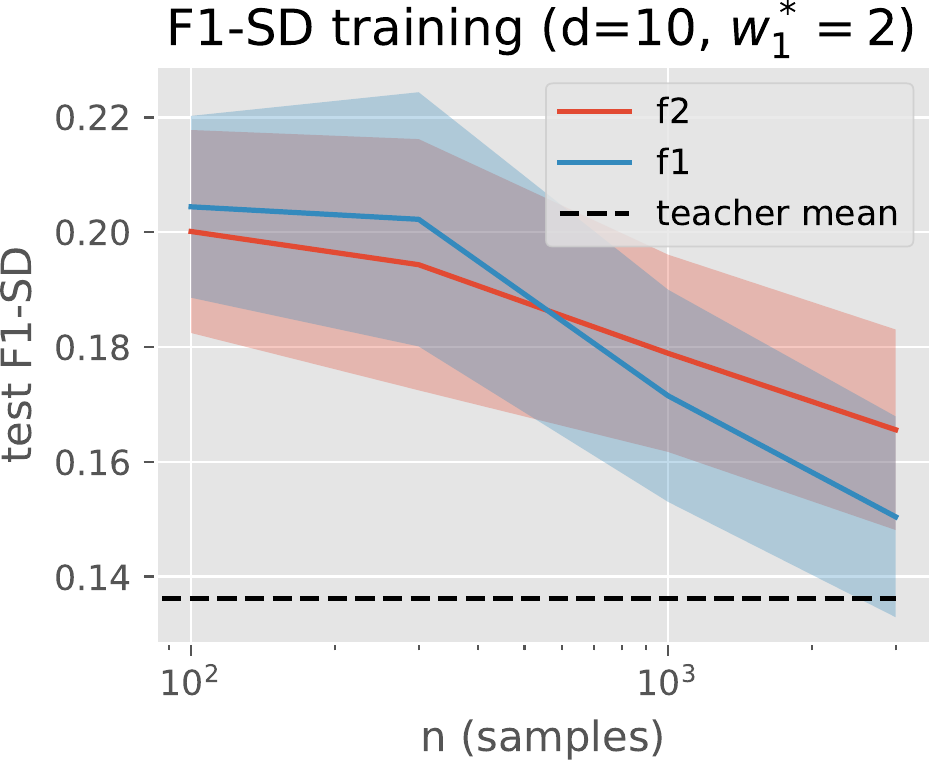}
    \caption{Test metrics obtained for MLE, KSD and $\mathcal{F}_1$-SD training on a one-neuron teacher with positive output weight. (top) Test performance measured with KL divergence estimates for~$w^*_1 = 2$. (bottom left) MLE on a teacher network with larger weight~$w^*_1 = 10$. (bottom center/right) Test KSD and $\mathcal F_1$-SD for models trained with the same metric with $w^*_1 = 2$. For reference, the black discontinuous lines show the teacher KSD and $\mathcal{F}_1$-SD of the teacher model w.r.t. 5000 and 2000 test samples, respectively. Confidence estimates are over 10 different data samplings.}
    \label{fig:one_positive_neuron}
\end{figure*}

\begin{figure*}
    \centering
    \includegraphics[width=.3\textwidth]{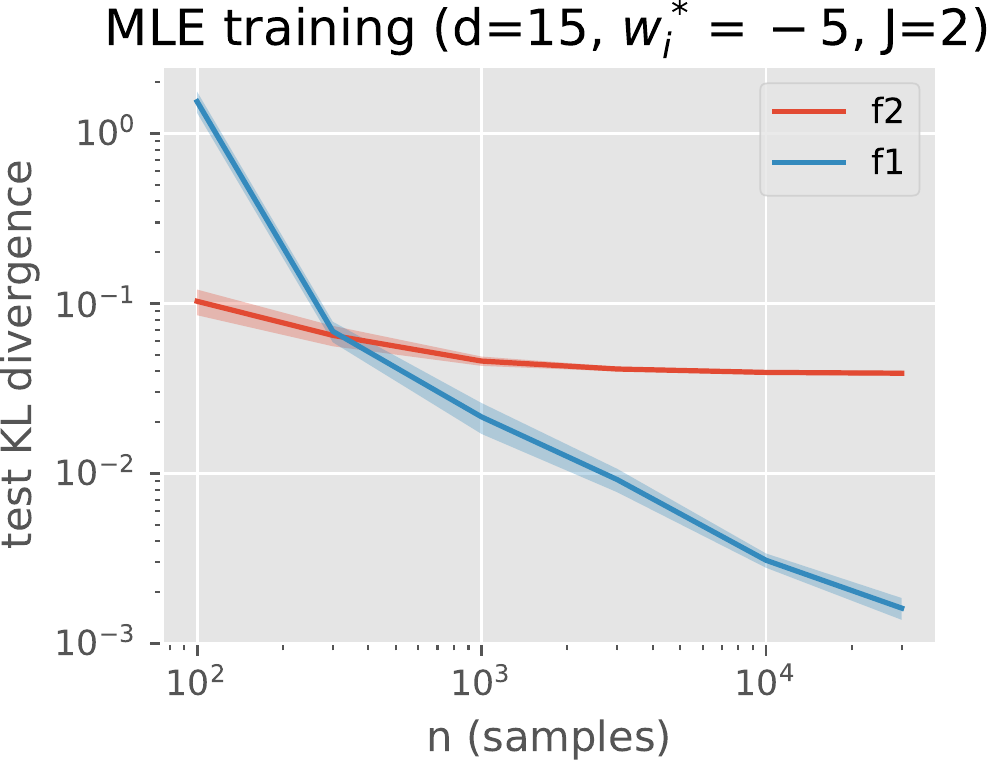}
    \includegraphics[width=.3\textwidth]{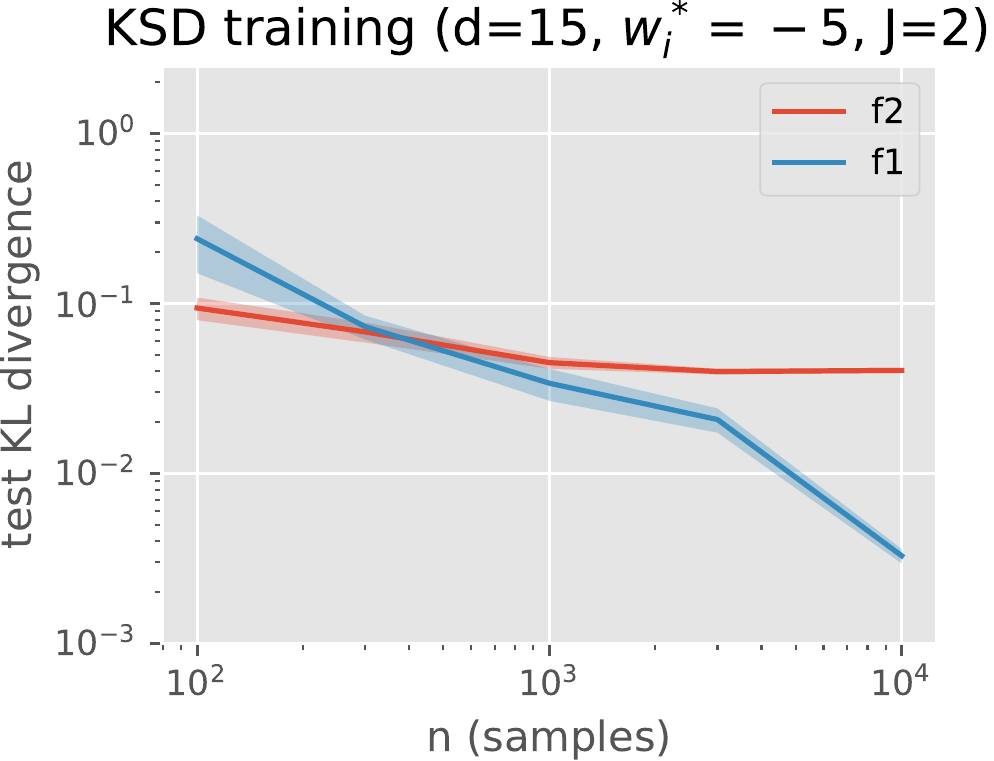}
    \includegraphics[width=.3\textwidth]{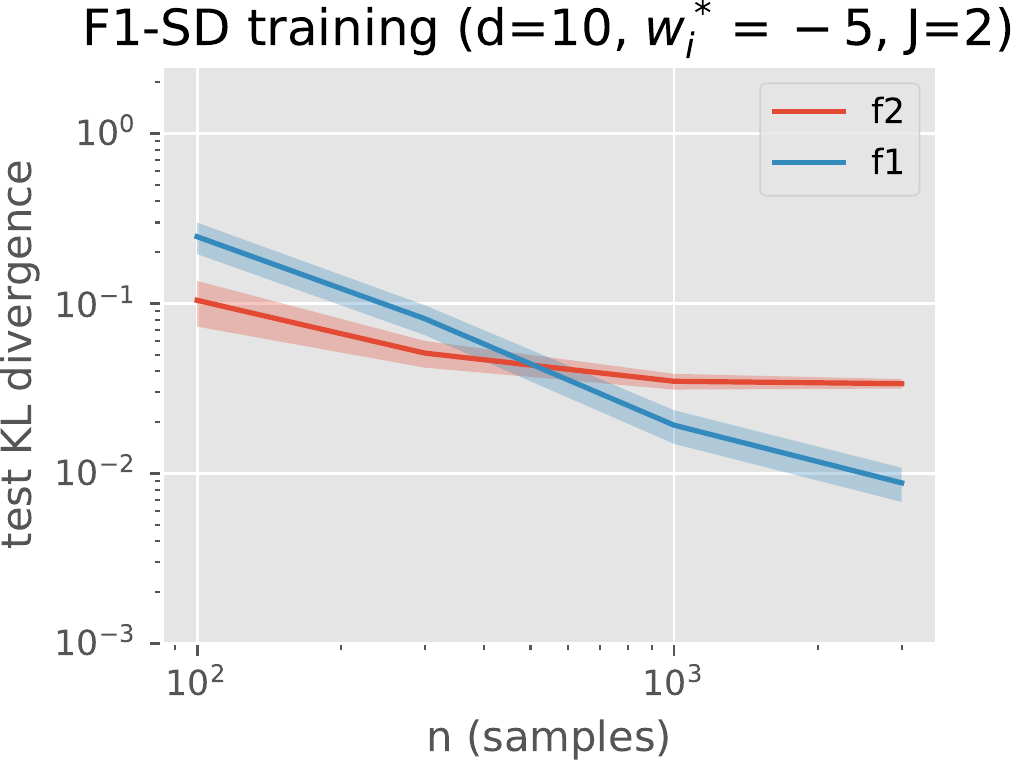} \\
    \includegraphics[width=.3\textwidth]{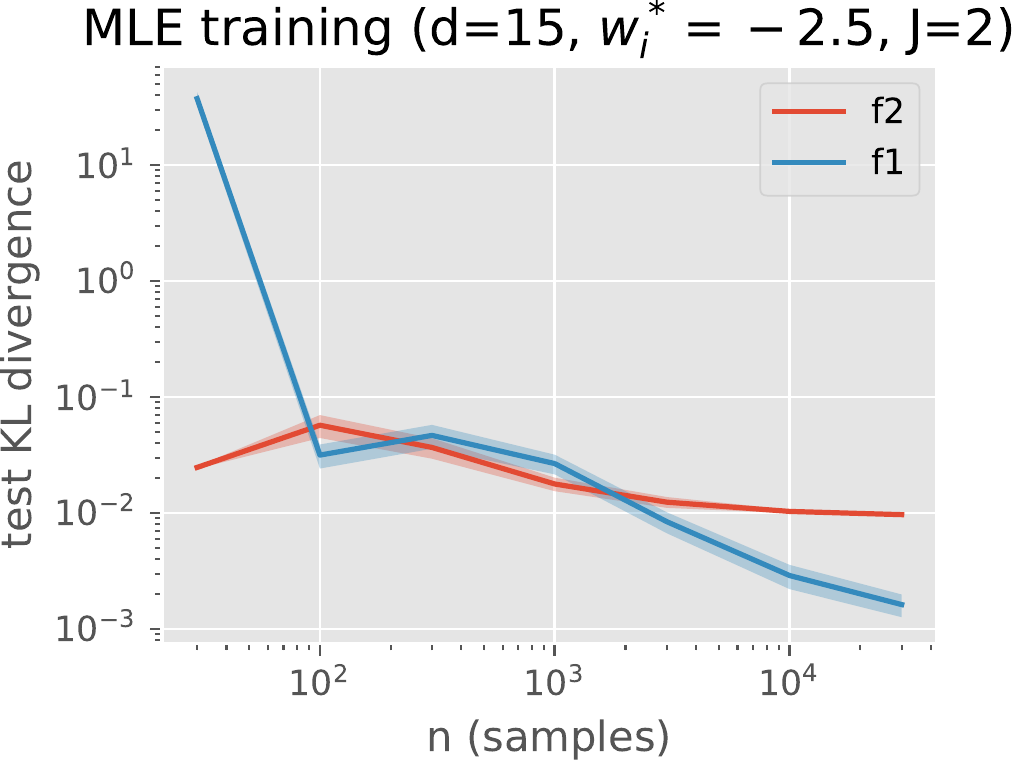}
    \includegraphics[width=.3\textwidth]{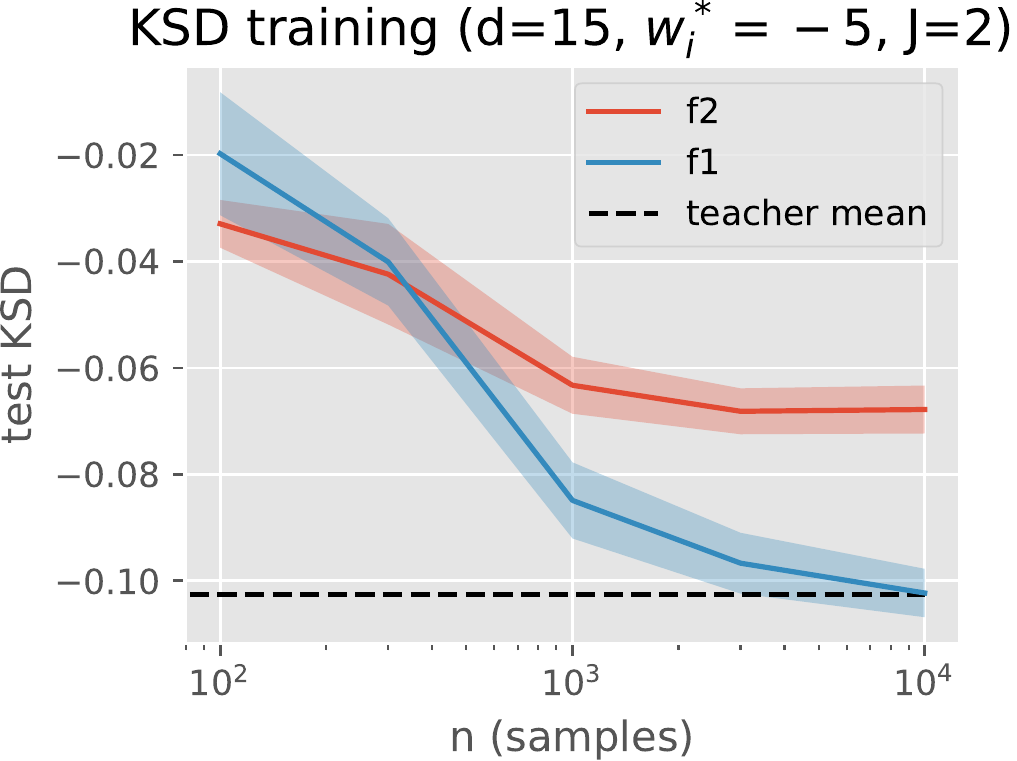}
    \includegraphics[width=.3\textwidth]{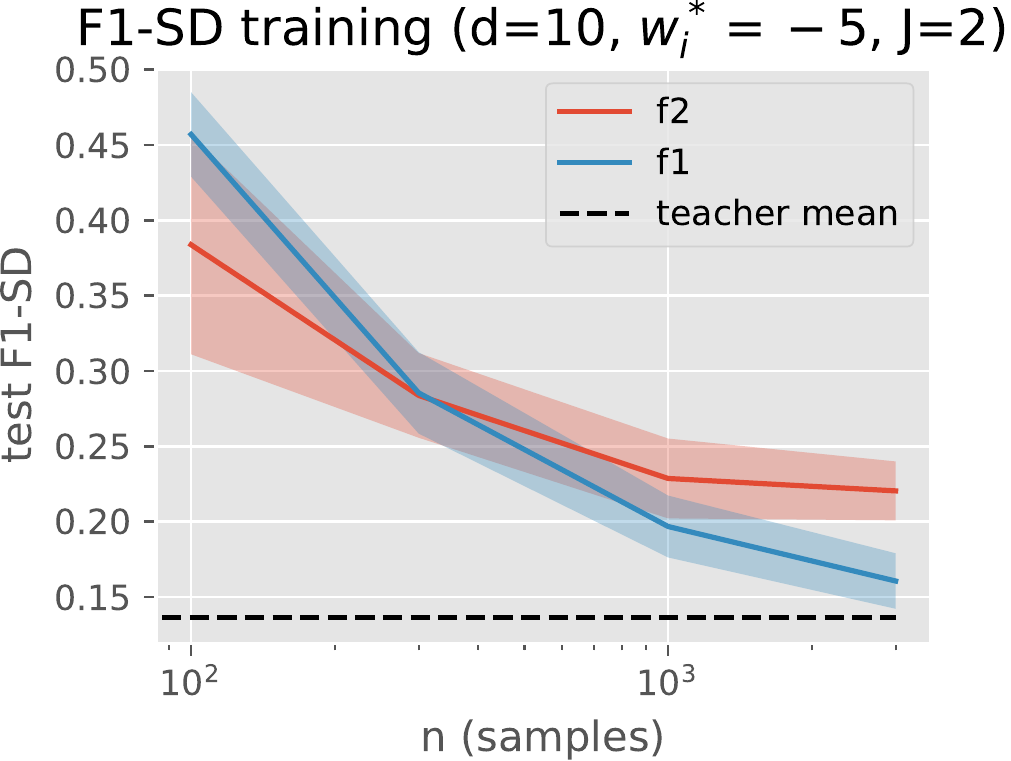}
    \caption{Test metrics obtained for MLE, KSD and $\mathcal{F}_1$-SD training on a two-neuron teacher with negative output weights. (top) Test performance measured with cross-entropy estimates with~$w^*_1, w^*_2 = -5$. (bottom left) MLE on a teacher network with smaller weights~$w^*_1, w^*_2 = -2.5$. (bottom center/right) Test KSD and $\mathcal F_1$-SD for models trained with the same metric, for~$w^*_1, w^*_2 = -5$. For reference, the black discontinuous lines show the teacher KSD and $\mathcal{F}_1$-SD of the teacher model w.r.t. 5000 and 2000 test samples, respectively. Confidence estimates are over 10 different data samplings.}
    \label{fig:two_negative_neurons}
\end{figure*}

\begin{figure*}
    \centering
    \includegraphics[width=.3\textwidth]{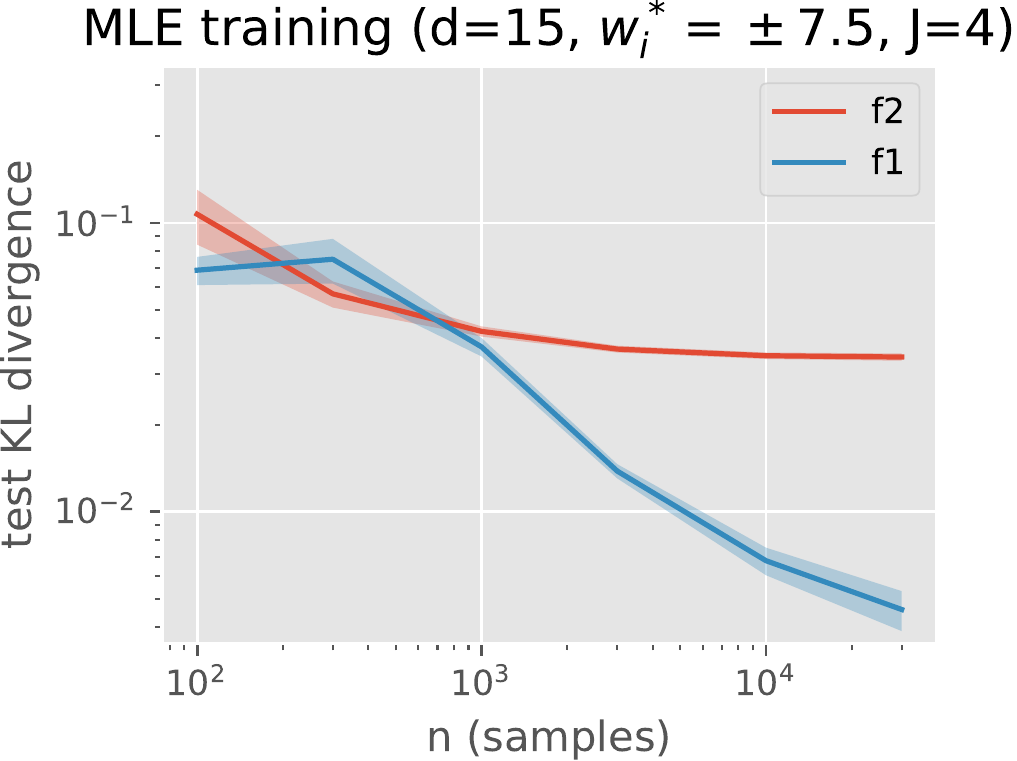}
    \includegraphics[width=.3\textwidth]{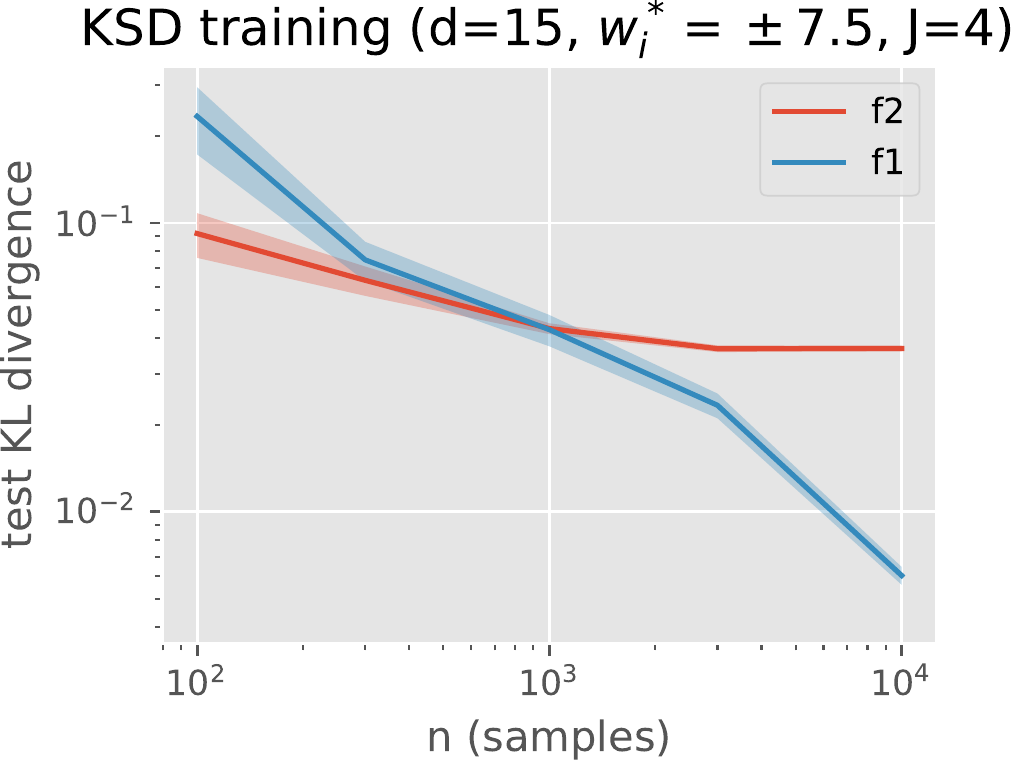}
    \includegraphics[width=.3\textwidth]{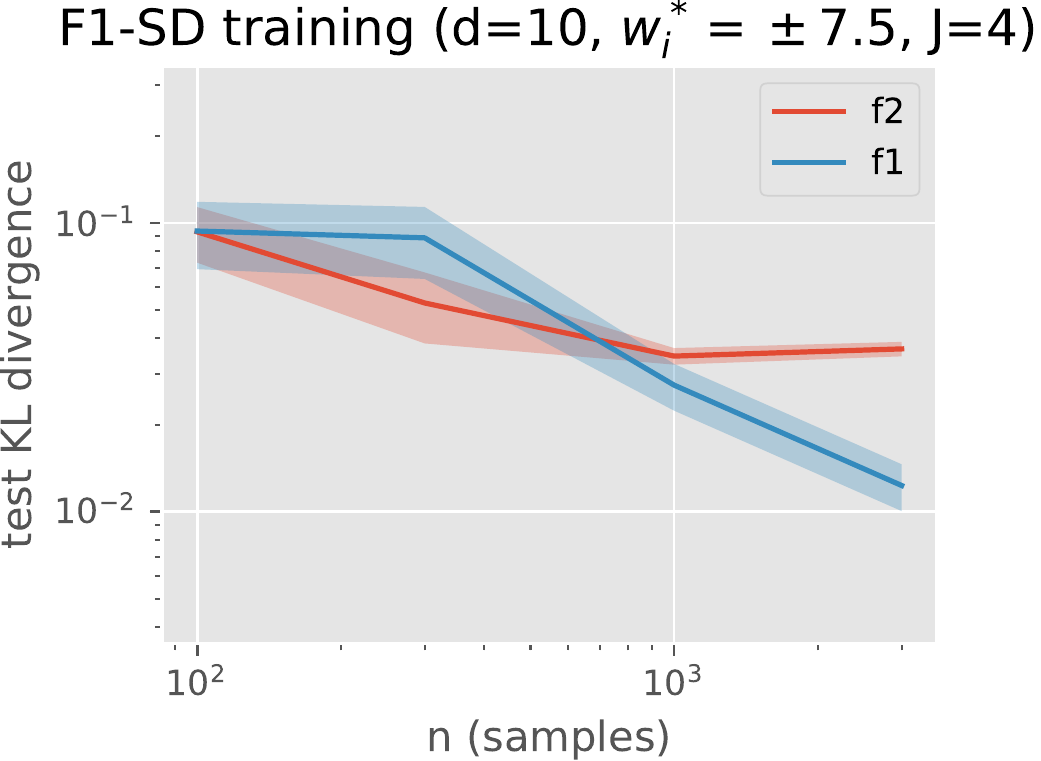} \\
    \includegraphics[width=.3\textwidth]{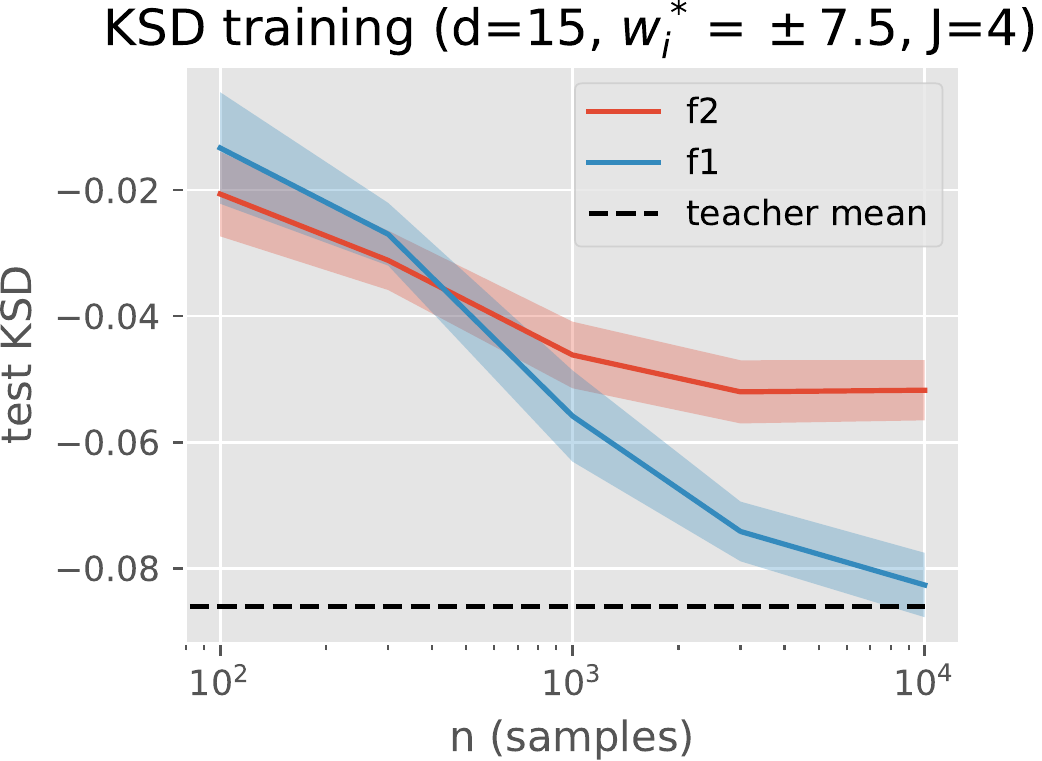}
    \includegraphics[width=.3\textwidth]{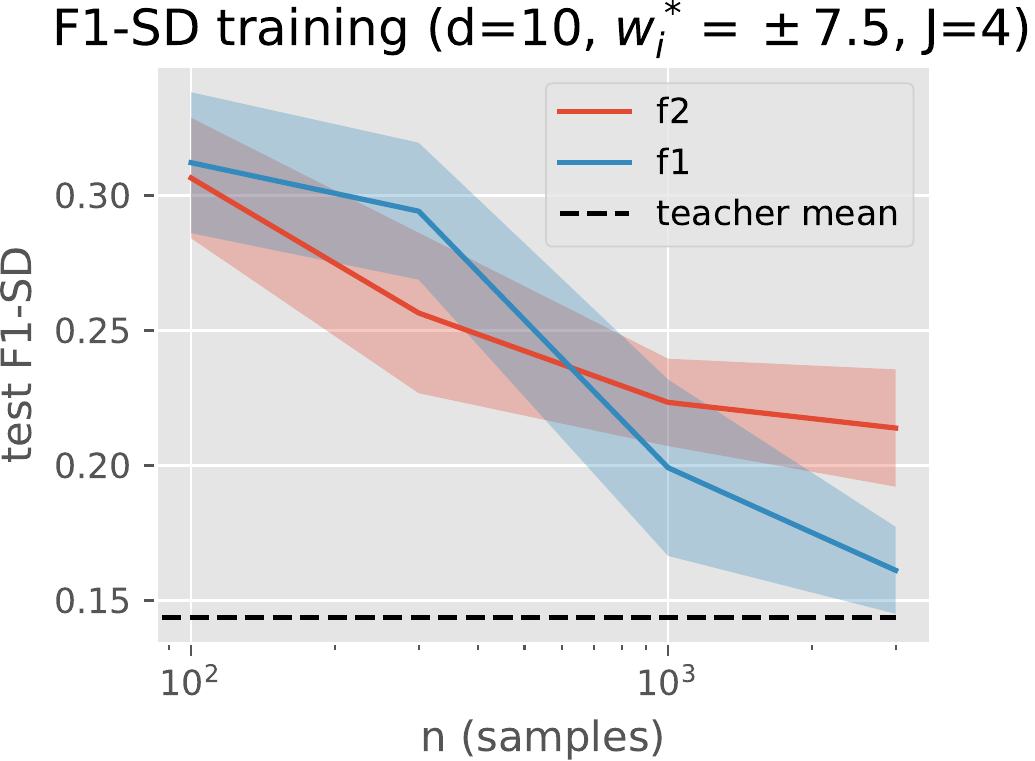}
    \caption{Test metrics obtained for MLE, KSD and $\mathcal{F}_1$-SD training on a four-neuron teacher with weights $w_1^*, w_2^* = 7.5$ and $w_3^*, w_4^* = -7.5$. (top) Test performance measured with cross-entropy estimates. (bottom) Test KSD and $\mathcal F_1$-SD for models trained with the same metric. For reference, the black discontinuous lines show the teacher KSD and $\mathcal{F}_1$-SD of the teacher model w.r.t. 5000 and 2000 test samples, respectively. Confidence estimates are over 10 different data samplings.}
    \label{fig:four_neurons}
\end{figure*}

In this section, we present numerical experiments illustrating our theory on simple synthetic datasets generated by teacher models with energies
    $f^*(x) = \frac{1}{J} \sum_{j=1}^{J} w^*_j \sigma(\langle \theta^*_j, x \rangle)$,
with~$\theta^*_i \in \mathbb{S}^d$ for all~$i$. The code for the experiments is in \url{https://github.com/CDEnrich/ebms_shallow_nn}.

\paragraph{Experimental setup.}
We generate data on the sphere $\mathbb{S}^d$ from teacher models by using a simple rejection sampling strategy, given an estimate of the minimum of~$f^*$ (which provides an estimated upper bound on the unnormalized density~$e^{-f^*}$ for rejection sampling). 
This minimum is estimated using gradient descent with many random restarts from uniform points on the sphere.
For different numbers of training samples, we run our gradient-based algorithms in~$\mathcal F_1$ and $\mathcal F_2$ with different choices of step-sizes and regularization parameters~$\lambda$, using $m=500$ neurons.
We report test metrics after selecting hyperparameters on a validation set of 2000 samples.
For computing gradients in maximum likelihood training, we use a simple Metropolis-Hastings algorithm with uniform proposals on the sphere.
To obtain non-negative test KL divergence estimates, which are needed for the log-log plots, we sample large numbers of points uniformly on the hypersphere, and compute the KL divergence of the restriction of the EBMs to these points. 
The sampling techniques that we use are effective for the toy problems considered, but more refined techniques might be needed for more complex problems in higher dimension or lower temperatures. 

\paragraph{Learning planted neuron distributions in hyperspheres.}
We consider the task of learning planted neuron distributions in $d=15$ and $d=10$. Remark that in this setting, when $\mathcal{F} = \mathcal{B}_{\mathcal{F}_1}(\beta)$ with $\beta$ large enough there is no approximation error.
We compare the behavior of~$\mathcal F_1$ and~$\mathcal F_2$ models with different estimators in Figures \ref{fig:one_positive_neuron}, \ref{fig:two_negative_neurons} and \ref{fig:four_neurons}, corresponding to models with $J=1,2,4$ teacher neurons, respectively. The error bars show the average and standard deviation for 10 runs. 
In the three figures, the top plot in the first column represents the test KL divergence of the $\mathcal{F}_1$ and $\mathcal{F}_2$ EBMs trained with maximum likelihood for an increasing number of samples, showcasing the adaptivity of $\mathcal{F}_1$ to distributions with low-dimensional structure versus the struggle of the $\mathcal{F}_2$ model. In Figures \ref{fig:one_positive_neuron} and \ref{fig:two_negative_neurons} the bottom plot in the first column shows the same information for a teacher with the same structure but different values for the output weights. We observe that the separation between the $\mathcal{F}_1$ and the $\mathcal{F}_2$ models increases when the teacher models have higher weights.

In the three figures, the plots in the second column show the test KL divergence and test KSD, respectively, for EBMs trained with KSD (with RBF kernel with $\sigma^2=1$). We observe that we are able to train EBMs successfully by optimizing the KSD; even though maximum likelihood training is directly optimizing the KL divergence, the test KL divergence values we obtain for the KSD-trained models are on par, or even slightly better, comparing at equal values of $n$.  
It is also worth noticing that in \autoref{fig:one_positive_neuron}, we observe a separation between $\mathcal{F}_1$ and $\mathcal{F}_2$ in the KL divergence plot, but not in the KSD plot. It seems that in this particular instance, although the training is successful, the KSD is too weak of a metric to tell that the $\mathcal{F}_1$ EBMs are better than $\mathcal{F}_2$ EBMs.

In the three figures, the plots in the third column show the test KL divergences and test $\mathcal{F}_1$-SD for EBMs trained with $\mathcal{F}_1$-SD. Remark that the error bars are wider due to the two timescale algorithm used for $\mathcal{F}_1$-SD, which seems to introduce more variability. While the plots only go up to $n=3000$, the test cross-entropy curves show a separation between $\mathcal{F}_1$ and $\mathcal{F}_2$ very similar to maximum likelihood training when comparing at equal values of $n$. 

\autoref{sec:additional_exp} contains additional experiments for the cases $J=1, w_1^{*} = 10$ and $J=2, w_i^{*} = -2.5$, training with KSD and $\mathcal{F}_1$-SD.

\paragraph{3D visualizations and time evolution in $d=3$ ($\mathcal{F}_1$ EBM trained with MLE).}
\autoref{fig:3D} shows a 3D visualization of the teacher and trained models, energies and densities corresponding to two teacher neurons with negative weights in $d=3$. Since the dimension is small and the temperature is not too small, we used train and test sizes for which the statistical error due to train and test samples is negligible. Interestingly, while the $\mathcal{F}_1$ model achieves a KL divergence close to zero at the end of training (\autoref{fig:3D_KL}), in \autoref{fig:3D} we see that the positions of the neurons of the trained model do not match the teacher neurons. In fact, there are some neurons with positive weights in the high energy region. This effect might be linked with the fact that there is a constant offset of around $0.3$ between the teacher energy and the trained energy. The offset is not reflected in the Gibbs measures of the models, which are invariant to constant terms. 

\autoref{fig:3D_KL} also shows that for this particular instance, the convergence is polynomial in the iteration number. We attach a video of the training dynamics: \url{https://github.com/CDEnrich/ebms_shallow_nn/blob/main/KLfeatzunnorm1.mp4}.

\begin{figure}
    \centering
    \includegraphics[width=.49\textwidth]{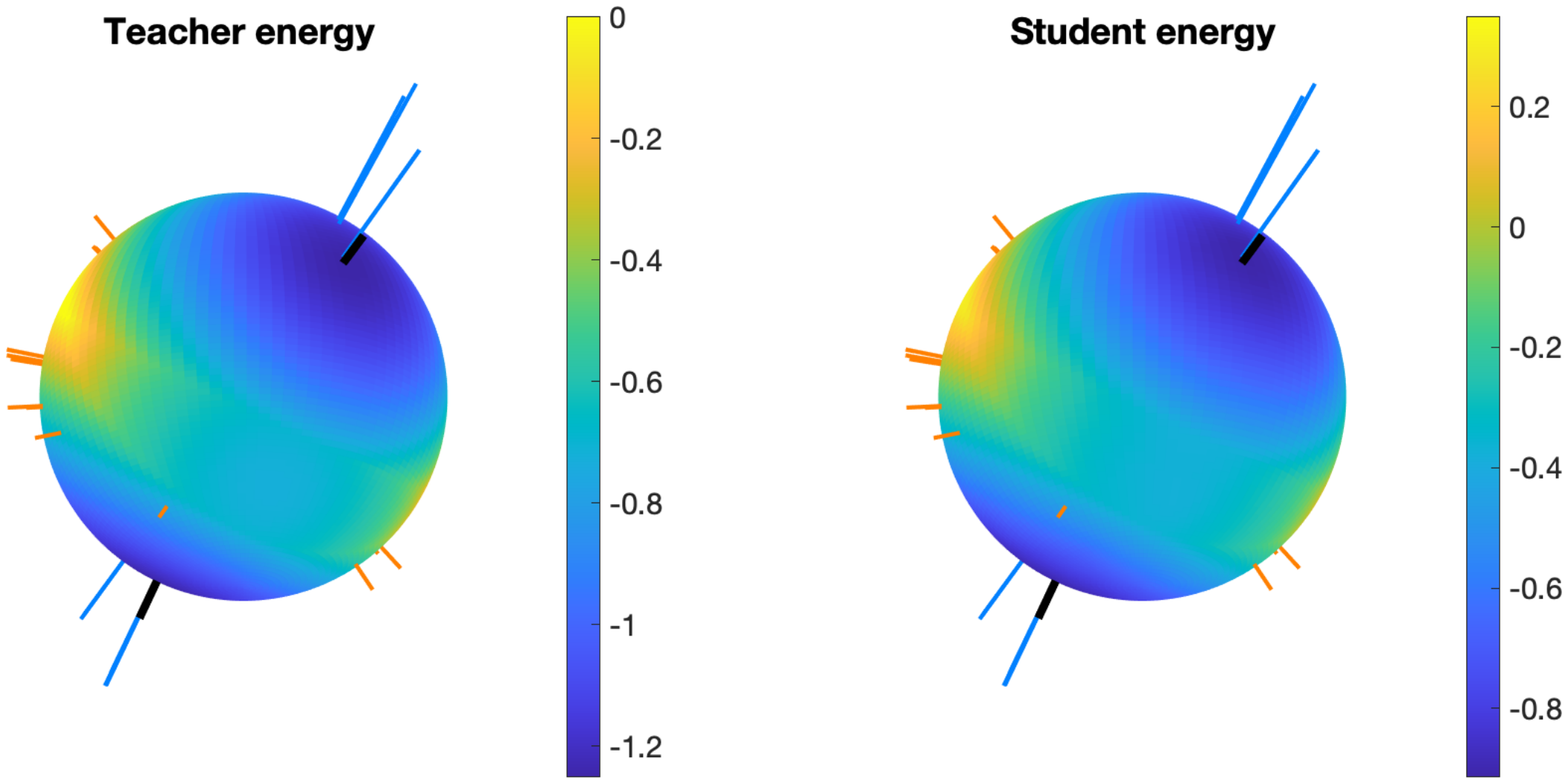}~~~
    \includegraphics[width=.48\textwidth]{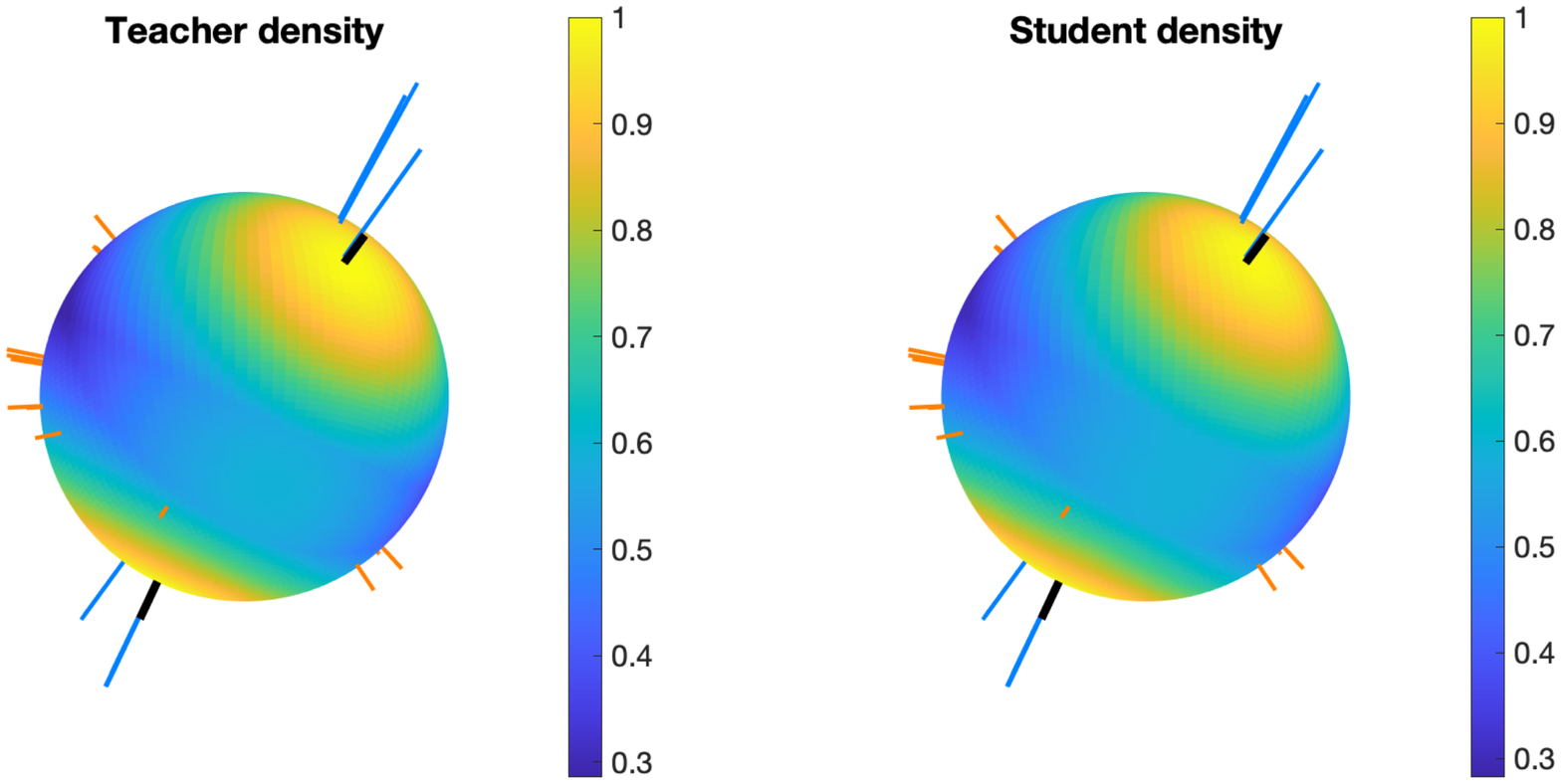}
    \caption{3D visualization of the neuron positions, energies and densities, in $d=3$. The teacher model has two neurons with negative weights $w_1^{*}, w_2^{*} = -2.5$, whose positions are represented by black sticks in all the images. The positions of the neurons of the trained model are represented by blue and orange sticks for negative and positive weights, resp. The two images on the left show the energies of the teacher and trained models, respectively. The energies look qualitatively very similar up to an offset of $\approx 0.3$. The two images on the right show the Gibbs densities of the teacher and trained models, respectively.}
    \label{fig:3D}
\end{figure}


\begin{figure}
\begin{minipage}[c]{6.5cm}
\includegraphics[width=\textwidth]{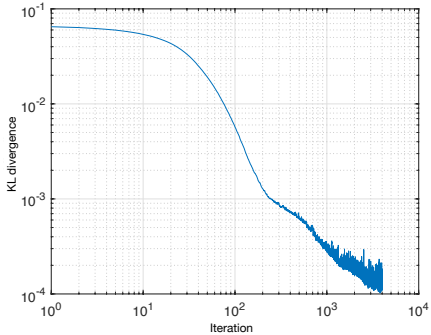}
\end{minipage}
\begin{minipage}[c]{10cm}
\caption{Log-log plot of the KL divergence between the MLE trained model and the teacher model (same as in \autoref{fig:3D}), versus the iteration number. }
\label{fig:3D_KL}
\end{minipage}
\end{figure}

\section{Conclusions and discussion}
\label{sec:conclusions}
We provide statistical error bounds for EBMs trained with KL divergence or Stein discrepancies, and show benefits of using energy models with infinite-width shallow networks in in ``active'' regimes in terms of adaptivity to distributions with low-dimensional structure in the energy.
We empirically verify that networks in ``kernel'' regimes perform significantly worse in the presence of such structures, on simple teacher-student experiments.

A theoretical separation result in KL divergence or SD between $\mathcal{F}_1$ and $\mathcal{F}_2$ EBMs remains an important open question: one major difficulty for providing a lower bound on the performance for~$\mathcal F_2$ is that $L^2$ (or $L^{\infty}$) approximation may be not be appropriate for capturing the hardness the problem,
since two log-densities differing substantially in low energy regions can have arbitrarily small KL divergence. Another direction for future work is to apply the theory of shallow overparametrized neural networks to other generative models such as GANs or normalizing flows. 

On the computational side, in \autoref{sec:convergence} we leverage existing work to state qualitative convergence results in an idealized setting of infinite width and exact gradients, but it would be interesting to develop convergence results for maximum likelihood that take the MCMC sampling into account, as done for instance by~\citet{debortoli2020efficient} for certain exponential family models. In our setting, this would entail identifying a computationally tractable subset of $\mathcal{F}_1$ energies.  
A more ambitious and long-term goal is to instead move beyond the MCMC paradigm, and devise efficient sampling strategies that can operate outside the class of log-concave densities, as for instance \cite{marylou}.


\section*{Acknowledgements}

We thank Marylou Gabri\'e for useful discussions. CD acknowledges partial support by ``la Caixa'' Foundation (ID 100010434), under agreement LCF/BQ/AA18/11680094.
EVE acknowledges partial support from the National Science Foundation (NSF) Materials Research Science and Engineering Center Program grant DMR-1420073, NSF DMS- 1522767, and the Vannevar Bush Faculty Fellowship. JB acknowledges partial support from the Alfred P. Sloan Foundation, NSF RI-1816753, NSF CAREER CIF 1845360, NSF CHS-1901091 and Samsung Electronics.




\newpage

\bibliography{biblio}

\onecolumn

\appendix
\section{Proofs of \autoref{sec:statistical_guarantees}}
\label{sec:proofs_statistical}
\generalbound*

\begin{proof}
In the first place, remark that for all $\nu_1, \nu_2 \in \mathcal{P}(K)$ that are absolutely continuous w.r.t. $p$, we have $D_{KL}(\nu_1|| \nu_2) = \int_{K} \log(\frac{d\nu_1}{d\tau}(x)) d\nu_1(x) - \int_{K} \log(\frac{d\nu_2}{d\tau}(x)) d\nu_1(x) = -H(\nu_1) + H(\nu_1,\nu_2)$, where $H(\nu_1,\nu_2)$ is the cross-entropy and $H(\nu_1)$ is the differential entropy. Hence, for all $\nu_1, \nu_2, \nu_3 \in \mathcal{P}(K)$,
\begin{align} \label{eq:KL_cross_entropy}
    D_{KL}(\nu_1|| \nu_2) - D_{KL}(\nu_1|| \nu_3) = H(\nu_1,\nu_2) - H(\nu_1,\nu_3).
\end{align}

Secondly, notice that for any $\nu \in \mathcal{P}(K)$ and measurable $f:K \rightarrow \R$, 
\begin{align}
\begin{split} \label{eq:average_cross_entropy}
    \int f(x) \ d\nu(x) &= - \int \log(e^{- f(x)}) \ d\nu(x) = - \int \log \left(\frac{d\nu_{ f}}{d\tau}(x) \right) \ d\nu(x) - \log\left(\int e^{-f(x)} d\tau(x) \right)\\ &= H(\nu, \nu_{f})  - \log\left(\int e^{- f(x)} d\tau(x) \right),
\end{split}
\end{align}
Thus, if we apply \eqref{eq:average_cross_entropy} on $\nu$ and its empirical version $\nu_n = \frac{1}{n} \sum_{i=1}^n \delta_{x_i}$, we obtain that with probability at least $1-\delta$, for all $f \in \mathcal{F}$:
\begin{align}
\begin{split} \label{eq:avg_cross_entropy}
    |H(\nu,\nu_{f}) - H(\nu_n,\nu_{f})| &= \left| \frac{1}{n} \sum_{i=1}^n f(x_i) - \int f(x) \ d\nu(x) \right| \leq \left( 2\mathcal{R}_n(\mathcal{F}) + \left(\sup_{f \in \mathcal{F}} \|f\|_{\infty} \right) \sqrt{\frac{2\log(1/\delta)}{n}} \right) \\ &\leq  \frac{2\beta C}{\sqrt{n}} + \beta \sqrt{\frac{2\log(1/\delta)}{n}}, 
\end{split}
\end{align}
where we have used the Rademacher generalization bound 
(\citet{mohri2012foundations}, Theorem 3.3) and the Rademacher complexity bound from the assumption of the theorem.

We have
\begin{align}
\begin{split}
    D_{KL}(\nu || \hat{\nu}) &= D_{KL}(\nu || \hat{\nu}) - \inf_{f \in \mathcal{F}} D_{KL}(\nu || \nu_{f}) + \inf_{f \in \mathcal{F}} D_{KL}(\nu || \nu_{f}) \\ &= \sup_{f \in \mathcal{F}} \left\{ H(\nu, \hat{\nu}) - H(\nu, \nu_{f}) \right\} + \inf_{f \in \mathcal{F}} D_{KL}(\nu || \nu_{f})
    \\ &\leq \sup_{f \in \mathcal{F}} \left\{ H(\nu_n, \hat{\nu}) - H(\nu_n, \nu_{f}) \right\} +  \frac{4\beta C}{\sqrt{n}} + \beta \sqrt{\frac{8 \log(1/\delta)}{n}} + \inf_{f \in \mathcal{F}} D_{KL}(\nu || \nu_{f}) \\ &= \frac{4\beta C}{\sqrt{n}} + \beta \sqrt{\frac{8 \log(1/\delta)}{n}} + \inf_{f \in \mathcal{F}} D_{KL}(\nu || \nu_{f}). 
\end{split}
\end{align}
This proves \eqref{eq:thm1_1}. In the second equality we have used \eqref{eq:KL_cross_entropy}. For the inequality we have used \eqref{eq:avg_cross_entropy} twice, i.e. that $H(\nu, \hat{\nu}) = H(\nu, \nu_{\hat{f}}) \leq H(\nu_n, \nu_{ \hat{f}}) + \frac{2 \beta C}{\sqrt{n}} + \beta \sqrt{\frac{2 \log(1/\delta)}{n}}$ and that $-H(\nu, \nu_{f}) \leq -H(\nu_n, \nu_{f}) + \frac{2\beta C}{\sqrt{n}} + \beta \sqrt{\frac{2 \log(1/\delta)}{n}}$. In the last equality we have used that by the definition of $\hat{f}$, $H(\nu_n, \hat{\nu}) = H(\nu, \nu_{\beta \hat{f}}) = \min_{f \in \mathcal{F}} H(\nu, \nu_{f})$.  

For the proof of \eqref{eq:thm1_2} we apply \autoref{lem:kl_l_infty} into \eqref{eq:thm1_1}.
\end{proof}

\begin{lemma} \label{lem:kl_l_infty}
Let $g : K \rightarrow \R$ be such that $\frac{d\nu}{d\tau}(x) = e^{- g(x)}/\int_{K} e^{- g(y)} d\tau(y)$, i.e. $-g$ is the log-density of $\nu$ up to a constant term. Then,
\begin{align}
    \inf_{f \in \mathcal{F}} D_{\text{KL}}(\nu || \nu_{f}) \leq 2 \inf_{f \in \mathcal{F}} \|g - f\|_{\infty}
\end{align}
\end{lemma}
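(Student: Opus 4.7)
The plan is to write out the KL divergence explicitly in terms of $f$, $g$, and their partition functions, and then bound each term by $\|f-g\|_\infty$. Set $Z_g := \int_K e^{-g(y)} d\tau(y)$ and $Z_f := \int_K e^{-f(y)} d\tau(y)$, so that $\log(d\nu/d\tau) = -g - \log Z_g$ and $\log(d\nu_f/d\tau) = -f - \log Z_f$. Substituting into the definition of the KL divergence gives
\begin{align}
    D_{\mathrm{KL}}(\nu \| \nu_f) = \int_K \bigl(f(x) - g(x)\bigr)\, d\nu(x) + \log Z_f - \log Z_g.
\end{align}

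Next I would control each of the two terms separately by $\varepsilon := \|g-f\|_\infty$. For the first term, since $\nu$ is a probability measure, $\bigl|\int (f-g)\, d\nu\bigr| \leq \|f-g\|_\infty = \varepsilon$. For the log-partition term, the pointwise bound $-\varepsilon \leq g(x) - f(x) \leq \varepsilon$ yields $e^{-\varepsilon} e^{-g(x)} \leq e^{-f(x)} \leq e^{\varepsilon} e^{-g(x)}$; integrating against $\tau$ gives $e^{-\varepsilon} Z_g \leq Z_f \leq e^{\varepsilon} Z_g$, hence $|\log Z_f - \log Z_g| \leq \varepsilon$.

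Combining the two bounds gives $D_{\mathrm{KL}}(\nu \| \nu_f) \leq 2\|g-f\|_\infty$ for every $f \in \mathcal{F}$, and taking the infimum over $f \in \mathcal{F}$ on both sides completes the proof.

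There is no serious obstacle here; the argument is an elementary manipulation, and the only thing to be careful about is making sure the decomposition of the KL divergence uses the common reference measure $\tau$ (so that the $-\log Z_g$ and $-\log Z_f$ constants appear cleanly) and that one takes the infimum only after establishing the pointwise inequality $D_{\mathrm{KL}}(\nu \| \nu_f) \leq 2\|g-f\|_\infty$, which holds for every individual $f$.
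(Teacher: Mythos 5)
Your proof is correct and uses the same decomposition of the KL divergence as the paper, namely
\begin{align}
D_{\mathrm{KL}}(\nu \| \nu_f) = \int_K (f-g)\, d\nu + \log Z_f - \log Z_g,
\end{align}
bounding the expectation term by $\|f-g\|_\infty$ in the same way. Where you diverge is in the bound on the log-partition difference: the paper devotes a separate lemma (their \autoref{lem:free_energy_equality}) to express $\log Z_f - \log Z_g$ exactly as $-\int (f-g)\, p_\alpha\, d\tau$ for some intermediate Gibbs density $p_\alpha$ with $\alpha \in [0,1]$, via the mean value theorem applied to $\alpha \mapsto \log \int e^{-(\alpha f + (1-\alpha) g)}\, d\tau$, and then bounds that expectation by $\|f-g\|_\infty$. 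You instead sandwich the integrand pointwise, $e^{-\varepsilon} e^{-g} \leq e^{-f} \leq e^{\varepsilon} e^{-g}$, integrate, and take logs to get $|\log Z_f - \log Z_g| \leq \varepsilon$ directly. Your route is more elementary and self-contained (no auxiliary lemma, no differentiation under the integral sign or mean value theorem); the paper's route is slightly heavier but gives an exact identity that could be reused to obtain sharper bounds under additional assumptions. For the purposes of this lemma both yield exactly the same $2\|g-f\|_\infty$ bound.
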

\begin{proof}
Notice that $\nu = \nu_{g}$. Thus, for any $f \in \mathcal{F}$,
\begin{align}
\begin{split} \label{eq:d_kl_bound}
    D_{KL}(\nu || \nu_{f}) &= D_{KL}(\nu_{g} || \nu_{f}) = \int \log\left(\frac{\frac{d\nu_g}{d\tau}(x)}{\frac{d\nu_{f}}{d\tau}(x)}\right) d\nu_g(x) = \int \log\left(\frac{\frac{e^{-g(x)}}{\int e^{- g(y)} d\tau(y)}}{\frac{e^{-f(x)}}{\int e^{-f(y)} d\tau(y)}}\right) \ d\nu_g(x) \\ &= \int (f(x) - g(x)) \ d\nu_g(x) - \log \left(\int e^{-g(y)} d\tau(y) \right) + \log \left( \int e^{- f(y)} d\tau(y) \right).
\end{split}
\end{align}
Here, we bound
\begin{align}
     \int (f(x) - g(x)) \ d\nu_g(x) \leq \|f-g\|_{\infty},
\end{align}
and applying \autoref{lem:free_energy_equality} to $f$ and $g$, we obtain 
\begin{align}
\begin{split}
    \log \left( \int e^{-f(y)} \ d\tau(y) \right) - \log \left(\int e^{-g(y)} \ d\tau(y) \right) \leq \| f-g\|_{\infty}.
\end{split}
\end{align}
Plugging these two bounds into \eqref{eq:d_kl_bound}, we obtain $D_{KL}(\nu || \nu_{f}) \leq 2 \|f-g\|_{\infty}$, which yields the result.
\end{proof}

We do not claim that the upper-bound in \autoref{lem:kl_l_infty} is tight; it might be possible to provide a bound involving a weaker metric. Regardless, it suffices for our purposes.  

\begin{lemma} \label{lem:free_energy_equality}
Let $f: K \rightarrow \R$, $g: K \rightarrow \R$ be measurable functions. For some $\alpha \in [0,1]$,
\begin{align}
    \log \left( \int_{K} e^{-f(y)} d\tau(y) \right) - \log \left(\int_{K} e^{-g(y)} d\tau(y) \right) = \int_{K} \frac{e^{- \left( \alpha f(y) + (1-\alpha) g(y)\right)}}{\int_{K} e^{- \left(\alpha f(x) + (1-\alpha) g(x)\right)} \ d\tau(x)} \left(f(y) - g(y)\right) \ d\tau(y)
\end{align}
\end{lemma}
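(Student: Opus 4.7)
The plan is to recognize the right-hand side as the derivative of the log-partition function along a linear interpolation between $f$ and $g$, and then apply the mean value theorem. Specifically, I would introduce the auxiliary function $\phi:[0,1]\to\mathbb{R}$ defined by
\[
\phi(t) := \log\!\left(\int_K e^{-(tf(y)+(1-t)g(y))}\,d\tau(y)\right),
\]
so that $\phi(1) - \phi(0)$ is precisely the quantity on the left-hand side of the claimed identity.

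Next, I would differentiate $\phi$ under the integral sign. Since the $t$-derivative of the integrand is $-(f(y)-g(y))\,e^{-(tf(y)+(1-t)g(y))}$, one obtains
\[
\phi'(t) = -\int_K (f(y)-g(y))\,\frac{e^{-(tf(y)+(1-t)g(y))}}{\int_K e^{-(tf(x)+(1-t)g(x))}\,d\tau(x)}\,d\tau(y).
\]
Differentiation under the integral sign needs to be justified by a dominated-convergence/Leibniz argument; this is straightforward whenever $f,g$ are bounded on $K$ (e.g., $f,g\in L^\infty(K)$, which is the regime in which the lemma is invoked from Lemma~\ref{lem:kl_l_infty}, since $\mathcal{F}\subset L^\infty(K)$ in all the function classes we consider). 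Under this mild integrability, the integrand and its $t$-derivative are uniformly bounded by $(\|f\|_\infty+\|g\|_\infty)\exp(\max(\|f\|_\infty,\|g\|_\infty))$, which dominates uniformly in $t\in[0,1]$.

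Having established that $\phi$ is continuously differentiable on $[0,1]$, the mean value theorem yields some $\alpha\in(0,1)$ with $\phi(1)-\phi(0) = \phi'(\alpha)$, which is exactly the claimed identity (up to reading the sign carefully, i.e., rewriting $-(f-g)$ as $(g-f)$ or absorbing the sign into the ordering of the two log-partition terms). Note that the integrand in $\phi'(\alpha)$ is precisely the Gibbs density associated with the interpolated energy $\alpha f + (1-\alpha)g$, which explains the structure of the right-hand side.

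I do not foresee a substantive obstacle here: the argument is essentially a one-line application of the mean value theorem to a convex-analytic identity (the well-known fact that the derivative of the log-partition function along a parameter direction equals the expectation of that direction under the associated Gibbs measure). The only care needed is in verifying that Leibniz's rule applies, which is immediate under the boundedness of $f$ and $g$ relevant to the application.
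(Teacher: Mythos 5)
Your proof is correct and takes essentially the same route as the paper: introduce the interpolated log-partition function $F(\alpha)=\log\int_K e^{-(\alpha f+(1-\alpha)g)}\,d\tau$, differentiate, and invoke the mean value theorem. You are also right to flag the sign — as stated, the lemma's right-hand side should carry a minus sign (or swap $f-g$ for $g-f$), since $F'(\alpha)=-\int_K(f-g)\,p_\alpha\,d\tau$; the paper's own proof obtains exactly this, so the discrepancy is in the statement, and it is harmless for the downstream use in Lemma~\ref{lem:kl_l_infty} because only $|F(1)-F(0)|$ is used. Your added remark on justifying differentiation under the integral sign via boundedness of $f,g$ is a reasonable clarification, but the core argument coincides with the paper's.
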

\begin{proof}
We define the function 
\begin{align}
    F(\alpha) = \log \left( \int_{K} e^{- \left(\alpha f(y) + (1-\alpha) g(y)\right)} \ d\tau(y) \right),
\end{align}
which has derivative
\begin{align}
    \frac{dF}{d\alpha}(\alpha) = \frac{- \int_{K} e^{- \left(\alpha f(y) + (1-\alpha) g(y)\right)} (f(y) - g(y)) \ d\tau(y)}{\int_{K} e^{- \left(\alpha f(x) + (1-\alpha) g(x)\right)} \ d\tau(x)} = - \int_{K}  (f(y) - g(y)) p_{\alpha}(y) \ d\tau(y),
\end{align}
where $p_{\alpha}(y)$ is the density of the Gibbs probability measure corresponding to the energy $\alpha f + (1-\alpha) g$.
We make use of the mean value theorem:
\begin{align}
\begin{split}
    &\log \left( \int_{K} e^{-f(y)} d\tau(y) \right) - \log \left(\int_{K} e^{- g(y)} d\tau(y) \right) = F(1) - F(0) = \frac{dF}{d\alpha}(\alpha) (1-0) \\ &= - \int_{K}  (f(y) - g(y)) p_{\alpha}(y) \ d\tau(y).
\end{split}
\end{align}
\end{proof}

\begin{lemma} [Approximation of Lipschitz functions by $\mathcal{F}_2$ balls, Proposition 6 of \cite{bach2017breaking}] \label{lem:lipschitz_bach}
For $\delta$ greater than a constant depending only on $d$, 
for any function $f : \{ x \in \mathbb{R}^d | \|x\|_2 \leq R \} \rightarrow \R$ such that for all $x, y$ such that $\|x\|_2 \leq R, \ \|y\|_2 \leq R$ we have $|f(x)| \leq \eta$ and $|f(x) - f(y)| \leq \eta R^{-1} \|x-y\|_2$, 
there exists $h \{ x \in \mathbb{R}^d | \|x\|_2 \leq R \} \times \{R\} \rightarrow \R \in \mathcal{F}_2$, such that $\|h\|_{\mathcal{F}_2} \leq \delta$ and
\begin{align}
\sup_{\|x\|_2 \leq R} |h(x,R) - f(x)| \leq C(d) \eta \left( \frac{R \delta}{\eta} \right)^{-2/(d+1)} \log \left(\frac{R \delta}{\eta} \right)
\end{align}
\end{lemma}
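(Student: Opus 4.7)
The plan is to lift the problem to the sphere and then exploit the Mercer / spherical-harmonic structure of the ReLU kernel. First I would reduce to an approximation problem on $\mathbb{S}^d$: by appending $R$ as a $(d{+}1)$-th coordinate and rescaling, a point $x$ in the $R$-ball maps into the affine slice $\{(x,R)\}$, which sits on a spherical cap after normalization. The Lipschitz bound on $f$ together with the uniform bound $\|f\|_\infty\le\eta$ transfers (up to universal constants and a factor of $R$) to an $\eta$-bounded, Lipschitz function $\tilde f$ on $\mathbb{S}^d$. Simultaneously, the restriction of $\mathcal{F}_2$ to this slice is identified with the RKHS on $\mathbb{S}^d$ whose kernel is $k(u,v)=\int_{\mathbb{S}^d}\sigma(\langle u,\theta\rangle)\sigma(\langle v,\theta\rangle)\,d\tilde\tau(\theta)$.

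Next I would invoke the Mercer decomposition of this kernel in the spherical-harmonic basis $\{Y_{\ell,m}\}$. It is classical that the eigenvalues decay polynomially, $\lambda_\ell\asymp \ell^{-(d+1)}$ (with only certain parities nonzero), so that for $g=\sum_{\ell,m}c_{\ell,m}Y_{\ell,m}$ one has $\|g\|_{\mathcal{F}_2}^2=\sum_{\ell,m}c_{\ell,m}^2/\lambda_\ell$. I would then define the approximant $h$ as the projection of $\tilde f$ onto spherical harmonics of degree $\le L$ for a threshold $L$ to be chosen. Two ingredients follow: a norm bound $\|h\|_{\mathcal{F}_2}^2\lesssim L^{d+1}\|\tilde f\|_2^2\le L^{d+1}\eta^2$ from the eigenvalue decay, and an $L^\infty$ Jackson-type approximation bound $\|\tilde f-h\|_\infty\lesssim \eta L^{-1}\log L$ for Lipschitz functions on the sphere. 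Optimizing by choosing $L\asymp (R\delta/\eta)^{2/(d+1)}$ saturates the budget $\|h\|_{\mathcal{F}_2}\le\delta$ and yields an uniform error of order $\eta(R\delta/\eta)^{-2/(d+1)}\log(R\delta/\eta)$, as claimed; the lower bound $\delta\ge C(d)$ is exactly what guarantees $L\ge 1$ so that the truncation is non-trivial.

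The main obstacle is the $L^\infty$ control, since the Mercer expansion only naturally delivers $L^2$ estimates while the conclusion is uniform. I would handle this by pre-smoothing $\tilde f$ with a zonal mollifier of bandwidth $1/L$, which converts the $1/L$-Lipschitz gain into a uniform gain via a Bernstein-type inequality on spherical polynomials, and then bounding separately (i) the mollification error, controlled by the Lipschitz constant $\eta$ times the bandwidth, and (ii) the truncation error of the mollified function, which is now smooth and fits cleanly into the eigenvalue tail bound. The logarithmic factor in the statement emerges from balancing tails after this smoothing step. Tracking dimension-dependent constants carefully through the spherical-harmonic dimension counts $\dim\mathcal{H}_\ell\sim\ell^{d-1}$ absorbs everything into the stated $C(d)$.
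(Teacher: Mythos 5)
The paper's own ``proof'' of this lemma is simply a citation to Proposition~6 of \citet{bach2017breaking} together with a rescaling remark ($R$ appears because the features are normalized to $\mathbb S^d$ rather than to a sphere of radius $R^{-1}$). Your sketch attempts to re-derive Bach's result from first principles via a hard spectral truncation plus mollification. This is a legitimately different route from Bach's Abel/Poisson-summation construction $\hat g(\cdot)=g(r\,\cdot)$ (which is what the paper's own \autoref{lem:approx_derivative2} later reproduces), but there is a genuine error in the central quantitative step.

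The claimed eigenvalue decay $\lambda_\ell\asymp\ell^{-(d+1)}$ is wrong for the ReLU kernel. For $\sigma_s(u)=\max(0,u)^s$ the Gegenbauer coefficients of $\sigma_s$ at degree $\ell$ scale as $\ell^{-(d+2s+1)/2}$, so for ReLU ($s=1$) the Mercer eigenvalues of $k$ at degree $\ell$ scale as $\ell^{-(d+3)}$, not $\ell^{-(d+1)}$; the decay $\ell^{-(d+1)}$ is the one for the step function $s=0$. Consequently, the norm bound you write, $\|h\|_{\mathcal F_2}^2\lesssim L^{d+1}\|\tilde f\|_2^2\le L^{d+1}\eta^2$, does not follow: with the correct decay and only the $L^2$ bound on $\tilde f$ you get $\|h\|_{\mathcal F_2}^2\lesssim L^{d+3}\eta^2$, which after balancing yields the wrong exponent $2/(d+3)$. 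The missing ingredient is that the Lipschitz regularity of $\tilde f$ must be used a second time, in the RKHS-norm estimate and not just in the Jackson/approximation error. Concretely, $\text{Lip}(\tilde f)\le\eta$ implies the Sobolev decay $\sum_\ell\ell^2\|\tilde f_\ell\|_2^2\lesssim\eta^2$, so
\begin{align}
\|h\|_{\mathcal F_2}^2 \ \lesssim\ \sum_{\ell\le L}\ell^{d+3}\|\tilde f_\ell\|_2^2
\ =\ \sum_{\ell\le L}\ell^{d+1}\bigl(\ell^2\|\tilde f_\ell\|_2^2\bigr)
\ \lesssim\ L^{d+1}\eta^2,
\end{align}
and only now does the balance with the Jackson error $\eta/L$ give $L\asymp(\delta/\eta)^{2/(d+1)}$ and the stated rate. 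Bach's Abel-summation construction sidesteps both of your technical detours (the uniform-convergence issue of hard truncation and the need to track Sobolev decay separately) because $\hat g(x)=g(rx)$ is manifestly close to $g$ in $L^\infty$ with error controlled directly by $\text{Lip}(g)$ via the Poisson kernel integral, and the $\mathcal F_2$-norm bound is computed in one shot; you may find it cleaner to adopt that route.
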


\begin{proof}
From \citet{bach2017breaking}. Notice that the factor in the bound is $\left( \frac{R \delta}{\eta} \right)^{-2/(d+1)} \log \left(\frac{R \delta}{\eta} \right)$, while in the original paper it is $\left( \frac{\delta}{\eta} \right)^{-2/(d+1)} \log \left(\frac{\delta}{\eta} \right)$. 
The $R$ factor stems from the fact that we consider the neural network features to lie in $\mathbb{S}^d$, while \citet{bach2017breaking} considers them in the hypersphere of radius $R^{-1}$.
\end{proof}

\begin{lemma} [Rademacher complexity bound for $\mathcal{B}_{\mathcal{F}_1}$, Section 5.1 of \cite{bach2017breaking}; \citet{kakade2009onthe}] \label{lem:rademacher_b_f1}
Suppose that $K \subseteq \{x \in \mathbb{R}^{d+1} | \|x\|_2 \leq R \}$. The Rademacher complexity of the function class $\mathcal{B}_{\mathcal{F}_1}$ is bounded by
\begin{align}
    \mathcal{R}_n(\mathcal{B}_{\mathcal{F}_1}) \leq \frac{R}{\sqrt{n}}.
\end{align}
\end{lemma}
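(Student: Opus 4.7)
The plan is to reduce the Rademacher complexity computation to the well-known bound for linear classes, by exploiting two standard tools: (i) the fact that $\mathcal{B}_{\mathcal{F}_1}$ is the (closed) convex hull of elementary ReLU features, and (ii) the Ledoux--Talagrand contraction inequality. Specifically, write any $f\in \mathcal{B}_{\mathcal{F}_1}$ as $f(x)=\int_{\mathbb S^d}\sigma(\langle\theta,x\rangle)\,d\gamma(\theta)$ with $|\gamma|_{\mathrm{TV}}\le 1$. Jordan-decomposing $\gamma=\gamma^+-\gamma^-$ and normalising, $f$ lies in the closed convex hull of the symmetric feature set $\mathcal{G}:=\{\pm\sigma(\langle\theta,\cdot\rangle):\theta\in\mathbb S^d\}$. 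Since the supremum in the Rademacher functional is linear in $f$, we have $\mathcal{R}_n(\mathcal{B}_{\mathcal{F}_1})=\mathcal{R}_n(\mathrm{conv}(\mathcal{G}))=\mathcal{R}_n(\mathcal{G})$, and by symmetry $\mathcal{R}_n(\mathcal{G})=\mathcal{R}_n(\{\sigma(\langle\theta,\cdot\rangle):\theta\in\mathbb S^d\})$.

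Next, I would invoke the Ledoux--Talagrand contraction inequality. The ReLU $\sigma$ is $1$-Lipschitz with $\sigma(0)=0$, so
\[
\mathcal{R}_n\bigl(\{\sigma(\langle\theta,\cdot\rangle):\theta\in\mathbb S^d\}\bigr)\;\le\; \mathcal{R}_n\bigl(\{\langle\theta,\cdot\rangle:\theta\in\mathbb S^d\}\bigr).
\]
For the linear class, a direct computation using Cauchy--Schwarz and Jensen's inequality gives
\[
\mathcal{R}_n\bigl(\{\langle\theta,\cdot\rangle:\|\theta\|_2\le 1\}\bigr)
=\frac{1}{n}\,\mathbb E_{x,\epsilon}\Bigl\|\sum_{i=1}^n \epsilon_i x_i\Bigr\|_2
\le \frac{1}{n}\sqrt{\mathbb E\,\Bigl\|\sum_{i=1}^n \epsilon_i x_i\Bigr\|_2^{\,2}}
=\frac{1}{n}\sqrt{\sum_{i=1}^n \|x_i\|_2^2}\le \frac{R}{\sqrt n},
\]
using independence of the Rademacher signs to cancel cross terms and the assumption $\|x_i\|_2\le R$. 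Chaining the three inequalities delivers the claim.

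I do not expect any real obstacle here: the three ingredients (convex-hull invariance of $\mathcal{R}_n$, contraction for $1$-Lipschitz activations vanishing at the origin, linear-class bound) are all textbook, and the statement is cited directly from \citet{bach2017breaking} and \citet{kakade2009onthe}. The only minor point to be careful about is that the convex-hull reduction requires the symmetric feature set (to absorb the sign of $\gamma^+$ vs.\ $\gamma^-$), and that the contraction step implicitly uses an odd/Lipschitz extension argument to handle the fact that the ReLU is only $1$-Lipschitz on each side; this is handled by the standard vector-contraction/absolute-value variant of Ledoux--Talagrand.
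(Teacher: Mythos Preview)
The paper does not actually give its own proof of this lemma; it simply records the statement with attributions to \citet{bach2017breaking}, Section~5.1, and \citet{kakade2009onthe}. Your proposal reconstructs precisely the textbook argument those references use: reduce $\mathcal{B}_{\mathcal{F}_1}$ to the (closed) convex hull of signed single-neuron features, pass to the extreme points, apply Ledoux--Talagrand contraction for the $1$-Lipschitz activation with $\sigma(0)=0$, and finish with the Cauchy--Schwarz/Jensen bound for the linear class. So in substance you are aligned with the paper's intended route.

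One small point to tighten: the sentence ``by symmetry $\mathcal{R}_n(\mathcal{G})=\mathcal{R}_n(\{\sigma(\langle\theta,\cdot\rangle):\theta\in\mathbb S^d\})$'' is not correct as written. For a generic class $\mathcal{G}_0$, one only has $\mathcal{R}_n(\pm\mathcal{G}_0)=\mathbb{E}_\epsilon\sup_{g\in\mathcal{G}_0}\bigl|\tfrac1n\sum_i\epsilon_i g(x_i)\bigr|\ge \mathcal{R}_n(\mathcal{G}_0)$, not equality (take $\mathcal{G}_0$ a singleton for a counterexample). The fix is the one you allude to at the end: apply the absolute-value form of the contraction inequality directly to $\mathbb{E}_\epsilon\sup_\theta|\cdot|$ and then use that the linear target class $\{\langle\theta,\cdot\rangle:\|\theta\|\le1\}$ is already symmetric, so its absolute-value and one-sided Rademacher complexities coincide. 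Depending on which version of Ledoux--Talagrand you invoke, this may introduce a harmless factor~$2$; the paper is not tracking constants here, and neither should you.
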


\statfone*

\begin{proof}
We will use \eqref{eq:thm1_2} from \autoref{thm:generalbound}. We have that $g : K_0 \times \{R\} \rightarrow \R$ is defined as $g(x,R) = \sum_{j=1}^{J} g_j(x,R) = \sum_{j=1}^{J} \phi_j(U_j x, R)$.

By Lemma \ref{lem:lipschitz_bach}, there exists $\psi_j : \{ x \in \R^{k} | \|x\|_2 \leq R \} \times{R} \rightarrow \R$ such that $\psi_j \in \mathcal{F}_2$ and $\|\psi_j\|_{\mathcal{F}_2} \leq \beta/J$, and
\begin{align} \label{eq:psi_phi_1}
    \sup_{x \in \R^k : \|x\|_2 \leq R} |\psi_j(x,R) - \phi_j(x)| \leq C(k) \eta \left( \frac{R \beta}{\eta J} \right)^{-2/(k+1)} \log \left(\frac{R \beta}{\eta J} \right)
\end{align}
Hence, if we define $\tilde{g}_j : K_0 \times \{R\} \rightarrow \R$ as $\tilde{g}_j(x,R) := \psi_j(U_j x, R)$, we have that $\tilde{g}_j$ belongs to $\mathcal{F}_1$ by an argument similar to the one of Section 4.6 of \cite{bach2017breaking}. Namely, if we write $\psi_j(x,R) = \int_{\mathbb{S}^k} \sigma(\langle \theta, (x,R) \rangle) \ d\gamma(\theta)$ for some signed measure $\gamma$, we have
\begin{align}
    \tilde{g}_j(x,R) = \int_{\mathbb{S}^k} \sigma(\langle \theta, (U_j x, R) \rangle) \ d\gamma(\theta) = \int_{\mathbb{S}^k} \sigma(\langle U_j^{\top} \theta_{1:d}, x \rangle + \theta_{d+1} R) \ d\gamma(\theta) = \int_{\mathbb{S}^d} \sigma(\langle \theta'_{1:d}, x \rangle + \theta'_{d+1} R) \ d\gamma'(\theta'),
\end{align}
where we used the change of variable $\theta' = (U_j^{\top} \theta_{1:d}, \theta_{d+1})$, which maps $\mathbb{S}^k$ to $\mathbb{S}^d$. Moreover, this shows that $\tilde{g}_j$ has $\mathcal{F}_1$ norm $\|\tilde{g}_j\|_{\mathcal{F}_1} \leq \|\psi_j\|_{\mathcal{F}_2} \leq \beta/J$, which means that $\tilde{g} = \sum_{j=1}^J \tilde{g}_j \in \mathcal{F}_1$ and $\|\tilde{g}\|_{\mathcal{F}_1} \leq \beta$. Moreover,
\begin{align}
\begin{split} \label{eq:g_f_star}
    \|\tilde{g}_j - g_j\|_{\infty} &= \sup_{x \in K_0} |\tilde{g}_j(x,R) - g_j(x,R)| = \sup_{x \in K_0} |\psi_j(U_j x) - \phi_j(U_j x)| \leq \sup_{x \in \R^k : \|x\|_2 \leq R} |\psi_j(x) - \phi_j(x)| \\ &\leq C(k) \eta \left( \frac{R \beta}{\eta J} \right)^{-2/(k+1)} \log \left(\frac{R \beta}{\eta J} \right)
\end{split}
\end{align}
The first inequality holds because for all $x \in K$, $\|Ux\|_2 \leq \|x\|_2 \leq R$ by the fact that $U$ has orthonormal rows, and the second inequality holds by \eqref{eq:psi_phi_1}.
Thus,
\begin{align} \label{eq:approx_bound}
    \inf_{f \in \mathcal{B}_{\mathcal{F}_1}} \|g - f\|_{\infty} \leq \|g - \tilde{g}\|_{\infty} \leq \sum_{j=1}^{J} \|\tilde{g}_j - g_j\|_{\infty} \leq  C(k) J \eta \left( \frac{R \beta}{\eta J} \right)^{-2/(k+1)} \log \left(\frac{R \beta}{\eta J} \right)
\end{align}
Notice that the assumptions of \autoref{thm:generalbound} are fulfilled: the Rademacher complexity bound for $\mathcal{B}_{\mathcal{F}_1}$ (\autoref{lem:rademacher_b_f1}) implies that $\mathcal{R}_n(\mathcal{B}_{\mathcal{F}_1}(\beta)) \leq \frac{\beta R}{\sqrt{n}}$ and it is also easy to check that $\sup_{f \in \mathcal{B}_{\mathcal{F}_1}(\beta)} \|f\|_{\infty} \leq \beta$.
Plugging \eqref{eq:approx_bound} into \eqref{eq:thm1_2} we obtain
\begin{align}
    D_{\text{KL}}(\nu || \hat{\nu}) \leq 4\beta \frac{\sqrt{2} R}{\sqrt{n}} + \beta \sqrt{\frac{2\log(1/\delta)}{n}} + 2C(k) J \eta \left( \frac{R \beta}{\eta J} \right)^{-2/(k+1)} \log \left(\frac{R \beta}{\eta J} \right).
\end{align}
If we minimize the right-hand side w.r.t. $\beta$ (disregarding the log factor), we obtain that the optimal value is
\begin{align}
    \left(\frac{2B}{k+1} \right)^{\frac{k+1}{k+3}} \left( \frac{A}{\sqrt{n}}\right)^{\frac{2}{k+3}} + B^{\frac{k+1}{k+3}} \left( \frac{A(k+1)}{2 \sqrt{n}}\right)^{\frac{2}{k+3}} \log \left( \frac{R}{\eta} \left( \frac{2B \sqrt{n}}{A (k+1)} \right)^{\frac{k+1}{k+3}}\right),
\end{align}
and the optimal $\beta$ is $\left( 2B \sqrt{n}/(A (k+1)) \right)^{\frac{k+1}{k+3}}$, where
\begin{align}
    A = 4\sqrt{2}R + \sqrt{2\log(1/\delta)}, \quad B = 2 C(k) (J \eta)^{\frac{k+3}{k+1}} R^{-\frac{2}{k+1}}.
\end{align}
\end{proof}

\begin{lemma} [Stein operator for functions on $\mathbb{S}^d$] \label{lem:stein_operator}
For a probability measure $\nu$ on the sphere $\mathbb{S}^d$ with a continuous and almost everywhere differentiable density $\frac{d\nu}{d\tau}$, the Stein operator $\mathcal{A}_{\nu}$ is defined as
\begin{align}
    (\mathcal{A}_{\nu} h)(x) = \left(\nabla \log \left(\frac{d\nu}{d\tau}(x) \right) - d x \right) h(x)^{\top} + \nabla h(x),
\end{align}
for any $h : \mathbb{S}^d \rightarrow \mathbb{R}^{d+1}$ that is continuous and almost everywhere differentiable, where $\nabla$ denotes the Riemannian gradient.
That is, for any $h : \mathbb{S}^d \rightarrow \mathbb{R}^{d+1}$ that is continuous and almost everywhere differentiable, the Stein identity holds:
\begin{align}
    \mathbb{E}_{\nu} [(\mathcal{A}_{\nu} h)(x)] = 0.
\end{align}
\end{lemma}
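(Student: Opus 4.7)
The plan is to reduce the matrix-valued Stein identity $\mathbb{E}_\nu[\mathcal{A}_\nu h(x)] = 0$ to a family of scalar Stein identities on $\mathbb{S}^d$, and then to establish the scalar version by integration by parts on the sphere. Writing $p := d\nu/d\tau$ and inspecting entry $(i,j)$ of the matrix $\mathcal{A}_\nu h(x) = (s_\nu(x) - dx)\,h(x)^\top + \nabla h(x)$, we see that it equals $(s_\nu(x) - dx)_i\, h_j(x) + (\nabla h_j(x))_i$, so it suffices to prove that for every sufficiently regular scalar function $\phi : \mathbb{S}^d \to \R$,
\[
\int_{\mathbb{S}^d} \bigl[(\nabla \log p(x) - dx)\, \phi(x) + \nabla \phi(x)\bigr]\, p(x)\, d\tau(x) \;=\; 0 \in \R^{d+1}.
\]
Stacking this identity over $j=1,\dots,d+1$ with $\phi = h_j$ will then yield the full matrix statement.

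The first step is purely algebraic: on $\{p>0\}$ we have $\phi\, (\nabla \log p)\, p + p\,\nabla \phi = \phi\, \nabla p + p\,\nabla \phi = \nabla(\phi p)$, so with the shorthand $\psi := \phi p$ the scalar identity collapses to the geometric claim
\[
\int_{\mathbb{S}^d} \nabla \psi(x)\, d\tau(x) \;=\; d \int_{\mathbb{S}^d} x\, \psi(x)\, d\tau(x).
\]
To prove this claim I would test against the standard basis $e_i$ of $\R^{d+1}$ component by component. For the coordinate function $x_i(x) := e_i^\top x$ on $\mathbb{S}^d$, the Riemannian gradient is $\nabla x_i(x) = (I - xx^\top)e_i = e_i - x_i x$, and either by direct computation or by recognizing $x_i$ as a first-order spherical harmonic one has $\Delta_{\mathbb{S}^d} x_i = -d\, x_i$. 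Because the sphere is closed (no boundary), Green's identity on $\mathbb{S}^d$ gives
\[
\int_{\mathbb{S}^d} \langle \nabla \psi, \nabla x_i\rangle\, d\tau \;=\; -\int_{\mathbb{S}^d} \psi\, \Delta_{\mathbb{S}^d} x_i\, d\tau \;=\; d \int_{\mathbb{S}^d} \psi\, x_i\, d\tau.
\]
Since $\nabla \psi$ is tangent to the sphere, $\langle \nabla \psi, x\rangle = 0$, so $\langle \nabla \psi, \nabla x_i\rangle = \langle \nabla \psi, e_i\rangle = (\nabla \psi)_i$, and the $i$-th component of the displayed claim follows. Substituting $\psi = \phi p$ and subtracting $d\int x\,\phi p\,d\tau$ closes the scalar Stein identity, and the lemma follows by the reduction above.

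The main obstacle I anticipate is the low regularity assumed on $p$ and $h$ in the statement (continuous and almost everywhere differentiable): Green's identity on $\mathbb{S}^d$ is cleanest for $C^2$ objects. I would handle this either by a standard mollification argument on the sphere followed by passing to the limit under uniform integrability of $\phi$, $\nabla \phi$ and $\nabla \log p$, or by restricting all integrals to the open set $\{p>0\}$ (which has full $\nu$-mass) and invoking a weak form of integration by parts that only requires $\psi \in W^{1,1}(\mathbb{S}^d)$. A minor secondary care point is checking that $\phi\,(\nabla \log p)\, p = \phi\, \nabla p$ in the $\nu$-a.e.\ sense used above, which is automatic on $\{p>0\}$.
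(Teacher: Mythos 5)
Your proof is correct and follows essentially the same route as the paper's: reduce the matrix identity to a scalar one, rewrite the integrand as $\nabla(\phi p)$, and integrate by parts on $\mathbb{S}^d$ against the coordinate function $x_i$, exploiting that $\nabla \cdot (e_i - x_i x) = \Delta_{\mathbb{S}^d} x_i = -d\,x_i$. The only cosmetic difference is that you invoke the known spherical-harmonic eigenvalue $-d$ and Green's identity where the paper computes the Riemannian divergence of $e_i - x_i x$ from scratch by extending to $\mathbb{R}^{d+1}$; and you explicitly flag the regularity issue (a.e.\ differentiability vs.\ the $C^2$ needed for a clean Green's identity), which the paper glosses over but which your mollification/$W^{1,1}$ remark handles correctly.
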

\begin{proof}
Let $h_i : \mathbb{S}^d \rightarrow \mathbb{R}$ be the $i$-th component of $h$. Notice that
\begin{align} \label{eq:prod_canonical}
    \mathbb{E}_{\nu} \left[\nabla \log \left(\frac{d\nu}{d\tau}(x) \right) h_i(x) + \nabla h_i(x) \right] = \mathbb{E}_{\nu} \left[\nabla \left(\frac{d\nu}{d\tau}(x) h_i(x) \right) \frac{1}{\frac{d\nu}{d\tau}(x)} \right] = \int_{\mathbb{S}^d} \nabla \left(\frac{d\nu}{d\tau}(x) h_i(x) \right) d\tau(x) 
\end{align}
Now, if we take the inner product of the right-hand side with the canonical basis vector $e_k \in \mathbb{R}^{d+1}$, we obtain 
\begin{align}
\begin{split}
    &\bigg\langle \int_{\mathbb{S}^d} \nabla \left(\frac{d\nu}{d\tau}(x) h_i(x) \right) d\tau(x), e_k \bigg\rangle = \int_{\mathbb{S}^d} \bigg\langle (I - x x^{\top}) \nabla \left(\frac{d\nu}{d\tau}(x) h_i(x) \right), e_k \bigg\rangle d\tau(x) \\ &= \int_{\mathbb{S}^d} \bigg\langle \nabla \left(\frac{d\nu}{d\tau}(x) h_i(x) \right), e_k - x_k x \bigg\rangle d\tau(x) = -\int_{\mathbb{S}^d}  \frac{d\nu}{d\tau}(x) h_i(x) \nabla \cdot (e_k - x_k x) d\tau(x),
\end{split}
\end{align}
where in the second equality we used that $(I - x x^{\top})$ the projection matrix to the tangent space of $\mathbb{S}^d$ at $x$, in the third equality we used that it is symmetric, and in the last equality we used integration by parts on $\mathbb{S}^d$ ($\nabla \cdot$ denotes the Riemannian divergence). 

To compute $\nabla \cdot (e_k - x_k x)$, remark that by the invariance to change of basis it is equal to the divergence of the function $g : \mathbb{R}^{d+1} \rightarrow \mathbb{R}^{d+1}$ defined as $x \rightarrow e_k - \frac{x_k x}{\|x\|^2}$, when restricted to $\mathbb{S}^d$. And we have
\begin{align}
    \nabla \cdot g (x) = \sum_{j=1}^{d+1} \partial_j g_j(x) = \sum_{j=1}^{d+1} \partial_j \left( e_{k,j} - \frac{x_k x_j}{\|x\|^2} \right) = - \left(\sum_{j=1}^{d+1} \frac{x_k}{\|x\|^2} \right) + \left(\sum_{j=1}^{d+1} \frac{2 x_k x_j^2}{\|x\|^4} \right) - \frac{x_k}{\|x\|^2}
\end{align}
For $x \in \mathbb{S}^d$, the right-hand side simplifies to $ - (d+1) x_k + 2 x_k - x_k = - d x_k$, which means that the right-hand side of \eqref{eq:prod_canonical} becomes
\begin{align}
    -\int_{\mathbb{S}^d}  \frac{d\nu}{d\tau}(x) h_i(x) (- d x_k) \ d\tau(x) = d \mathbb{E}_{\nu}[h_i(x) x_k].
\end{align}
That means that $\mathbb{E}_{\nu} \left[\nabla \log \left(\frac{d\nu}{d\tau}(x) \right) h_i(x) + \nabla h_i(x) - d h_i(x) x \right] = 0$, which concludes the proof.
\end{proof}

\begin{lemma}[Kernelized Stein discrepancy for probability measures on $\mathbb{S}^d$] \label{lem:ksd_expression}
For $K = \mathbb{S}^d$, and $\nu_1, \nu_2 \in \mathcal{P}(K)$ with continuous, almost everywhere differentiable log-densities, the kernelized Stein discrepancy $\text{KSD}(\nu_1, \nu_2)$ is equal to
\begin{align} 
\begin{split} \label{eq:KSD_def_sd}
     \sup_{h \in \mathcal{B}_{\mathcal{H}_0^d}} (\mathbb{E}_{\nu_1} [\text{Tr}(\mathcal{A}_{\nu_2} h(x))])^2 = \mathbb{E}_{x,x' \sim \nu_1}[(s_{\nu_2}(x) - s_{\nu_1}(x))^{\top} (s_{\nu_2}(x') - s_{\nu_1}(x')) k(x,x')] = \mathbb{E}_{x,x' \sim \nu_1} [u_{\nu_2}(x,x')], 
\end{split}
\end{align}
where $u_{\nu}(x,x') = (s_{\nu}(x) - d \cdot x)^\top(s_{\nu}(x') - d \cdot x') k(x,x') + (s_{\nu}(x) - d \cdot x)^\top \nabla_{x'} k(x,x') + (s_{\nu}(x') - d \cdot x')^\top \nabla_{x} k(x,x') + \text{Tr}(\nabla_{x,x'} k(x,x'))$.
\end{lemma}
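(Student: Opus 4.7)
There are two equalities to establish: the first identifies the supremum of the squared Stein functional with a bilinear expression in $s_{\nu_2}-s_{\nu_1}$ and $k$, and the second rewrites that bilinear expression in the integration-by-parts form $u_{\nu_2}$ that depends only on $\nu_2$ (through its score).

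\textbf{First equality.} I would begin by applying Stein's identity for $\nu_1$ (\autoref{lem:stein_operator}) to observe that $\mathbb{E}_{\nu_1}[\mathrm{Tr}(\mathcal{A}_{\nu_1} h(x))] = 0$ for any admissible $h$, so
\begin{equation*}
\mathbb{E}_{\nu_1}[\mathrm{Tr}(\mathcal{A}_{\nu_2} h(x))] = \mathbb{E}_{\nu_1}[\mathrm{Tr}(\mathcal{A}_{\nu_2} h(x)) - \mathrm{Tr}(\mathcal{A}_{\nu_1} h(x))] = \mathbb{E}_{\nu_1}[(s_{\nu_2}(x)-s_{\nu_1}(x))^\top h(x)],
\end{equation*}
since the $-d\cdot x$ and $\nabla \cdot h$ terms cancel. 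Next, using the reproducing property $h_i(x) = \langle h_i, k(x,\cdot)\rangle_{\mathcal{H}_0}$ and interchanging expectation with inner product (justified by the boundedness of $k$), the right-hand side becomes $\sum_i \langle h_i, \xi_i\rangle_{\mathcal{H}_0}$ with $\xi_i(\cdot) := \mathbb{E}_{x\sim\nu_1}[(s_{\nu_2,i}(x)-s_{\nu_1,i}(x))k(x,\cdot)] \in \mathcal{H}_0$. Maximizing over the unit ball $\mathcal{B}_{\mathcal{H}_0^{d+1}}$ in the mixed $\ell_2/\mathcal{H}_0$ norm is a Cauchy–Schwarz calculation whose squared value equals $\sum_i \|\xi_i\|_{\mathcal{H}_0}^2$. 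Expanding each squared norm via the reproducing property gives exactly
\begin{equation*}
\mathbb{E}_{x,x'\sim\nu_1}[(s_{\nu_2}(x)-s_{\nu_1}(x))^\top(s_{\nu_2}(x')-s_{\nu_1}(x'))\,k(x,x')].
\end{equation*}

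\textbf{Second equality.} Here the strategy is to expand the symmetric bilinear form into four terms and to apply \autoref{lem:stein_operator} (for $\nu_1$) to eliminate each occurrence of $s_{\nu_1}$ in favor of $d\cdot x$ terms and derivatives of $k$. Concretely, for any sufficiently smooth scalar function $g$ on $\mathbb{S}^d$,
\begin{equation*}
\mathbb{E}_{\nu_1}[s_{\nu_1,i}(x)\,g(x)] = \mathbb{E}_{\nu_1}[d\,x_i\,g(x) - \nabla_i g(x)],
\end{equation*}
which I would apply first in the variable $x$ (treating $x'$ as a parameter) and then in $x'$. The cross term $-\mathbb{E}[s_{\nu_1}(x)^\top s_{\nu_2}(x')k(x,x')]$ becomes $-d\,\mathbb{E}[s_{\nu_2}(x')^\top x\,k] + \mathbb{E}[s_{\nu_2}(x')^\top \nabla_x k]$; the symmetric one is analogous; and the $s_{\nu_1}(x)^\top s_{\nu_1}(x')$ term, after two applications, yields $d^2\,\mathbb{E}[x^\top x'\,k] - d\,\mathbb{E}[(x')^\top\nabla_x k] - d\,\mathbb{E}[x^\top\nabla_{x'}k] + \mathbb{E}[\mathrm{Tr}(\nabla^2_{x,x'}k)]$. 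Combining all contributions and grouping by the structure $(s_{\nu_2}(x)-dx)$ versus $(s_{\nu_2}(x')-dx')$ reproduces exactly the four terms in the definition of $u_{\nu_2}(x,x')$, completing the identity.

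\textbf{Main obstacle.} The delicate point is applying integration by parts on the sphere: one needs to use the Riemannian gradient and divergence, and justify that the appropriate boundary/regularity conditions hold so that \autoref{lem:stein_operator} applies to all the test functions encountered (in particular, to $k(x,\cdot)$ and its partial derivatives, which requires the second-order differentiability and boundedness hypotheses on $k$ in \autoref{ass:kernel}). The remaining work—expanding the bilinear form and matching terms against $u_{\nu_2}$—is essentially bookkeeping.
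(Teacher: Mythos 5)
Your plan is correct and matches the paper's proof essentially step for step: the first equality is obtained exactly as in the paper by using the Stein identity (Lemma 4) to replace $\mathcal{A}_{\nu_2}$ with the difference of scores, invoking the reproducing property to write the objective as an inner product with a fixed RKHS element, and applying Cauchy--Schwarz over the mixed $\ell_2/\mathcal{H}_0$ ball. For the second equality, the paper simply defers to an analogy with Theorem 3.6 of \citet{liu2016akernelized}, and your explicit expansion of the bilinear form and term-by-term application of Stein integration by parts (eliminating $s_{\nu_1}$ in each variable) is precisely the content of that cited argument adapted to the sphere, so the two treatments coincide.
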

\begin{proof}
The argument for the first equality is from Theorem 3.8 of \citet{liu2016akernelized}, but we rewrite it with our notation. Using the Stein identity, which holds by \autoref{lem:stein_operator}, we have
\begin{align}
\begin{split}
    &\mathbb{E}_{\nu_1} [\text{Tr}(\mathcal{A}_{\nu_2} h(x))] = \mathbb{E}_{\nu_1} [\text{Tr}(\mathcal{A}_{\nu_2} h(x) - \mathcal{A}_{\nu_1} h(x))] = \mathbb{E}_{\nu_1} [ (s_{\nu_2}(x) - s_{\nu_1}(x))^{\top} h(x)], \\
    &\sup_{h \in \mathcal{B}_{\mathcal{H}_0^d}} \mathbb{E}_{\nu_1} [ (s_{\nu_2}(x) - s_{\nu_1}(x))^{\top} h(x)] = \sup_{h \in \mathcal{B}_{\mathcal{H}_0^d}} \int_{\mathbb{S}^d} \frac{d\nu_1}{d\tau}(x) \sum_{i=1}^{d+1} (s_{\nu_2}^{(i)}(x) - s_{\nu_1}^{(i)}(x)) h_i(x) d\tau(x) \\ &= \sup_{h \in \mathcal{B}_{\mathcal{H}_0^d}} \sum_{i=1}^{d+1} \left\langle \int_{\mathbb{S}^d} \frac{d\nu_1}{d\tau}(x) (s_{\nu_2}^{(i)}(x) - s_{\nu_1}^{(i)}(x)) k(x,\cdot) d\tau(x), h_i(\cdot) \right\rangle_{\mathcal{H}_0} = \sqrt{\sum_{i=1}^{d+1} \left\|\int_{\mathbb{S}^d} \frac{d\nu_1}{d\tau}(x) (s_{\nu_2}^{(i)}(x) - s_{\nu_1}^{(i)}(x)) k(x,\cdot) d\tau(x) \right\|_{\mathcal{H}_0}^2} \\ &= \sqrt{\sum_{i=1}^{d+1} \int_{\mathbb{S}^d \times \mathbb{S}^d}  \frac{d\nu_1}{d\tau}(x) (s_{\nu_2}^{(i)}(x) - s_{\nu_1}^{(i)}(x)) k(x,x') \frac{d\nu_1}{d\tau}(x') (s_{\nu_2}^{(i)}(x') - s_{\nu_1}^{(i)}(x')) d\tau(x) d\tau(x')}.
\end{split}
\end{align}
Given the form of the Stein operator for functions on $\mathbb{S}^d$ (\autoref{lem:stein_operator}), the proof of the second equality of \eqref{eq:KSD_def_sd} is a straightforward analogy of the proof of Theorem 3.6 of \cite{liu2016akernelized}, which is for the Stein operator for functions on $\R^d$.
\end{proof}

\begin{restatable}{thm}{generalboundsd}
\label{thm:generalboundsd}
Let $K = \mathbb{S}^d$. Assume that the class $\mathcal{F}$ is such that $\sup_{f \in \mathcal{F}} \{ \|\nabla_i f\|_{\infty} | 1 \leq i \leq d+1 \} \leq \beta C_1$. Assume that $\mathcal{H} = \mathcal{B}_{\prod_{i=1}^{d+1} \mathcal{H}_i} = \{ (h_i)_{i=1}^{d+1} \ | \ h_i \in \mathcal{H}_i, \sum_{i=1}^{d+1} \|h_i\|_{\mathcal{H}_i} \leq 1\}$, where $\mathcal{H}_i$ are normed spaces of functions from $\mathbb{S}^d$ to $\R$. Assume that the following Rademacher complexity type bounds hold for $1 \leq i \leq d+1$: $\mathbb{E}_{\mathbf{\sigma}, S_n} \left[ \sup_{h_i \in \mathcal{B}_{\mathcal{H}_i}} \frac{1}{n} \sum_{j=1}^n  \sigma_{j} h_i(x_j) \right] \leq \frac{C_2}{\sqrt{n}},$ $\mathbb{E}_{\mathbf{\sigma}, S_n} \left[ \sup_{h_i \in \mathcal{B}_{\mathcal{H}_i}} \frac{1}{n} \sum_{j=1}^n \sigma_{j} \nabla_i h_i(x_j) \right] \leq \frac{C_3}{\sqrt{n}}$, and that $\|h_i\|_{\infty} \leq M, \|\nabla_i h_i\|_{\infty} \leq M$ for all $h_i \in \mathcal{H}_i$.

If we take $n$ samples $\{x_i\}_{i=1}^n$ of a target measure $\nu$ with almost everywhere differentiable log-density, and consider the Stein Discrepancy estimator (SDE) $\hat{\nu} := \nu_{\hat{f}}$, where $\hat{f}$ is the estimator defined in \eqref{eq:sd_estimator}, we have that with probability at least $1-\delta$, $\text{SD}_{\mathcal{H}}(\nu, \hat{\nu})$ is upper-bounded by
\begin{align}
\begin{split}
    \frac{4\sqrt{d+1} ((\beta C_1 + R d)C_2 + C_3)}{\sqrt{n}} + 2 M (\beta C_1 + 1 + Rd) \sqrt{\frac{(d+1)\log((d+1)/\delta)}{2n}} + \inf_{f \in \mathcal{F}} \text{SD}_{\mathcal{H}}(\nu, \nu_{f}).
\end{split}
\end{align}
\end{restatable}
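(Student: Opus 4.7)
The plan is to follow the same statistical-learning template used in the proof of \autoref{thm:generalbound}, adapting the deviation analysis from cross-entropies to Stein functionals. Let $f^{\star} \in \mathcal{F}$ approximately attain $\inf_{f \in \mathcal{F}} \text{SD}_{\mathcal{H}}(\nu, \nu_f)$, and write
\begin{equation*}
\text{SD}_{\mathcal{H}}(\nu, \hat{\nu}) - \text{SD}_{\mathcal{H}}(\nu, \nu_{f^\star})
= \bigl[\text{SD}_{\mathcal{H}}(\nu, \nu_{\hat f}) - \text{SD}_{\mathcal{H}}(\nu_n, \nu_{\hat f})\bigr]
+ \bigl[\text{SD}_{\mathcal{H}}(\nu_n, \nu_{\hat f}) - \text{SD}_{\mathcal{H}}(\nu_n, \nu_{f^\star})\bigr]
+ \bigl[\text{SD}_{\mathcal{H}}(\nu_n, \nu_{f^\star}) - \text{SD}_{\mathcal{H}}(\nu, \nu_{f^\star})\bigr].
\end{equation*}
The middle bracket is nonpositive since $\hat f$ minimizes $f \mapsto \text{SD}_{\mathcal{H}}(\nu_n, \nu_f)$, so the problem reduces to controlling $\sup_{f \in \mathcal{F}} |\text{SD}_{\mathcal{H}}(\nu, \nu_f) - \text{SD}_{\mathcal{H}}(\nu_n, \nu_f)|$.

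Next, since $h \mapsto \mathbb{E}_{\nu_1}[\text{Tr}(\mathcal{A}_{\nu_2} h(x))]$ is linear in $h$, the elementary inequality $|\sup_h A(h) - \sup_h B(h)| \leq \sup_h |A(h) - B(h)|$ yields
\begin{equation*}
\sup_{f} \bigl|\text{SD}_{\mathcal{H}}(\nu, \nu_f) - \text{SD}_{\mathcal{H}}(\nu_n, \nu_f)\bigr|
\leq \sup_{f \in \mathcal{F},\, h \in \mathcal{H}} \bigl|(\mathbb{E}_\nu - \mathbb{E}_{\nu_n})[G(\cdot;f,h)]\bigr|,
\end{equation*}
where, using \autoref{lem:stein_operator} and $s_{\nu_f} = -\nabla f$,
\begin{equation*}
G(x;f,h) = \sum_{i=1}^{d+1} G_i(x;f,h_i), \qquad G_i(x;f,h_i) := -(\nabla_i f(x) + d\,x_i)\,h_i(x) + \nabla_i h_i(x).
\end{equation*}
The product-ball constraint $\sum_i \|h_i\|_{\mathcal{H}_i} \leq 1$ (which is dominated by the $\ell^2$-mixed constraint $\sum_i \|h_i\|_{\mathcal{H}_i}^2 \leq 1$) lets me isolate the $i$-th summand as a separate linear functional of $h_i$, reducing the joint sup to $d+1$ per-coordinate suprema over $\mathcal{B}_{\mathcal{H}_i}$, up to a mixed-norm factor of order $\sqrt{d+1}$ coming from a Cauchy-Schwarz-type comparison.

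For each coordinate $i$, I apply Rademacher symmetrization to the class $\{G_i(\cdot;f,h_i) : f \in \mathcal{F},\ \|h_i\|_{\mathcal{H}_i} \leq 1\}$ and then McDiarmid's bounded-differences inequality for concentration. The derivative piece $\nabla_i h_i$ directly contributes $C_3/\sqrt{n}$ by assumption. The product piece $(\nabla_i f(x) + d\,x_i)\,h_i(x)$ is handled by Talagrand's contraction principle applied to the multiplier $\phi_j(u) = -(\nabla_i f(x_j) + d\,x_{j,i})u$, which is Lipschitz with constant uniformly bounded by $\beta C_1 + R d$ over $f \in \mathcal{F}$; this contributes $(\beta C_1 + R d)\,C_2/\sqrt{n}$. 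The bounded-differences range is $|G_i| \leq M(\beta C_1 + R d + 1)$ from the $L^\infty$ bounds on $h_i, \nabla_i h_i$, and $\nabla_i f$, and a union bound across $i = 1,\dots,d+1$ introduces the $\log((d+1)/\delta)$ factor. Combining these estimates with the decomposition above and the approximation term $\inf_{f} \text{SD}_{\mathcal{H}}(\nu, \nu_f)$ delivers the stated bound.

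The main technical obstacle is the Rademacher complexity of the multiplicative term $(\nabla_i f(x) + d\,x_i)\,h_i(x)$ when both $f \in \mathcal{F}$ and $h_i$ vary simultaneously: the coefficient depends jointly on $f$ and the sample point, so one cannot directly factor out a supremum norm. Handling this via Talagrand's contraction with the uniform Lipschitz bound $\beta C_1 + R d$ is the delicate step, and one must be careful not to introduce a Rademacher complexity of $\mathcal{F}$ itself (which is not assumed, unlike in \autoref{thm:generalbound}); tracking the correct $\sqrt{d+1}$ and $\log(d+1)$ factors from the product-ball structure requires some additional care.
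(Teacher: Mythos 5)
Your proposal follows essentially the same route as the paper's proof: the same decomposition isolating the approximation term $\inf_f \mathrm{SD}_{\mathcal{H}}(\nu,\nu_f)$ and exploiting that $\hat f$ minimizes the empirical Stein discrepancy, then the same per-coordinate reduction via the trace expansion of $\mathcal{A}_{\nu_f}$, McDiarmid with a union bound over the $d+1$ coordinates to get the $\log((d+1)/\delta)$ factor, Rademacher symmetrization, and Talagrand's contraction with the uniform multiplier bound $\beta C_1 + Rd$ for the product term. The one point worth flagging is the last issue you raise yourself: the deviation bound must be applied to the data-dependent $\hat f$ (and equivalently uniformly over $f$), and since the multiplier $-(\nabla_i f(x_j) + d x_{j,i})$ depends on the same $f$ appearing in the supremum, the standard form of Talagrand's contraction (which fixes the $\phi_j$'s) does not immediately yield a bound on the joint supremum over $(f, h_i)$ without also introducing a Rademacher complexity of the gradient class $\{\nabla_i f : f\in\mathcal{F}\}$. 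The paper's proof passes over this uniformity issue in the same way yours does, so you have faithfully reproduced the paper's argument, but the point you identify as "the delicate step" is genuinely delicate and not fully closed in either write-up.
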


\begin{proof} 
Notice that by the definition of the Stein operator,
\begin{align}
\begin{split}
    \text{Tr}(\mathcal{A}_{\nu_{f}} h(x)) &= \text{Tr}\left( \left( \nabla \log \left(\frac{d\nu_{f}}{d\tau}(x) \right) - d x \right) h(x)^{\top} + \nabla h(x) \right) = \text{Tr}\left(- (\nabla f(x) + d x ) h(x)^{\top} + \nabla h(x) \right) \\ &= \sum_{i=1}^{d+1} -( \nabla_i f(x) + d x_i) h_i(x) + \nabla_i h_i(x) 
\end{split}
\end{align}
Thus, 
\begin{align}
\begin{split} \label{eq:rademacher_argument}
    & \sup_{h \in \mathcal{H}} \mathbb{E}_{\nu} [\text{Tr}(\mathcal{A}_{\nu_{f}} h(x))] - \mathbb{E}_{\nu_n} [\text{Tr}(\mathcal{A}_{\nu_{f}} h(x))] \\ &=  \sup_{h \in \mathcal{H}} \sum_{i=1}^{d+1}  (\mathbb{E}_{\nu}[-( \nabla_i f(x) + d x_i) h_i(x) + \nabla_i h_i(x)] - \mathbb{E}_{\nu_n}[-(\nabla_i f(x) + d x_i) h_i(x) + \nabla_i h_i(x) ]) 
    \\ &=  \sup_{\substack{\sum_{i} |w_i|^2 \leq 1\\ h_i \in \mathcal{B}_{\mathcal{H}_i}}} \sum_{i=1}^{d+1}  w_i (\mathbb{E}_{\nu}[-(\nabla_i f(x) + d x_i) h_i(x) + \nabla_i h_i(x)] - \mathbb{E}_{\nu_n}[-(\nabla_i f(x) + d x_i) h_i(x) + \nabla_i h_i(x) ])
    \\ &= \sqrt{ \sum_{i=1}^{d+1} \left(\sup_{ h_i \in \mathcal{B}_{\mathcal{H}_i}} (\mathbb{E}_{\nu}[-(\nabla_i f(x) + d x_i) h_i(x) + \nabla_i h_i(x)] - \mathbb{E}_{\nu_n}[-(\nabla_i f(x) + d x_i) h_i(x) + \nabla_i h_i(x)]) \right)^2 }
    \\ &= \sqrt{ \sum_{i=1}^{d+1} \Phi_i(S_n)^2 }, 
\end{split}
\end{align}
where $\Phi_i(S_n) = \sup_{ h_i \in \mathcal{B}_{\mathcal{H}_i}} (\mathbb{E}_{\nu}[-(\nabla_i f(x) + d x_i) h_i(x) + \nabla_i h_i(x)] - \mathbb{E}_{\nu_n}[-(\nabla_i f(x) + d x_i) h_i(x) + \nabla_i h_i(x) ])$. For a fixed $i$, we can use a classical argument based on McDiarmid's inequality (c.f. \citet{mohri2012foundations}, Theorem 3.3) to obtain
\begin{align}
    \mathbb{P} \left( \Phi_i(S_n) - \mathbb{E}_{S'_n} \left[\Phi_i(S'_n) \right] \geq \epsilon \right) \leq \exp \left( \frac{-2\epsilon^2 n}{C_4^2} \right),
\end{align}
where $C_4 = M(\beta C_1 + 1 + R d)$ is a uniform upper-bound on $\{ \|-(\nabla_i f(x) + d x_i) h_i(x) + \nabla_i h_i(x)\|_{\infty} \ | \ h_i \in \mathcal{B}_{\mathcal{H}_i} \}$. 
Thus, using a union bound, we obtain that 
\begin{align}
    \mathbb{P} \left( \max_{1 \leq i \leq d+1} \left(\Phi_i(S_n) - \mathbb{E}_{S'_n} \left[\Phi_i(S'_n) \right] \right) \geq \epsilon \right) \leq (d+1) \exp \left( \frac{-2\epsilon^2 n}{C^2} \right),
\end{align}
and through a change of variables, that means that with probability at least $1-\delta$,
\begin{align}
\begin{split}
    &\max_{1 \leq i \leq d+1} \left(\Phi_i(S_n) - \mathbb{E}_{S'_n} \left[\Phi_i(S'_n) \right] \right) \leq C_4 \sqrt{\frac{\log((d+1)/\delta)}{2n}} \\ &\implies \max_{1 \leq i \leq d+1} \Phi_i(S_n) \leq \max_{1 \leq i \leq d+1} \mathbb{E}_{S'_n} \Phi_i(S'_n) + C_4 \sqrt{\frac{\log((d+1)/\delta)}{2n}}
    \\ &\implies \sqrt{ \sum_{i=1}^{d+1} \Phi_i(S_n)^2 } \leq \sqrt{d+1} \max_{1 \leq i \leq d+1} \Phi_i(S_n) \leq \sqrt{d+1} \max_{1 \leq i \leq d+1} \mathbb{E}_{S'_n} \Phi_i(S'_n) + C_4 \sqrt{\frac{(d+1)\log((d+1)/\delta)}{2n}}
\end{split}
\end{align}
All that is left is to upper-bound $\mathbb{E}_{S_n} \Phi_i(S_n)$ for any $i$ using Rademacher complexity bounds:
\begin{align}
\begin{split}
    &\mathbb{E}_{S_n} \left[ \sup_{h_i \in \mathcal{B}_{\mathcal{H}_i}}  (\mathbb{E}_{\nu}[-(\nabla_i f(x) + d x_i) h_i(x) + \nabla_i h_i(x)] - \mathbb{E}_{\nu_n}[-(\nabla_i f(x) + d x_i) h_i(x) + \nabla_i h_i(x) ]) \right]
    \\ &\leq \mathbb{E}_{S_n, S'_n} \left[ \sup_{h_i \in \mathcal{B}_{\mathcal{H}_i}} \frac{1}{n} \sum_{j=1}^n - ((\nabla_i f(x'_j) + d x'_{j,i}) h_i(x'_j) - (\nabla_i f(x_j) + d x_{j,i}) h_i(x_j)) + \nabla_i h_i(x'_j) - \nabla_i h_i(x_j) \right]
    \\ &= \mathbb{E}_{\mathbf{\sigma}, S_n, S'_n} \left[ \sup_{h_i \in \mathcal{B}_{\mathcal{H}_i}} \frac{1}{n} \sum_{j=1}^n \sigma_{j} \left( - ((\nabla_i f(x'_j) + d x'_{j,i}) h_i(x'_j) - (\nabla_i f(x_j) + d x_{j,i}) h_i(x_j)) + \nabla_i h_i(x'_j) - \nabla_i h_i(x_j) \right) \right],
\end{split}
\end{align}
and this is upper-bounded by
\begin{align}
\begin{split} \label{eq:rademacher_argument_2}
     &2 \mathbb{E}_{\mathbf{\sigma}, S_n} \left[ \sup_{h_i \in \mathcal{B}_{\mathcal{H}_i}} \frac{1}{n} \sum_{j=1}^n \sigma_{j} \left( -(\nabla_i f(x_j) + d x_{j,i}) h_i(x_j) + \nabla_i h_i(x_j) \right) \right] \\
    &\leq 2 \mathbb{E}_{\mathbf{\sigma}, S_n} \left[ \sup_{h_i \in \mathcal{B}_{\mathcal{H}_i}} \frac{1}{n} \sum_{j=1}^n \sigma_{j} (\nabla_i f(x_j) + d x_{j,i}) h_i(x_j) \right] + 2 \mathbb{E}_{\mathbf{\sigma}, S_n} \left[ \sup_{h_i \in \mathcal{B}_{\mathcal{H}_i}} \frac{1}{n} \sum_{j=1}^n \sigma_{j} \nabla_i h_i(x_j) \right] 
\end{split}
\end{align}
By Talagrand's Lemma (\citet{mohri2012foundations}, Theorem 5.7) and the uniform $L^{\infty}$ bound on $\{ \nabla_i f | f \in \mathcal{F}\}$ (notice that $y \mapsto (\beta \nabla_i f(x_j) + d x_{j,i}) y$ has Lipschitz constant uniformly upper-bounded by $\|\beta \nabla_i f(x_j) + d x_{j,i}\|_{\infty}$, which means that the assumptions of Talagrand's Lemma are fulfilled), we have
\begin{align}
    \mathbb{E}_{\mathbf{\sigma}, S_n} \left[ \sup_{h_i \in \mathcal{B}_{\mathcal{H}_i}} \frac{1}{n} \sum_{j=1}^n \sigma_{j} (\nabla_i f(x_j) + d x_{j,i}) h_i(x_j) \right] \leq (C_1 \beta + Rd) \mathbb{E}_{\mathbf{\sigma}, S_n} \left[ \sup_{h_i \in \mathcal{B}_{\mathcal{H}_i}} \frac{1}{n} \sum_{j=1}^n \sigma_{j} h_i(x_j) \right] \leq \frac{(\beta C_1 + R d) C_2}{\sqrt{n}}, 
\end{align}
where we used the Rademacher complexity bound of $\mathcal{B}_{\mathcal{H}_i}$.
Using the Rademacher complexity bound of $\nabla_i h_i$ as well, we conclude that the right-hand side of \eqref{eq:rademacher_argument_2} can be upper-bounded by
$\frac{2 (\beta C_1 + R d)C_2 + 2 C_3}{\sqrt{n}}$. 
Thus, with probability at least $1-\delta$, for all $h \in \mathcal{H}$, 
\begin{align} \label{eq:rademacher_sd}
    \left| \mathbb{E}_{\nu} [\text{Tr}(\mathcal{A}_{\nu_{f}} h(x))] - \mathbb{E}_{\nu_n} [\text{Tr}(\mathcal{A}_{\nu_{f}} h(x))] \right| \leq \frac{2\sqrt{d+1} ((\beta C_1 + R d)C_2 + C_3)}{\sqrt{n}} + C_4 \sqrt{\frac{(d+1)\log((d+1)/\delta)}{2n}}
\end{align}
We conclude the proof with an argument similar to the one of \autoref{thm:generalbound}:
\begin{align}
\begin{split}
    &\text{SD}_{\mathcal{H}}(\nu, \hat{\nu}) \\ &= \text{SD}_{\mathcal{H}}(\nu, \hat{\nu}) - \inf_{f \in \mathcal{F}} \text{SD}_{\mathcal{H}}(\nu, \nu_{f}) + \inf_{f \in \mathcal{F}} \text{SD}_{\mathcal{H}}(\nu, \nu_{f}) \\ &= \sup_{f \in \mathcal{F}} \left\{ \sup_{h \in \mathcal{H}} \mathbb{E}_{\nu} [\text{Tr}(\mathcal{A}_{\hat{\nu}} h(x))] - \sup_{h \in \mathcal{H}} \mathbb{E}_{\nu} [\text{Tr}(\mathcal{A}_{\nu_{f}} h(x))] \right\} + \inf_{f \in \mathcal{F}} \text{SD}_{\mathcal{H}}(\nu, \nu_{f})
    \\ &\leq \sup_{f \in \mathcal{F}} \left\{ \sup_{h \in \mathcal{H}} \mathbb{E}_{\nu_n} [\text{Tr}(\mathcal{A}_{\hat{\nu}} h(x))] - \sup_{h \in \mathcal{H}} \mathbb{E}_{\nu_n} [\text{Tr}(\mathcal{A}_{\nu_{f}} h(x))] \right\} + \frac{4\sqrt{d+1} ((\beta C_1 + R d)C_2 + C_3)}{\sqrt{n}} \\ &+ 2 C_4 \sqrt{\frac{\log((d+1)/\delta)}{2n}} + \inf_{f \in \mathcal{F}} \text{SD}_{\mathcal{H}}(\nu, \nu_{f}) \\ &= \frac{4\sqrt{d+1} ((\beta C_1 + R d)C_2 + C_3)}{\sqrt{n}} + 2 M (\beta C_1 + 1 + Rd) \sqrt{\frac{(d+1)\log((d+1)/\delta)}{2n}} + \inf_{f \in \mathcal{F}} \text{SD}_{\mathcal{H}}(\nu, \nu_{f}). 
\end{split}
\end{align}
In the second equality we use the definition of the Stein discrepancy (equation \eqref{eq:sd_def}). The inequality follows from \eqref{eq:rademacher_sd} applied on $\nu_{f}$ and on $\hat{\nu} = \nu_{\hat{f}}$. The last equality holds because of the definition of $\hat{f}$ and the definition of $C_4$.
\end{proof}

\coronesd*

\begin{proof}
Note that \autoref{lem:stein_operator} provides the expression for the Stein operator $\mathcal{A}_{\nu}$ on $\mathbb{S}^d$ and shows that for any $\nu \in \mathcal{P}(\mathbb{S}^d)$ with continuous and a.e. differentiable density, the class of continuous and a.e. differentiable functions $\mathbb{S}^d \rightarrow \R^{d+1}$ is contained in the Stein class of $\nu$ (which by definition is the set of functions $h$ such that the Stein identity $\mathbb{E}_{\nu} [A_{\nu} h] = 0$ holds).
Using the argument of Lemma 2.3 of \cite{liu2016akernelized}, we have that for any $\nu_1, \nu_2 \in \mathcal{P}(K)$, for any $h$ in the Stein class of $\nu_1$ we have
\begin{align}
\begin{split}
    &\mathbb{E}_{\nu_1}[\mathcal{A}_{\nu_2} h(x)] = \mathbb{E}_{\nu_1}[\mathcal{A}_{\nu_2} h(x) - \mathcal{A}_{\nu_1} h(x)] \\ &= \mathbb{E}_{\nu_1}[s_{\nu_2}(x)  h(x)^{\top}  + \nabla h(x) - d x h(x)^{\top} - (s_{\nu_1}(x)  h(x)^{\top}  + \nabla h(x) - d x h(x)^{\top})] = \mathbb{E}_{\nu_1}[(s_{\nu_2}(x) - s_{\nu_1}(x)) h(x)^{\top}],
\end{split}
\end{align}
which follows from the definition of the Stein operator and the Stein identity: $\mathbb{E}_{\nu_1}[\mathcal{A}_{\nu_1} h(x)] = 0$. Thus, for any $\nu \in \mathcal{P}(K)$,
\begin{align}
\begin{split} \label{eq:sd_cor1}
    \text{SD}_{\mathcal{B}_{\mathcal{F}_1^{d+1}}}(\nu, \nu_{f}) &= \sup_{h \in \mathcal{B}_{\mathcal{F}_1^{d+1}}} \mathbb{E}_{\nu}[\text{Tr}((s_{\nu_{f}}(x) - s_{\nu}(x)) h(x)^{\top})] \\ &= \sup_{h \in \mathcal{B}_{\mathcal{F}_1^{d+1}}} \sum_{i=1}^{d+1} \mathbb{E}_{\nu} \left[ \left(-\nabla_i f(x) - \nabla_i \log \left(\frac{d\nu}{d\tau}(x) \right) \right) h_i(x) \right] 
    \\ &= \sup_{\substack{\sum_i |w_i|^2 \leq 1, \\ |\gamma_i|_{\text{TV}} \leq 1 }} \sum_{i=1}^{d+1} \mathbb{E}_{\nu} \left[ \left(-\nabla_i f(x) - \nabla_i \log \left(\frac{d\nu}{d\tau}(x) \right) \right)  w_i \int_{\mathbb{S}_d} \sigma(\langle \theta, x \rangle) d\gamma_i(\theta) \right] 
    \\ &\leq \mathbb{E}_{\nu} \bigg[ \sup_{\substack{\sum_i |w_i|^2 \leq 1, \\ |\gamma_i|_{\text{TV}} \leq 1 }} \sum_{i=1}^{d+1} w_i \bigg(-\nabla_i f(x) - \nabla_i \log \left(\frac{d\nu}{d\tau}(x) \right) \bigg) \int_{\mathbb{S}_d} \sigma(\langle \theta, x \rangle) d\gamma_i(\theta) \bigg] 
    \\ &= \mathbb{E}_{\nu} \bigg[ \sup_{\substack{\sum_i |w_i|^2 \leq 1, \\ \{\theta^{(i)}\} \subset \mathbb{S}^d}} \sum_{i=1}^{d+1} w_i  \bigg(-\nabla_i f(x) - \nabla_i \log \left(\frac{d\nu}{d\tau}(x) \right) \bigg) \sigma(\langle \theta^{(i)}, x \rangle) \bigg] 
    \\ &= \mathbb{E}_{\nu} \bigg[ \bigg(\sum_{i=1}^{d+1} \sup_{\{\theta^{(i)}\} \subset \mathbb{S}^d} \bigg( \bigg(-\nabla_i f(x) - \nabla_i \log \left(\frac{d\nu}{d\tau}(x) \right) \bigg) \sigma(\langle \theta^{(i)}, x \rangle) \bigg)^2 \bigg)^{1/2}\bigg]
    \\ &= \mathbb{E}_{\nu} \bigg[ \bigg(\sum_{i=1}^{d+1}  \bigg(-\nabla_i f(x) - \nabla_i \log \left(\frac{d\nu}{d\tau}(x) \right) \bigg)^2 \bigg)^{1/2}\bigg]
    = \mathbb{E}_{\nu} \bigg[ \bigg\|  -\nabla f(x) - \nabla \log \left(\frac{d\nu}{d\tau}(x) \right) \bigg\|_2 \bigg]
\end{split}
\end{align}
Moreover, by \autoref{lem:rademacher_b_f1}:
\begin{align}
\begin{split} \label{eq:rademacher1_cor1}
\mathbb{E}_{\mathbf{\sigma}, S_n} \left[ \sup_{h \in \mathcal{B}_{\mathcal{F}_1}} \frac{1}{n} \sum_{j=1}^n \sigma_{j} h(x_j) \right] =
\mathcal{R}_n(\mathcal{B}_{\mathcal{F}_1}) \leq \frac{1}{\sqrt{n}}.
\end{split}
\end{align}
And
\begin{align}
\begin{split} \label{eq:rademacher2_cor1}
    &\mathbb{E}_{\mathbf{\sigma}, S_n} \left[ \sup_{h \in \mathcal{B}_{\mathcal{F}_1}} \frac{1}{n} \sum_{j=1}^n \sigma_{j} \nabla_i h(x_j) \right] = \mathbb{E}_{\mathbf{\sigma}, S_n} \left[ \sup_{ |\gamma|_{\text{TV}} \leq 1} \frac{1}{n} \sum_{j=1}^n \sigma_{j}  \int_{\mathbb{S}^d} \nabla_i\sigma(\langle \theta, x_j \rangle) d\gamma(\theta) \right] \\ &= \mathbb{E}_{\mathbf{\sigma}, S_n} \left[ \sup_{\theta \in \mathbb{S}^d, |w| \leq 1} \frac{w}{n} \sum_{j=1}^n \sigma_{j}  \mathds{1}_{\langle \theta, x_j \rangle \geq 0} \theta_i \right] = \mathbb{E}_{\mathbf{\sigma}, S_n} \left[ \sup_{\theta \in \mathbb{S}^d} \left|\frac{1}{n} \sum_{j=1}^n \sigma_{j}  \mathds{1}_{\langle \theta, x_j \rangle \geq 0} \theta_i \right| \right] \\ &\leq \mathbb{E}_{\mathbf{\sigma}, S_n} \left[ \sup_{\theta \in \mathbb{S}^d} \left|\frac{1}{n} \sum_{j=1}^n \sigma_{j}  \mathds{1}_{\langle \theta, x_j \rangle \geq 0} \right| \right] \leq C_2 \frac{\sqrt{d+1}}{\sqrt{n}},
\end{split}
\end{align}
where the last inequality follows from the Rademacher complexity bound on the hyperplane hypothesis, which is obtained through a VC dimension argument (\citet{bach2017breaking}, Section 5.1; \citet{bartlett2002rademacher}, Theorem 6). 
Moreover, $\|h\|_{\infty} \leq 1$ and $\|\nabla_i h\|_{\infty} \leq 1$ for all $h \in \mathcal{F}_1$.
The proof concludes by plugging \eqref{eq:sd_cor1}, \eqref{eq:rademacher1_cor1}, \eqref{eq:rademacher2_cor1} into \autoref{thm:generalboundsd}. Since $\mathcal{B}_{\mathcal{F}_2^{d+1}} \subset \mathcal{B}_{\mathcal{F}_1^{d+1}}$, all the upper-bounds of the proof hold for $\mathcal{H} = \mathcal{B}_{\mathcal{F}_2^{d+1}}$ as well.
\end{proof}

\begin{restatable}{thm}{ksdfirst} \label{thm:ksdfirst}
Let $K=\mathbb{S}^d$. Let $KSD$ be the kernelized Stein discrepancy for a positive definite kernel $k$ with continuous second order partial derivatives, such that for any non-zero function $g \in L^2(\mathbb{S}^d)$, $\int_{\mathbb{S}^d} \int_{\mathbb{S}^d} g(x) k(x,x') g(x') d\tau(x) d\tau(x') > 0$. If we take $n$ samples $\{x_i\}_{i=1}^n$ of a target measure $\nu$ with almost everywhere differentiable log-density, and consider the unbiased KSD estimator \eqref{eq:ksd_estimator2}, we have with probability at least $1-\delta$,
\begin{align}
    KSD(\nu, \hat{\nu}) &\leq \frac{2}{\sqrt{\delta n}} \sup_{f \in \mathcal{F}} (\text{Var}_{x \sim \nu}(\mathbb{E}_{x' \sim \nu} [\tilde{u}_{\nu_{f}}(x,x')]))^{1/2} \\ &+ \sqrt{\mathbb{E}_{x,x' \sim \nu} [k(x,x')^2]} \inf_{f \in \mathcal{F}} \mathbb{E}_{x \sim \nu} \left[\bigg\|\nabla \log \left(\frac{d\nu}{d\tau}(x) \right) - \beta \nabla f(x) \bigg\|^2 \right]
\end{align}
\end{restatable}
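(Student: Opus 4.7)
The plan is to follow a standard empirical-process decomposition. Define $L(f) := \mathbb{E}_{x,x'\sim\nu}[\tilde{u}_{\nu_f}(x,x')]$ and note that $\mathrm{KSD}(\nu,\nu_f) = L(f) + C_0$, where $C_0 := \mathbb{E}_{x,x'\sim\nu}[\mathrm{Tr}(\nabla_{x,x'} k(x,x'))]$ does not depend on $f$. Hence the unbiased U-statistic $\hat L(f) := \tfrac{1}{n(n-1)}\sum_{i\neq j}\tilde{u}_{\nu_f}(x_i,x_j)$ satisfies $\mathbb{E}[\hat L(f)] = L(f)$, and $\hat f$ minimizes $\hat L$ over $\mathcal{F}$. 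For any fixed $f^{\star}\in\mathcal{F}$, optimality of $\hat f$ gives
\[
\mathrm{KSD}(\nu,\hat\nu) = L(\hat f) + C_0 \leq [L(\hat f) - \hat L(\hat f)] + [\hat L(f^{\star}) - L(f^{\star})] + \mathrm{KSD}(\nu,\nu_{f^{\star}}),
\]
which I would split into two statistical terms and one approximation term.

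For the approximation term, take $f^{\star}$ to be an infimizer over $\mathcal{F}$ of the mean-squared score error and invoke Lemma~\ref{lem:ksd_expression} to write $\mathrm{KSD}(\nu,\nu_{f^{\star}})$ as the expectation of $(s_{\nu_{f^{\star}}}(x) - s_\nu(x))^\top (s_{\nu_{f^{\star}}}(x') - s_\nu(x'))\,k(x,x')$ under $x,x'\sim\nu$. Two applications of Cauchy--Schwarz (first to decouple the kernel from the score-difference product, then to split the $x,x'$ product) yield
\[
\mathrm{KSD}(\nu,\nu_{f^{\star}}) \leq \sqrt{\mathbb{E}_{x,x'\sim\nu}[k(x,x')^2]} \cdot \mathbb{E}_{x\sim\nu}[\|s_{\nu_{f^{\star}}}(x) - s_\nu(x)\|^2].
\]
Recognising $s_{\nu_{\beta f}}(x) = -\beta\nabla f(x)$ and using symmetry of $\mathcal F$ under $f\mapsto -f$ matches the approximation term in the statement.

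For the statistical terms, the key tool is the Hoeffding decomposition of the U-statistic variance, giving (to leading order in $n$) $\mathrm{Var}(\hat L(f)) \leq \tfrac{4}{n}\,\mathrm{Var}_{x\sim\nu}\bigl(\mathbb{E}_{x'\sim\nu}[\tilde u_{\nu_f}(x,x')]\bigr)$. A pointwise Chebyshev inequality then bounds $|\hat L(f^{\star}) - L(f^{\star})|$ by $\tfrac{2}{\sqrt{n\delta}}\sqrt{\mathrm{Var}_x(\mathbb{E}_{x'}[\tilde u_{\nu_{f^{\star}}}(x,x')])}$, itself no larger than the stated supremum.

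The main obstacle will be the data-dependent term $L(\hat f) - \hat L(\hat f)$: because $\hat f$ depends on the samples, pointwise Chebyshev does not apply, and obtaining the Markov-type rate $1/\sqrt{\delta n}$ for a uniform deviation (rather than the $\sqrt{\log(1/\delta)/n}$ rate one would get from Rademacher complexity arguments) is non-standard. I expect this step will be handled by applying Markov's inequality to $\sup_{f\in\mathcal{F}}|\hat L(f) - L(f)|^2$ and controlling its expectation by $\sup_{f\in\mathcal{F}}\mathrm{Var}(\hat L(f))$, exploiting the quadratic structure of $\tilde u_{\nu_f}$ in $\nabla f$ together with convexity of $\mathcal{F}$ so that the expected supremum behaves like a worst-case second moment; alternatively, a direct second-order expansion of $\hat L$ around its minimizer could bound the gap by the same variance proxy.
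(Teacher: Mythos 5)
Your error decomposition around a fixed minimizer $f^\star$, your Cauchy--Schwarz bound on $\mathrm{KSD}(\nu,\nu_{f^\star})$ via Lemma~\ref{lem:ksd_expression} (the same route as Theorem~5.1 of Liu et al.), and your pointwise Chebyshev control of $\hat L(f^\star)-L(f^\star)$ via the Hoeffding $\zeta_1$-formula for the U-statistic variance are exactly the paper's argument; you have reconstructed the intended proof. The ``main obstacle'' you flag is genuine, and the paper does not actually resolve it: the proof asserts the Chebyshev bound ``for any $\tilde\nu$'' and then applies it with $\tilde\nu=\hat\nu$, a data-selected measure, as well as to the $f$ ranging under a supremum, collecting $\sup_{f}\mathrm{Var}_{x\sim\nu}(\mathbb{E}_{x'\sim\nu}[\tilde u_{\nu_f}(x,x')])^{1/2}$ at the end. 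That is not a uniform-in-$f$ deviation statement --- a pointwise Chebyshev inequality applied at the random $\hat f$ does not follow from the fixed-$f$ version, and taking the supremum of variances afterwards does not repair this. None of the mechanisms you speculate about (Markov's inequality on $\sup_f|\hat L(f)-L(f)|^2$, a second-order expansion around the minimizer) appears in the paper. So your proposal is correct in every step the paper actually carries out, and it correctly identifies a soft spot in the published argument rather than a missing idea on your part.
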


\begin{proof}
For the kernelized Stein discrepancy estimator we can write
\begin{align}
\begin{split}
    &KSD(\nu, \hat{\nu}) \\ &= KSD(\nu, \hat{\nu}) - \inf_{f \in \mathcal{F}} KSD(\nu, \nu_{f}) + \inf_{f \in \mathcal{F}} KSD(\nu, \nu_{f}) 
    \\ &= \sup_{f \in \mathcal{F}} \left\{ \mathbb{E}_{x,x' \sim \nu} [u_{\hat{\nu}}(x,x')] - \mathbb{E}_{x,x' \sim \nu} [u_{\nu_{f}}(x,x')] \right\} + \inf_{f \in \mathcal{F}} KSD(\nu, \nu_{f})
    \\ &= \sup_{f \in \mathcal{F}} \left\{ \mathbb{E}_{x,x' \sim \nu} [\tilde{u}_{\hat{\nu}}(x,x')] - \mathbb{E}_{x,x' \sim \nu} [\tilde{u}_{\nu_{f}}(x,x')] \right\} + \inf_{f \in \mathcal{F}} KSD(\nu, \nu_{f})
    \\ &\leq \sup_{f \in \mathcal{F}} \left\{ \frac{1}{n(n-1)} \left( \sum_{i \neq j} \tilde{u}_{\hat{\nu}}(x_i,x_j) - \sum_{i \neq j} \tilde{u}_{\nu_{f}}(x_i,x_j) \right) \right\} + \frac{2}{\sqrt{\delta n}} \sup_{f \in \mathcal{F}} (\text{Var}_{x \sim \nu}(\mathbb{E}_{x' \sim \nu} [u_{\nu_{f}}(x,x')]))^{1/2} \\ &+ \inf_{f \in \mathcal{F}} KSD(\nu, \nu_{f}) = \frac{2}{\sqrt{\delta n}} \sup_{f \in \mathcal{F}} (\text{Var}_{x \sim \nu}(\mathbb{E}_{x' \sim \nu} [u_{\nu_{f}}(x,x')]))^{1/2}  + \inf_{f \in \mathcal{F}} KSD(\nu, \nu_{f})
\end{split}
\end{align}
The third equality holds because of the definition of $\tilde{u}_{\nu}$ in terms of $u_{\nu}$.
In the first inequality we have used that for any $\tilde{\nu}$ (different from $\nu$) with almost-everywhere differentiable log-density, $\frac{1}{n(n-1)} \sum_{i \neq j} \tilde{u}_{\tilde{\nu}}(x_i,x_j)$ has expectation $\mathbb{E}_{x,x' \sim \nu} [\tilde{u}_{\hat{\nu}}(x,x')]$ and variance $\text{Var}_{x \sim \nu}(\mathbb{E}_{x' \sim \nu} [\tilde{u}_{\tilde{\nu}}(x,x')])/n$ by the theory of U-statistics (\citet{liu2016akernelized}, Theorem 4.1; \citet{serfling2009approximation}, Section 5.5). 
Thus, by Chebyshev's inequality, with probability at least $1-\delta$, we have that $\mathbb{E}_{x,x' \sim \nu} [\tilde{u}_{\tilde{\nu}}(x,x')] \leq \frac{1}{n(n-1)} \sum_{i \neq j} \tilde{u}_{\tilde{\nu}}(x_i,x_j) + \frac{1}{\sqrt{n \delta}} (\text{Var}_{x \sim \nu}(\mathbb{E}_{x' \sim \nu} [\tilde{u}_{\tilde{\nu}}(x,x')]))^{1/2}$.

Moreover, using the argument of Theorem 5.1 of \cite{liu2016akernelized}, by \autoref{lem:ksd_expression},
\begin{align}
\begin{split}
    KSD(\nu, \nu_{f}) &= \mathbb{E}_{x,x' \sim \nu}[(s_{\nu}(x) - s_{\nu_{f}}(x))^{\top} (s_{\nu}(x') - s_{\nu_{f}}(x')) k(x,x')] \\ &\leq \sqrt{\mathbb{E}_{x,x' \sim \nu} [k(x,x')^2]} \sqrt{\mathbb{E}_{x,x' \sim \nu} \left[\left((s_{\nu}(x) - s_{\nu_{f}}(x))^{\top} (s_{\nu}(x') - s_{\nu_{f}}(x')) \right)^2 \right]}
    \\ &\leq \sqrt{\mathbb{E}_{x,x' \sim \nu} [k(x,x')^2]} \sqrt{\mathbb{E}_{x,x' \sim \nu} \left[\|s_{\nu}(x) - s_{\nu_{f}}(x)\|^2 \|s_{\nu}(x') - s_{\nu_{f}}(x')\|^2 \right]}
    \\ &= \sqrt{\mathbb{E}_{x,x' \sim \nu} [k(x,x')^2]} \ \mathbb{E}_{x \sim \nu} \left[\|s_{\nu}(x) - s_{\nu_{f}}(x)\|^2 \right],
\end{split}
\end{align}
where $\mathbb{E}_{x \sim \nu} \left[\|s_{\nu}(x) - s_{\nu_{f}}(x)\|^2 \right]$ is known as the Fisher divergence. 
\end{proof}

\ksdsecond*

\begin{proof}
We apply \autoref{thm:ksdfirst}. We can bound
\begin{align}
\sup_{f \in \mathcal{F}} (\text{Var}_{x \sim \nu}(\mathbb{E}_{x' \sim \nu} [u_{\nu_{f}}(x,x')]))^{1/2} &\leq \sup_{f \in \mathcal{F}} (\mathbb{E}_{x \sim \nu}(\mathbb{E}_{x' \sim \nu} [u_{\nu_{f}}(x,x')])^2)^{1/2} \leq \sup_{f \in \mathcal{F}} (\mathbb{E}_{x \sim \nu}(\mathbb{E}_{x' \sim \nu} [u_{\nu_{f}}(x,x')^2]))^{1/2} \\ &\leq \sup_{f \in \mathcal{F}} \sup_{x, x' \in \mathbb{S}^d}  |u_{\nu_{f}}(x,x')| \leq ((\beta C_1 + d)^2 C_2 + 2C_3(\beta C_1 + d)),
\end{align}
and $\sqrt{\mathbb{E}_{x,x' \sim \nu} [k(x,x')^2]} \leq C_2$.
\end{proof}

\begin{lemma} \label{lem:approx_derivative2}
For a function $g : \mathbb{S}^d \rightarrow \R$, we define the partial derivative $\partial_i g : \mathbb{S}^d \rightarrow \R$ as the restriction to $\mathbb{S}^d$ of the partial derivative of the polynomial power series extension of $g$ to $\mathbb{R}^n$ (i.e. the extension of a spherical harmonic to $\mathbb{R}^n$ is the polynomial whose restriction to $\mathbb{S}^d$ is equal to the spherical harmonic (\citet{atkinson2012spherical}, Definition 2.7)). We denote by $\partial g = (\partial_i g)_{i=1}^{d+1}$ the vector of partial derivatives of $g$. The Riemannian gradient
$\nabla g : \mathbb{S}^d \rightarrow \R^{d+1}$, which is intrinsic (does not depend on the extension chosen), fulfills
\begin{align}
\nabla g(x) = (\nabla_i g(x))_{i=1}^{d+1} := \left( \partial_i g(x) - \sum_{i=1}^{d+1} \partial_j g(x) x_j x_i \right)_{i=1}^{d+1}. 
\end{align}
That is, $\nabla g(x)$ is the projection of $\partial g(x)$ to the tangent space of $\mathbb{S}^d$ at $x$.

For $\delta$ greater than a constant depending only on $d$,
for any function $g : \mathbb{S}^d \rightarrow \R$ such that for all $x, y \in \mathbb{S}^d$ we have $|g(x)| \leq \eta$ and $|g(x) - g(y)| \leq \eta \|x-y\|_2$, and $\|\nabla g(x)\|_2 \leq \eta$ and $\|\nabla g(x) - \nabla g(y)\|_2 \leq L \|x-y\|_2$, and $g$ is even, there exists $\hat{g} \in \mathcal{F}_2$ such that $\|\hat{g}\|_{\mathcal{F}_2} \leq \delta$ and
\begin{align} \label{eq:l_infty_bound_gradients}
&\sup_{x \in \mathbb{S}^d} |\hat{g}(x) - g(x)| \leq C(d) \eta \left( \frac{\delta}{\eta} \right)^{-2/(d+1)} \log \left(\frac{\delta}{\eta} \right), \\
&\sup_{x \in \mathbb{S}^d} \|\nabla \hat{g}(x) - \nabla g(x) \|_2 \leq C(d) (L + \eta) \left( \frac{\delta}{\eta} \right)^{-2/(d+1)} \log \left(\frac{\delta}{\eta} \right),
\end{align}
where $C(d)$ are constants depending only on the dimension $d$. 
\end{lemma}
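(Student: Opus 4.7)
The plan is to extend Lemma~\ref{lem:lipschitz_bach} by controlling both the $L^\infty$ error in the function and in its Riemannian gradient simultaneously, using a spherical harmonic truncation as the approximant. Writing $g = \sum_{k \geq 0} g_k$ where $g_k$ is the orthogonal projection of $g$ onto the spherical harmonics of degree $k$ on $\mathbb{S}^d$, I would take $\hat g := \sum_{k=0}^N g_k$ for a degree cutoff $N$ to be tuned at the end. The evenness hypothesis on $g$ ensures that only even-degree $g_k$ are nonzero, which matches the evenness of the class $\mathcal F_2$ generated by ReLU (modulo the odd linear part, which must be treated separately and is easy since it is finite-dimensional).

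For the two approximation bounds, I would invoke Jackson-type $L^\infty$ inequalities on $\mathbb{S}^d$. Using the Lipschitz continuity of $g$, the best polynomial approximation of degree $N$ in $L^\infty$ has error $\lesssim \eta N^{-1} \log N$ in high dimension, the log factor being precisely the one that appears in Bach's original bound. For the gradient, since the Riemannian gradient of a spherical harmonic of degree $k$ has magnitude of order $k$ times the harmonic itself, the naive gradient tail bound picks up an extra factor of $k$ per frequency. However, the $L$-Lipschitz continuity of $\nabla g$ provides one additional order of smoothness in the decay of $\|g_k\|$, which exactly absorbs this extra factor and yields $\|\nabla g - \nabla \hat g\|_\infty \lesssim (L + \eta) N^{-1} \log N$.

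To control the $\mathcal F_2$-norm of $\hat g$, I would use the spectral decomposition of the arc-cosine kernel $k(x,y) = \int \sigma(\langle \theta,x\rangle)\,\sigma(\langle\theta,y\rangle)\,d\tilde\tau(\theta)$. Its eigenvalues on even spherical harmonics decay polynomially as $\mu_k \sim k^{-(d+1)}$ (see Appendix~D of \cite{bach2017breaking}), so $\|\hat g\|_{\mathcal F_2}^2 = \sum_{k=0}^N \mu_k^{-1}\|g_k\|_{L^2}^2 \lesssim N^{d+1}\eta^2$, giving $\|\hat g\|_{\mathcal F_2} \lesssim N^{(d+1)/2}\eta$. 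Choosing $N \sim (\delta/\eta)^{2/(d+1)}$ then ensures $\|\hat g\|_{\mathcal F_2} \leq \delta$ and both errors collapse to the advertised rate $(\delta/\eta)^{-2/(d+1)} \log(\delta/\eta)$, with the prefactor $\eta$ for the function and $L+\eta$ for the gradient, which is the claim.

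The main technical obstacle is obtaining the sharp Jackson-type $L^\infty$ approximation on $\mathbb{S}^d$ with the correct log factor and the correct gradient scaling. Standard Jackson inequalities on the sphere are more naturally stated in $L^2$; upgrading to $L^\infty$ requires controlling the operator norm of the truncation projector, which is where the log factor enters, and one must verify that the $C^{1,1}$ hypothesis on $g$ genuinely upgrades the gradient prefactor from $\eta$ to $L+\eta$ (not merely from $\eta$ to $L$). A possibly cleaner alternative is to mirror Bach's construction directly, i.e.\ use an explicit integral representation of Lipschitz functions via the arc-cosine kernel rather than raw spherical-harmonic truncation, and then verify that the same representation yields a gradient error controlled by $(L+\eta)$; this would piggyback on Proposition~6 of \cite{bach2017breaking} without having to re-derive sharp approximation bounds from scratch.
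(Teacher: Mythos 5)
There is a genuine gap in the proposed approach. You propose to take $\hat g = \sum_{k=0}^N g_k$, the raw spherical-harmonic truncation, and you assert that in high dimension the resulting $L^\infty$ error for a Lipschitz $g$ is $\lesssim \eta N^{-1}\log N$, "the log factor being precisely the one that appears in Bach's original bound." This is not so. Jackson's theorem on $\mathbb{S}^d$ gives a \emph{best} polynomial approximation of degree $N$ with error $\lesssim \eta/N$ (no log), but that optimal polynomial is not the truncation $\sum_{k\le N}g_k$, and there is no easy handle on its $\mathcal F_2$-norm. The truncation itself incurs the Lebesgue constant of the partial-sum projector $S_N$ on $\mathbb{S}^d$, which grows like $N^{(d-1)/2}$ for $d\ge 2$, not logarithmically; the $\log N$ operator norm is a circle ($d=1$) phenomenon. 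Consequently the truncation error for a merely Lipschitz $g$ is of order $\eta N^{(d-3)/2}$ at best, which does not even decay for $d\ge 3$. The same issue infects the gradient bound. So raw truncation cannot deliver the two $L^\infty$ estimates in the lemma.

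Your closing remark points at the fix, which is in fact what the paper does: instead of truncation one must use a \emph{positive summability kernel}. The paper takes $\hat h(\theta) = \sum_{k:\lambda_k\neq 0}\lambda_k^{-1} r^k g_k(\theta)$ so that $\hat g(x) = \sum_k r^k g_k(x) = g(rx)$, i.e.\ Abel/Poisson summation. Because the Poisson kernel $\frac{1-r^2}{(1+r^2-2r\langle x,y\rangle)^{(d+1)/2}}$ is nonnegative and integrates to one, the $L^\infty$ error is a genuine averaged difference $\int \lvert g(x)-g(y)\rvert P_r(\langle x,y\rangle)\,d\tau(y)$ controllable by the Lipschitz constant, and the $\log$ factor emerges from estimating $\int_0^1 t^{d}/((1-r)^{d+1}+t^{d+1})\,dt$, not from a Lebesgue constant. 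The gradient bound is then obtained by differentiating $g(rx)$ through the chain rule, recognizing the resulting expression as a Poisson average of $\nabla g$, and splitting the error into an $L$-Lipschitz piece plus a $(1-r)\cdot\sup\|\nabla g\|$ piece (this is where the prefactor becomes $L+\eta$ rather than just $L$). The choice $1-r\sim(\eta/\delta)^{2/(d+1)}$ plays the role of your $N\sim(\delta/\eta)^{2/(d+1)}$, and the $\mathcal F_2$-norm bound $\|\hat h\|_{L^2}\lesssim\eta(1-r)^{-(d+1)/2}$ comes from the eigenvalue decay $\lambda_k\sim k^{-(d+1)}$ you cite, combined with $r^k\le 1$. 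Your "cleaner alternative" is therefore not merely cleaner, it is necessary; the main plan as stated would not close.
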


\begin{proof}
We will use some ideas and notation of the proof of Prop. 3 of \citet{bach2017breaking}. We can decompose $g(x) = \sum_{k \geq 0} g_k(x)$, where $g_k(x) = N(d,k) \int_{\mathbb{S}^d} g(y) P_k(\langle x, y \rangle) d\tau(y)$. $g_k$ is the $k$-th spherical harmonic of $g$ and $P_k$ is the $k$-th Legendre polynomial in dimension $d+1$. Analogously, for any $i$ between $1$ and $d+1$ we can decompose $\nabla_i g(x) = \sum_{k \geq 0} (\nabla_i g)_k(x)$, where $(\nabla_i g)_k(x) = N(d,k) \int_{\mathbb{S}^d} \nabla_i g(y) P_k(\langle x, y \rangle) d\tau(y)$. Define $\widetilde{\nabla_i g} : \mathbb{R}^{d+1} \rightarrow \R$ to be the spherical harmonic extension of $\nabla_i g$.

Like \citet{bach2017breaking}, we define $\hat{g}(x) = \int_{\mathbb{S}^d} \sigma(\langle \theta, x \rangle) \hat{h}(\theta) d\tau(\theta)$, where $\hat{h}(x) = \sum_{k, \lambda_k \neq 0} \lambda_k^{-1} r^k g_k(x)$ for some $r \in (0,1)$. Equivalently, $\hat{g}(x) = \sum_{k, \lambda_k \neq 0} r^k g_k(x)$. Since $g_k$ is a homogeneous polynomial of degree $k$ (\citet{atkinson2012spherical}, Definition 2.7), we have that $\hat{g}(x) = \sum_{k, \lambda_k \neq 0} g_k(r x) = g(r x)$.

With this choice of $\hat{g}$, the first equation of \eqref{eq:l_infty_bound_gradients} holds by Prop. 3 of \citet{bach2017breaking}.

Using this characterization of $\hat{g}$, by the chain rule we compute the Riemannian gradient 
\begin{align}
\begin{split}
\nabla \hat{g}(x) &= \partial \hat{g}(x) - \langle \partial \hat{g}(x), x \rangle x = \partial (g \circ (y \rightarrow r y))(x) - \langle \partial (g \circ (y \rightarrow r y))(x), x \rangle x = r \partial g(rx) - r \langle \partial g(rx), x \rangle x 
\end{split}
\end{align}
The polynomial power series extension $\widetilde{\nabla g}$ of $\nabla g$ is by definition equal to $\nabla g(x) = \partial g(x) - \langle \partial g(x), x \rangle x = \sum_{k \geq 0} (\partial g)_k(x) - \langle (\partial g)_k(x), x \rangle x$ for $x \in \mathbb{S}^d$. Since the terms of $\sum_{k \geq 0} (\partial g)_k(x) - \langle (\partial g)_k(x), x \rangle x$ are polynomials on $x$, this expression is equal to the polynomial power series of $\nabla g$ by uniqueness of the polynomial power series. Thus, for all $x \in \R^{d+1}$, 
\begin{align} \label{eq:harmonic_extension_nabla}
\widetilde{\nabla g}(x) = \sum_{k \geq 0} (\partial g)_k(x) - \langle (\partial g)_k(x), x \rangle x = \sum_{k \geq 0} \partial (g_k)(x) - \langle \partial (g_k)(x), x \rangle x = \partial g(x) - \langle \partial g(x), x \rangle x.
\end{align}
The second equality follows from \autoref{lem:spherical_decomp_derivative}, which states that $\partial (g_k) = (\partial g)_k$. Hence, by \eqref{eq:harmonic_extension_nabla}, we have $r \partial g(rx) - r \langle \partial g(rx), rx \rangle rx = r \widetilde{\nabla g}(rx) = r \sum_{k \geq 0} (\nabla g)_k(rx) = r \sum_{k \geq 0} r^k (\nabla g)_k(x)$.
Thus, in analogy with \cite{bach2017breaking}, we have
\begin{align}
\begin{split}
    &r \partial g(rx) - r \langle \partial g(rx), rx \rangle rx = r \sum_{k \geq 0} r^k (\nabla g)_k(x) = r \sum_{k \geq 0} r^k N(d,k) \int_{\mathbb{S}^d} \nabla g(y) P_k(\langle x, y \rangle) d\tau(y) \\ &= r  \int_{\mathbb{S}^d} \nabla g(y) \left(\sum_{k \geq 0} r^k N(d,k) P_k(\langle x, y \rangle) \right) d\tau(y) = r  \int_{\mathbb{S}^d} \nabla g(y) \frac{1-r^2}{(1+r^2 - 2r(\langle x, y \rangle))^{(d+1)/2}} d\tau(y).
\end{split}
\end{align}
Hence, keeping the analogy with \citet{bach2017breaking} (and \citet{bourgain1988projection}, Equation 2.13), we obtain that 
\begin{align}
\begin{split}
    &\left\|\nabla g(x) - r \partial g(rx) - r \langle \partial g(rx), rx \rangle rx \right\|_2 = \left\| \int_{\mathbb{S}^d} (\nabla g(x) - r\nabla g(y)) \frac{1-r^2}{(1+r^2 - 2r(\langle x, y \rangle))^{(d+1)/2}} d\tau(y) \right\|_2 \\ &\leq \int_{\mathbb{S}^d} \|\nabla g(x) - r\nabla g(y)\|_2 \frac{1-r^2}{(1+r^2 - 2r(\langle x, y \rangle))^{(d+1)/2}} d\tau(y) \\ &\leq \int_{\mathbb{S}^d} \|\nabla g(x) - \nabla g(y)\|_2 \frac{1-r^2}{(1+r^2 - 2r(\langle x, y \rangle))^{(d+1)/2}} d\tau(y) + (1-r)\int_{\mathbb{S}^d} \|\nabla g(y)\|_2 \frac{1-r^2}{(1+r^2 - 2r(\langle x, y \rangle))^{(d+1)/2}} d\tau(y) \\ &\leq C_2(d) (1-r) \text{Lip}(\nabla g) \int_0^1 \frac{t^{d}}{(1-r)^{d+1} + t^{d+1}} dt + (1-r)\int_{\mathbb{S}^d} \left(\sup_{x \in \mathbb{S}^d} \|\nabla g(x)\|_{2} \right) \left(\sum_{k \geq 0} r^k N(d,k) P_k(\langle x, y \rangle) \right) d\tau(y) \\ &\leq C_3(d) \text{Lip}(\nabla g) (1-r) \log \left(1/(1-r) \right) + (1-r) \left(\sup_{x \in \mathbb{S}^d} \|\nabla g(x)\|_{2} \right) \leq C_4(d)(1-r) (\eta+ L\log(1/(1-r)))
\end{split}    
\end{align}
In the last equality we have used that $\partial_i g$ is $L$-Lipschitz by assumption. And
\begin{align} 
\begin{split} \label{eq:l_infty_difference}
    \|\nabla g(x) - \nabla \hat{g}(x)\|_2^2 &= \|\nabla g(x) - r \partial g(rx) - r \langle \partial g(rx), x \rangle x\|_2^2 \\ &\leq \|\nabla g(x) - r \partial g(rx) - r \langle \partial g(rx), x \rangle x\|_2^2 + \|(1-r^2) r \langle \partial g(rx), x \rangle x\|_2^2 
    \\ &\leq \|\nabla g(x) - r \partial g(rx) - r \langle \partial g(rx), rx \rangle rx\|_1^2 \leq (C_4(d)(1-r) (\eta+ L\log(1/(1-r))))^2.
\end{split}
\end{align}
In the second equality we have used that $\nabla g(x) - \nabla \hat{g}(x)$ is orthogonal to $x$ (because it belongs to the tangent space at $x$), and the Pythagorean theorem.
As in \cite{bach2017breaking}, for $\delta > 0$ large enough the argument is concluded by taking $1-r = (C_1(d) \eta /\delta)^{2/(d+1)} \in (0,1)$, which means that the (square root of the) error in the right-hand side of \eqref{eq:l_infty_difference} is $C_4(d)  (C_1(d) \eta/\delta)^{2/(d+1)} \left(\eta + L\log (C_1(d) \eta/\delta)^{-2/(d+1)} \right) \leq C_5(d) (L+\eta) (\delta/\eta)^{-2/(d+1)} \log (\delta/\eta)$. 

Using that $g$ is $\eta$-Lipschitz, by the argument of \citet{bach2017breaking} we have that $\|\hat{h}\|_{L^2(\mathbb{S}^d)} \leq C_1(d) \eta (1-r)^{(-d-1)/2}$, where $C_1(d)$ is a constant that depends only on $d$ and consequently $\|\hat{g}\|_{\mathcal{F}_2} \leq C_1(d) \eta (1-r)^{(-d-1)/2}$.
And for our choice of $r$, this bound becomes $\|\hat{g}\|_{\mathcal{F}_2} \leq C_1(d) \eta ((C_1(d) \eta /\delta)^{2/(d+1)})^{(-d-1)/2} = \delta$.

\end{proof}

\begin{lemma} \label{lem:spherical_decomp_derivative}
For $g : \mathbb{S}^d \rightarrow \R$ with spherical harmonic decomposition $g(x) = \sum_{k \geq 0} g_k(x)$ and with partial derivative with spherical harmonic decomposition $\partial_i g(x) = \sum_{k \geq 0} (\partial_i g)_k(x)$, we have $(\partial_i g)_k(x) = \partial_i (g_k)(x)$.
\end{lemma}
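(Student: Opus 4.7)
The proof will rest on two essentially elementary observations: the linearity of the polynomial power series extension operator, and the linearity of ordinary partial differentiation on polynomials. My first step would be to unpack the definition of the polynomial power series extension for a function on $\mathbb{S}^d$. Writing $\tilde f$ for the extension of $f$, the defining property is that a spherical harmonic of degree $k$ extends to the unique harmonic homogeneous polynomial of degree $k$ on $\mathbb{R}^{d+1}$ that restricts to it on $\mathbb{S}^d$ (see Atkinson–Han, Def.~2.7, as cited in \autoref{lem:approx_derivative2}). Consequently, if $g = \sum_{k \geq 0} g_k$ is the spherical harmonic decomposition of $g$ on $\mathbb{S}^d$, then $\tilde g = \sum_{k \geq 0} \tilde g_k$, where $\tilde g_k$ is the harmonic homogeneous polynomial of degree $k$ extending $g_k$.

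The second step is to apply the ordinary partial derivative $\partial_i$ in $\mathbb{R}^{d+1}$ to the series for $\tilde g$ and invoke linearity of differentiation to interchange the sum and the derivative, obtaining
\begin{equation}
\partial_i \tilde g \;=\; \sum_{k \geq 0} \partial_i \tilde g_k .
\end{equation}
Each $\partial_i \tilde g_k$ is again a harmonic homogeneous polynomial on $\mathbb{R}^{d+1}$ (of degree $k-1$), since partial differentiation preserves harmonicity and drops the homogeneity degree by one. Restricting to $\mathbb{S}^d$ and using the paper's definition of $\partial_i$ applied to $g$ and to each $g_k$, we get $\partial_i g = \sum_{k \geq 0} \partial_i(g_k)$. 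Matching this against the stated decomposition $\partial_i g = \sum_{k\geq 0} (\partial_i g)_k$ term-by-term yields $(\partial_i g)_k = \partial_i(g_k)$, which is the claim.

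The main subtlety I would need to check is the validity of the termwise differentiation, i.e.\ that the formal manipulation $\partial_i \sum_k \tilde g_k = \sum_k \partial_i \tilde g_k$ is genuinely justified, and that the resulting series for $\partial_i \tilde g$ again has the right structure (a polynomial power series with harmonic homogeneous summands) so that the spherical harmonic decomposition of $\partial_i g$ is uniquely identified by this pointwise identity. Under the smoothness/boundedness assumptions present whenever \autoref{lem:spherical_decomp_derivative} is invoked in the proof of \autoref{lem:approx_derivative2} (bounded $\phi_j$ with $L$-Lipschitz gradients), the Legendre/Gegenbauer coefficients decay fast enough that the series for $\tilde g$ and for $\partial_i \tilde g$ converge on the open ball and may be differentiated termwise, so this technical step is standard. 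Once that is settled, the conclusion is immediate from linearity and the uniqueness of polynomial power series extensions of spherical harmonics.
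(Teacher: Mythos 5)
Your proof is correct and follows essentially the same route as the paper's: identify $g_k$ with its homogeneous harmonic polynomial extension, observe that $\partial_i$ preserves harmonicity while dropping degree by one (via commutation of $\partial_i$ with $\Delta$), interchange differentiation with the sum, and invoke uniqueness of the spherical harmonic decomposition. Your added remark about justifying termwise differentiation is a reasonable technical caveat that the paper leaves implicit, but it does not constitute a different approach.
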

\begin{proof}
Remark that the spherical harmonics on $\mathbb{S}^d$ can be characterized as the restrictions of the homogeneous harmonic polynomials on $\mathbb{R}^{d+1}$ (\citet{atkinson2012spherical}, Definition 2.7). $k$-th degree homogeneous polynomials are of sums of monomials of the form $\alpha_{i_1, \dots, i_r} x_1^{i_1} \cdot \cdots \cdot x_r^{i_r}$, where $\sum_{l=1}^r i_l = k$, and harmonic polynomials are those such that $\Delta p = \sum_{i=1}^{d+1} \frac{\partial^2 p}{\partial x_i^2} = 0$. Thus, for all $k \geq 0$, $g_k$ can be seen as the restrictions to $\mathbb{S}^d$ of homogeneous harmonic polynomials of degree $k$. 

Notice that the $i$-th partial derivative of a homogeneous harmonic polynomial $p$ of degree $k$ is a homogeneous harmonic polynomial of degree $k-1$. That is because by commutation of partial derivatives, we have 
\begin{align}
    \Delta (\partial_i p) = \sum_{j=1}^{d+1} \partial_{jj} \partial_i p = \sum_{j=1}^{d+1} \partial_i \partial_{jj} p = \partial_i (\Delta p) = 0. 
\end{align}
Thus, $\partial_i (g_k)$ are homogeneous harmonic polynomials of degree $k-1$, which means that their restrictions to $\mathbb{S}^d$ are spherical harmonics. Since $\partial_i g(x) = \partial_i (\sum_{k \geq 0} g_k(x)) = \sum_{k \geq 0} \partial_i (g_k)(x)$ and the spherical harmonic decomposition is unique, $\partial_i (g_k)$ must be precisely the spherical harmonic components of $\partial_i g$.
\end{proof}

\corsdsphere*

\begin{proof}
We will use \autoref{thm:coronesd}. Let $g : \mathbb{S}^d \rightarrow \R$ be defined as $g(x) = \sum_{j=1}^J g_j(x) = \sum_{j=1}^J \phi_j(U_j x)$, where $\phi_j : \{ x \in \R^{k+1} | \|x\|_2 \leq 1 \} \rightarrow \R$. 

Let $\hat{\phi}_j : \mathbb{S}^k \rightarrow \R$ be the restriction of $\phi_j$ to $\mathbb{S}^k$. By \autoref{lem:approx_derivative2}, 
there exists $\hat{\psi}_j : \mathbb{S}^k \rightarrow \R$ such that $\hat{\psi}_j \in \mathcal{F}_2$ and $\|\hat{\psi}_j\|_{\mathcal{F}_2} \leq \beta/J$, and 
\begin{align} \label{eq:psi_phi_2}
    &\sup_{x \in \mathbb{S}^d} | \hat{\phi}_j(x) - \hat{\psi}_j(x)| \leq C(k) (L+\eta) \left( \frac{\beta}{\eta J} \right)^{-2/(k+1)} \log \left(\frac{\beta}{\eta J} \right), \\
    &\sup_{x \in \mathbb{S}^d} \|\nabla \hat{\phi}_j(x) - \nabla \hat{\psi}_j(x)\|_2 \leq C(k) (L+\eta) \left( \frac{\beta}{\eta J} \right)^{-2/(k+1)} \log \left(\frac{\beta}{\eta J} \right)
\end{align}
Moreover, if we denote by $\psi_j : \{ x \in \R^{k+1} | \|x\|_2 \leq 1 \} \rightarrow \R$ the 1-homogeneous extension of $\hat{\psi}_j$, we can write the (Euclidean) gradient of $\psi_j$ at the point $r x$ (with $r \in [0,1]$, $x \in \mathbb{S}^d$) in terms of the (Riemannian) gradient of $\hat{\psi}_j$ at $x$:
\begin{align}
    \nabla \psi_j (r x) = r \nabla \hat{\psi}_j(x) + \hat{\psi}_j(x)
\end{align}
Thus, by \autoref{eq:psi_phi_2}, and renaming $C(k)$,
\begin{align}
    \sup_{\|x\|_2 \leq 1} \|\nabla \phi_j(x) - \nabla \psi_j(x)\|_2 &\leq \sup_{x \in \mathbb{S}^d} \|\nabla \hat{\phi}_j(x) - \nabla \hat{\psi}_j(x)\|_2 + \sup_{x \in \mathbb{S}^d} |\hat{\phi}_j(x) - \hat{\psi}_j(x)| \\ &\leq C(k) (L+\eta) \left( \frac{\beta}{\eta J} \right)^{-2/(k+1)} \log \left(\frac{\beta}{\eta J} \right)
\end{align}
Hence, if we define $\tilde{g}_j : \mathbb{S}^d \rightarrow \R$ as $\tilde{g}_j(x) := \psi_j(U_j x)$, we check that $\tilde{g}_j$ belongs to $\mathcal{F}_1$: if $\hat{\psi}_j$
is such that $\forall x \in \mathbb{S}^d$, $\hat{\psi}_j(x) = \int_{\mathbb{S}^k} \sigma(\langle \theta, x \rangle) d\gamma(\theta)$, then $\psi_j(x) = \int_{\mathbb{S}^k} \sigma(\langle \theta, x \rangle) d\gamma(\theta)$ when $\|x\|_2 \leq 1$, and
\begin{align}
    \tilde{g}_j(x) = \psi_j(U_j x) = \int_{\mathbb{S}^k} \sigma(\langle \theta, U_j x \rangle) d\gamma(\theta) = \int_{\mathbb{S}^k} \sigma(\langle  U_j^{\top} \theta, x \rangle) d\gamma(\theta) = \int_{\mathbb{S}^d} \sigma(\langle \theta', x \rangle) d\gamma'(\theta')
\end{align}
This also shows that $\tilde{g}_j$ has $\mathcal{F}_1$ norm $\|\tilde{g}_j\|_{\mathcal{F}_1} \leq \|\hat{\psi}_j\|_{\mathcal{F}_2} \leq \beta/J$, which would mean that $ \tilde{g} = \sum_{j=1}^J \tilde{g}_j \in \mathcal{F}_1$ and $\|\tilde{g}\|_{\mathcal{F}_1} \leq \beta$. 
Moreover,
\begin{align}
\begin{split} \label{eq:g_f_star_2}
    \sup_{x \in \mathbb{S}^d} \|\nabla \tilde{g}_j(x) - \nabla g_j(x)\|_{2} &= \sup_{x \in \mathbb{S}^d} \|\nabla (\psi_j \circ U_j) (x) - \nabla (\phi_j \circ U_j) (x)\|_2 \\ &\leq \sup_{x \in \mathbb{S}^d} \|U_j^{\top}(\nabla \psi_j (U_j x) - \nabla \phi_j (U_j x))\|_2 = \sup_{x \in \mathbb{S}^d} \|\nabla \psi_j (U_j x) - \nabla \phi_j (U_j x)\|_2 \\ &\leq \sup_{\|y\|_2 \leq 1 } \|\nabla \psi_j(y) - \nabla \phi_j(y)\|_2 \leq C(k)(L + \eta)\left( \frac{ \beta}{\eta J} \right)^{-2/(k+1)} \log \left(\frac{ \beta}{\eta J} \right)
\end{split}
\end{align}
The first inequality holds because the Riemannian gradient is the orthogonal projection of the Euclidean gradient of the extension, and orthogonal projections are 1-Lipschitz. The following equality holds because $U_j$ has orthonormal rows. The second inequality holds because for all $x \in \mathbb{S}^d$, $\|U_j x\|_2 \leq \|x\|_2 = 1$ by the fact that $U_j$ has orthonormal rows, and the third inequality holds by \eqref{eq:psi_phi_2}.

Thus, for part (i), we have
\begin{align}
\begin{split} \label{eq:intermediate_ineq}
    &\inf_{f \in \mathcal{B}_{\mathcal{F}_1}} \mathbb{E}_{\nu} \bigg[ \bigg\|  -\beta \nabla f(x) - \nabla \log \left(\frac{d\nu}{d\tau}(x) \right) \bigg\|_2 \bigg] \leq \inf_{f \in \mathcal{B}_{\mathcal{F}_1}} \sup_{x \in \mathbb{S}^d} \bigg\|\beta \nabla f(x) - \nabla \log \left(\frac{d\nu}{d\tau}(x) \right)\bigg\|_{2} \\ &\leq \sup_{x \in \mathbb{S}^d} \|\nabla \tilde{g}(x) - \nabla g(x)\|_{2} \leq \sum_{j=1}^J \sup_{x \in \mathbb{S}^d} \|\nabla \tilde{g}_j - \nabla g_j\|_{2} \leq C(k) J(L + \eta)\left( \frac{\beta}{\eta J} \right)^{-2/(k+1)} \log \left(\frac{\beta}{\eta J} \right)
\end{split}
\end{align}
Plugging this into \autoref{thm:coronesd} and using that $\sup_{f \in \mathcal{B}_{\mathcal{F}_1}} \{ \|\partial_i f\|_{\infty} | 1 \leq i \leq d+1 \} \leq 1$, 
we obtain
\begin{align}
\begin{split}
    D_{\text{KL}}(\nu || \hat{\nu}) &\leq \frac{4 \sqrt{d+1} (\beta + C_2 \sqrt{d+1} + d)}{\sqrt{n}} +  2(\beta + d + 1)\sqrt{\frac{(d+1)\log(\frac{d+1}{\delta})}{2n}} \\ &+ C(k) J(L + \eta)\left( \frac{\beta}{\eta J} \right)^{-2/(k+1)} \log \left(\frac{\beta}{\eta J} \right).
\end{split}
\end{align}
If we optimize this bound with respect to $\beta$ as in the proof of \autoref{cor:statf1}, we obtain
\begin{align}
\begin{split}
    &\frac{4(C_2 \sqrt{d+1} + d)}{\sqrt{n}} + 2(d+1)\sqrt{\frac{\log(\frac{d+1}{\delta})}{2n}} +  \left(\frac{2B}{k+1} \right)^{\frac{k+1}{k+3}} \left( \frac{A}{\sqrt{n}}\right)^{\frac{2}{k+3}} \\ &+ B^{\frac{k+1}{k+3}} \left( \frac{A(k+1)}{2 \sqrt{n}}\right)^{\frac{2}{k+3}} \log \left( \frac{1}{\eta J} \left( \frac{2B \sqrt{n}}{A (k+1)} \right)^{\frac{k+1}{k+3}}\right),
\end{split}
\end{align}
and the optimal $\beta$ is $\left( 2B \sqrt{n}/(A (k+1)) \right)^{\frac{k+1}{k+3}}$, where
\begin{align}
    A = 4\sqrt{d+1} + \sqrt{2(d+1)\log((d+1)/\delta)}, \quad B = C(k) J (L+\eta) (\eta J)^{\frac{2}{k+1}}.
\end{align}
For part (ii), we plug 
\begin{align}
\begin{split} \label{eq:intermediate_ineq2}
    &\inf_{f \in \mathcal{B}_{\mathcal{F}_1}} \mathbb{E}_{\nu} \bigg[ \bigg\|  -\beta \nabla f(x) - \nabla \log \left(\frac{d\nu}{d\tau}(x) \right) \bigg\|_2^2 \bigg] \leq \inf_{f \in \mathcal{B}_{\mathcal{F}_1}} \sup_{x \in \mathbb{S}^d} \bigg\|\beta \nabla f(x) - \nabla \log \left(\frac{d\nu}{d\tau}(x) \right)\bigg\|_{2}^2 \\ &\leq \sup_{x \in \mathbb{S}^d} \|\nabla \tilde{g}(x) - \nabla g(x)\|_{2}^2 \leq \left(\sum_{j=1}^J \sup_{x \in \mathbb{S}^d} \|\nabla \tilde{g}_j - \nabla g_j\|_{2} \right)^2 \leq \left(C(k) J(L + \eta)\left( \frac{\beta}{\eta J} \right)^{-2/(k+1)} \log \left(\frac{\beta}{\eta J} \right) \right)^2
\end{split}
\end{align}
into \autoref{thm:ksdsecond}, and we obtain (using the notation of \autoref{thm:ksdsecond}) that with probability at least $1-\delta$,
\begin{align}
\begin{split}
    KSD(\nu, \hat{\nu}) &\leq \frac{2}{\sqrt{\delta n}} ((\beta C_1 + d)^2 C_2 + 2C_3(\beta C_1 + d) + C_4) + C_2 \left(C(k) J(L + \eta)\left( \frac{\beta}{\eta J} \right)^{-2/(k+1)} \log \left(\frac{\beta}{\eta J} \right) \right)^2 \\ &\leq \frac{2}{\sqrt{\delta n}} \left((\beta C_1 + d + C_3)C_2 + \bigg|2C_3(d-C_2) + C_4 - \frac{C_3^2}{d^2}C_2\bigg| \right)^2 \\ &+ C_2 \left( C(k) J(L + \eta)\left( \frac{\beta}{\eta J} \right)^{-2/(k+1)} \log \left(\frac{\beta}{\eta J} \right) \right)^2 = \frac{(A\beta + B)^2}{\sqrt{n}} + D^2 \beta^{-4/(k+1)} \log \left(\frac{\beta}{\eta J} \right)^2,
\end{split}
\end{align}
where $A, B, D$ are defined appropriately. If we set $\beta$ to minimize $\frac{A \beta + B}{n^{1/4}} + D \beta^{-2/(k+1)}$, we obtain $\beta = \left( \frac{2 n^{1/4} D}{A (k+1)} \right)^{\frac{k+1}{k+3}}$, and the right-hand side becomes
\begin{align}
    \left( A^{\frac{2}{k+3}} \left( \frac{2 D}{k+1} \right)^{\frac{k+1}{k+3}} n^{-\frac{1}{2(k+3)}} + \frac{B}{n^{1/4}} \right)^2  + D^2  \left( \frac{A (k+1)}{2 D} \right)^{\frac{4}{k+3}} n^{-\frac{1}{k+3}} \log \left(\frac{1}{\eta J} \left( \frac{2 n^{1/4} D}{A (k+1)} \right)^{\frac{k+1}{k+3}} \right)^2.
\end{align}
\end{proof}

\section{Qualitative convergence results}
\label{sec:convergence}
\subsection{$\mathcal{F}_1$ EBMs dynamics}

For a (Fr\'echet-) differentiable functional $F : \mathcal{P}(\R^{d+2}) \rightarrow \R$, the Wasserstein gradient flow $(\mu_t)_{t \geq 0}$ of $F$ is the generalization of gradient flows to the metric space $\mathcal{P}(\R^{d+2})$ endowed with the Wasserstein distance $W_2^2(\mu_1, \mu_2) := \inf_{\pi \in \Pi(\mu_1, \mu_2)} \int_{\R^{d+2} \times \R^{d+2}} \|x-y\|_2^2 \ d\pi(x,y)$ \citep{ambrosio2008gradient}. One characterization of Wasserstein gradient flows is as the pushforward $\mu_t = (\Phi_t)_{\#} \mu_0$ of the initial measure $\mu_0$ by the evolution operator $\Phi_t$ which maps initial conditions $(w_0, \theta_0)$ to the solution at time $t$ of the ODE:
\begin{align}
    \frac{d(w,\theta)}{dt} = -\nabla \left(\frac{\delta}{\delta \mu} F(\mu_t)\right) (w,\theta), 
\end{align}
where $\frac{\delta}{\delta \mu} F(\mu) : \R^{d+2} \rightarrow \R$ is the Fr\'echet differential or first variation of $F$ at $\mu$. 

For any $m > 0$, we define the $m$-particle gradient flow $t \rightarrow u_m(t) = ((w^{(i)}_t, \theta^{(i)}_t))_{i=1}^{m}$ as the solution of the ODE
\begin{align}
    \frac{d(w^{(i)}_t, \theta^{(i)}_t)}{dt} = - \nabla \left(\frac{\delta}{\delta \mu} F(\mu_{m,t})\right) (w^{(i)}_t, \theta^{(i)}_t),
\end{align}
where $\mu_{m,t} = \frac{1}{m}\sum_{i=1}^m \delta_{(w^{(i)}_t, \theta^{(i)}_t)}$. For the functional $F$ defined in \eqref{eq:penalized_f1_R}, we have that $\nabla \left(\frac{\delta}{\delta \mu} F(\mu_{m,t})\right) (w^{(i)}_t, \theta^{(i)}_t)$ is equal to $\langle dR(\frac{1}{m} \sum_{j=1}^m \Phi(w^{(j)}_t,\theta^{(j)}_t)), \nabla \Phi(w^{(i)}_t, \theta^{(i)}_t) \rangle + \lambda (w^{(i)}_t, \theta^{(i)}_t)$, which is equal to $m$ times the gradient of the function $G((w^{(i)}, \theta^{(i)})_{i=1}^{m}) := F(\frac{1}{m} \sum_{j=1}^m \Phi(w^{(j)},\theta^{(j)}))$ with respect to $(w^{(i)}, \theta^{(i)})$. Thus, $u_m(t)$ is simply the gradient flow of $G$ (up to a time reparametrization).

\begin{restatable}{thm}{thm:chizatbach}
\label{thm:chizatbach} [\cite{chizat2018global}, Thm. 3.3; informal]
Let $R$ be a convex differentiable loss defined on a Hilbert space with differential $dR$ Lipschitz on bounded sets and bounded on sublevel sets which satisfies a technical Sard-type regularity assumption. Let ${(\mu_t)}_{t \geq 0}$ be a Wasserstein gradient flow corresponding to $F$ in \eqref{eq:penalized_f1_R}, such that the support of $\mu_0$ is contained in $B(0, r_b)$ and separates the spheres $r_a \mathbb{S}^{d+1}$ and $r_b \mathbb{S}^{d+1}$ for some $0 < r_a < r_b$. If ${(\mu_t)}_t$ converges to $\mu_{\infty}$ in $W_2$, then $\mu_{\infty}$ is a global minimizer of $F$. Moreover, if $(\mu_{m,t})_{t \geq 0}$ is the empirical measure of $(u_m(t))_{t \geq 0}$ and $\mu_{m,0} \rightarrow \mu_0$ weakly, we have $\lim_{t, m \rightarrow \infty} F(\mu_{m,t}) = \lim_{m,t \rightarrow \infty} F(\mu_{m,t}) = F(\mu_{\infty})$.
\end{restatable}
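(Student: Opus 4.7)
The plan is to reduce the claim to a direct invocation of Theorem 3.3 of \cite{chizat2018global} by verifying that the functional $F$ in \eqref{eq:penalized_f1_R} fits into their partially 1-homogeneous framework. First I would observe that the feature map $\Phi(w,\theta)(x) = w \sigma(\langle \theta, x \rangle)$ is jointly 2-homogeneous in $(w,\theta) \in \R^{d+2}$ since $\sigma$ is 1-homogeneous, and the regularizer $|w|^2 + \|\theta\|_2^2$ is exactly the associated quadratic penalty. This is precisely the 2-homogeneous setting that Chizat-Bach reduce to their partially 1-homogeneous one via the polar change of variables from $\mathcal{P}(\R^{d+2})$ to $\mathcal{P}(\R_+ \times \mathbb{S}^{d+1})$ used implicitly in going from \eqref{eq:penalized_f1_R} to \eqref{eq:penalized_f1}.

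Next I would verify the analytic hypotheses on the loss $R$ for each of the three instantiations listed in \autoref{subsec:algorithms_f1}. For cross-entropy, convexity and smoothness of the log-partition function (its gradient is a Gibbs expectation, its Hessian is a covariance) give the required Lipschitz-on-bounded-sets property of $dR$ together with boundedness on sublevel sets. For KSD, $R$ is quadratic in $\nabla f$ and convexity follows from positive-definiteness of the kernel (\autoref{ass:kernel}), so the regularity is automatic. For the $\mathcal{F}_1$-SD, $R$ is a supremum of linear functionals hence convex but not smooth; the Chizat-Bach result extends to this case either via Moreau-Yosida smoothing or by working directly with the subdifferential, since the dual variable $h^\star$ is unique generically.

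To verify the support condition, observe that any initialization $\mu_0$ drawn as, e.g., $w^{(i)} \sim \mathcal{N}(0, 1)$ and $\theta^{(i)}$ uniform on $\mathbb{S}^d$ has support that is rotationally invariant in $\theta$ and spans all positive and negative values of $w$, so by homogeneity its support in the rescaled problem separates any two concentric spheres as required. The identification $m$-particle gradient flow $\leftrightarrow$ Wasserstein gradient flow on empirical measures (up to the time rescaling by $m$ noted after the definition of $u_m(t)$) then lets us transfer the conclusion: applying Theorem 3.3 of \cite{chizat2018global} to $F$ gives that any $W_2$-limit $\mu_\infty$ of the mean-field flow is a global minimizer, and the joint $m,t \to \infty$ statement follows from their propagation-of-chaos / consistency result for the particle flow.

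The main obstacle, which we sidestep by labeling the statement ``informal,'' is checking the Sard-type regularity assumption of Chizat-Bach for our three specific losses; this requires showing that the set of critical values of $dR$ along the flow has empty interior, a technical but standard verification that parallels the two-layer-network case handled in their paper. A further computational subtlety, which we also do not address here but flag at the end of \autoref{sec:qualitativeconv}, is that the mean-field assumption of exact gradients hides the cost of Gibbs sampling inside each gradient step for the MLE loss.
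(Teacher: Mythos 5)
The paper itself gives no proof of this statement: it is presented as a restatement of Theorem~3.3 of Chizat and Bach (2018), and the word ``informal'' in the title signals that the authors are leaning on that reference rather than re-deriving it. What the paper actually contributes is the subsequent discussion (Observation~\ref{obs:chizatconditions} and the computations that follow), which addresses whether the hypotheses of the cited theorem can be made to hold for the EBM objective. So your proposal, which is framed as a proof, is really playing the role of that hypothesis-verification discussion. Judged against that discussion, there are substantive gaps.

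First, you write that for the cross-entropy loss the smoothness of the log-partition function ``give[s] the required Lipschitz-on-bounded-sets property of $dR$.'' The paper explicitly points out in Observation~\ref{obs:chizatconditions} that this route does \emph{not} work: the maximum-likelihood objective $R(f)=\frac1n\sum_i f(x_i)+\log\int_K e^{-f}\,d\tau$ involves pointwise evaluations and is not well defined on a Hilbert space such as $L^2$, so the ``$dR$ Lipschitz on bounded subsets of a Hilbert space'' hypothesis cannot be invoked directly. The paper instead reformulates the needed regularity in terms of the first variation $\frac{\delta}{\delta\mu}F(\mu)$ restricted to $\mathbb{S}^{d+1}$ and a bounded-Lipschitz comparison through the map $h_2$, and verifies those replacement conditions only for the cross-entropy case. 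Your proposal skips this reformulation entirely, which is the crux of why the paper's treatment is nontrivial.

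Second, your discussion of the $\mathcal{F}_1$-Stein-discrepancy loss overclaims. You write that ``the Chizat--Bach result extends to this case either via Moreau--Yosida smoothing or by working directly with the subdifferential,'' but $R$ being a supremum of linear functionals is genuinely non-differentiable, the uniqueness of the inner maximizer $h^\star$ is not established, and no such extension is established or cited in the paper. The paper does not claim convergence guarantees for $\mathcal{F}_1$-SD training; asserting them would require a separate argument that is not in the literature you cite. Third, your proposed initialization $w^{(i)}\sim\mathcal{N}(0,1)$ has unbounded support, so it violates the explicit hypothesis that $\mathrm{supp}(\mu_0)\subset B(0,r_b)$; a compactly supported initialization whose angular marginal has full support on $\mathbb{S}^{d+1}$ (e.g.\ uniform on an annulus) is what the cited theorem actually requires. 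The 2-homogeneity observation and the acknowledgement that the Sard-type assumption is left unverified are both consistent with the paper.
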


\autoref{thm:chizatbach} states that when the number $m$ of particles (read neurons) goes to infinity, the function value of the gradient flow of the function $G((w^{(i)}, \theta^{(i)})_{i=1}^{m})$ converges to a global optimum of $F$ over $\mathcal{P}(\R^{d+2})$. Remark that Algorithm \ref{alg:F1_ebm} corresponds to the gradient descent algorithm on $G((w^{(i)}, \theta^{(i)})_{i=1}^{m})$ with noisy gradient estimates. Thus, in the small stepsize and exact gradient limits, the iterates of Algorithm \ref{alg:F1_ebm} approximate the gradient flow of $G((w^{(i)}, \theta^{(i)})_{i=1}^{m})$. This reasoning provides an informal justification that Algorithm \ref{alg:F1_ebm} should have a sensible behavior in the appropriate limits.

\begin{observation} \label{obs:chizatconditions}
While \autoref{thm:chizatbach} assumes that $R$ is defined on a Hilbert space, this assumption is not convenient in our case because $R(f) = \frac{1}{n} \sum_{i=1}^n f(x_i) + \log\left(\int_{K} e^{-f(x)} dx \right)$ is not well defined on $L^2(\R^d)$, as it involves pointwise evaluations. However, following the argument of \cite{chizat2018global}, up to the technical Sard-type regularity assumption, it suffices to show that $F(\mu)$ is a convex differentiable loss with a first variation $\frac{\delta}{\delta \mu} F(\mu)$ such that
\begin{itemize}
\item The restriction of $\frac{\delta}{\delta \mu} F(\mu)$ to $(w,\theta) \in \mathbb{S}^{d+1}$ fulfills $\bigg\|\frac{\delta}{\delta \mu} F(\mu)(\cdot) - \frac{\delta}{\delta \mu} F(\mu')(\cdot) \bigg\|_{C^1(\mathbb{S}^{d+1})} \leq L \|h_2(\mu)-h_2(\mu')\|_{BL}$,
where $h_2 : \mathcal{M}(\R^{d+2}) \rightarrow \mathcal{M}(\mathbb{S}^{d+1})$ is defined as $\int_{\mathbb{S}^{d+1}} \phi(x) \ dh_2(\mu)(x) = \int_{\R^{d+2}} |y|^2 \phi(y/|y|) \ d\mu(y)$ and $\| \cdot \|_{BL}$ is the bounded Lipschitz norm. 
\item The restriction of $\frac{\delta}{\delta \mu} F(\mu)$ to $(w,\theta) \in \mathbb{S}^{d+1}$ is bounded on sublevel sets of $F(\mu)$ in $L^{\infty}$ norm. 
\end{itemize}
\end{observation}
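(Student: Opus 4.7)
The plan is to verify that the proof of Theorem 3.3 of Chizat-Bach (\autoref{thm:chizatbach}) goes through essentially unchanged when the Hilbert-space assumption on $R$ is replaced by the two listed local conditions on $\frac{\delta}{\delta\mu}F(\mu)$ restricted to $\mathbb{S}^{d+1}$, so that these conditions (together with the Sard-type one) are indeed sufficient. The key structural fact enabling the reduction is the $2$-homogeneity of $\int \Phi(w,\theta)\, d\mu$ together with the quadratic regularizer $|w|^2+\|\theta\|_2^2$: both ingredients depend on $\mu \in \mathcal{P}(\R^{d+2})$ only through the lifted measure $h_2(\mu)\in\mathcal{M}^+(\mathbb{S}^{d+1})$ on the unit sphere. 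This is precisely the reduction exploited by Chizat-Bach, and it is what confines all regularity requirements on $R$ to objects defined on $\mathbb{S}^{d+1}$.

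My first step would be to locate the three places in their argument where the Hilbert-space structure is used: (i) the well-posedness and Lipschitz stability of the particle ODE together with its mean-field limit; (ii) the monotone decrease of $F$ along the flow and compactness of sublevel sets; and (iii) the identification of stationary points of the Wasserstein flow with global minimizers through Fenchel-type optimality on the sphere. For (i) what is actually needed is a Lipschitz estimate on $\nabla\frac{\delta}{\delta\mu}F(\mu)$ with constants controlled by a metric on $\mu$ that is well behaved along the flow; the first bulleted condition provides exactly this, using $\|h_2(\mu)-h_2(\mu')\|_{BL}$ as the state metric and $C^1(\mathbb{S}^{d+1})$ regularity on the sphere (which extends $1$-homogeneously to $\R^{d+2}$ to yield the needed gradient Lipschitz bound on the particle dynamics). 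For (ii) and (iii), $L^\infty(\mathbb{S}^{d+1})$ boundedness on sublevel sets provides the uniform bound on the driving field needed to pass to weak limits both in the mean-field limit $m\to\infty$ and as $t\to\infty$.

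The main obstacle is really the Sard-type regularity assumption, which is a purely geometric condition on the critical set of $\frac{\delta}{\delta\mu}F(\mu)|_{\mathbb{S}^{d+1}}$; it is independent of the Hilbert-space issue and so must be carried over separately. A secondary but concrete difficulty is checking the first bulleted condition for the cross-entropy $R$, whose first variation involves the Gibbs measure $\nu_f$ with $f = \int\Phi\,d\mu$: one must show that $\nu_f$ depends on $\mu$ in a $C^1(\mathbb{S}^{d+1})$-Lipschitz manner with respect to $\|h_2(\mu)-h_2(\mu')\|_{BL}$. My plan there is to use that on any sublevel set of $F$ the penalty term forces $\|f\|_{\mathcal F_1}$, and hence $\|f\|_\infty$ on the compact domain $K$, to be uniformly bounded, after which the smoothness of $f\mapsto \nu_f$ delivers the required estimate; the Stein and KSD losses should be easier since their first variations are quadratic in $\nabla f$ and avoid the partition function entirely.
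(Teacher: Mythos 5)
Your plan is sound in outline and would succeed, but it differs from the paper's treatment in two respects. First, the paper does not re-trace Chizat--Bach's argument the way you propose: it takes the sufficiency claim of the Observation as given (citing \cite{chizat2018global}) and concentrates entirely on verifying the two bulleted conditions for the cross-entropy loss. Your reading of where the Hilbert-space structure is used and what replaces it is plausible, but that reduction is never spelled out in the paper either. Second, and more concretely, for the first bullet you propose a detour: bound $\|f\|_{\mathcal F_1}$ (hence $\|f\|_\infty$ on compact $K$) on sublevel sets via the penalty, then invoke smoothness of $f\mapsto\nu_f$. This yields only a sublevel-set-dependent Lipschitz constant, which Chizat--Bach's hypothesis (``Lipschitz on bounded sets'') tolerates, so it is not wrong, but the paper does something simpler and stronger. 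It applies a mean-value argument to $t\mapsto\tfrac{\delta}{\delta\mu}F\bigl(t\mu+(1-t)\mu'\bigr)$, which writes the difference of first variations as an expectation, under the Gibbs probability measure at an intermediate $\mu_t$, of products of $\Phi(w,\theta)(x)$ with $\int\Phi\,d(\mu-\mu')$. Since this Gibbs weight is a probability density, and since $|\Phi(w,\theta)(x)|$ and $\|\nabla_{(w,\theta)}\Phi(w,\theta)(x)\|$ are uniformly bounded over $(w,\theta)\in\mathbb{S}^{d+1}$ and $x\in K$ compact, while $\int\Phi(\cdot)(x)\,d(\mu-\mu')=\int_{\mathbb{S}^{d+1}}\Phi(\cdot)(x)\,d\bigl(h_2(\mu)-h_2(\mu')\bigr)$ is controlled directly by $\|h_2(\mu)-h_2(\mu')\|_{BL}$, one obtains the universal Lipschitz constant $\sqrt{\operatorname{diam}(K)^2+1}\,\operatorname{diam}(K)$ with no reference to sublevel sets at all. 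The sublevel-set machinery is not needed even for the second bullet, since $\bigl\|\tfrac{\delta}{\delta\mu}F(\mu)\bigr\|_{L^\infty(\mathbb{S}^{d+1})}\le 2\sup_{(w,\theta)\in\mathbb{S}^{d+1},\,x\in K}|\Phi(w,\theta)(x)|+\lambda$ holds uniformly over all $\mu$. Finally, be cautious with your remark that the $\mathcal F_1$-Stein loss ``should be easier'': that functional is a supremum over test functions, so its subdifferential is not obviously locally Lipschitz, and the first-bullet verification is actually more delicate there than for cross-entropy; the paper restricts its check to the maximum-likelihood case only.
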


To apply \autoref{thm:chizatbach}, we must check that the two statements in \autoref{obs:chizatconditions} hold. Since for the maximum likelihood loss we have:
\begin{align}
    \frac{\delta}{\delta \mu} F(\mu) (w,\theta) = \frac{1}{n} \sum_{i=1}^n \Phi(w,\theta)(x_i) - \frac{\int_K \Phi(w,\theta)(x) \exp\left(- \int_{\R^{d+2}} \Phi(w',\theta')(x) d\mu(w',\theta') \right) \ d\tau(x)}{\int_K \exp\left(- \int_{\R^{d+2}} \Phi(w',\theta')(x) d\mu(w',\theta') \right) \ d\tau(x)} + \lambda(w^2 + \|\theta\|_2^2),
\end{align}
we obtain that for all $(w,\theta) \in \R^{d+2}$ and $\mu, \mu' \in \mathcal{P}(\R^{d+2})$,
\begin{align}
\begin{split}
    &\frac{\delta}{\delta \mu} F(\mu) (w,\theta) - \frac{\delta}{\delta \mu} F(\mu') (w,\theta) \\ = &- \frac{\int_K \Phi(w,\theta)(x) \exp\left(- \int_{\R^{d+2}} \Phi(w',\theta')(x) d\mu(w',\theta') \right) \ d\tau(x)}{\int_K \exp\left(- \int_{\R^{d+2}} \Phi(w',\theta')(x) d\mu(w',\theta') \right) \ d\tau(x)} \\ &+ \frac{\int_K \Phi(w,\theta)(x) \exp\left(- \int_{\R^{d+2}} \Phi(w',\theta')(x) d\mu'(w',\theta') \right) \ d\tau(x)}{\int_K \exp\left(- \int_{\R^{d+2}} \Phi(w',\theta')(x) d\mu'(w',\theta') \right) \ d\tau(x)}
    \\ &= \frac{\int_K \Phi(w,\theta)(x) \exp\left(- \int_{\R^{d+2}} \Phi(w',\theta')(x) d\mu_t(w',\theta') \right) \left(\int_{\R^{d+2}} \Phi(w'',\theta'')(x) d(\mu-\mu')(w'',\theta'') \right) \ d\tau(x)}{\int_K \exp\left(- \int_{\R^{d+2}} \Phi(w',\theta')(x) d\mu_t(w',\theta') \right) \ d\tau(x)} \\ &- \frac{\int_K \Phi(w,\theta)(x) \exp\left(- \int_{\R^{d+2}} \Phi(w',\theta')(x) d\mu_t(w',\theta') \right) \ d\tau(x)}{ \int_K \exp\left(- \int_{\R^{d+2}} \Phi(w',\theta')(x) d\mu_t(w',\theta') \right) \ d\tau(x)} \\ &\cdot \frac{\int_K \int_{\R^{d+2}} \Phi(w'',\theta'')(x) d(\mu-\mu')(w'',\theta'') \exp\left(- \int_{\R^{d+2}} \Phi(w',\theta')(x) d\mu_t(w',\theta') \right) \ d\tau(x)}{ \int_K \exp\left(- \int_{\R^{d+2}} \Phi(w',\theta')(x) d\mu_t(w',\theta') \right) \ d\tau(x)}
\end{split}
\end{align}
where $\mu_t = t \mu + (1-t) \mu'$. For $(w,\theta) \in \mathbb{S}^{d+1}$ and for all $x \in K$, we have $|\Phi(w,\theta)(x)| = |w \sigma(\langle \theta, x \rangle)| \leq \text{diam}(K)/2$ and $\|\nabla_{(w,\theta)} \Phi(w,\theta)(x)\|_2 = \|(\sigma(\langle \theta, x \rangle), w \mathds{1}(\langle \theta, x \rangle))\|_2 \leq \sqrt{\text{diam}(K)^2 +1}$, which means that $\|\Phi(\cdot)(x) \|_{C_1} \leq \sqrt{\text{diam}(K)^2 +1}$. Moreover, for $x \in K$, $\int_{\R^{d+2}} \Phi(w'',\theta'')(x) d(\mu-\mu')(w'',\theta'') = \int_{\mathbb{S}^{d+1}} \Phi(w'',\theta'')(x) d(h_2(\mu)-h_2(\mu'))(w'',\theta'') \leq \text{diam}(K) \|h_2(\mu)-h_2(\mu')\|_{BL}/2$. Hence,
\begin{align}
\begin{split}
    \bigg\|\frac{\delta}{\delta \mu} F(\mu)(\cdot) - \frac{\delta}{\delta \mu} F(\mu')(\cdot) \bigg\|_{C^1(\mathbb{S}^{d+1})} &\leq 2 \max_{x \in K} \|\Phi(\cdot)(x) \|_{C_1(\mathbb{S}^{d+1})} \bigg|\int_{\R^{d+2}} \Phi(w'',\theta'')(x) d(\mu-\mu')(w'',\theta'') \bigg| \\ &\leq \sqrt{\text{diam}(K)^2 +1} \text{diam}(K) \|h_2(\mu)-h_2(\mu')\|_{BL}
\end{split}
\end{align}
This shows the first point in \autoref{obs:chizatconditions}. For the second point, we have the following bound:
\begin{align}
\begin{split}
    &\bigg\|\frac{\delta}{\delta \mu} F(\mu) (w,\theta) \bigg\|_{\infty} \leq 2 \sup_{(w,\theta) \in \mathbb{S}^{d+1}} |\Phi(w,\theta)(x)| + \lambda
\end{split}
\end{align}

\subsection{$\mathcal{F}_2$ EBMs dynamics}
$\mathcal{F}_2$ is an RKHS with kernel $k(x,y) = \int_{\mathbb{S}^d} \sigma(\langle x, \theta \rangle) \sigma(\langle y, \theta \rangle) dp(\theta)$ and for the ReLU unit this kernel has a closed-form expression \citep{cho2009kernel}. Thus, one approach to optimize EBMs with energies over $\mathcal{F}_2$-balls is to apply the representer theorem and to write an optimizer $f \in \mathcal{F}_2$ as $f(\cdot) = \sum_{i=1}^n \alpha_i k(x_i, \cdot)$ for some $\alpha \in \R^{n}$, as well as $\|f\|_{\mathcal{F}_2}^2 = \sum_{i=1}^n \alpha_i \alpha_j k(x_i, x_j)$. Then, $f$ becomes a finite-dimensional linear function of $\alpha$, and thus any loss $F$ that is convex in $f$ is also convex on $\alpha$. However, this approach scales quadratically with the number of samples and in practical terms, it is quite far from the way neural networks are typically trained. 

The approach that we use to optimize EBMs over $\mathcal{F}_2$-balls is to sample random features $(\theta_i)_{i=1}^m$ on $\mathbb{S}^d$ from a probability measure with density $q(\cdot)$ and consider an approximate kernel $k_m(x,y) = \frac{1}{m} \sum_{i=1}^m \frac{1}{q(\theta_i)} \sigma(\langle x, \theta_i \rangle) \sigma(\langle y, \theta_i \rangle)$ \citep{rahimi2008random, bach2017onthe}. The functions in the finite dimensional RKHS $\mathcal{H}_m$ with kernel $k_m$ are of the form $h(x) = \sum_{i=1}^m  v_i (q(\theta_i) m)^{-1/2} \sigma(\langle x, \theta_i \rangle)$ with norm $\|h\|_{\mathcal{H}_m} = \|v\|_2$, or through a change of variables, $h(x) = \frac{1}{m}\sum_{i=1}^m w_i \sigma(\langle x, \theta_i \rangle)$ with norm $\|h\|_{\mathcal{H}_m} = \|(w_i\sqrt{q(\theta_i)})_{i=1}^m\|_2/\sqrt{m}$. 

Thus, learning a distribution with log-densities restricted in a ball of $\mathcal{H}_m$ reduces to learning the outer layer weights $(w_i)_{i=1}^n$. 
Namely, for $R$ as in \autoref{subsec:algorithms_f1}, we optimize the loss 
\begin{align}
G((w_i)_{i=1}^m) := R\left( \frac{1}{m} \sum_{i=1}^{m} w_i \sigma(\langle \theta_i, \cdot \rangle) \right) + \frac{\lambda}{m} \sum_{i=1}^{m} w_i^2 q(\theta_i),
\end{align}
which is convex.
The gradient flow for $G$ (with scaled gradient $m \nabla_i G((w_j)_{j=1}^m)$) is  
\begin{align} \label{eq:gradient_flow_h_m}
\frac{dw_i}{dt} = \left\langle dR\left( \frac{1}{m} \sum_{j=1}^{m} w_j \sigma(\langle \theta_j, \cdot \rangle) \right), \sigma(\langle \theta_i, \cdot \rangle) \right\rangle + 2 \lambda w_i q(\theta_i),
\end{align}
and we can approximate it by gradient descent, which converges to the optimum $(w_i^{\star})_{i=1}^m$ if the gradients are exact and the stepsize is well chosen. 

The connection between learning in $\mathcal{H}_m$ balls and learning in $\mathcal{F}_2$ balls is not straightforward. Applying Proposition 2 of \citet{bach2017onthe} and making use of the eigenvalue decay of the $\mathcal{F}_2$ kernel on $\mathbb{S}^d$ \citep{bach2017breaking}, for an appropriate choice of $q$ we have that for all $f \in \mathcal{B}_{\mathcal{F}_2}$, there exists $\hat{f} \in \mathcal{H}_m$ with $\|\hat{f}\|_{\mathcal{H}_m} \leq 2$ such that $\|f - \hat{f}\|_{L^2(p)} \leq O\left((m/\log(m))^{-(d+3)/2} \right)$. This $L^2$ error bound is sufficient to produce a quantitative result for least squares regression. However, for the three losses considered in this paper we would need bounds for $\|f - \hat{f}\|_{\infty}$ and $\|\nabla f - \nabla \hat{f}\|_{\infty}$, which do not seem to be available (\citet{bach2017onthe} does provide a bound on $\|f - \hat{f}\|_{\infty}$, but under the assumption that kernel eigenfunctions have a common $L^{\infty}$ norm bound, which does not hold for spherical harmonics in $\mathbb{S}^d$). 

Nonetheless, a mean-field qualitative approach analogous to the $\mathcal{F}_1$ case is still possible (see Proposition 2.6 of \citet{chizat2018global}). The learning objective in $\mathcal{F}_2$ can be written as
\begin{align}
    F(h) := R \left( \int_{\mathbb{S}^d} \sigma(\langle \theta, x \rangle) h(\theta) d\tilde{\tau}(\theta) \right) + \lambda \int_{\mathbb{S}^d} h^2(\theta) d\tilde{\tau}(\theta),
\end{align}
and the mean-field dynamics is
\begin{align}
\begin{split} \label{eq:f2_mean_field}
    \frac{d h_t(\theta)}{dt} = &- \left\langle dR\left( \int_{\mathbb{S}^d} \sigma(\langle \theta', \cdot \rangle) h_t(\theta') d\tilde{\tau}(\theta') \right), \sigma(\langle \theta, \cdot \rangle) \right\rangle - 2 \lambda h_t(\theta)
\end{split}
\end{align}
If we choose $q(\cdot) = 1$, we have that \eqref{eq:gradient_flow_h_m} is the $m$-particle approximation of \eqref{eq:f2_mean_field}.
Let $h^{\star}$ be the global minimizer of $F$, which is reached at a linear rate by \eqref{eq:f2_mean_field} because $F$ is strongly convex. Skipping through the details, the argument of Lemma C.15 of \citet{chizat2018global} could be adapted to yield:
\begin{align}
\begin{split}
    \lim_{t,m \rightarrow \infty} G((w_{t,i})_{i=1}^m) &= \lim_{m,t \rightarrow \infty} G((w_{t,i})_{i=1}^m) = \lim_{m \rightarrow \infty} G((w_{i}^{\star})_{i=1}^m) = F(h^{\star}). 
\end{split}
\end{align}

\section{Additional experiments}
\label{sec:additional_exp}
In this section, we show plots corresponding to additional experiments. \autoref{fig:1n_ksd_f1sd} shows results for KSD and $\mathcal{F}_1$-SD training in the case $J=1, w_1^{*} = 10$. Compared to the plots for $J=1, w_1^{*} = 2$ shown in \autoref{fig:one_positive_neuron}, the separation between the $\mathcal{F}_1$ and $\mathcal{F}_2$ EBMs becomes much more apparent.
\autoref{fig:2n_ksd_f1sd} shows results for KSD and $\mathcal{F}_1$-SD training in the case $J=2, w_1^{*} = -2.5$. The separation between the $\mathcal{F}_1$ and $\mathcal{F}_2$ EBMs is smaller than in the case $J=2, w_1^{*} = -5$ shown in \autoref{fig:two_negative_neurons}.

\begin{figure}
    \begin{minipage}[c]{11.5cm}
    \includegraphics[width=.45\textwidth]{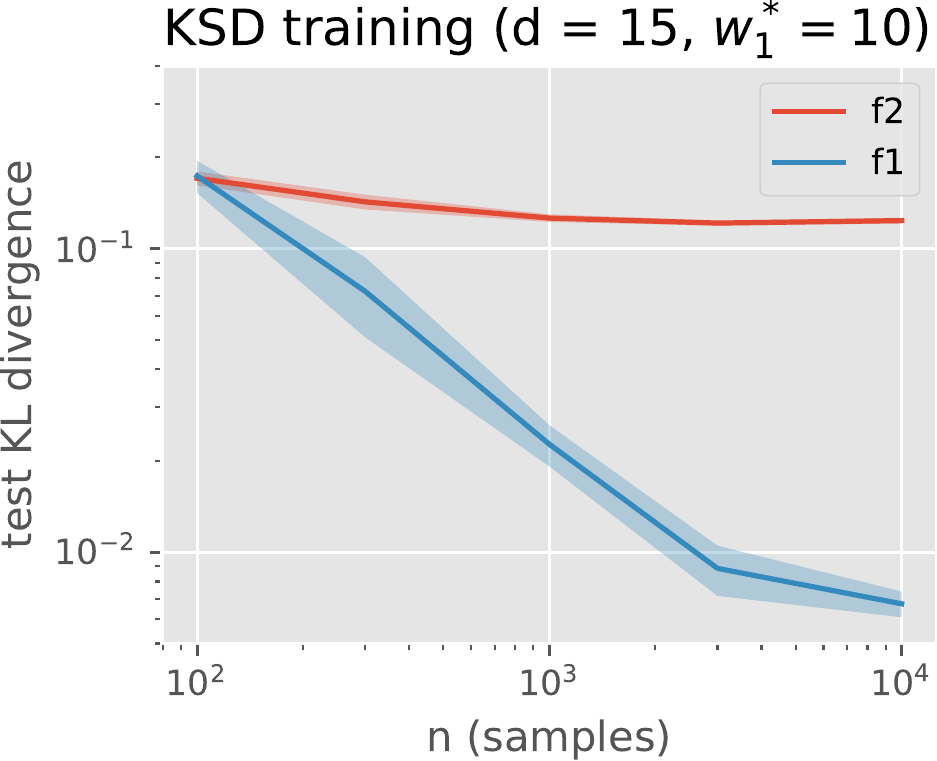}
    \includegraphics[width=.45\textwidth]{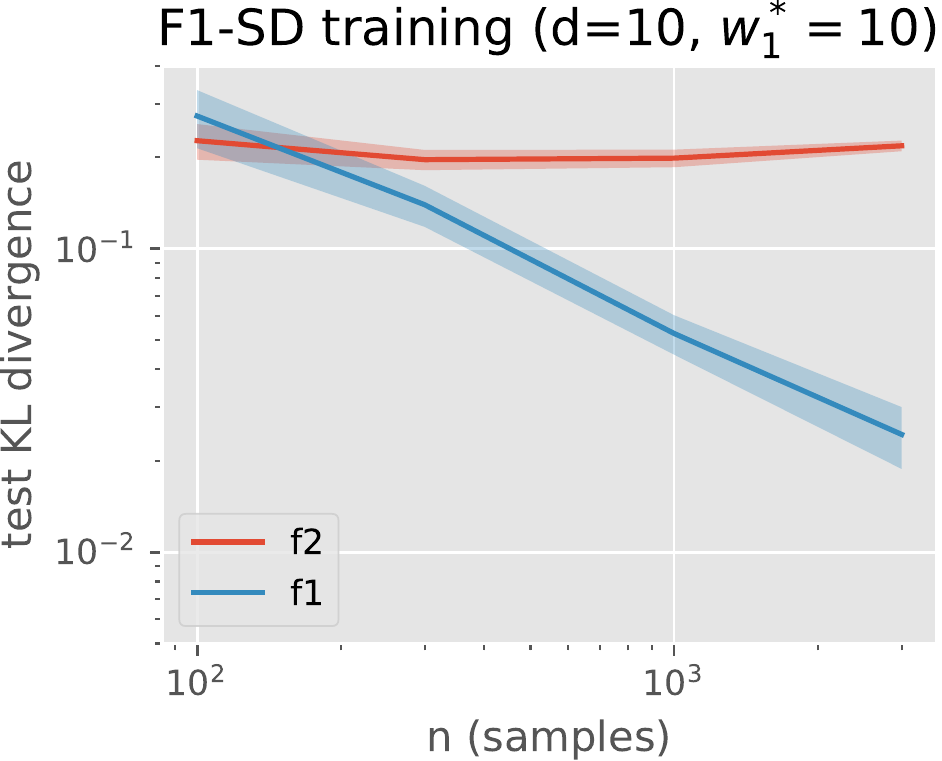} \\
    \includegraphics[width=.45\textwidth]{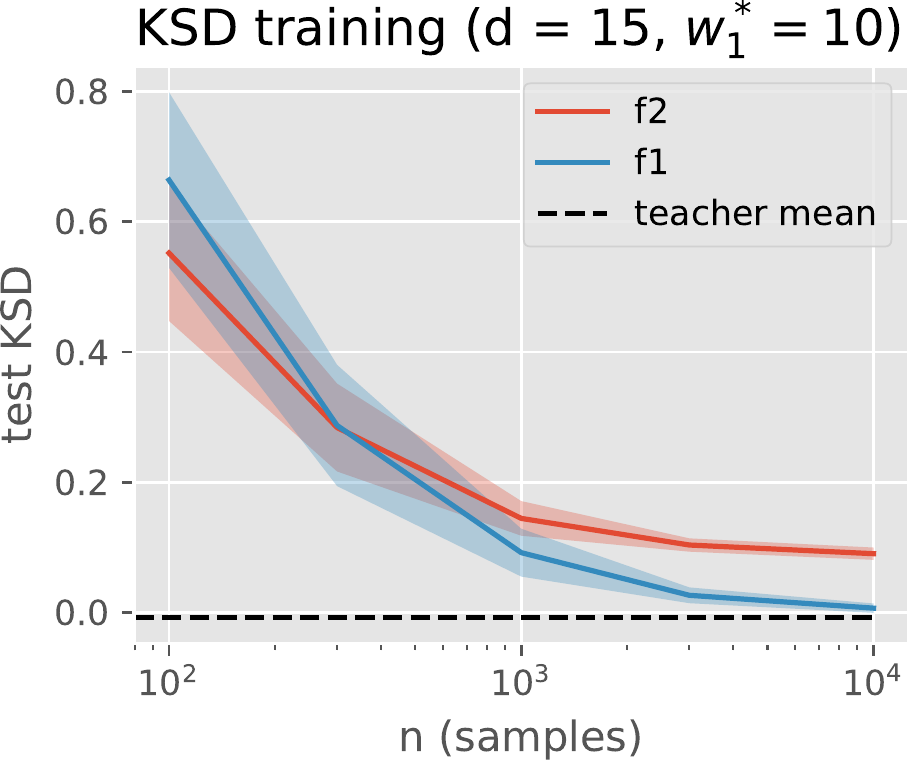}
    \includegraphics[width=.45\textwidth]{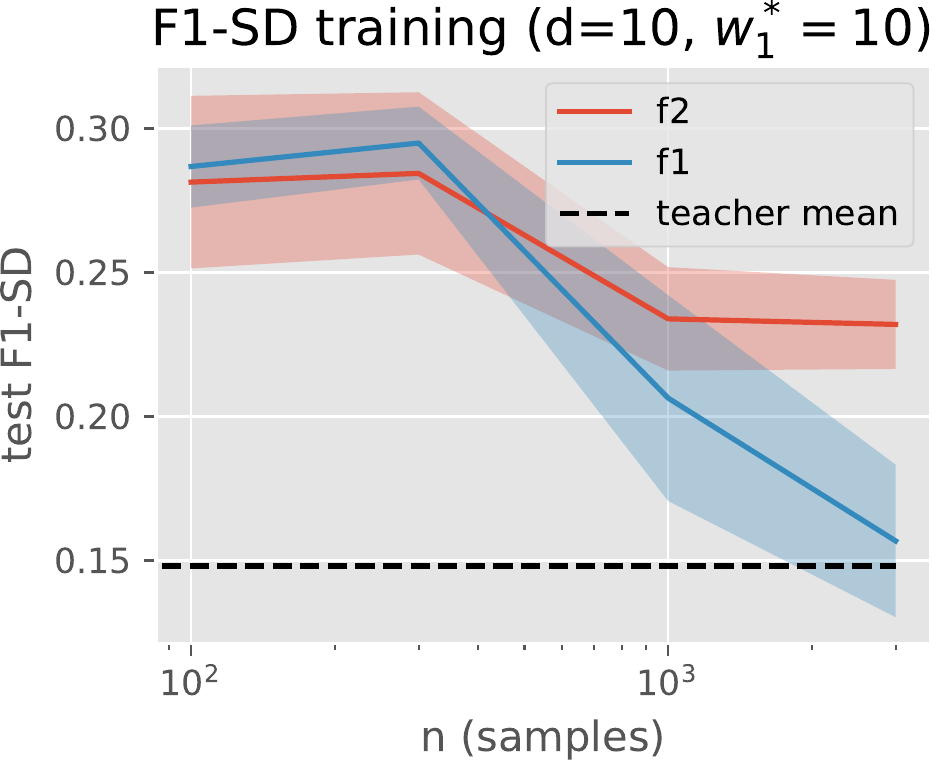}
    \end{minipage}
    \begin{minipage}[c]{5cm}
    \caption{(Left column) Test cross entropy and test KSD for model trained with KSD at different training sample sizes $n$, in $d=15$. The teacher model has one neuron of positive weight $w_1^{*} = 10$ and random position in the hypersphere. (Right column) Test cross entropy and test $\mathcal{F}_1$-SD for model trained with $\mathcal{F}_1$-SD at different training sample sizes $n$, in $d=10$. The teacher model has one neuron of positive weight $w_1^{*} = 10$  and random position in the hypersphere.}
    \label{fig:1n_ksd_f1sd}
    \end{minipage}
\end{figure}
\begin{figure}
    \begin{minipage}[c]{11.5cm}
    \includegraphics[width=.45\textwidth]{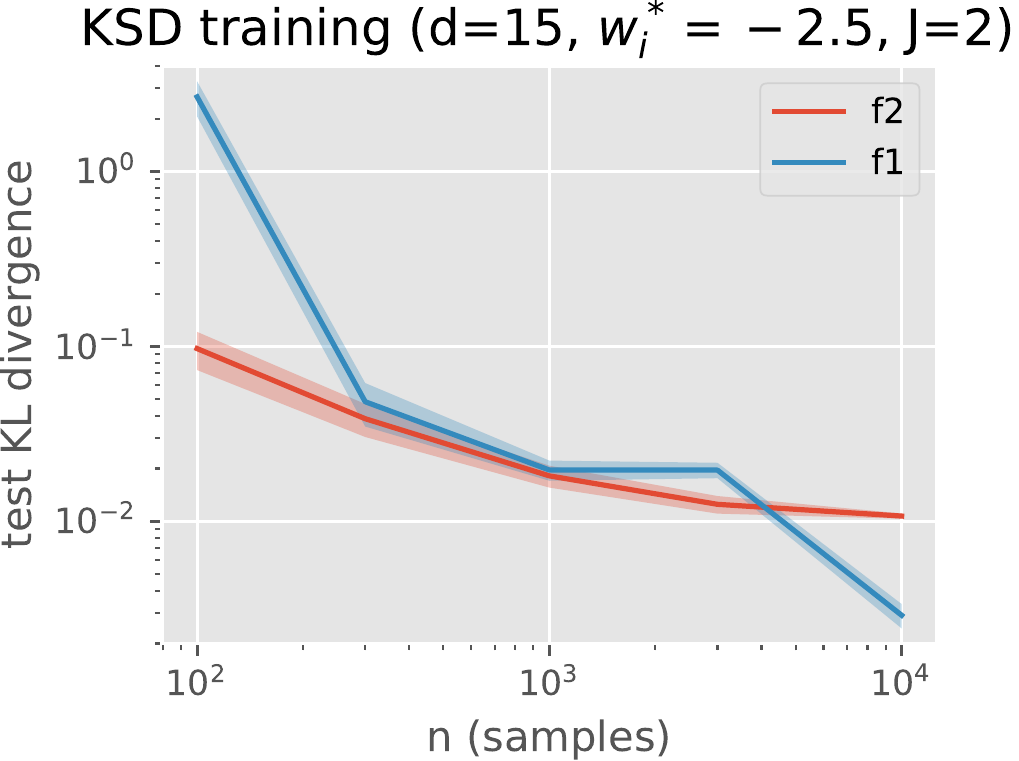}
    \includegraphics[width=.45\textwidth]{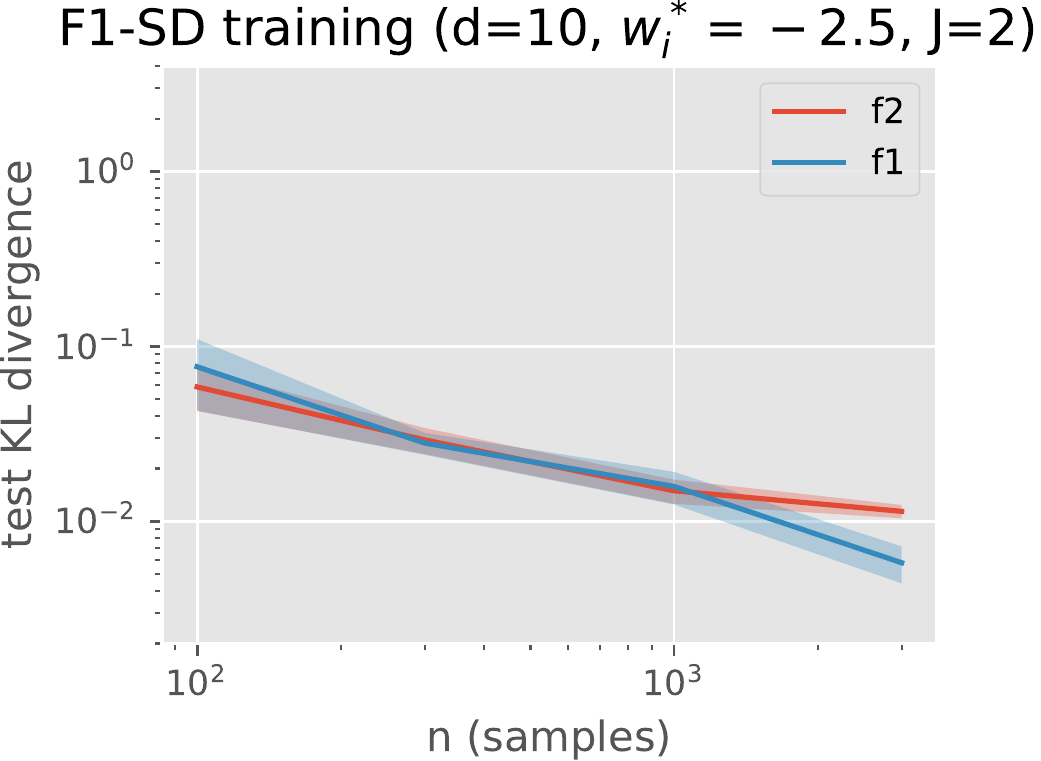} \\
    \includegraphics[width=.45\textwidth]{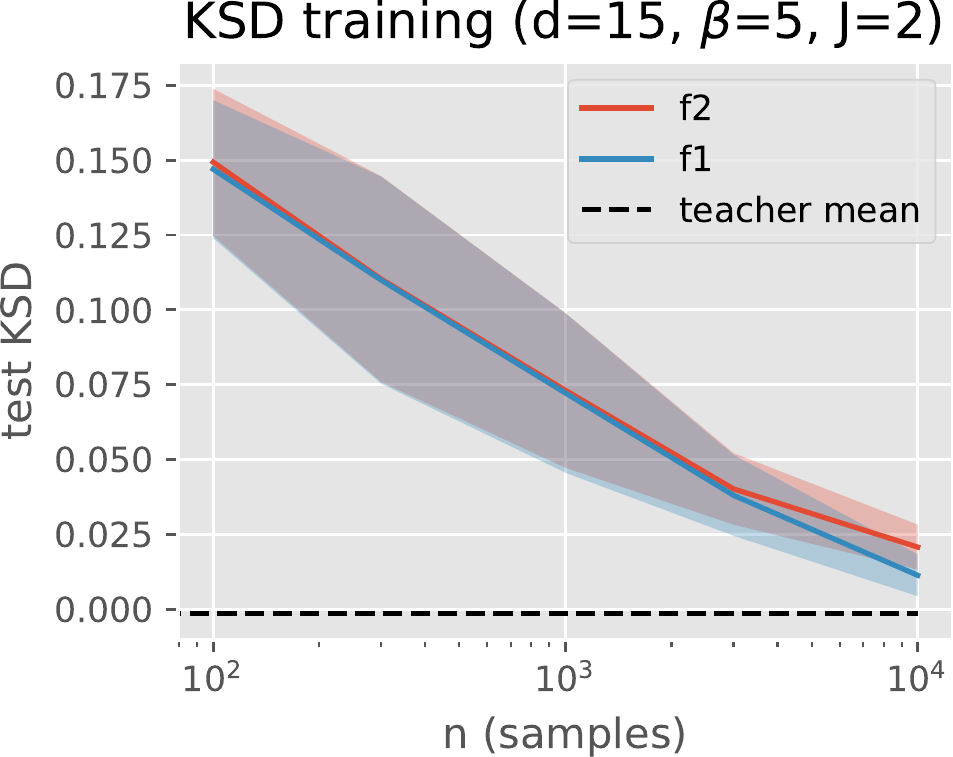}
    \includegraphics[width=.45\textwidth]{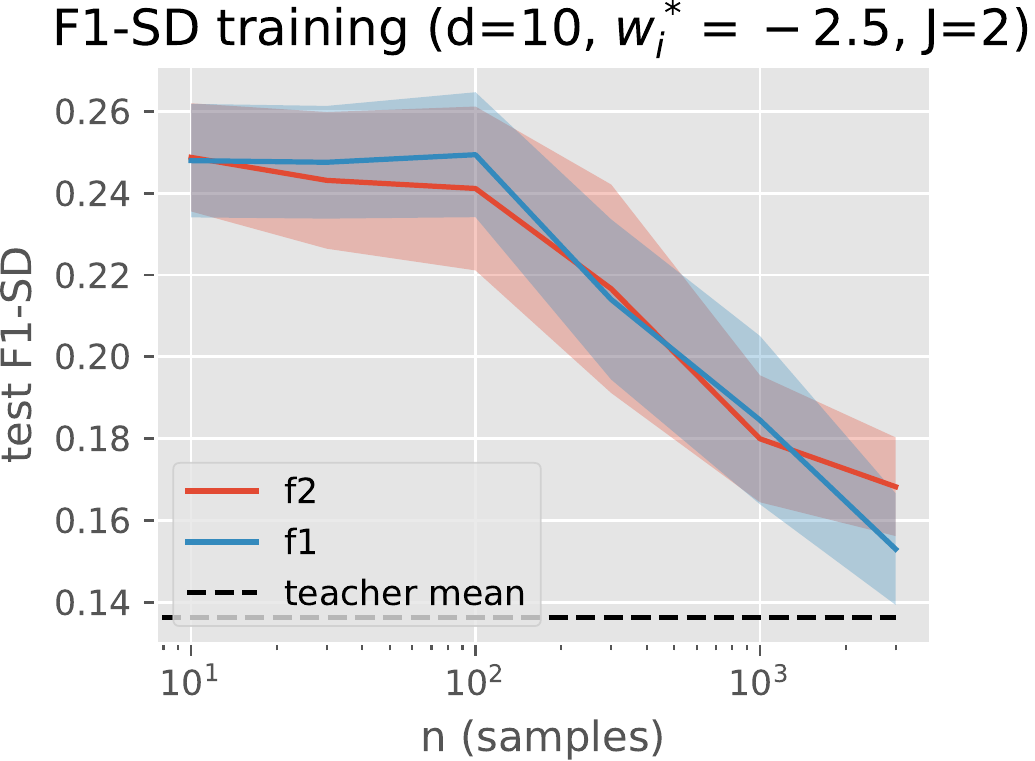}
    \end{minipage}
    \begin{minipage}[c]{5cm}
    \caption{(Left column) Test cross entropy and test KSD for model trained with KSD at different training sample sizes $n$, in $d=15$. The teacher model has two neurons of negative weights $w_1^{*}, w_2^{*} = -2.5$ and random positions in the hypersphere. (Right column) Test cross entropy and test $\mathcal{F}_1$-SD for model trained with $\mathcal{F}_1$-SD at different training sample sizes $n$, in $d=10$. The teacher model has two neurons of negative weights $w_1^{*}, w_2^{*} = -2.5$ and random positions in the hypersphere.}
    \label{fig:2n_ksd_f1sd}
    \end{minipage}
\end{figure}

\section{Duality theory for $\mathcal{F}_1$ and $\mathcal{F}_2$ MLE EBMs}
\label{sec:duality}
In this section we present the dual problems of $\min_{f \in \mathcal{F}} H(\nu_n, \nu_{f})$ (i.e. problem \eqref{eq:hat_mu}) for the cases $\mathcal{F} = \mathcal{F}_1, \mathcal{F}_2$ (\autoref{subsec:unconstrained}), $\mathcal{F} = \mathcal{B}_{\mathcal{F}_1}(\beta)$ (\autoref{subsec:f1_constrained}) and $\mathcal{F} = \mathcal{B}_{\mathcal{F}_2}(\beta)$ (\autoref{subsec:f2_constrained}). The dual problems take the form of entropy maximization under hard constraints, $L^\infty$ and $L^2$ moment penalizations, respectively. The tools used involve a generalized minimax theorem and Fenchel duality, which was also used for results of the same flavor in finite dimensions (c.f. \cite{mohri2012foundations}). The proofs are in \autoref{sec:duality_proofs}.

\subsection{Duality for the unconstrained problem} \label{subsec:unconstrained}
Consider the following entropy maximization problem under generalized moment constraints:
\begin{align}
\begin{split} \label{eq:unconstrained}
    \min_{\nu \in \mathcal{P}(K)} \quad &\beta^{-1} D_{KL}(\nu || \tau) \\
    \text{s.t.} \quad &\forall \theta \in \mathbb{S}^d, \ \int \sigma(\langle \theta, x \rangle) \ d\nu(x) = \frac{1}{n} \sum_{i=1}^n \sigma(\langle \theta, x_i \rangle),
\end{split}
\end{align}
recalling that $\tau$ is the uniform probability measure over $K$ and letting $\beta > 0$ be arbitrary.
The constraints in this problem can be interpreted either (i) as an equality constraint in $\mathcal{C}(\mathbb{S}^d)$, i.e., the set of continuous functions on $\mathbb{S}^d$, or (ii) as an equality constraint in $L^2(\mathbb{S}^d)$, i.e., the set of square-integrable functions on $\mathbb{S}^d$. Each interpretation yields a different dual problem.

By the Riesz-Markov-Kakutani representation theorem, the set of signed Radon measures $\mathcal{M}(\mathbb{S}^d)$ can be seen as the continuous dual of $\mathcal{C}(\mathbb{S}^d)$. Hence, in the case (i), the Lagrangian for problem \eqref{eq:unconstrained} is $L_1 : \mathcal{M}(K) \times \mathcal{M}(\mathbb{S}^d) \times \mathcal{C}(K) \times \R \rightarrow \R$ defined as $L_1(\nu, \mu, g, \lambda) = \int \log \left( \frac{d\nu}{d\tau}(x) \right) d\nu(x) + \int \left( \int \sigma(\langle \theta, x \rangle) \ d\nu(x) - \frac{1}{n} \sum_{i=1}^n \sigma(\langle \theta, x_i \rangle) \right) \ d\mu(\theta) - \int g(x) d\nu(x) + \lambda \left( \int d\nu(x) - 1 \right)$, and the dual problem is
\begin{align}
\begin{split} \label{eq:unconstrained_f1_minimizer}
    \sup_{\mu \in \mathcal{M}(\mathbb{S}^d)} &-\frac{1}{n} \sum_{i=1}^n \int \sigma(\langle \theta, x_i \rangle) \ d\mu(\theta) - \frac{1}{\beta} \log\left(\int \exp \left(-\beta \int \sigma(\langle \theta, x \rangle) \ d\mu(\theta) \right) d\tau(x) \right)
\end{split}
\end{align}
which is equivalent to the MLE problem \eqref{eq:hat_mu} when $\mathcal{F} = \mathcal{F}_1$.

Let $\tilde{\tau}$ to denote the uniform probability measure over $\mathbb{S}^d$. In the case (ii), the Lagrangian for problem \eqref{eq:unconstrained} is $L_2 : \mathcal{M}(K) \times L^2(\mathbb{S}^d) \times \mathcal{C}(K) \times \R \rightarrow \R$ defined as $L_2(\nu, h, g, \lambda) = \int \log \left( \frac{d\nu}{d\tau}(x) \right) d\nu(x) + \int \left( \int \sigma(\langle \theta, x \rangle) \ d\nu(x) - \frac{1}{n} \sum_{i=1}^n \sigma(\langle \theta, x_i \rangle) \right) h(\theta)\ d\tilde{\tau}(\theta) - \int g(x) d\nu(x) + \lambda \left( \int d\nu(x) - 1 \right)$, and the dual problem is
\begin{align}
\begin{split} \label{eq:unconstrained_f2_minimizer}
    \sup_{h \in L^2(\mathbb{S}^d)} &-\frac{1}{n} \sum_{i=1}^n \int \sigma(\langle \theta, x_i \rangle) h(\theta)\ d\tilde{\tau}(\theta) - \frac{1}{\beta} \log\left(\int \exp \left(-\beta \int \sigma(\langle \theta, x \rangle) h(\theta)\ d\tilde{\tau}(\theta) \right) d\tau(x) \right)
\end{split}
\end{align}
which is equivalent to the MLE problem \eqref{eq:hat_mu} when $\mathcal{F} = \mathcal{F}_2$.

The following theorem shows that problems \eqref{eq:unconstrained}-\eqref{eq:unconstrained_f1_minimizer}-\eqref{eq:unconstrained_f2_minimizer} have the same optimal value.

\begin{restatable}{thm}{unconstrainedduality} \label{prop:unconstrainedduality}
Strong duality holds between the entropy maximization problem \eqref{eq:unconstrained} and each of the two dual problems \eqref{eq:unconstrained_f1_minimizer}-\eqref{eq:unconstrained_f2_minimizer}.
\end{restatable}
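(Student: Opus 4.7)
The plan is to derive both dualities from a single application of Sion's minimax theorem followed by the Gibbs variational principle. First, I would rewrite the primal~\eqref{eq:unconstrained} in Lagrangian form as $p^{*} = \inf_{\nu \in \mathcal{P}(K)} \sup_{\lambda \in \Lambda} \mathcal{L}(\nu,\lambda)$ with
\begin{equation*}
\mathcal{L}(\nu,\lambda) := \beta^{-1} D_{KL}(\nu \| \tau) + \big\langle A\nu - A\nu_n,\, \lambda \big\rangle,
\end{equation*}
where $A\nu(\theta) := \int_K \sigma(\langle \theta, x \rangle)\, d\nu(x)$ is the moment map and $\Lambda$ is either $\mathcal{M}(\mathbb{S}^d)$ paired with $\mathcal{C}(\mathbb{S}^d)$ (case~(i)) or $L^2(\mathbb{S}^d,\tilde{\tau})$ paired with itself (case~(ii)). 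The inner supremum equals $0$ when the moment-matching constraint holds and $+\infty$ otherwise, so the double optimum equals $p^{*}$ in both cases. A key observation is that $\theta\mapsto A\nu(\theta)$ is continuous on $\mathbb{S}^d$ because the ReLU feature $\sigma(\langle \theta, x \rangle)$ is jointly continuous on the compact set $\mathbb{S}^d \times K$; hence equality of $A\nu$ and $A\nu_n$ in $\mathcal{C}(\mathbb{S}^d)$ coincides with equality in $L^2(\mathbb{S}^d)$, and the two Lagrangians impose the same primal constraint.

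Next, I would swap $\inf$ and $\sup$ via Sion's minimax theorem. The set $\mathcal{P}(K)$ is convex and weakly-$*$ compact (Banach--Alaoglu, since $K$ is compact); the functional $\nu \mapsto D_{KL}(\nu \| \tau)$ is convex and weakly-$*$ lower semicontinuous by the Donsker--Varadhan representation; and the linear moment term $\nu \mapsto \int A^{*}\lambda \, d(\nu - \nu_n)$ is weakly-$*$ continuous because $A^{*}\lambda(x) := \int \sigma(\langle \theta, x\rangle)\, d\lambda(\theta) \in \mathcal{C}(K)$, which itself follows from dominated convergence and continuity of the ReLU feature. Thus $\mathcal{L}(\cdot,\lambda)$ is convex and l.s.c.\ on a compact set, while $\mathcal{L}(\nu,\cdot)$ is affine (hence concave and u.s.c.) on the convex set $\Lambda$, and Sion's theorem yields $p^{*} = \sup_{\lambda \in \Lambda} \inf_{\nu \in \mathcal{P}(K)} \mathcal{L}(\nu,\lambda)$.

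The inner infimum is then the standard entropy-regularized linear program solved by the Gibbs variational principle:
\begin{equation*}
\inf_{\nu \in \mathcal{P}(K)} \Big\{ \beta^{-1} D_{KL}(\nu \| \tau) + \int_K A^{*}\lambda\, d\nu \Big\} \;=\; -\beta^{-1} \log \int_{K} e^{-\beta A^{*}\lambda(x)}\, d\tau(x),
\end{equation*}
attained at the Gibbs density $d\nu/d\tau \propto \exp(-\beta A^{*}\lambda)$. Substituting this expression back, subtracting the constant $\int_K A^{*}\lambda\, d\nu_n = \tfrac{1}{n}\sum_{i=1}^n A^{*}\lambda(x_i)$, and specializing $\lambda$ to $\mu \in \mathcal{M}(\mathbb{S}^d)$ or to $h \in L^2(\mathbb{S}^d)$ reproduces exactly the objectives in~\eqref{eq:unconstrained_f1_minimizer} and~\eqref{eq:unconstrained_f2_minimizer}, respectively, proving strong duality in both cases.

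The main obstacle is the careful verification of the topological hypotheses of Sion's theorem in the infinite-dimensional setting---specifically the weak-$*$ lower semicontinuity of the KL divergence and the weak-$*$ continuity of the moment operator---both of which ultimately rest on compactness of $K$ and joint continuity of the ReLU feature. Once these standard facts are recorded, the remainder of the argument proceeds identically for the two cases, differing only in the choice of dual space and its natural pairing.
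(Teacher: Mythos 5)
Your proof is correct and uses the same core tool as the paper (a minimax theorem together with the KL convex conjugate / Gibbs variational principle), but takes a slightly cleaner route. The paper works over the larger space $\mathcal{M}(K)$ and introduces additional Lagrange multipliers $g \in \mathcal{C}(K)$, $g \geq 0$, and $\lambda \in \mathbb{R}$ to enforce non-negativity and normalization of $\nu$, which then requires an auxiliary lemma (Lemma~\ref{lem:nu_lambda_g}) to reconcile the full $\sup_{\mu,g,\lambda}\inf_\nu$ with the reduced $\sup_\mu \min_{\nu \in \mathcal{P}(K)}$. You instead restrict the primal variable to $\mathcal{P}(K)$ from the start, absorb those constraints into the feasible set, and apply Sion's theorem (the paper uses Kneser's theorem, which plays the same role) directly to the two-variable Lagrangian $\mathcal{L}(\nu,\lambda)$. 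This eliminates the auxiliary multipliers and the accompanying bookkeeping. Your verification of the topological hypotheses (weak-$*$ compactness of $\mathcal{P}(K)$ via Banach--Alaoglu or Prokhorov, weak-$*$ lower semicontinuity of $D_{KL}(\cdot\|\tau)$ via Donsker--Varadhan, and weak-$*$ continuity of the linear moment term via $A^*\lambda \in \mathcal{C}(K)$) is sound and parallels the paper's citations of Prokhorov and Posner. The observation that continuity of $\theta \mapsto A\nu(\theta)$ makes the $\mathcal{C}(\mathbb{S}^d)$ and $L^2(\mathbb{S}^d)$ formulations of the constraint coincide is a nice clarifying remark, though strictly speaking each case is treated independently and does not rely on it. Finally, closing the inner infimum via the Gibbs variational principle is equivalent to the paper's explicit first-order stationarity computation; both produce the log-partition-function term in the dual objective.
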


\subsection{Duality for the $\mathcal{F}_1$-ball constrained problem} \label{subsec:f1_constrained}
Using $\nu_n = \frac{1}{n} \sum_{i=1}^n \delta_{x_i}$ to denote the empirical measure, consider the following problem, which can be seen as an $L^\infty$-penalized version of \eqref{eq:unconstrained}:
\begin{align}
    \begin{split} \label{eq:primal_problem_f1}
    \min_{\nu \in \mathcal{P}(K)} \beta^{-1} D_{KL}(\nu||\tau) + \max_{\theta \in \mathbb{S}^d} \left| \int \sigma(\langle \theta, x \rangle) \ d(\nu-\nu_n)(x) \right|
\end{split}
\end{align}
As shown in \autoref{thm:strongdualityf1}, the dual of this problem is a modified version of \eqref{eq:unconstrained_f1_minimizer} in which $\mu$ is constrained to have TV norm bounded by 1:
\begin{align}
\begin{split} \label{eq:dual_problem_f1}
    \max_{\substack{\mu \in \mathcal{M}(\mathbb{S}^d) \\ |\mu|_{\text{TV}} \leq 1}} &-\frac{1}{n} \sum_{i=1}^n \int \sigma(\langle \theta, x_i \rangle) \ d\mu(\theta) - \frac{1}{\beta} \log\left(\int \exp \left(- \beta \int \sigma(\langle \theta, x \rangle) \ d\mu(\theta) \right) d\tau(x) \right)
\end{split}
\end{align}
Remark that by the definition of $\mathcal{F}_1$, the problem \eqref{eq:dual_problem_f1} is equivalent to MLE problem \eqref{eq:hat_mu} in the case $\mathcal{F} = \mathcal{B}_{\mathcal{F}_1}(\beta)$.
\begin{restatable}{thm}{strongdualityfone} \label{thm:strongdualityf1}
The problem \eqref{eq:dual_problem_f1} is the dual of the problem \eqref{eq:primal_problem_f1}, and strong duality holds. Moreover, the solution $\nu^{\star}$ of \eqref{eq:dual_problem_f1} is unique and its density satisfies
\begin{align}
    \frac{d\nu^{\star}}{d\tau}(x) = \frac{1}{Z_{\beta}}\exp \left( - \beta \int \sigma(\langle \theta, x \rangle) \ d\mu^{\star}(\theta) \right),
\end{align}
where $\mu^{\star}$ is a solution of \eqref{eq:primal_problem_f1} and $Z_{\beta}$ is a normalization constant.
\end{restatable}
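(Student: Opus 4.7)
The plan is to rewrite the supremum-penalty term as a supremum of a linear functional over the TV-unit ball of $\mathcal{M}(\mathbb{S}^d)$, interchange the $\inf_\nu$ and the $\sup_\mu$ via a minimax argument, and then solve the resulting inner problem in $\nu$ in closed form via the Gibbs variational principle. This will deliver both the dual formulation \eqref{eq:dual_problem_f1} and the claimed Gibbs form for the unique primal optimizer $\nu^\star$ in a single sweep.

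\textbf{Main steps.} First, since $\mathbb{S}^d$ is compact and $\theta\mapsto \sigma(\langle\theta,\cdot\rangle)$ is continuous, Riesz--Markov--Kakutani duality gives
$$\max_{\theta \in \mathbb{S}^d} \bigg| \int \sigma(\langle \theta, x\rangle) \, d(\nu-\nu_n)(x)\bigg| = \sup_{|\mu|_{\text{TV}} \leq 1} \int_{\mathbb{S}^d}\int_{K} \sigma(\langle \theta, x\rangle) \, d(\nu-\nu_n)(x)\, d\mu(\theta).$$
Setting $\Psi(\nu,\mu) := \beta^{-1} D_{KL}(\nu \| \tau) + \int\!\!\int \sigma(\langle \theta,x\rangle) \, d\mu(\theta) \, d(\nu-\nu_n)(x)$, the primal becomes $\inf_\nu \sup_\mu \Psi(\nu,\mu)$. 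Second, apply Sion's minimax theorem: $\Psi(\cdot,\mu)$ is strictly convex and weakly lower semicontinuous on the weakly compact set $\mathcal{P}(K)$, while $\Psi(\nu,\cdot)$ is linear and weak-$*$ continuous on the convex set $\{|\mu|_{\text{TV}}\le 1\}$, so $\inf_\nu\sup_\mu\Psi=\sup_\mu\inf_\nu\Psi$. Third, for fixed $\mu$ the bounded function $f_\mu(x):=\int \sigma(\langle\theta,x\rangle)\, d\mu(\theta)$ enables the Gibbs variational principle:
$$\inf_{\nu \in \mathcal{P}(K)} \left\{ \beta^{-1} D_{KL}(\nu \| \tau) + \int f_\mu \, d\nu \right\} = -\beta^{-1} \log \int_K e^{-\beta f_\mu(x)} \, d\tau(x),$$
with unique minimizer $d\nu/d\tau \propto e^{-\beta f_\mu}$. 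Subtracting the $\nu$-independent constant $\int f_\mu \, d\nu_n = \tfrac{1}{n}\sum_i f_\mu(x_i)$ then produces exactly \eqref{eq:dual_problem_f1}.

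\textbf{Uniqueness and main obstacle.} A dual optimizer $\mu^\star$ exists because the dual objective is weak-$*$ upper semicontinuous on the weak-$*$ compact set $\{|\mu|_{\text{TV}}\le 1\}$. Strong duality (Step~2) then furnishes a saddle point $(\nu^\star,\mu^\star)$, so $\nu^\star$ minimizes $\Psi(\cdot,\mu^\star)$; by Step~3 this forces $d\nu^\star/d\tau = Z_\beta^{-1} e^{-\beta f_{\mu^\star}}$, and uniqueness follows from the strict convexity of $D_{KL}(\cdot\|\tau)$. The main obstacle is the careful topological bookkeeping needed to apply Sion's theorem: verifying that $\mathcal{P}(K)$ is compact in the topology of weak convergence of measures (immediate from compactness of $K$ via Prokhorov) and that $D_{KL}(\cdot\|\tau)$ is weakly lower semicontinuous on it (a standard but non-trivial fact, e.g. via its dual representation $D_{KL}(\nu\|\tau) = \sup_{g\in \mathcal{C}(K)}\{\int g \, d\nu - \log\int e^g \, d\tau\}$, a supremum of weakly continuous affine functionals). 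Once this is in place, the remaining verifications---bilinearity, concavity/convexity, and boundedness of $f_\mu$---are routine.
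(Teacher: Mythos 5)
Your proof is correct, but it takes a genuinely different route from the paper's. The paper proves this theorem via Fenchel strong duality (Borwein--Lewis, their \autoref{thm:fenchel}): it sets $f(\nu) = \beta^{-1}D_{KL}(\nu\|\tau)$ extended by $+\infty$ off $\mathcal{P}(K)$, $g(\phi) = \max_\theta|\phi(\theta) - \int_K\sigma(\langle\theta,x\rangle)\,d\nu_n(x)|$, and $A\nu = \int_K\sigma(\langle\cdot,x\rangle)\,d\nu(x)$, computes the convex conjugates $f^*$ and $g^*$ and the adjoint $A^*$, and checks condition 2 of the Fenchel theorem; existence of the primal optimizer is then supplied by the direct method. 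You instead rewrite the $L^\infty$ penalty as a supremum of a bilinear form over the TV unit ball (Riesz--Markov--Kakutani), swap $\inf_\nu$ and $\sup_\mu$ via a minimax theorem, and close the inner minimization with the Gibbs/Donsker--Varadhan variational principle. Notably, the paper itself remarks that a Lagrangian/minimax route was considered and dismissed as ``cumbersome'' in favor of Fenchel duality --- but the version you propose is not cumbersome precisely because you encode the constraint $|\mu|_{\mathrm{TV}}\leq 1$ geometrically (as the domain of the dual-norm supremum) rather than via Lagrange multipliers, and because the Gibbs principle replaces the conjugate computation $f^*(\psi) = \beta^{-1}\log\int e^{\beta\psi}\,d\tau$ by the equivalent variational fact with its minimizer in hand. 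What each route buys: Fenchel duality is modular and makes $g^*$ transparently the indicator of the TV ball; your minimax route avoids conjugates and delivers the closed form of $\nu^\star$ immediately from the inner minimization, at the cost of the same topological bookkeeping (compactness of $\mathcal{P}(K)$, weak lower semicontinuity of $D_{KL}$, extended-real-valued objective) that the paper already deploys for its minimax proof of the unconstrained \autoref{prop:unconstrainedduality}, so nothing new is required. One point worth spelling out more fully than your sketch does: strong duality plus attainment of the dual supremum gives $p^\star = d^\star = \Psi(\nu^\star,\mu^\star)$ with $\nu^\star$ the Gibbs minimizer at $\mu^\star$, and then the chain $\Psi(\nu^\star,\mu^\star) \leq \beta^{-1}D_{KL}(\nu^\star\|\tau) + \sup_\mu\langle A(\nu^\star - \nu_n),\mu\rangle$ combined with $p^\star \leq$ (primal objective at $\nu^\star$) shows that $\nu^\star$ in fact attains the primal infimum; uniqueness then follows from strict convexity of $D_{KL}(\cdot\|\tau)$ as you say.
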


\subsection{Duality for the $\mathcal{F}_2$-ball constrained problem} \label{subsec:f2_constrained}
The following problem can be seen as an $L^2$-penalized version of \eqref{eq:unconstrained}:
\begin{align}
    \begin{split} \label{eq:primal_problem_f2}
    \min_{\nu \in \mathcal{P}(K)} &\beta^{-1} D_{KL}(\nu||\tau) + \left( \int_{\mathbb{S}^d} \left( \int \sigma(\langle \theta, x \rangle) \ d(\nu - \nu_n)(x) \right)^2 d\tilde{\tau}(\theta) \right)^{1/2}
\end{split}
\end{align}
And as shown in \autoref{thm:strongdualityf2}, the dual of this problem is a modified version of \eqref{eq:unconstrained_f2_minimizer} in which $h$ is constrained to have $L^2$ norm bounded by 1:
\begin{align}
\begin{split} \label{eq:dual_problem_f2}
    &\max_{\substack{h \in L^2(\mathbb{S}^d) \\ \|h\|_{L^2} \leq 1}} -\frac{1}{n} \sum_{i=1}^n \int \sigma(\langle \theta, x_i \rangle) h(\theta)\ d\tilde{\tau}(\theta) - \frac{1}{\beta} \log\left(\int \exp \left(-\beta \int \sigma(\langle \theta, x \rangle) h(\theta)\ d\tilde{\tau}(\theta) \right) d\tau(x) \right)
\end{split}
\end{align}
Remark that by the definition of $\mathcal{F}_2$, the problem \eqref{eq:dual_problem_f1} is equivalent to MLE problem \eqref{eq:hat_mu} in the case $\mathcal{F} = \mathcal{B}_{\mathcal{F}_2}(\beta)$.
\begin{restatable}{thm}{strongdualityftwo} \label{thm:strongdualityf2}
The problem \eqref{eq:dual_problem_f2} is the dual of the problem \eqref{eq:primal_problem_f2}, and strong duality holds. Moreover, the solution $\nu^{\star}$ of \eqref{eq:dual_problem_f2} is unique and its density satisfies
\begin{align}
    \frac{d\nu^{\star}}{d\tau}(x) = \frac{1}{Z_{\beta}}\exp \left( - \beta \int \sigma(\langle \theta, x \rangle) \ h^{\star}(\theta) d\tilde{\tau}(\theta) \right),
\end{align}
where $h^{\star}$ is a solution of \eqref{eq:primal_problem_f2} and $Z_{\beta}$ is a normalization constant.
\end{restatable}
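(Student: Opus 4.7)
The plan is to mirror the proof of Theorem \ref{thm:strongdualityf1}, replacing the duality $\max_{\theta\in\mathbb{S}^d}|\cdot|=\sup_{|\mu|_{\text{TV}}\leq 1}\int\cdot\,d\mu$ used there by the $L^2$ self-duality $\|g\|_{L^2(\tilde\tau)}=\sup_{\|h\|_{L^2}\leq 1}\langle g,h\rangle_{L^2(\tilde\tau)}$. Introduce the bounded linear operator $T:\mathcal{M}(K)\to L^2(\mathbb{S}^d,\tilde\tau)$ by $(T\gamma)(\theta):=\int\sigma(\langle\theta,x\rangle)\,d\gamma(x)$, so that the second term of \eqref{eq:primal_problem_f2} is exactly $\|T(\nu-\nu_n)\|_{L^2(\tilde\tau)}$. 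By self-duality, \eqref{eq:primal_problem_f2} can be rewritten as the saddle-point problem
\begin{align}
\min_{\nu\in\mathcal{P}(K)}\;\sup_{\substack{h\in L^2(\mathbb{S}^d)\\ \|h\|_{L^2}\leq 1}}\Big\{\beta^{-1}D_{KL}(\nu\|\tau)+\int f_h(x)\,d(\nu-\nu_n)(x)\Big\},
\end{align}
where $f_h(x):=\int_{\mathbb{S}^d}h(\theta)\sigma(\langle\theta,x\rangle)\,d\tilde\tau(\theta)$, which by definition lies in the unit ball of $\mathcal{F}_2$.

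Next, I would apply Sion's minimax theorem to interchange $\min$ and $\sup$. The required hypotheses are: the inner functional is convex in $\nu$ (since $D_{KL}(\cdot\|\tau)$ is convex and the coupling term is linear in $\nu$) and linear, hence concave, in $h$; the $L^2$ unit ball is convex and weakly compact; $\mathcal{P}(K)$ is convex and weak-$\ast$ compact as $K$ is compact; the functional is weak-$\ast$ lower semi-continuous in $\nu$ (KL is lsc, and $\nu\mapsto\int f_h\,d\nu$ is continuous because $f_h$ is continuous on $K$) and weakly upper semi-continuous in $h$ (since $h\mapsto\int f_h\,d(\nu-\nu_n)$ is weakly continuous on $L^2$). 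After swapping, the inner problem
\begin{align}
\min_{\nu\in\mathcal{P}(K)}\;\beta^{-1}D_{KL}(\nu\|\tau)+\int f_h\,d\nu
\end{align}
is the classical entropy-penalized linear problem over probability measures, whose unique minimizer is the Gibbs measure $d\nu^\star_h/d\tau=Z_h^{-1}\exp(-\beta f_h)$ with value $-\beta^{-1}\log Z_h$. Substituting this back, together with $-\int f_h\,d\nu_n=-\frac{1}{n}\sum_i\int\sigma(\langle\theta,x_i\rangle)h(\theta)\,d\tilde\tau(\theta)$, reproduces exactly the dual objective \eqref{eq:dual_problem_f2}, establishing strong duality.

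For the explicit form of $\nu^\star$, once a dual optimizer $h^\star$ is obtained (which exists by weak compactness of the $L^2$ ball together with weak upper semi-continuity of the dual objective), the primal optimum is achieved at $\nu^\star=\nu^\star_{h^\star}$, giving the claimed Gibbs density. Uniqueness of $\nu^\star$ follows from the strict convexity of $\nu\mapsto D_{KL}(\nu\|\tau)$ on its effective domain, which makes the primal objective strictly convex in $\nu$ whenever it is finite.

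The main obstacle is verifying the regularity hypotheses needed for the minimax interchange, most delicately the weak-$\ast$ lower semi-continuity of $D_{KL}(\cdot\|\tau)$ on $\mathcal{P}(K)$ and ensuring that the $L^2$-weak-topology upper semi-continuity of the functional in $h$ holds uniformly enough to apply Sion's theorem. A secondary technicality is justifying that the inner supremum over $h$ commutes with integration against $\nu$ in the saddle reformulation; this is a standard consequence of the compactness of $K$ and the bounded linearity of $T$, but must be checked. Once these are in place, the argument proceeds in direct parallel to the $\mathcal{F}_1$ case of Theorem \ref{thm:strongdualityf1}.
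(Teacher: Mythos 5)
Your proof is essentially correct, but it takes a genuinely different route from the paper's. The paper proves both $\mathcal F_1$- and $\mathcal F_2$-ball duality via Fenchel strong duality (\autoref{thm:fenchel}): it introduces $f(\nu)=\beta^{-1}D_{KL}(\nu\|\tau)+\iota_{\mathcal P(K)}(\nu)$ and $g(\phi)=\bigl(\int(\phi(\theta)-\int\sigma\,d\nu_n)^2\,d\tilde\tau\bigr)^{1/2}$, computes the conjugates $f^*$ (a log-partition functional) and $g^*$ (the indicator of the $L^2$ unit ball plus a linear term), and then checks a constraint-qualification condition to conclude strong duality. The paper in fact explicitly flags, at the start of the proof of \autoref{thm:strongdualityf1}, the route you have taken --- encoding the norm penalty as a supremum over the dual unit ball and invoking a minimax theorem --- calling it ``rather cumbersome'' and deliberately avoiding it. Your version of that route, however, is quite clean: you use $L^2$ self-duality to rewrite the penalty as $\sup_{\|h\|_{L^2}\le 1}\int f_h\,d(\nu-\nu_n)$ with $f_h\in\mathcal B_{\mathcal F_2}$, swap $\min_\nu$ and $\sup_h$ via Sion (a cousin of the Kneser theorem the paper uses for its own minimax argument in \autoref{prop:unconstrainedduality}), then solve the inner Gibbs-variational problem in closed form to recover exactly the dual objective \eqref{eq:dual_problem_f2}. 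The trade-off is clear: the Fenchel route requires constructing and computing convex conjugates and checking a constraint qualification, while your route requires verifying the topological hypotheses of the minimax theorem (weak-$\ast$ compactness of $\mathcal P(K)$, weak compactness of the $L^2$ ball, weak-$\ast$ lsc of the KL divergence via \autoref{lem:diff_entropy}, and weak usc of the linear coupling). You have identified the right obstacles; all of them are routine here because $K$ is compact, $\sigma(\langle\cdot,x\rangle)$ is uniformly bounded in $L^2(\tilde\tau)$, and the dual objective $h\mapsto -\int f_h\,d\nu_n-\beta^{-1}\log\int e^{-\beta f_h}d\tau$ is in fact weakly continuous on the unit ball by dominated convergence, so the supremum is attained. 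Your uniqueness argument (strict convexity of $D_{KL}(\cdot\|\tau)$) also matches the paper's conclusion. One small remark independent of your argument: the statement in the paper swaps the labels --- $\nu^\star$ should be the solution of the primal \eqref{eq:primal_problem_f2} and $h^\star$ of the dual \eqref{eq:dual_problem_f2}; you have interpreted this correctly.
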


\section{Proofs of \autoref{sec:duality}}
\label{sec:duality_proofs}
\begin{restatable}{thm}{kneser} [\citet{kneser1952surun}] 
\label{thm:kneser}
Let $X$ be a non-empty compact convex subset of a locally convex topological vector space space $E$ and $Y$ a non-empty convex subset of a locally convex topological vector space space $F$. Let the function $f : X \times Y \rightarrow \R$ be such that:
\begin{enumerate}[label=(\roman*)]
    \item For each $y \in Y$, the function $x \mapsto f(x,y)$ is upper semicontinuous and concave,
    \item For each $x \in X$, the function $y \mapsto f(x,y)$ is convex.
\end{enumerate}
Then we have
\begin{align}
    \sup_{x \in X} \inf_{y \in Y} f(x,y) = \inf_{y \in Y} \max_{x \in X} f(x,y).
\end{align}
\end{restatable}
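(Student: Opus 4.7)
The plan is to reduce the minimax identity to the nonemptiness of an intersection of compact convex level sets and to establish this via a separating-hyperplane argument in $\mathbb{R}^n$ that converts the convexity of $f$ in $y$ into a contradiction with the definition of $v := \inf_{y \in Y} \max_{x \in X} f(x,y)$.

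The inequality $\sup_x \inf_y f \leq \inf_y \sup_x f$ is automatic. For the reverse, assume $v$ is finite (else the identity is trivial), fix an arbitrary $\alpha < v$, and introduce the superlevel sets
\begin{equation}
C_y := \{ x \in X : f(x,y) \geq \alpha \}, \qquad y \in Y.
\end{equation}
Each $C_y$ is convex by concavity in $x$, closed in $X$ by upper semicontinuity in $x$ (hence compact), and nonempty because $\max_x f(x,y) \geq v > \alpha$ is attained by usc on the compact $X$. Any element of $\bigcap_{y \in Y} C_y$ is a witness $x^\star$ with $f(x^\star, y) \geq \alpha$ for all $y$, and letting $\alpha \uparrow v$ concludes. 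By the finite intersection property of compact sets in $X$, it suffices to prove $\bigcap_{i=1}^{n} C_{y_i} \neq \emptyset$ for every finite $y_1, \dots, y_n \in Y$.

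For the finite case, introduce the image region
\begin{equation}
P := \{(u_1, \dots, u_n) \in \mathbb{R}^n : \exists\, x \in X \text{ with } f(x, y_i) \geq u_i \text{ for all } i\}.
\end{equation}
Concavity in $x$ makes $P$ convex, upper semicontinuity in $x$ together with compactness of $X$ makes $P$ closed, and by construction $P$ is downward closed in each coordinate. The desired finite intersection is nonempty iff $(\alpha, \dots, \alpha) \in P$. If this fails, the separating hyperplane theorem (applied to the closed convex set $P$ and the compact singleton $\{(\alpha,\dots,\alpha)\}$) produces a non-zero $p = (p_1, \dots, p_n)$ with $\sup_{u \in P} \sum_i p_i u_i < \sum_i p_i \alpha$. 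Downward closedness of $P$ forces each $p_i \geq 0$; after normalizing $\sum_i p_i = 1$, the point $\bar{y} := \sum_i p_i y_i$ lies in $Y$ by convexity of $Y$. Now convexity of $f$ in $y$ from hypothesis (ii) gives
\begin{equation}
f(x, \bar{y}) \leq \sum_{i=1}^n p_i f(x, y_i) < \alpha, \qquad \forall\, x \in X,
\end{equation}
where the second inequality uses $(f(x, y_1), \dots, f(x, y_n)) \in P$ in the separating bound. Hence $\max_x f(x, \bar{y}) \leq \alpha < v$, contradicting the definition of $v$.

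The main obstacle is verifying the regularity of $P$ purely from the $x$-side hypotheses, since no continuity in $y$ is available: closedness requires compactness of $X$ together with usc in $x$ (to pass to limits of witnesses), and convexity requires concavity in $x$, while convexity in $y$ is reserved exclusively for the final contradiction step where $\bar{y}$ is constructed. A minor subtlety is that the separating normal $p$ must be coordinate-wise non-negative, which comes from the monotone structure of $P$ rather than from the separation theorem itself, and is precisely what allows $p$ to be interpreted as coefficients of a convex combination in $Y$.
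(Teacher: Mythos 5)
The paper does not prove this theorem; it is stated with a citation to Kneser (1952) and invoked as a black box (in \autoref{prop:unconstrainedduality}, \autoref{thm:strongdualityf1}, \autoref{thm:strongdualityf2}). There is therefore no paper proof to compare against, and your proof must be judged on its own.

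Your argument is correct, and it is essentially the classical Ky Fan/Kneser reduction: pass to superlevel sets $C_y$, reduce to finite intersections via compactness, encode the finite problem as the convex closed region $P\subset\R^n$, and derive a contradiction from a strictly separating functional by converting it (after observing nonnegativity from downward-closedness and normalizing) into a convex combination $\bar y\in Y$. Each regularity claim about $P$ is correctly sourced only from the $x$-side hypotheses: convexity from concavity of $f(\cdot,y)$ on the convex $X$, closedness from upper semicontinuity plus compactness, downward-closedness by construction; convexity of $f(x,\cdot)$ is used exactly once, at the end, and no semicontinuity in $y$ and no compactness of $Y$ are ever needed, which matches the hypotheses. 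The contradiction $v\le\max_x f(x,\bar y)\le\alpha<v$ closes the argument, and sending $\alpha\uparrow v$ yields $\sup_x\inf_y f\ge v$, which together with the trivial inequality gives equality; the replacement of $\sup_x$ by $\max_x$ on the right is justified by usc on the compact $X$. One point worth making explicit in a final write-up: the closedness of $P$ requires extracting a cluster point of the witnesses $x^{(k)}\in X$, and since $X$ is merely compact in a general locally convex space (not assumed metrizable), this extraction is via a convergent subnet rather than a subsequence; upper semicontinuity along the subnet then gives $f(x^\star,y_i)\ge u_i$. This is a standard but nonvacuous step and deserves a sentence. Likewise, the orientation of the separating normal (so that $P$ lies on the small side) should be stated as a harmless sign choice, as you implicitly do.
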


\begin{lemma} \label{lem:diff_entropy}
The KL divergence $D_{KL}(\nu||\tau) = \int \log \left(\frac{d\nu}{d\tau} \right) d\nu$ is convex and lower semicontinuous in $\nu$.
\end{lemma}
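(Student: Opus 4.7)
The plan is to establish the two properties separately using classical convex-analytic tools, understanding $\mathcal{P}(K) \subset \mathcal{M}(K)$ with the weak-$\ast$ topology (the natural one for applying \autoref{thm:kneser}), and extending the functional by $+\infty$ on measures that are not absolutely continuous with respect to $\tau$.

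For \textbf{convexity}, the key observation is that $\phi(t) := t \log t$, with $\phi(0) = 0$, is convex on $[0,\infty)$. Whenever $\nu \ll \tau$, we can write
\begin{equation}
D_{KL}(\nu \| \tau) \;=\; \int_K \phi\!\left(\frac{d\nu}{d\tau}(x)\right) d\tau(x).
\end{equation}
Given $\nu_1, \nu_2 \in \mathcal{P}(K)$ with $\nu_i \ll \tau$ and $\alpha \in [0,1]$, the density of $\alpha \nu_1 + (1-\alpha)\nu_2$ with respect to $\tau$ is the convex combination $\alpha \tfrac{d\nu_1}{d\tau} + (1-\alpha) \tfrac{d\nu_2}{d\tau}$. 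Applying convexity of $\phi$ pointwise and integrating against $\tau$ gives the desired inequality. When at least one of $\nu_1, \nu_2$ fails to be absolutely continuous w.r.t.\ $\tau$, the right-hand side is $+\infty$ and the inequality is trivial.

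For \textbf{lower semicontinuity}, I would invoke the Donsker--Varadhan variational representation, which in this setting reads
\begin{equation}
D_{KL}(\nu \| \tau) \;=\; \sup_{g \in C_b(K)} \left\{ \int_K g\, d\nu \;-\; \log \int_K e^{g}\, d\tau \right\}.
\end{equation}
For each fixed $g \in C_b(K)$, the functional $\nu \mapsto \int_K g\, d\nu - \log \int_K e^g d\tau$ is continuous in the weak-$\ast$ topology (the first term is affine and weakly continuous by definition of the topology, the second is a constant in $\nu$). A pointwise supremum of weakly continuous affine functionals is weakly lower semicontinuous, which gives the claim. The variational formula itself is classical and follows from Fenchel duality between the convex functional $\nu \mapsto D_{KL}(\nu\|\tau)$ and the log-moment-generating functional $g \mapsto \log \int e^g d\tau$, both viewed on the dual pair $(\mathcal{M}(K), C(K))$ when $K$ is compact.

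The only real obstacle is carefully justifying the variational representation in the given topological setting; this can be handled either by citing the standard statement (e.g., Dupuis--Ellis) or by directly verifying the two inequalities: the $\geq$ direction comes from optimizing $g = \log(d\nu/d\tau)$ (with a truncation argument when $\nu \ll \tau$ is not bounded away from zero), and the $\leq$ direction from Young's inequality $\int g\, d\nu \leq D_{KL}(\nu \| \tau) + \log \int e^g d\tau$, which is a consequence of Jensen's inequality applied to the tilted measure $e^g d\tau / \int e^g d\tau$.
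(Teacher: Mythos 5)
Your proof is correct, but takes a genuinely different route from the paper. The paper's entire proof is a citation to Theorem~1 of Posner (the reference \texttt{posner\_random}), which directly asserts convexity and lower semicontinuity of relative entropy. You instead give a self-contained argument: convexity via pointwise convexity of $t \mapsto t \log t$ applied to densities, which combine affinely under mixtures of absolutely continuous measures (with the singular case handled by the $+\infty$ convention); and lower semicontinuity via the Donsker--Varadhan representation $D_{KL}(\nu\|\tau) = \sup_{g \in C(K)}\{\int g\,d\nu - \log\int e^g\,d\tau\}$, writing the divergence as a pointwise supremum of weak-$\ast$ continuous affine functionals. Both approaches are standard; yours buys self-containment at the cost of justifying the variational formula, which you correctly flag as the only substantive step. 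One minor caveat in your direct-verification sketch of the $\geq$ direction: taking $g = \log(d\nu/d\tau)$ with truncation only covers $\nu \ll \tau$. When $\nu$ has a singular part, you must separately show the supremum diverges, which requires producing continuous test functions concentrating on a $\tau$-null set charged by $\nu$ (a Urysohn/regularity argument on the compact $K$), not a truncation of the density. This is handled automatically if you cite Dupuis--Ellis as you propose, so the argument is complete either way; I flag it only because the inline sketch as written does not cover that case.
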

\begin{proof}
See Theorem 1 of \cite{posner_random}. 
\end{proof}

\begin{observation} \label{obs:1}
Notice that for any functional $f : \mathcal{M}(K) \rightarrow \R$, 
we have 
\begin{align}
    &\min_{\nu \in \mathcal{P}(K)} f(\nu) = \min_{\nu \in \mathcal{P}(K)} f(\nu) - \int g(x) d\nu(x) + \lambda \left( \int d\nu(x) - 1 \right) \\ &= \min_{\nu \in \mathcal{M}(K)} \sup_{\lambda \in \R, g \in \mathcal{C}(K) : g \geq 0} f(\nu) - \int g(x) d\nu(x) + \lambda \left( \int d\nu(x) - 1 \right).
\end{align}
\end{observation}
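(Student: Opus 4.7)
My plan is to read Observation~\ref{obs:1} as a standard Lagrangian lifting of the two constraints that cut $\mathcal{P}(K)$ out of $\mathcal{M}(K)$: nonnegativity $\nu \geq 0$, dualized by a continuous multiplier $g \in \mathcal{C}(K)$ with $g \geq 0$, and unit total mass $\int d\nu = 1$, dualized by a scalar $\lambda \in \R$. Both claimed equalities then reduce to showing
\[
\sup_{\lambda \in \R,\; g \in \mathcal{C}(K),\; g \geq 0} \Bigl[ -\int_K g(x)\, d\nu(x) + \lambda\Bigl(\int_K d\nu(x) - 1\Bigr) \Bigr] \;=\; \begin{cases} 0 & \text{if } \nu \in \mathcal{P}(K), \\ +\infty & \text{if } \nu \in \mathcal{M}(K) \setminus \mathcal{P}(K). \end{cases}
\]

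The feasible case is immediate: for $\nu \in \mathcal{P}(K)$ the $\lambda$-term vanishes since $\int d\nu = 1$, while nonnegativity of both $\nu$ and $g$ gives $-\int g\, d\nu \leq 0$ with equality at $g = 0$, so the supremum equals $0$. This justifies the first displayed equality, since on $\mathcal{P}(K)$ the Lagrangian augmentation is a nonpositive perturbation whose supremum vanishes, so the minimum of the perturbed functional over $\mathcal{P}(K)$ equals $\min_{\mathcal{P}(K)} f$.

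For $\nu \in \mathcal{M}(K) \setminus \mathcal{P}(K)$ I would split into two cases. If $\int d\nu \neq 1$, taking $\lambda = t\cdot \mathrm{sign}(\int d\nu - 1)$ and letting $t \to +\infty$ drives the mass term to $+\infty$. Otherwise $\int d\nu = 1$ but the Jordan decomposition $\nu = \nu^+ - \nu^-$ has $\nu^- \neq 0$; by mutual singularity plus inner regularity of Radon measures there exists a compact $A$ with $\nu^-(A) > 0$ and $\nu^+(A) = 0$, and a compact neighborhood of $\mathrm{supp}(\nu^+)$ disjoint from $A$. Since $K$ is compact Hausdorff and therefore normal, Urysohn's lemma produces a continuous $g_0 : K \to [0,1]$ equal to $1$ on $A$ and $0$ on that neighborhood, whence $\int g_0\, d\nu \leq -\nu^-(A) < 0$; scaling $g_0$ by $t \to +\infty$ sends $-\int t g_0\, d\nu$ to $+\infty$.

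Combining the two cases, any $\nu \notin \mathcal{P}(K)$ contributes the value $+\infty$ to the outer minimum, so the minimum over $\mathcal{M}(K)$ of the sup-augmented functional coincides with $\min_{\mathcal{P}(K)} f$, which is the second displayed equality. The only step that is not purely algebraic is the Urysohn construction for the negative-mass case, needed because the multiplier $g$ must be continuous rather than merely measurable; this is the main technical hinge of the argument, and it goes through routinely on compact Hausdorff $K$.
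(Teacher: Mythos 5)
Your overall reading of the observation is the natural one and, since the paper gives no argument for it, the Lagrangian-lifting route you take is the standard proof of the claim. Both the feasible case ($\nu\in\mathcal P(K)$ gives supremum $0$) and the wrong-mass case ($\int d\nu\neq 1$, push $\lambda$ to $\pm\infty$) are correct.

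The case $\int d\nu=1$ but $\nu^-\neq 0$ has a genuine gap. You assert that after choosing a compact $A$ with $\nu^-(A)>0$ and $\nu^+(A)=0$, there is also a compact neighborhood of $\operatorname{supp}(\nu^+)$ disjoint from $A$. This does not follow: $\nu^+(A)=0$ gives no control on $A\cap\operatorname{supp}(\nu^+)$. For example, take $K=[0,1]$, $\nu^+=2\cdot\mathrm{Leb}$, $\nu^-=\delta_{1/2}$, so $\nu\in\mathcal M(K)\setminus\mathcal P(K)$ with total mass $1$; the only reasonable $A$ is $\{1/2\}$, which sits inside $\operatorname{supp}(\nu^+)=[0,1]$, so no neighborhood of $\operatorname{supp}(\nu^+)$ can avoid it. The Urysohn step as stated therefore cannot be carried out.

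The fix is to use outer regularity of $\nu^+$ rather than separation of supports. Since $\nu^+$ is a finite Radon measure on the compact metric $K$ and $\nu^+(A)=0$, there is an open set $U\supseteq A$ with $\nu^+(U)<\tfrac12\nu^-(A)$. The sets $A$ and $K\setminus U$ are disjoint compacts, so Urysohn gives a continuous $g_0:K\to[0,1]$ with $g_0\equiv 1$ on $A$ and $g_0\equiv 0$ off $U$. Then
\begin{align}
\int g_0\,d\nu \;=\;\int g_0\,d\nu^+ - \int g_0\,d\nu^- \;\le\; \nu^+(U)-\nu^-(A)\;<\;-\tfrac12\nu^-(A)\;<\;0,
\end{align}
and scaling $g_0\mapsto t g_0$, $t\to\infty$, drives $-\int t g_0\,d\nu$ to $+\infty$, which is what the argument needs. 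With this repair the proof is complete.
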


\unconstrainedduality*

\begin{proof}
We start with \eqref{eq:unconstrained_f1_minimizer}. First, we prove that it is indeed the dual problem of \eqref{eq:unconstrained}. As stated in the main text, the problem \eqref{eq:unconstrained} admits a Lagrangian $L_1 : \mathcal{M}(K) \times \mathcal{M}(\mathbb{S}^d) \times \mathcal{C}(K) \times \R \rightarrow \R$ defined as 
\begin{align}
\begin{split} \label{eq:L_1_def}
    L_1(\nu, \mu, g, \lambda) &= \beta^{-1} \int \log \left( \frac{d\nu}{d\tau}(x) \right) d\nu(x) + \int \left( \int \sigma(\langle \theta, x \rangle) \ d\nu(x) - \frac{1}{n} \sum_{i=1}^n \sigma(\langle \theta, x_i \rangle) \right) \ d\mu(\theta) - \int g(x) d\nu(x) \\ &+ \lambda \left( \int d\nu(x) - 1 \right)
\end{split}
\end{align}
The Lagrange dual function is 
\begin{align}
\begin{split} \label{eq:lagrange_df}
    F_1(\mu, g, \lambda) = \inf_{\nu \in \mathcal{M}(K)} L_1(\nu, \mu, g, \lambda) &= - \beta^{-1} \int \exp \left( - \beta \left( \int \sigma(\langle \theta, x \rangle) \ d\mu(\theta) + g(x) - \lambda \right) -1 \right) \ d\tau(x) \\ &- \frac{1}{n} \sum_{i=1}^n \int \sigma(\langle \theta, x_i \rangle) \ d\mu(\theta)  - \lambda,
\end{split}
\end{align}
where we have used that at the optimal $\nu$, the first variation of $L_1$ w.r.t. $\nu$ must be zero:
\begin{align}
\begin{split}
    &0 = \frac{d}{d\nu} L_1(\nu, \mu, g, \lambda) = \beta^{-1} \log \left( \frac{d\nu}{d\tau}(x) \right) + \beta^{-1} + \int \sigma(\langle \theta, x \rangle) \ d\mu(\theta) - g(x) + \lambda, \\ 
    &\implies \frac{d\nu}{d\tau}(x) = \exp \left( - \beta \left( \int \sigma(\langle \theta, x \rangle) \ d\mu(\theta) + g(x) - \lambda \right) -1 \right).
\end{split}
\end{align}
The Lagrange dual problem is
\begin{align}
\begin{split} \label{eq:lagrange_dual_p}
    &\sup_{\mu, \lambda, g \geq 0} F_1(\mu, g, \lambda) 
    \\ &= \sup_{\mu, \lambda, g \geq 0}  - \beta^{-1} \int \exp \left( - \beta \left( \int \sigma(\langle \theta, x \rangle) \ d\mu(\theta) + g(x) - \lambda \right) -1 \right) \ d\tau(x) - \frac{1}{n} \sum_{i=1}^n \int \sigma(\langle \theta, x_i \rangle) \ d\mu(\theta)  - \lambda 
    \\ &= \sup_{\mu, \lambda}   - \beta^{-1} \int \exp \left( - \beta \left( \int \sigma(\langle \theta, x \rangle) \ d\mu(\theta) - \lambda \right) -1 \right) \ d\tau(x) - \frac{1}{n} \sum_{i=1}^n \int \sigma(\langle \theta, x_i \rangle) \ d\mu(\theta)  - \lambda 
    \\ &= \sup_{\mu \in \mathcal{M}(\mathbb{S}^d)} - \frac{1}{n} \sum_{i=1}^n \int \sigma(\langle \theta, x_i \rangle) \ d\mu(\theta)  - \frac{1}{\beta} \log \left( \int \exp \left( - \beta \int \sigma(\langle \theta, x \rangle) \ d\mu(\theta) \right) \ d\tau(x) \right),
\end{split}
\end{align}
and the right-hand side is precisely \eqref{eq:unconstrained_f1_minimizer}. In the second equality we used that the optimal choice for $g$ is $g = 0$. In the third equality we used that the optimal $\lambda$ must satisfy the first-order optimality condition:
\begin{align}
\begin{split}
    &\int \exp \left( - \beta \left( \int \sigma(\langle \theta, x \rangle) \ d\mu(\theta) - \lambda \right) - 1 \right) \ d\tau(x) - 1 = 0, \\
    &\implies e^{\beta \lambda} = \left( \int \exp \left( - \beta \int \sigma(\langle \theta, x \rangle) \ d\mu(\theta) - 1 \right) \ d\tau(x) \right)^{-1} \\ &\implies \lambda = - \frac{1}{\beta} \log \left( \int \exp \left( - \beta \int \sigma(\langle \theta, x \rangle) \ d\mu(\theta) - 1 \right) \ d\tau(x) \right) 
\end{split}
\end{align}
To prove strong duality, we need to show that
\begin{align} \label{eq:strong_duality_unconstrained}
    \inf_{\nu \in \mathcal{M}(K)} \sup_{\mu \in \mathcal{M}(\mathbb{S}^d), \lambda \in \R, g \in \mathcal{C}(K) : g \geq 0} L_1(\nu, \mu, g, \lambda) = \sup_{\mu \in \mathcal{M}(\mathbb{S}^d), \lambda \in \R, g \in \mathcal{C}(K) : g \geq 0} \inf_{\nu \in \mathcal{M}(K)} L_1(\nu, \mu, g, \lambda).
\end{align}
If we define $\tilde{L}_1 : \mathcal{P}(K) \times \mathcal{M}(\mathbb{S}^d) \rightarrow \R$ as $\tilde{L}_1(\nu, \mu) = L_1(\nu, \mu, 0, 0)$, we have that the assumptions of \autoref{thm:kneser} hold for $-\tilde{L}_1$. Indeed, by \autoref{lem:diff_entropy} we have that $-\tilde{L}_1(\cdot, \mu)$ is a concave and upper semicontinuous function of $\nu$.
And by Prokhorov's theorem, $\mathcal{P}(K)$ is a compact subset of the locally convex topological vector space of signed Radon measures with the topology of weak convergence (tightness follows from the fact that $K$ is compact). Thus,
\begin{align} \label{eq:kneser_result}
    \inf_{\nu \in \mathcal{P}(K)} \sup_{\mu \in \mathcal{M}(\mathbb{S}^d)} \tilde{L}_1(\nu, \mu) = \sup_{\mu \in \mathcal{M}(\mathbb{S}^d)} \min_{\nu \in \mathcal{P}(K)} \tilde{L}_1(\nu, \mu).
\end{align}
On the one hand, notice that by \autoref{obs:1},
\begin{align} \label{eq:rhs}
    \inf_{\nu \in \mathcal{M}(K)} \sup_{\mu \in \mathcal{M}(\mathbb{S}^d), \lambda \in \R, g \in \mathcal{C}(K) : g \geq 0} L_1(\nu, \mu, g, \lambda) = \inf_{\nu \in \mathcal{P}(K)} \sup_{\mu \in \mathcal{M}(\mathbb{S}^d)} L_1(\nu, \mu, 0, 0) = \inf_{\nu \in \mathcal{P}(K)} \sup_{\mu \in \mathcal{M}(\mathbb{S}^d)} \tilde{L}_1(\nu, \mu)
\end{align}
On the other hand,
\begin{align}
\begin{split} \label{eq:lhs}
    \sup_{\mu \in \mathcal{M}(\mathbb{S}^d), \lambda \in \R, g \in \mathcal{C}(K) : g \geq 0} \inf_{\nu \in \mathcal{M}(K)} L_1(\nu, \mu, g, \lambda) &= \sup_{\mu \in \mathcal{M}(\mathbb{S}^d)} \inf_{\nu \in \mathcal{M}(K)} \sup_{\lambda \in \R, g \in \mathcal{C}(K) : g \geq 0} L_1(\nu, \mu, g, \lambda) \\ &= \sup_{\mu \in \mathcal{M}(\mathbb{S}^d)} \min_{\nu \in \mathcal{P}(K)} L_1(\nu, \mu, 0, 0) = \sup_{\mu \in \mathcal{M}(\mathbb{S}^d)} \min_{\nu \in \mathcal{P}(K)} \tilde{L}_1(\nu, \mu).
\end{split}
\end{align}
where we have used \autoref{lem:nu_lambda_g} in the first equality, \autoref{obs:1} in the second equality and the definition of $\tilde{L}_1$ in the third equality.
Thus, the strong duality \eqref{eq:strong_duality_unconstrained} follows from plugging \eqref{eq:rhs} and \eqref{eq:lhs} into \eqref{eq:kneser_result}.

To show that \eqref{eq:unconstrained_f2_minimizer} is also a dual problem of \eqref{eq:unconstrained}, we consider the Lagrangian $L_2 : \mathcal{M}(K) \times L^2(\mathbb{S}^d) \times \mathcal{C}(K) \times \R \rightarrow \R$
defined as 
\begin{align} 
\begin{split} \label{eq:L_2_def}
    L_2(\nu, h, g, \lambda) &= \beta^{-1} \int \log \left( \frac{d\nu}{d\tau}(x) \right) d\nu(x) + \int \left( \int \sigma(\langle \theta, x \rangle) \ d\nu(x) - \frac{1}{n} \sum_{i=1}^n \sigma(\langle \theta, x_i \rangle) \right) h(\theta)\ d\tilde{\tau}(\theta) - \int g(x) d\nu(x) \\ &+ \lambda \left( \int d\nu(x) - 1 \right) - r \left( \int h^2(\theta) \ d\tilde{\tau}(\theta) - 1 \right)
\end{split}
\end{align}
The reasoning to obtain the dual problem \eqref{eq:unconstrained_f2_minimizer} is analogous. Strong duality in this case can be stated as
\begin{align} \label{eq:strong_duality_unconstrained_2}
    \inf_{\nu \in \mathcal{M}(K)} \sup_{h \in L^2(\mathbb{S}^d), \lambda \in \R, g \in \mathcal{C}(K) : g \geq 0} L_2(\nu, h, g, \lambda) = \sup_{h \in L^2(\mathbb{S}^d), \lambda \in \R, g \in \mathcal{C}(K) : g \geq 0} \inf_{\nu \in \mathcal{M}(K)} L_1(\nu, h, g, \lambda).
\end{align}
Analogously, we define $\tilde{L}_2 : \mathbb{P}(K) \times L^2(\mathbb{S}^d) \rightarrow \R$ as $\tilde{L}_2(\nu, h) = L_2(\nu, h, 0, 0)$, and we have that the assumptions of \autoref{thm:kneser} hold for $-\tilde{L}_2$ as well, implying that $\inf_{\nu \in \mathcal{P}(K)} \sup_{h \in L^2(\mathbb{S}^d)} \tilde{L}_2(\nu, h) = \sup_{h \in L^2(\mathbb{S}^d)} \min_{\nu \in \mathcal{P}(K)} \tilde{L}_2(\nu, h).$ The concluding argument is also analogous.
\end{proof}

\begin{lemma} \label{lem:nu_lambda_g}
For all $\mu \in \mathcal{M}(\mathbb{S}^d)$,
\begin{align}
\sup_{\lambda \in \R, g \in \mathcal{C}(K) : g \geq 0} \inf_{\nu \in \mathcal{M}(K)} L_1(\nu, \mu, g, \lambda) = \min_{\nu \in \mathcal{M}(K)} \sup_{\lambda \in \R, g \in \mathcal{C}(K) : g \geq 0} L_1(\nu, \mu, g, \lambda).
\end{align}
\end{lemma}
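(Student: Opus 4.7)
My strategy is to evaluate both sides of the claimed minimax identity in closed form for fixed $\mu \in \mathcal{M}(\mathbb{S}^d)$ and verify they coincide, thus bypassing the need for an abstract minimax theorem on the non-compact domain $\{(g,\lambda) \in \mathcal{C}(K) \times \mathbb{R} : g \geq 0\}$. I abbreviate $F_\mu(x) := \int_{\mathbb{S}^d} \sigma(\langle \theta, x \rangle)\, d\mu(\theta)$, $C_\mu := \frac{1}{n}\sum_{i=1}^n \int_{\mathbb{S}^d}\sigma(\langle \theta, x_i \rangle)\, d\mu(\theta)$, and $Z_\mu := \int_K e^{-\beta F_\mu(x)}\, d\tau(x)$; the last quantity lies in $(0,\infty)$ since $K$ is compact and $F_\mu$ is continuous and bounded. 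I adopt the standard convention that $L_1(\nu,\mu,g,\lambda) = +\infty$ whenever $\nu$ is not a non-negative measure absolutely continuous w.r.t.\ $\tau$ with finite relative entropy, which makes $L_1$ a well-defined extended-real-valued convex functional on $\mathcal{M}(K)$.

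\textbf{Evaluating the right-hand side.} By \autoref{obs:1} applied with $f(\nu) := \beta^{-1}D_{KL}(\nu || \tau) + \int F_\mu\, d\nu - C_\mu$, the inner supremum over $g \geq 0$ and $\lambda \in \mathbb{R}$ enforces the constraints $\nu \geq 0$ and $\int d\nu = 1$, so the RHS reduces to $\min_{\nu \in \mathcal{P}(K)}\{\beta^{-1} D_{KL}(\nu || \tau) + \int F_\mu\, d\nu\} - C_\mu$. The classical Gibbs variational identity $\inf_{\nu \in \mathcal{P}(K)}\{D_{KL}(\nu || \tau) + \int h\, d\nu\} = -\log \int e^{-h}\, d\tau$, attained at the Gibbs measure $d\nu^\star/d\tau \propto e^{-h}$, applied with $h = \beta F_\mu$, then yields $\mathrm{RHS} = -\beta^{-1}\log Z_\mu - C_\mu$.

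\textbf{Evaluating the left-hand side.} For fixed admissible $(g,\lambda)$, $L_1(\cdot, \mu, g, \lambda)$ is strictly convex in $\nu$, and the first-order calculation already executed in deriving \eqref{eq:lagrange_df} gives the unique minimizer $d\nu^\star/d\tau(x) = \exp(-\beta(F_\mu(x) - g(x) + \lambda) - 1)$ and value $\inf_\nu L_1(\nu,\mu,g,\lambda) = -\beta^{-1}\int_K e^{-\beta(F_\mu(x) - g(x) + \lambda) - 1}\, d\tau(x) - C_\mu - \lambda$. This quantity is pointwise non-increasing in $g$ on $\{g \geq 0\}$ (increasing $g$ only enlarges the exponent and hence the integral), so the outer supremum is attained at $g \equiv 0$. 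Maximizing the scalar residual $-\beta^{-1} Z_\mu e^{-\beta\lambda - 1} - \lambda - C_\mu$ over $\lambda \in \mathbb{R}$ via the first-order condition $Z_\mu e^{-\beta\lambda - 1} = 1$ gives $\lambda^\star = \beta^{-1}(\log Z_\mu - 1)$ and value $-\beta^{-1}\log Z_\mu - C_\mu$, which matches the RHS.

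\textbf{Main obstacle.} The principal subtlety is making the inner infimum over all of $\mathcal{M}(K)$ unambiguous: without the $+\infty$ convention above, $\int \log(d\nu/d\tau)\, d\nu$ would be undefined on signed or non-absolutely-continuous measures, and the minimization could even be interpreted as an unbounded-below problem. Once the convention is adopted, strict convexity of the KL functional together with continuity and boundedness of $F_\mu - g + \lambda$ on the compact set $K$ guarantee the existence and uniqueness of the Gibbs-form optimizer in $\nu$ at every step, and the proof reduces to the explicit matching of the two closed-form expressions carried out above; no further abstract minimax or compactness argument is required.
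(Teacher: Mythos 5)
Your proof is correct and takes essentially the same route as the paper: both sides are reduced to the same closed-form value $-\beta^{-1}\log Z_\mu - C_\mu$ by explicitly solving the inner/outer optimizations (Euler--Lagrange/Gibbs form for the minimization over $\nu$, and then, on the LHS, observing $g=0$ is optimal and solving for $\lambda$ in closed form), with Observation~\ref{obs:1} handling the RHS exactly as in the paper. (As a side note, your sign convention $\exp(-\beta(F_\mu - g + \lambda)-1)$ is the correct Euler--Lagrange stationary point; the paper's display in \eqref{eq:lagrange_df} has a typo with $g-\lambda$ in place of $-g+\lambda$, though this does not propagate into the final value.)
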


\begin{proof}
First, notice that by \eqref{eq:lagrange_df} and \eqref{eq:lagrange_dual_p},
\begin{align}
\begin{split} \label{eq:max_min_eq}
    &\sup_{\lambda \in \R, g \in \mathcal{C}(K) : g \geq 0} \inf_{\nu \in \mathcal{M}(K)} L_1(\nu, \mu, g, \lambda) = \sup_{\lambda \in \R, g \in \mathcal{C}(K) : g \geq 0} F_1(\mu,g,\lambda) \\ &=
    - \frac{1}{n} \sum_{i=1}^n \int \sigma(\langle \theta, x_i \rangle) \ d\mu(\theta)  - \frac{1}{\beta} \log \left( \int \exp \left( - \beta \int \sigma(\langle \theta, x \rangle) \ d\mu(\theta) \right) \ d\tau(x) \right),
\end{split}
\end{align}
And by \autoref{obs:1},
\begin{align} 
\begin{split} \label{eq:min_max_eq}
    \min_{\nu \in \mathcal{M}(K)} \sup_{\lambda \in \R, g \in \mathcal{C}(K) : g \geq 0} L_1(\nu, \mu, g, \lambda) = \min_{\nu \in \mathcal{P}(K)} L_1(\nu, \mu, 0, 0) = \min_{\nu \in \mathcal{P}(K)} L_1(\nu, \mu, 0, 0)
\end{split}
\end{align}
If $\nu^{\star} \in \mathcal{P}(K)$ is a minimizer of $\min_{\nu \in \mathcal{P}(K)} L_1(\nu, \mu, 0, 0)$, it must fulfill
\begin{align}
\begin{split}
    &\exists C \in \R: \quad C = \frac{dL_1}{d\nu}(\nu^{\star},\mu,0,0) = \beta^{-1} \log \left( \frac{d\nu}{d\tau}(x) \right) + \beta^{-1} + \int \sigma(\langle \theta, x \rangle) \ d\mu(\theta), \\ 
    &\implies \frac{d\nu^{\star}}{d\tau(x)}(x) = \exp \left( - \beta \int \sigma(\langle \theta, x \rangle) \ d\mu(\theta) + \beta C - 1 \right),
\end{split}
\end{align}
where $- \beta C + 1 = \log \left(\int_K \exp \left( - \beta \int \sigma(\langle \theta, x \rangle) \ d\mu(\theta) \right) d\tau(x) \right)$. Hence, 
\begin{align}
\begin{split}
    &L_1(\nu^{\star}, \mu, 0, 0) = \beta^{-1} \int \log \left( \frac{d\nu^{\star}}{d\tau(x)}(x) \right) d\nu^{\star}(x) + \int \left( \int \sigma(\langle \theta, x \rangle) \ d\nu^{\star}(x) - \frac{1}{n} \sum_{i=1}^n \sigma(\langle \theta, x_i \rangle) \right) \ d\mu(\theta) \\ &= - \frac{1}{n} \sum_{i=1}^n \int \sigma(\langle \theta, x_i \rangle) \ d\mu(\theta) + \int \left(C - \beta^{-1} \right) \ d\nu^{\star}(x)
    \\ &= - \frac{1}{n} \sum_{i=1}^n \int \sigma(\langle \theta, x_i \rangle) \ d\mu(\theta) - \frac{1}{\beta} \log \left(\int_K \exp \left( - \beta \int \sigma(\langle \theta, x \rangle) \ d\mu(\theta) \right) d\tau(x) \right)
\end{split}
\end{align}
If we plug this into the right-hand side of \eqref{eq:min_max_eq}, we obtain exactly the right-hand side of \eqref{eq:max_min_eq}, concluding the proof. 
\end{proof}

\begin{restatable}{thm}{fenchel} [Fenchel strong duality; \citet{borwein2005techniques}, pp. 135-137] 
\label{thm:fenchel}
Let $X$ and $Y$ be Banach spaces, 
$f:X\to \mathbb {R} \cup \{+\infty \}$ and 
$g:Y\to \mathbb {R} \cup \{+\infty \}$ be convex functions and 
$A:X\to Y$ be a bounded linear map. Define the Fenchel problems:
\begin{align}
\begin{split}
p^{*} &= \inf _{x\in X}\{f(x)+g(Ax)\} \\
d^{*} &= \sup _{y^{*}\in Y^{*}}\{-f^{*}(A^{*}y^{*})-g^{*}(-y^{*})\},
\end{split}
\end{align}
where $f^{*}(x^{*}) = \sup_{x \in X} \{ \langle x, x^{*} \rangle - f(x)\}, \ g^{*}(y^{*}) = \sup_{y \in Y} \{ \langle y, y^{*} \rangle - g(y)\}$ are the convex conjugates of $f,g$ respectively, and $A^{*} : Y^{*} \to X^{*}$ is the adjoint operator. Then, $p^{*} \geq d^{*}$. Moreover if $f,g,$ and $A$ satisfy either
\begin{enumerate}
\item $f$ and $g$ are lower semi-continuous and 
$0\in \operatorname{core} (\operatorname{dom} g-A \operatorname{dom} f)$ where 
$\operatorname{core}$ is the algebraic interior and 
$\operatorname{dom} h$, where $h$ is some function, is the set 
$\{z:h(z)<+\infty \}$, 
\item or
$A\operatorname{dom} f\cap \operatorname{cont} g\neq \emptyset$  where 
$\operatorname{cont}$ are is the set of points where the function is continuous.
\end{enumerate}
Then strong duality holds, i.e. 
$p^{*}=d^{*}$. If 
$d^{*}\in \mathbb {R}$ then supremum is attained.
\end{restatable}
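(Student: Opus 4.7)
The plan is to follow the standard value-function (perturbation) approach to Fenchel duality. First I would establish weak duality $p^{*} \geq d^{*}$ as a direct consequence of the Fenchel--Young inequality. For any $x \in X$ and $y^{*} \in Y^{*}$,
\begin{align}
f(x) + g(Ax) &\geq \left(\langle A^{*}y^{*}, x \rangle - f^{*}(A^{*}y^{*})\right) + \left(\langle -y^{*}, Ax \rangle - g^{*}(-y^{*})\right) \\
&= -f^{*}(A^{*}y^{*}) - g^{*}(-y^{*}),
\end{align}
where the linear terms cancel by $\langle A^{*}y^{*}, x \rangle = \langle y^{*}, Ax \rangle$. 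Taking $\inf_{x}$ on the left and $\sup_{y^{*}}$ on the right yields $p^{*} \geq d^{*}$.

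For strong duality under condition (1), I would introduce the \emph{perturbation function} $v : Y \to \R \cup \{\pm\infty\}$ defined by $v(y) := \inf_{x \in X}\{f(x) + g(Ax + y)\}$, so that $p^{*} = v(0)$. Convexity of $f$ and $g$ together with linearity of $A$ gives convexity of $v$, with $\operatorname{dom} v = \operatorname{dom} g - A\operatorname{dom} f$. The core hypothesis $0 \in \operatorname{core}(\operatorname{dom} v)$, together with lower semicontinuity of $f$ and $g$, then guarantees that $v$ is proper and subdifferentiable at $0$. This subdifferentiability step is the main obstacle: in an infinite-dimensional Banach space it requires a Hahn--Banach / separation argument producing a continuous linear functional that supports the epigraph of $v$ at $(0, v(0))$, and it is precisely here that the core condition is used to rule out recession directions that would prevent such a supporting hyperplane from existing as a genuine element of $Y^{*}$ rather than an extended linear form.

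Given any element of $\partial v(0)$, written $-y^{*} \in Y^{*}$ by the usual sign convention, the defining subgradient inequality $v(y) \geq v(0) - \langle y^{*}, y \rangle$ for every $y \in Y$ unfolds as
\begin{equation}
p^{*} = v(0) \leq \inf_{x \in X,\, y \in Y}\{f(x) + g(Ax + y) + \langle y^{*}, y \rangle\}.
\end{equation}
Substituting $z = Ax + y$ decouples the right-hand side into $\inf_{x}\{f(x) - \langle A^{*}y^{*}, x \rangle\} + \inf_{z}\{g(z) + \langle y^{*}, z \rangle\} = -f^{*}(A^{*}y^{*}) - g^{*}(-y^{*}) \leq d^{*}$. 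Combined with weak duality this gives $p^{*} = d^{*}$, and simultaneously shows the supremum defining $d^{*}$ is attained at $y^{*}$. Finally, condition (2) is strictly stronger than condition (1): continuity of $g$ at some point of $A\operatorname{dom} f$ forces a neighbourhood of $0$ to lie in $\operatorname{dom} g - A\operatorname{dom} f$, hence $0 \in \operatorname{core}(\operatorname{dom} g - A\operatorname{dom} f)$, so case (2) reduces to case (1) without any separate argument.
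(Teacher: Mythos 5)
The paper cites this result from Borwein and Zhu without proof, so there is no paper argument to compare against; I will instead evaluate the sketch on its own terms. Your overall route — the perturbation function $v(y)=\inf_x\{f(x)+g(Ax+y)\}$, weak duality via Fenchel--Young, and then extracting $p^*=d^*$ from a subgradient $-y^*\in\partial v(0)$ — is the standard one and those computational steps are correct. There are, however, two genuine gaps.

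First, the step ``the core hypothesis, together with lsc of $f$ and $g$, guarantees that $v$ is subdifferentiable at $0$'' is essentially the entire technical content of the theorem, and your explanation of it (that the core condition ``rules out recession directions'' so that a Hahn--Banach separating functional lies in $Y^*$) does not identify the actual mechanism. In an infinite-dimensional Banach space the algebraic interior of a convex set need not have topological interior, and a convex lsc function can fail to be subdifferentiable at a point of $\operatorname{core}(\operatorname{dom})$. The argument in Borwein--Zhu uses completeness crucially: lsc of $f,g$ plus the core condition feed into a Baire-category / open-mapping argument (a ``decoupling lemma,'' closely related to the Robinson--Ursescu theorem) that upgrades algebraic interiority to topological interiority, i.e.\ local boundedness and hence continuity of $v$ at $0$; only then does Hahn--Banach produce a continuous subgradient. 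Without this step the proof has a hole. Second, the claim that condition (2) is ``strictly stronger'' than condition (1) and ``reduces to case (1) without any separate argument'' is wrong: condition (2) does not assume lower semicontinuity of $f$ and $g$, so it does not imply condition (1). In fact (2) is handled more directly and more elementarily: if $g$ is continuous at $Ax_0$ with $x_0\in\operatorname{dom} f$, then $v(y)\le f(x_0)+g(Ax_0+y)$ bounds $v$ above on a neighborhood of $0$, so $v$ is continuous (hence subdifferentiable) at $0$ with no appeal to Baire category or lsc at all.
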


\strongdualityfone*
\begin{proof}
One way to prove \autoref{thm:strongdualityf1} (and \autoref{thm:strongdualityf2}) would be to develop an argument based on a modification of the Lagrangian function $L_1$ (resp. $L_2$) that encodes the $\mathcal{F}_1$ restriction (resp. $\mathcal{F}_2$), and to reduce the problem once again to a min-max duality result like \autoref{thm:kneser}. However, this method turns out to be rather cumbersome, and we resort to an alternative approach that harnesses the power of Fenchel duality theory and yields a much faster proof. In fact, our proof structure is similar to Theorem 12.2 of \citet{mohri2012foundations}, which focuses on the finite-dimensional case and deals with a slightly different problem. As shown by \autoref{thm:fenchel}, the Fenchel strong duality sufficient conditions are very similar in the Euclidean and in the Banach space settings.

We will use \autoref{thm:fenchel} with $X = \mathcal{M}(K)$, i.e. the Banach space of signed Radon measures, and $Y = \mathcal{C}(\mathbb{S}^d)$, the Banach space of continuous functions on $\mathbb{S}^d$. Define $f: \mathcal{M}(K) \to \R \cup \{+\infty \}$ as
\begin{align}
    f(\nu) = 
    \begin{cases}
        \beta^{-1} D_{KL}(\nu||\tau) &\text{if } \nu \in \mathcal{P}(K), \\
        +\infty &\text{otherwise}
    \end{cases}
\end{align}
Define $g: \mathcal{C}(\mathbb{S}^d) \to \R \cup \{+\infty \}$ as
\begin{align}
    g(\phi) = \max_{\theta \in \mathbb{S}^d} \left|\phi(\theta) - \int_{K} \sigma(\langle \theta, x \rangle) \ d\nu_n(x) \right|,  
\end{align}
and $A : \mathcal{M}(K) \to \mathcal{C}(\mathbb{S}^d)$ as $(A \nu)(\theta) = \int_{K} \sigma(\langle \theta, x \rangle) d\nu(x)$. Remark that $A$ is a bounded linear operator, which implies that it has an adjoint operator. By the Riesz-Markov-Kakutani representation theorem, we have that $\mathcal{C}(\mathbb{S}^d)^{*} = \mathcal{M}(\mathbb{S}^d)$, which means that the adjoint of $A$ is of the form $A^{*} : \mathcal{M}(\mathbb{S}^d) \to \mathcal{M}(K)^{*}$. By the definition of the adjoint operator, we have that for any $\mu \in \mathcal{M}(\mathbb{S}^d), \nu \in \mathcal{M}(K)$,
\begin{align} \label{eq:A_adjoint}
    \langle A^{*} \mu, \nu \rangle_{\mathcal{M}(K)^{*}, \mathcal{M}(K)} = \langle \mu, A \nu \rangle_{\mathcal{M}(\mathbb{S}^d), \mathcal{C}(\mathbb{S}^d)} = \int_{\mathbb{S}^d} \int_{K} \sigma(\langle \theta, x \rangle) \ d\nu(x) d\mu(\theta) = \int_{K} \int_{\mathbb{S}^d} \sigma(\langle \theta, x \rangle) \ d\mu(\theta) d\nu(x)
\end{align}
Notice that $\mathcal{C}(K) \subseteq \mathcal{M}(K)^{*}$ by the fact that a vector space has a natural embedding in its continuous bidual (but the continuous bidual is in general larger). Through this identification, \eqref{eq:A_adjoint} implies that we can write $A^{*} \mu (x) = \int_{\mathbb{S}^d} \sigma(\langle \theta, x \rangle) d\mu(\theta)$. 

Our goal now is to compute the convex conjugates $f^{*}$ and $g^{*}$. By the argument of Lemma B.37 of \cite{mohri2012foundations}, which works in the infinite-dimensional case as well, the convex conjugate $f^{*}: \mathcal{C}(K) \to \R \cup \{+\infty \}$ is shown to be:
\begin{align}
    f^{*}(\psi) = \frac{1}{\beta} \log \left(\int_K \exp \left(\beta \psi(x) \right) d\tau(x) \right)
\end{align}
Remark that $f^{*}$ has domain $\mathcal{M}(K)^{*}$, which is larger than $\mathcal{C}(K)$. However, knowing the restriction of $f^{*}$ to $\mathcal{C}(K)$ will suffice for our purposes.

Moreover, $g^{*} : \mathcal{M}(\mathbb{S}^d) \to \R \cup \{+\infty \}$ fulfills:
\begin{align}
\begin{split}
    g^{*}(\mu) &= \sup_{\phi \in \mathcal{C}(\mathbb{S}^d)} \left\{\int_{\mathbb{S}^d} \phi \ d\mu - \max_{\theta \in \mathbb{S}^d} \left|\phi(\theta) - \int_{K} \sigma(\langle \theta, x \rangle) \ d\nu_n(x) \right| \right\} 
    \\ &= \sup_{\phi \in \mathcal{C}(\mathbb{S}^d)} \left\{\int_{\mathbb{S}^d} \phi \ d\mu - \sup_{\substack{\mu' \in \mathcal{M}(\mathbb{S}^d), \\ |\mu'|_{TV} \leq 1}} \int_{\mathbb{S}^d} \left(\phi(\theta) - \int_{K} \sigma(\langle \theta, x \rangle) \ d\nu_n(x) \right) d\mu'(\theta) \right\}
    \\ &= \sup_{\phi \in \mathcal{C}(\mathbb{S}^d)} \inf_{\substack{\mu' \in \mathcal{M}(\mathbb{S}^d), \\ |\mu'|_{TV} \leq 1}} \left\{\int_{\mathbb{S}^d} \phi(\theta) \ d(\mu-\mu')(\theta) + \int_{\mathbb{S}^d} \int_{K} \sigma(\langle \theta, x \rangle) \ d\nu_n(x) d\mu'(\theta) \right\}
    \\ &= \inf_{\substack{\mu' \in \mathcal{M}(\mathbb{S}^d), \\ |\mu'|_{TV} \leq 1}} \sup_{\phi \in \mathcal{C}(\mathbb{S}^d)} \left\{\int_{\mathbb{S}^d} \phi(\theta) \ d(\mu-\mu')(\theta) + \int_{\mathbb{S}^d} \int_{K} \sigma(\langle \theta, x \rangle) \ d\nu_n(x) d\mu'(\theta) \right\}
    \\ &= 
    \begin{cases}
        \int_{\mathbb{S}^d} \int_{K} \sigma(\langle \theta, x \rangle) \ d\nu_n(x) d\mu(\theta) &\text{if } |\mu|_{TV} \leq 1\\
        +\infty &\text{otherwise}
    \end{cases}
\end{split}
\end{align}
In the first equality we have used the definition of $g$, in the fourth equality we have used \autoref{thm:kneser} (remark that $\{\mu' \in \mathcal{M}(\mathbb{S}^d) : |\mu'|_{TV} \leq 1\}$ is compact in the weak convergence topology), and in the fifth equality we have used that $\sup_{\phi \in \mathcal{C}(\mathbb{S}^d)} \left\{\int_{\mathbb{S}^d} \phi(\theta) \ d(\mu-\mu')(\theta) \right\} = +\infty$ unless $\mu=\mu'$. 

With these definitions, notice that problem \eqref{eq:primal_problem_f1} can be rewritten as $\inf_{\nu \in \mathcal{M}(K)} \{f(\nu)+g(A\nu)\}$ and problem \eqref{eq:dual_problem_f1} can be rewritten as $\sup_{\mu \in \mathcal{M}(\mathbb{S}^d)} \{-f^{*}(-A^{*}\mu)-g^{*}(\mu)\}$. Thus, strong duality between \eqref{eq:primal_problem_f1} and \eqref{eq:dual_problem_f1} follows from Fenchel strong duality, which holds by checking condition 2 of \autoref{thm:fenchel}. We have to see that $A\operatorname{dom} f\cap \operatorname{cont} g\neq \emptyset$. Consider $\phi(\cdot) = \int_{K} \sigma(\langle \cdot, x \rangle) \ d\nu(x) \in \mathcal{C}(\mathbb{S}^d)$ for some $\nu \in \mathcal{P}(K)$ absolutely continuous w.r.t. $\tau$. Then, we have that $\phi \in A \operatorname{dom} f$. Moreover, since $g$ is a continuous function (in the supremum norm topology), $\operatorname{cont} g = \mathcal{C}(\mathbb{S}^d)$ and hence $\phi \in \operatorname{cont} g$ as well, which means that the intersection is not empty.

Notice that in our case $d^{*} = \sup_{\mu \in \mathcal{M}(\mathbb{S}^d)} \{-f^{*}(-A^{*}\mu)-g^{*}(\mu)\} \in \R$, which by \autoref{thm:fenchel} implies that the supremum is attained: let $\mu^{\star}$ be one maximizer. We show that $p^{*} = \inf_{\nu \in \mathcal{M}(K)} \{f(\nu)+g(A\nu)\} = \inf_{\nu \in \mathcal{P}(K)} \{f(\nu)+g(A\nu)\}$ admits a minimizer by the direct method of the calculus of variations: notice that $f$ and $g \circ A$ are lower semicontinuous in the topology of weak convergence ($f$ by \autoref{lem:diff_entropy} and $g \circ A$ because it is a maximum of continuous functions, and thus its sublevel sets are closed because they are the intersection of closed sublevel sets), and $\mathcal{P}(K)$ is compact.

We now show that $\frac{d\nu^{\star}}{d\tau}(x) = \frac{1}{Z_{\beta}}\exp \left( - \beta \int \sigma(\langle \theta, x \rangle) \ d\mu^{\star}(\theta) \right)$, where $\nu^{\star}$ and $\mu^{\star}$ are solutions of \eqref{eq:primal_problem_f1} and \eqref{eq:dual_problem_f1}, respectively, that we know exist by the previous paragraph.
Recall the argument to prove Fenchel weak duality:
\begin{align}
\begin{split}
    \sup_{\mu \in \mathcal{M}(\mathbb{S}^d)} \{-f^{*}(-A^{*}\mu)-g^{*}(\mu)\} &= -f^{*}(-A^{*}\mu^{\star})-g^{*}(\mu^{\star}) \\&= - \sup_{\nu \in \mathcal{M}(K)} \left\{\langle - A^{*}\mu^{\star}, \nu \rangle - f(\nu) \right\} - \sup_{\phi \in \mathcal{C}(\mathbb{S}^d)} \left\{\langle \mu^{\star}, \phi \rangle - g(\phi) \right\} \\&\leq - \sup_{\nu \in \mathcal{M}(K)} \left\{\langle - A^{*}\mu^{\star},\nu \rangle - f(\nu) + \langle \mu^{\star}, A \nu \rangle - g(A \nu) \right\} \\&= - \sup_{\nu \in \mathcal{M}(K)} \left\{\langle - f(\nu) - g(A \nu) \right\} = \inf_{\nu \in \mathcal{M}(K)} \{f(\nu)+g(A\nu)\} = f(\nu^{\star})+g(A\nu^{\star})
\end{split}
\end{align}
Thus, for strong duality to hold we must have that $\nu^{\star} = \argmax_{\nu \in \mathcal{M}(K)} \left\{\langle - A^{*}\mu^{\star}, \nu \rangle - f(\nu) \right\}$, and the corresponding Euler-Lagrange condition is $\frac{d\nu^{\star}}{d\tau}(x) = \frac{1}{Z_{\beta}}\exp \left( - \beta \int \sigma(\langle \theta, x \rangle) \ d\mu^{\star}(\theta) \right)$.
\end{proof}

\strongdualityftwo*

\begin{proof}
The proof is largely analogous to the proof of \autoref{thm:strongdualityf1}. We use \autoref{thm:fenchel} with $X = \mathcal{M}(K)$ as before, and $Y = L^2(\mathbb{S}^d)$, the Hilbert space of square-integrable functions on $\mathbb{S}^d$ under the base measure $\tilde{\tau}$, which is of course self-dual. We define $f$ as before, and $g : L^2(\mathbb{S}^d) \to \R \cup \{+\infty \}$ as
\begin{align}
    g(\phi) = \left( \int_{\mathbb{S}^d} \left( \phi(\theta) - \int_K \sigma(\langle \theta, x \rangle) \ d\nu_n(x) \right)^2 d\tilde{\tau}(\theta) \right)^{1/2},  
\end{align}
and consequently, $g^{*} : L^2(\mathbb{S}^d) \to \R \cup \{+\infty \}$ fulfills:
\begin{align}
\begin{split}
    g^{*}(\psi) &= \sup_{\phi \in L^2(\mathbb{S}^d)} \left\{\int_{\mathbb{S}^d} \phi \psi \ d\tilde{\tau} - \left( \int_{\mathbb{S}^d} \left( \phi(\theta) - \int_K \sigma(\langle \theta, x \rangle) \ d\nu_n(x) \right)^2 d\tilde{\tau}(\theta) \right)^{1/2} \right\} 
    \\ &= \sup_{\phi \in L^2(\mathbb{S}^d)} \left\{\int_{\mathbb{S}^d} \phi \psi \ d\tilde{\tau} - \sup_{\substack{\hat{\psi} \in L^2(\mathbb{S}^d), \\ \|\hat{\psi}\|_2 \leq 1}} \int_{\mathbb{S}^d} \left(\phi(\theta) - \int_{K} \sigma(\langle \theta, x \rangle) \ d\nu_n(x) \right) \hat{\psi}(\theta) d\tilde{\tau}(\theta) \right\}
    \\ &= \sup_{\phi \in L^2(\mathbb{S}^d)} \inf_{\substack{\hat{\psi} \in L^2(\mathbb{S}^d), \\ \|\hat{\psi}\|_2 \leq 1}} \left\{\int_{\mathbb{S}^d} \phi(\theta) (\psi(\theta) - \hat{\psi}(\theta)) \ d\tilde{\tau}(\theta) + \int_{\mathbb{S}^d} \int_{K} \sigma(\langle \theta, x \rangle) \ d\nu_n(x) \ \hat{\psi}(\theta) \ d\tilde{\tau}(\theta) \right\},
\end{split}
\end{align}
and using \autoref{thm:kneser} once more, this is equal to:
\begin{align}
\begin{split}
    \\ &\inf_{\substack{\hat{\psi} \in L^2(\mathbb{S}^d), \\ \|\hat{\psi}\|_2 \leq 1}} \sup_{\phi \in L^2(\mathbb{S}^d)} \left\{\int_{\mathbb{S}^d} \phi(\theta) (\psi(\theta) - \hat{\psi}(\theta)) \ d\tilde{\tau}(\theta) + \int_{\mathbb{S}^d} \int_{K} \sigma(\langle \theta, x \rangle) \ d\nu_n(x) \ \hat{\psi}(\theta) \ d\tilde{\tau}(\theta) \right\}
    \\ &= 
    \begin{cases}
        \int_{\mathbb{S}^d} \int_{K} \sigma(\langle \theta, x \rangle) \ d\nu_n(x) \ \psi(\theta) \ d\tilde{\tau}(\theta) &\text{if } \|\psi\|_{2} \leq 1,\\
        +\infty &\text{otherwise}
    \end{cases}
\end{split}
\end{align}
With these definitions, notice that problem \eqref{eq:primal_problem_f2} can be rewritten as $\inf_{\nu \in \mathcal{M}(K)} \{f(\nu)+g(A\nu)\}$ and problem \eqref{eq:dual_problem_f2} can be rewritten as $\sup_{\psi \in L^2(\mathbb{S}^d)} \{-f^{*}(-A^{*}\psi)-g^{*}(\psi)\}$. The rest of the proof is analogous.
\end{proof}



\end{document}